\definecolor{darkblue}{rgb}{0,0,.5}
\newcommand{\ind}{\mathds{1}}
\newcommand{\var}{\mathsf{Var}}
\newcommand{\brc}[1]{\left\{{#1}\right\}}
\newcommand{\prn}[1]{\left({#1}\right)} 
\newcommand{\brk}[1]{\left[{#1}\right]} 
\newcommand{\norm}[1]{\left\|{#1}\right\|} 
\newcommand{\abs}[1]{\left|{#1}\right|} 
\newcommand{\what}[1]{\widehat{#1}}
\newcommand{\wtilde}[1]{\widetilde{#1}}
\newcommand{\normal}{\mathsf{N}}
\newcommand{\bindist}{\mathsf{Binomial}}
\newcommand{\<}{\langle} 
\renewcommand{\>}{\rangle}
\newcommand{\simiid}{\stackrel{\textup{iid}}{\sim}}
\newcommand{\cp}{\stackrel{p}{\rightarrow}}
\newcommand{\cas}{\stackrel{a.s.}{\rightarrow}}
\newcommand{\cweak}{\rightsquigarrow}
\def\gA{{\mathcal{A}}}
\def\gB{{\mathcal{B}}}
\def\gD{{\mathcal{D}}}
\def\gF{{\mathcal{F}}}
\def\gH{{\mathcal{H}}}
\def\gI{{\mathcal{I}}}
\def\gM{{\mathcal{M}}}
\def\gP{{\mathcal{P}}}
\def\gS{{\mathcal{S}}}
\def\gT{{\mathcal{T}}}
\def\gX{{\mathcal{X}}}
\def\RB{{\mathbb R}}
\def\EB{{\mathbb E}}
\def\DB{{\mathbb D}}
\def\PB{{\mathbb P}}
\def\NB{{\mathbb N}}
\def\GB{{\mathbb G}}
\def\ie{{\em i.e.\/}}
\newcommand{\KS}{\textup{KS}}
\newcommand{\TV}{\textup{TV}}
\long\def\@makecaption#1#2{
  \vskip 0.8ex
  \setbox\@tempboxa\hbox{\small {\bf #1:} #2}
  \parindent 1.5em  
  \dimen0=\hsize
  \advance\dimen0 by -3em
  \ifdim \wd\@tempboxa >\dimen0
  \hbox to \hsize{
    \parindent 0em
    \hfil 
    \parbox{\dimen0}{\def\baselinestretch{0.96}\small
      {\bf #1.} #2
    } 
    \hfil}
  \else \hbox to \hsize{\hfil \box\@tempboxa \hfil}
  \fi
}
\newcommand{\FW}{\mathcal{F}_{W_1}}
\newcommand{\FKS}{\mathcal{F}_{\textup{KS}}}
\newcommand{\FTV}{\mathcal{F}_{\textup{TV}}}
\newcommand{\loc}{\textup{loc}}
\newcommand{\cov}{\mathsf{Cov}}
\newtheorem{claim}{Claim}[section]
\newtheorem{lemma}[claim]{Lemma}
\newtheorem{assumption}{Assumption}
\newtheorem{theorem}{Theorem}[section]
\newtheorem{example}{Example}[section]
\newtheorem{proposition}{Proposition}[section]
\newtheorem{remark}{Remark}[section]
\newtheorem{corollary}{Corollary}[section]
\title{Estimation and Inference in Distributional Reinforcement Learning}
\author{Liangyu Zhang\thanks{School of Statistics and Management, Shanghai University of Finance and Economics; email: \texttt{zhangliangyu@sufe.edu.cn}.} \and
Yang Peng\thanks{School of Mathematical Sciences, Peking University; email: \texttt{pengyang@pku.edu.cn}.} \and
Jiadong Liang\thanks{School of Mathematical Sciences, Peking University; email: \texttt{jdliang@pku.edu.cn}.} \and
Wenhao Yang\thanks{Management Science and Engineering, Stanford University; email: \texttt{yangwh@stanford.edu}.} \and
Zhihua Zhang\thanks{School of Mathematical Sciences, Peking University; email: \texttt{zhzhang@math.pku.edu.cn}.}
}
\begin{document}
\maketitle

\begin{abstract}
  In this paper, we study distributional reinforcement learning from the perspective of statistical efficiency.
  We investigate distributional policy evaluation, aiming to estimate the complete return distribution (denoted $\eta^\pi$) attained by a given policy $\pi$.
  We use the certainty-equivalence method to construct our estimator $\hat\eta^\pi$, given a generative model is available.
  In this circumstance we need a dataset of size $\wtilde O\prn{\frac{|\gS||\gA|}{\varepsilon^{2p}(1-\gamma)^{2p+2}}}$ to guarantee the $p$-Wasserstein metric between $\hat\eta^\pi$ and $\eta^\pi$  less than $\varepsilon$ with high probability.
  This implies the distributional policy evaluation problem can be solved with sample efficiency. 
  Also, we show that under different mild assumptions a dataset of size $\wtilde O\prn{\frac{|\gS||\gA|}{\varepsilon^{2}(1-\gamma)^{4}}}$ suffices to ensure the Kolmogorov metric and total variation metric between $\hat\eta^\pi$ and $\eta^\pi$ is below $\varepsilon$ with high probability.
  Furthermore, we investigate the asymptotic behavior of $\hat\eta^\pi$.
  We demonstrate that the ``empirical process'' $\sqrt{n}(\hat\eta^\pi-\eta^\pi)$ converges weakly to a Gaussian process in the space of bounded functionals on Lipschitz function class $\ell^\infty(\FW)$, also in the space of bounded functionals on indicator function class  $\ell^\infty(\FKS)$ and bounded measurable function class $\ell^\infty(\FTV)$ when some mild conditions hold.
  Our findings give rise to a unified approach to statistical inference of a wide class of statistical functionals of $\eta^\pi$.
\end{abstract}

\section{Introduction}
Reinforcement learning has achieved remarkable advancements in various fields, including game-playing \citep{silver2018general,vinyals2019grandmaster}, robotics systems \cite{kober2013reinforcement}, large language models \citep{ouyang2022training, openai2023gpt4}, among others.
In classical reinforcement learning which relies on the reward hypothesis \cite{sutton2004,sutton2018reinforcement}, one evaluates the performance of a learning agent by the expected returns (\ie, the expected cumulative sum of a received reward).
However, in many applications of reinforcement learning, it is not enough to simply consider the expected returns because other factors such as uncertainty or risks might also be crucial.
For example, when we ask a large language model a question we not only expect a useful answer but want to know how reliable the answer is.
An investor should consider the risk-return tradeoff when making investment decisions in financial markets, as high expected returns usually mean greater risks \cite{ghysels2005there}.
In the area of healthcare, we are not only interested in the expected performance of a dynamic treatment regime but care about its long-tail performance. 
Otherwise, it would have the potential to cause serious consequences for patients \cite{lavori2004dynamic}.

Distributional reinforcement learning \cite{morimura2010nonparametric,bellemare2017distributional} goes beyond the notion of expected returns and proposes to learn the complete distribution of the random returns.
Unlike the classical approach, the distributional perspective offers a comprehensive depiction of the inherent uncertainty (known as aleatoric uncertainty) in the performance of learning agents, due to both the stochastic nature of environments and the actions taken by the agents.
By employing the distributional perspective, we might obtain a better understanding of the consequences of the agents' behaviors and have a unified approach for dealing with the issues with regard to risk, uncertainty, and robustness \cite{pmlr-v120-singh20a, Lim2022RiskSensitive}.
In fact, \citet{dabney2020distributional} believed that human brains may also rely on a distributional code for future rewards to make decisions.

In the setting of reinforcement learning, we are usually not fully aware of the stochastic environment and must rely on a dataset to evaluate or train a learning agent.
This induces another kind of uncertainty that we call statistical uncertainty (also known as epistemic uncertainty) \cite{clements2019estimating,lockwood2022review}, which stems from limited data.
In this paper, we seek to simultaneously address the two kinds of uncertainties aforementioned by developing statistical understandings for distributional reinforcement learning.
Specifically, we aim to answer the following two fundamental questions: a) How many samples are required to learn the full distribution of random returns? b) Is it possible to perform statistical inferences from the learned return distributions?
We give affirmative answers to both questions with the benefit of a statistical analysis of distributional reinforcement learning presented in our paper.
We believe that our paper can offer new opportunities in uncertainty quantification in reinforcement learning.

\begin{figure}
    \centering
    \includegraphics[width=0.5\textwidth]{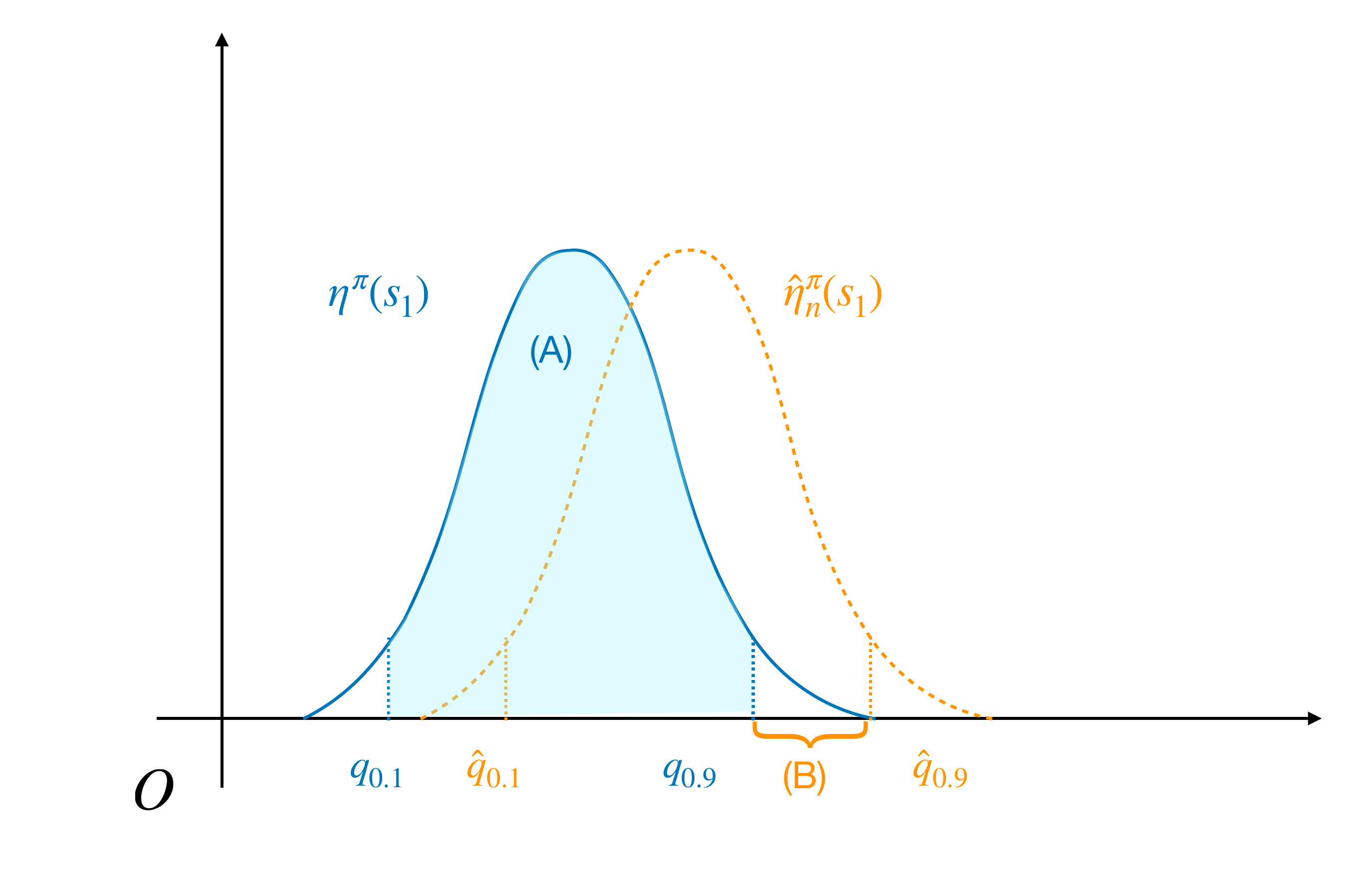}
    \caption{An illustration of two types of uncertainty in RL. Blue distribution: ground-truth return distribution with quantiles $q_{0.1}$ and $q_{0.9}$. Orange distribution: estimated return distribution with quantiles $\hat q_{0.1}$ and $\hat q_{0.9}$. Shaded area (A): intrinsic uncertainty in RL. Error (B): error caused by statistical uncertainty in RL.}
    \label{fig:twoTypesOfUncertainty}
\end{figure}

\subsection{Our Contributions}

In this paper, we focus on the problem of distributional policy evaluation, which lies at the core of distributional reinforcement learning \footnote{Indeed, the control problem in distributional reinforcement learning can be solved by a two-stage procedure.
First, we estimate a near-optimal policy $\hat\pi$ using some policy learning subroutines.
Then it remains to solve a distributional policy evaluation problem, \ie~ the return distribution of $\hat\pi$. 
See Section 7.3 of \cite{bdr2022}. }.
Consider a $\gamma$-discounted infinite-horizon Markov decision process (MDP).
The MDP is assumed to be tabular, \ie,~it has finite state space $\gS$ and action space $\gA$.
Let $\eta^\pi(s)$ denote the random return gained by running the policy $\pi$ from the initial state $s$ and define $\eta^\pi:=(\eta^\pi(s_1), \ldots, \eta^\pi(s_{|\gS|}))$.
When the underlying MDP is known, we may find $\eta^\pi$ by solving the distributional Bellman equation $\eta^\pi=\gT^\pi\eta^\pi$ with the distributional dynamic programming algorithm.
Here, $\gT$ is called the distributional Bellman operator.

Our goal is to estimate $\eta^\pi$ when the underlying MDP is unknown. 
Following common practice in the literature on reinforcement learning, we assume that the distribution of the random reward is fully known and that the transition probability of that MDP is unknown.
\footnote{We also consider the more challenging setting where the reward is unknown, either, in the later part of the paper.}
Our estimator $\hat\eta^\pi$ is constructed via the certainty-equivalence approach \cite{simon1956dynamic, theil1957note}.
More specifically, we first build an explicit model of the underlying transition dynamics denoted by $\what P$ using an offline dataset of $n |\gS||\gA|$ entries obtained by a generative model.\footnote{We also consider the more challenging setting where the offline dataset is not perfectly explorative, in the later part of the paper.}
Then we may form an empirical MDP $\what M$ whose transition dynamic is $\what P$.
An estimator of $\eta^\pi$ is then formulated as the return distribution of $\what M$, which we denote as $\hat\eta^\pi_n:=(\hat\eta^\pi_n(s_1), \ldots,\hat\eta^\pi_n(s_{|\gS|}))$.
Note that we consider a fully non-parametric setting where $\hat\eta_n$ is not restricted in some parametric model and can be any probability distribution.

In this paper, we analyze the statistical performance of the estimated return distribution $\hat\eta^\pi_n$.
Concretely, we would like to: a) prove non-asymptotic bounds for the $l_\infty$-type estimation error $\sup_{s}d(\eta^\pi(s),\hat\eta^\pi(s))$, where $d$ is some probability metrics; b) study the asymptotics of $\hat\eta^\pi_n(s)$, particularly identify the limit of the process $\sqrt{n}(\eta^\pi(s)-\hat\eta^\pi_n(s))$.

Our analysis has close relationships with the literature on the perturbation theory of Markov chains.
Given the MDP $M$ and policy $\pi$, we may define a Markov chain on the state space of the MDP, and we use $P^\pi$ to denote its transition dynamic.
Our problem can be restated as how the gap between $\eta^\pi$ and $\hat\eta_n^\pi$ is related to the gap between the transition dynamics $P^\pi$ and $\what P_n^\pi$.
Or, more broadly, how the ``characteristics" of a Markov chain would change when we perturb its transition dynamics.
This is the core question in the field of perturbation theory of Markov chains.
While existing works mostly focus on ``characteristics" like invariant distributions and $t$-step distributions, the unique feature of this work is that the object of interest is the distribution of the cumulative rewards.
We would elaborate on the differences between this work and prior works on perturbation analysis of Markov chain in the section of related works.

To the best of our knowledge, we are among the first to develop statistical theories for distributional reinforcement learning.
Our main contributions are outlined below.
\begin{enumerate}
    \item We show that under different mild conditions, the distributional dynamic programming algorithm converges to the fixed point when measured by the Kolmogorov-Smirnov metric (KS metric) and the total variation metric (TV metric).
    Interestingly, this convergence occurs despite the distributional Bellman operator no longer being (strictly) contractive. 
    Our findings correct the misconception that distributional dynamic programming is not guaranteed to converge in the KS and TV metrics.
    \item We provide non-asymptotic bounds for the $p$-Wasserstein metric, the KS metric and the TV metric between $\eta^\pi$ and $\hat\eta^\pi_n$.
    Specifically, we prove $\sup_{s\in\gS}W_p(\eta^\pi(s),\hat\eta^\pi_n(s))=\wtilde O\prn{\brk{\frac{1}{n(1-\gamma)^{2p+2}}}^{1/2p}}$\footnote{$\wtilde{O}$ hides all terms of logarithmic order.}, $\sup_{s\in\gS} \KS (\eta^\pi(s),\hat\eta^\pi_n(s))=\wtilde O\prn{\sqrt{\frac{1}{n(1-\gamma)^{4}}}}$ and $\sup_{s\in\gS} \TV (\eta^\pi(s),\hat\eta^\pi_n(s))=\wtilde O\prn{\sqrt{\frac{1}{n(1-\gamma)^{4}}}}$ with high probability.
    Our non-asymptotic results translate to an $\wtilde O\prn{\frac{1}{\varepsilon^{2p}(1-\gamma)^{2p+2}}}$ complexity bound for the case of the $p$-Wasserstein metric and  $\wtilde O\prn{\frac{1}{\varepsilon^{2}(1-\gamma)^{4}}}$ complexity bound for the cases of the KS and TV metric.
    We generalize our results to more challenging settings with less exploratory datasets and unknown reward distributions.
    \item We give a characterization of the asymptotic behavior of $\hat\eta_n^\pi$.
    We show that for any $s\in\gS$, $\sqrt{n}(\hat\eta_n^\pi(s)-\eta^\pi(s))$ converges weakly to a Gaussian process in $\ell^\infty(\FW)$.
   Under different mild conditions, $\sqrt{n}(\hat\eta_n^\pi(s)-\eta^\pi(s))$ also converges weakly to a Gaussian process in both $\ell^\infty(\FKS)$ and $\ell^\infty(\FTV)$ for each $s\in\gS$.
    Here, $\FW$, $\FKS$, and $\FTV$ represent the $1$-Lipschitz function class, the indicator function class, and the bounded measurable function class, respectively.
    We generalize the asymptotic results to the settings with less exploratory datasets and unknown reward distributions.
    These asymptotic results enable us to perform statistical inference for 
    $\eta^\pi$.
    Concretely, we construct asymptotically valid confidence sets for $\eta^\pi$ in the forms of $W_1$, KS, and TV balls, and asymptotically valid confidence intervals for $\phi(\eta^\pi(s))$, where $\phi$ can be any Hadamard differentiable functional. 


\item At the technical level, our main challenge is that we must work in the infinite-dimensional space of probability measures.
Therefore, most of the techniques developed for classical reinforcement learning theory are not valid anymore.
We address the challenge through an analysis of the concentration behaviors as well as asymptotics of $(\gI-\what\gT_n^\pi)^{-1}(\what\gT_n-\gT^\pi)\eta^\pi$.
Here $\what\gT^\pi$ is the distributional Bellman operator associated with the empirical MDP $\what M$ and $(\gI-\gT^{\pi})^{-1}\colon=\sum_{i=0}^\infty \prn{\gT^{\pi}}^i$ is defined on a subspace of interest.
We achieve this by carefully examining the properties of the distributional Bellman operator $\gT^\pi$ on the vector space of signed measures equipped with different metrics to decouple the dependencies between operators $(\gI-\what\gT_n^\pi)^{-1}$ and $(\what\gT_n^\pi-\gT^\pi)$.

\end{enumerate}

\subsection{Related Works}

\paragraph{Distributional Reinforcement Learning}
Distributional reinforcement learning has achieved remarkable success in fields such as communications \cite{hua2019gan}, transportation systems \cite{naeem2020generative}, and algorithm discovery \cite{fawzi2022discovering}.
Notable distributional reinforcement learning algorithms include categorical temporal-difference learning \cite{bellemare2017distributional}, quantile temporal-difference learning \cite{dabney2018distributional, dabney2018implicit}, GAN-based methods \cite{freirich2019distributional,doan2018gan}, actor-critic methods \cite{ma2020dsac}, etc.
For a comprehensive treatment of distributional reinforcement learning, readers could  refer to a very recent book by \citet{bdr2022}.

Despite its empirical success, there is a relative lack of theoretical understanding of distributional reinforcement learning.
\citet{rowland2018analysis} analyzed the convergence properties of categorical temporal-difference learning. 
However, their convergence analysis is asymptotic and does not consider sample complexities as well as asymptotic distributions.
Recently, \citet{rowland2023analysis} presented similar consistency results for quantile temporal-difference learning.
\citet{wu2023distributional} showed that the distribution of returns can be learned using an algorithm called Fitted Likelihood Estimation (FLE) given an offline dataset.
They also proposed non-asymptotic bounds for the statistical distance between the learned distribution and the ground truth.
The biggest difference between \cite{wu2023distributional} and this work is that \cite{wu2023distributional} focused on parametrized return distributions, while we study the non-parametric case.
Both the FLE algorithm and the associated non-asymptotic statistical bounds would be invalid under the non-parametric scenario.
Another line of work treats learning the distribution of returns as an auxiliary task and aims to understand how this auxiliary task can improve policy learning within the framework of classical reinforcement learning.
\citet{sun2022interpreting} found that such auxiliary tasks can be viewed as a form of regularization and can make the optimization process more stable.
\citet{wang2023benefits} explored the statistical benefits of distributional reinforcement learning.
They showed that distributional reinforcement learning can yield better non-asymptotic bounds than classical reinforcement learning in the ``small loss" scenarios.

\paragraph{Statistical Inference in Reinforcement Learning}
Statistical inference in the context of reinforcement learning has drawn growing interest in the community.
There are a number of works studying the statistical inference problems for expected returns (or value functions).
\citet{thomas2015high} and \citet{jiang2016doubly} proposed high-confidence bounds for value functions in the setting of off-policy evaluation.
\citet{hao2021bootstrapping} devised a bootstrapping procedure to perform statistical inference in off-policy evaluation.
\citet{shi2022statistical} modeled the value function with the series/sieve methods and devised confidence intervals for value functions in both the settings of policy evaluation and policy learning.
\citet{zhu2023uncertainty} also constructed asymptotically tight confidence intervals for learned (optimal) value functions.
\citet{li2023statistical} and \citet{li2023online} considered online statistical inference for value functions in an online reinforcement learning setting.

At the same time, fewer works focus on statistical inferences for other statistical functionals of the return distribution.
\citet{yang2022toward} investigated the asymptotic behaviors of distributionally robust value functions and constructed asymptotically tight confidence bounds.
\citet{chandak2021universal} and \citet{huang2022off} proposed methods to estimate the cumulative distribution function (CDF) and confidence band for the ground truth CDF.
And statistical inference for statistical functionals can be achieved by the plug-in approach.
However, their estimator is based on importance sampling, causing the errors can grow exponentially w.r.t.\ the horizon length.
Also, their confidence intervals are based on non-asymptotic bounds and may thus be too conservative.

\paragraph{Perturbation Theory of Markov Chains}
Early works on the perturbation theory of Markov chains can date back to \cite{schweitzer1968perturbation, kartashov1986inequalities}.
\citet{mitrophanov2005sensitivity} showed perturbation bounds of the $t$-step distributions for uniformly ergodic Markov chains.
\citet{ferre2013regular} analyzed behaviors of perturbed $V$-geometrically ergodic Markov chains.
\citet{rudolf2018perturbation} proved bounds of Wasserstein distance between the $t$-step distributions of a Wasserstein ergodic Markov chain and its perturbed counterpart.
The perturbation theory of Markov chains is widely used in Markov chain Monte-Carlo algorithms and Bayesian statistics \cite{bardenet2014towards, alquier2016noisy, johndrow2017error}.
One can refer to the very recent book chapter \cite{rudolf2024perturbations} for a more thorough review. 

Compared with previous works, the unique feature of our paper is that we study perturbation bounds for a novel object, that is, the return distribution of an MDP (or MRP).
Our results are not simple corollaries of prior perturbation bounds on $t$-step distributions because the rewards obtained at different timesteps are dependent.
See \cite{wiltzer2024distributional} for a more detailed discussion.
To get the new results we developed new analysis techniques.
Our proof techniques are specifically developed based on a thorough understanding of the theoretical properties of the distributional Bellman operator, making them particularly suited for the analysis of return distributions.

The remainder of this paper is organized as follows. 
In Section~\ref{Section_prelim}, we introduce some basic concepts of distributional reinforcement learning.
In Section~\ref{Section_analysis}, we present our statistical analysis of distributional reinforcement learning.
In Section~\ref{Section_inference}, we propose a series of inferential procedures for the return distribution.
Section~\ref{Section_numerical} verifies our theoretical findings and tests our inferential procedures through numerical simulations.

\section{Preliminaries}\label{Section_prelim}
\subsection{Problem Setting and the Certainty-equivalence Estimator}

An Markov decision process (MDP) is represented by a 5-tuple $M=\<\gS,\gA,\gP_R,P,\gamma\>$, where $\gS$ represents a finite state space, $\gA$ a finite action space, ${\gP_R\colon\gS\times\gA\to\Delta\prn{[0,1]}}$  the distribution of rewards, ${P\colon\gS\times\gA\to\Delta\prn{\gS}}$  the transition dynamics, and $\gamma\in(0,1)$ a discounted factor.
Here we use $\Delta(\cdot)$ to represent the set of probability distributions over some set.
Given a policy $\pi\colon\gS\to\Delta\prn{\gA}$ and an initial state $S_0= s\in\gS$, a random trajectory $\brc{\prn{S_t,A_t,R_t}_{t=0}^\infty}$ can be sampled from the MDP using the following procedure: 
\begin{equation*}
    \begin{aligned}
        A_t\mid S_t&\sim\pi(\cdot\mid S_t),\\
        R_t\mid (S_t,A_t)&\sim \gP_R({\cdot}\mid S_t,A_t),\\
        {S_{t+1}}\mid{(S_t,A_t)}&\sim P({\cdot}\mid{S_t,A_t}).\\
    \end{aligned}
\end{equation*}
We define the return of such trajectories by
$$
G^\pi(s):=\sum_{t=0}^\infty \gamma^t R_t.
$$

$G^\pi(s)$ is a random variable (see Proposition F.1 in Appendix F.
And we always have $G^\pi(s)\in\brk{0,\frac{1}{1-\gamma}}$.
The expected return $\EB G^\pi(s)$ is called the value function and is denoted by $V^\pi(s)$. 
We also define $\eta^\pi(s)$ as the distribution of $G^\pi(s)$.


In this paper, we focus on the problem of learning $\eta^\pi$ for some policy $\pi$ when the underlying MDP is unknown and must be estimated from a pre-defined dataset.
This problem is called the distributional policy evaluation problem.
We assume the dataset is generated by a generative model, which is able to return a value of the next state $s^\prime$ following $P(\cdot\mid s,a)$ for any given pair $(s,a)\in\gS\times\gA$.
For each pair $(s,a)\in\gS\times\gA$, the generative model is called $n$ times and produces an array $X_1^{(s,a)}, \ldots, X_n^{(s,a)}\simiid P(\cdot\mid s,a)$.

Given the dataset, we may obtain the estimate of the transition probability as
\begin{equation*}
    \what P_n(s^\prime\mid s,a)=\frac{1}{n}\sum_{i=1}^n \ind\brc{X_i^{(s,a)}=s^\prime}.
\end{equation*}
Thus, $\what{P}_n$ defines an empirical MDP $\what{M}=\<\gS,\gA,\gP_R,\what{P}_n,\gamma\>$ with the corresponding distribution of returns $\hat\eta_n^\pi$.
We call $\hat\eta_n^\pi$ the certainty-equivalence estimator, and the goal of our paper is to explore the statistical properties of $\hat\eta_n^\pi$.
\begin{remark}\label{Remark_alternative_formulation}
    An alternative problem formulation is to first transform the MDP $\<\gS,\gA,\gP_R,P,\gamma\>$ to an Markov reward process (MRP) $\<\gS,\gP^\pi_R,P^\pi,\gamma\>$ induced by the original MDP and the policy $\pi$ to be evaluated. 
    Here $\gP^\pi_R(\cdot\mid s)=\sum_{a\in\gA}\pi(a\mid s)\gP_R(\cdot\mid s,a)$ and $P^\pi(\cdot\mid s)=\sum_{a\in\gA}\pi(a\mid s)P(\cdot\mid s,a)$.
    Then we may use samples from the MRP to form the certainty-equivalence estimator.
    With such problem formulation, we no longer need to care about the action space $\gA$ and the policy $\pi$ explicitly.
    However, the main drawback of this formulation is that the data collection process depends on the policy $\pi$.
    On the other hand, in our problem formulation we can evaluate arbitrary policies with the same predefined dataset, which fits better with real-world applications.
    This formulation also allows our theoretical results to draw further implications on various fields of RL (see Corollary~\ref{Corollary_risk_sensitive_RL} and Example~\ref{Example_inference_advantage}).
\end{remark}

\subsection{Metrics on the Space of Measures}
Suppose $\mu$ and $\nu$ are two probability distributions on $\RB$ with finite $p$-moments ($p\geq 1$). The $p$-Wasserstein metric between $\mu$ and $\nu$ is defined as 
\begin{equation*}
    W_p(\mu, \nu)=\left(\inf _{\kappa \in \Gamma(\mu, \nu)} \int_{\RB^2}\abs{x-y}^p \kappa(dx, dy)\right)^{1 / p}.
\end{equation*}
Elements $\kappa \in \Gamma(\mu, \nu)$ are called couplings of $\mu$ and $\nu$, \ie, joint distributions on $\RB^2$ with prescribed marginals $\mu$ and $\nu$ on each “axis”.
Suppose $\mu$ and $\nu$ have cumulative distribution function $F_\mu$ and $F_\nu$, respectively. In the case of $p=1$ we have
\begin{equation*}
    W_1(\mu, \nu)=\int_\RB |F_\mu(x)-F_\nu(x)| dx.
\end{equation*}

The Kolmogorov–Smirnov metric (KS metric) is defined as
\begin{equation*}
    {\KS}(\mu,\nu)=\sup_{t\in\RB}\abs{\mu((-\infty,t])-\nu((-\infty,t])}.
\end{equation*}
We may bound  $\KS(\mu,\nu)$ with  $W_1(\mu,\nu)$ when either of $\mu, \nu$ has bounded densities.
\begin{proposition}\cite[Proposition 1.2]{ross2011fundamentals}\label{Proposition_bound_KS_with_W1} 
Assume that $\mu\in\Delta(\RB)$ has finite moment and $\mu$ has a Lebesgue density $p_\mu$ that is bounded by $C$. Then for any $\nu\in\Delta(\RB)$ with finite moment, $\KS(\mu,\nu)\leq\sqrt{2CW_1(\mu,\nu)}$.
\end{proposition}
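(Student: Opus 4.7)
The plan is to exploit the representation $W_1(\mu,\nu) = \int_\RB |F_\mu(x) - F_\nu(x)|\,dx$ together with the Lipschitz continuity of $F_\mu$ that the density bound $p_\mu \leq C$ provides. Write $d := \KS(\mu,\nu)$; the key idea is that the pointwise gap $|F_\mu - F_\nu|$ cannot shrink quickly around a point where it is nearly maximal, because the faster-moving CDF is $F_\mu$, which is $C$-Lipschitz. Hence the gap remains at least of order $d$ on an interval of length at least $d/C$, and integrating produces a quadratic lower bound on $W_1$.

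Concretely, for any $\varepsilon > 0$ I would pick $t_0 \in \RB$ with $|F_\mu(t_0) - F_\nu(t_0)| \geq d - \varepsilon$, and split on sign. In the case $F_\mu(t_0) - F_\nu(t_0) \geq d - \varepsilon$, the bound $p_\mu \leq C$ gives $F_\mu(x) \geq F_\mu(t_0) - C(t_0 - x)$ for $x \leq t_0$, while monotonicity of the CDF gives $F_\nu(x) \leq F_\nu(t_0)$. Subtracting yields
\begin{equation*}
F_\mu(x) - F_\nu(x) \;\geq\; (d-\varepsilon) - C(t_0 - x) \qquad \text{for } x \in [t_0 - (d-\varepsilon)/C,\ t_0],
\end{equation*}
and this lower bound is nonnegative on the stated interval. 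Integrating it,
\begin{equation*}
W_1(\mu,\nu) \;\geq\; \int_{t_0 - (d-\varepsilon)/C}^{t_0} \bigl[(d-\varepsilon) - C(t_0 - x)\bigr]\,dx \;=\; \frac{(d-\varepsilon)^2}{2C}.
\end{equation*}
The symmetric case $F_\nu(t_0) - F_\mu(t_0) \geq d - \varepsilon$ is handled identically by integrating instead on $[t_0,\ t_0 + (d-\varepsilon)/C]$ and using $F_\mu(x) \leq F_\mu(t_0) + C(x - t_0)$ together with $F_\nu(x) \geq F_\nu(t_0)$.

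Sending $\varepsilon \downarrow 0$ gives $W_1(\mu,\nu) \geq d^2/(2C)$, i.e.\ $\KS(\mu,\nu) \leq \sqrt{2C\, W_1(\mu,\nu)}$. There is no real obstacle here: the only point requiring a little care is that the $\KS$ supremum may not be attained, which is why one works with $d - \varepsilon$ and passes to the limit; the density assumption is used in one line, to control how fast $F_\mu$ can decrease (or increase) as we move away from $t_0$. The finite-moment hypothesis on both measures is needed only to ensure $W_1(\mu,\nu) < \infty$, so that the stated inequality is nontrivial.
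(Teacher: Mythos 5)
Your argument is correct: the $C$-Lipschitz control on $F_\mu$ from the density bound, monotonicity of $F_\nu$, and the representation $W_1(\mu,\nu)=\int_\RB|F_\mu-F_\nu|\,dx$ do give $W_1\geq (d-\varepsilon)^2/(2C)$ by integrating the triangular lower bound, and the $\varepsilon$-approximation of the (possibly unattained) supremum is handled properly. Note, however, that the paper does not prove this statement at all — it is imported verbatim from Ross (Proposition 1.2) — and when the paper later needs a version of it for \emph{signed} measures (Proposition~\ref{Proposition_bound_KS_with_W1_signed}), it argues differently: it views $W_1$ as an integral probability metric over $1$-Lipschitz test functions and compares the indicator $\ind_{(-\infty,x]}$ with a smoothed ramp $h^x_\epsilon$, optimizing over the smoothing width $\epsilon$. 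The contrast is instructive. Your route is more elementary and delivers the constant $\sqrt{2CW_1}$ directly from the area of the triangle under the CDF gap, but it leans on two facts special to probability measures on $\RB$: the $L^1$-of-CDFs formula for $W_1$ and, crucially, the monotonicity of $F_\nu$, which is exactly what fails for the signed measures $\hat\eta_n^\pi(s)-\eta^\pi(s)$ appearing in the paper's analysis. The smoothing/test-function argument sacrifices a little transparency but needs only the dual (IPM) definition of the two norms plus a density bound on one Jordan component, which is why it is the version the paper generalizes; for the proposition as literally stated, your proof is a perfectly valid and arguably cleaner substitute.
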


The total variation distance (TV distance) is defined as 
\begin{equation*}
    \TV(\mu,\nu)=\sup_{A\in\gB(\RB)}\abs{\mu(A)-\nu(A)},
\end{equation*}
where $\gB(\RB)$ denotes all Borel sets in $\RB$.
$\TV(\mu,\nu)$ can also bounded by $W_1(\mu,\nu)$ when $\mu$ and $\nu$ have smooth densities.
\begin{proposition}\cite[Theorem 2.1]{chae2020wasserstein}\label{Proposition_bound_TV_with_W1} 
Assume that $\mu,\nu\in\Delta(\RB)$ have Lebesgue densities $p_\mu, p_\nu\in H^1_1(\RB)$. 
Specifically, $H^1_1(\RB)$ represents the $L^1$ Sobolev space of order $1$ defined as 
\begin{equation*}
    H^1_1(\RB)\colon=\brc{f\in L^1(\RB)\colon D^1f\in L^1(\RB);~\norm{f}_{H^1_1}=\norm{f}_{1}+\norm{D^1 f}_{1}<\infty}.
\end{equation*}
Here $L^1(\RB)$ is the space of Lebesgue integrable functions
and $\norm{\cdot}_1$ is the associated $L^1$ norm, $D^1f$ represents the weak derivative of $f$.
Then we have
\begin{equation*}
    \TV(\mu,\nu)\leq \sqrt{K\prn{\norm{p_\mu}_{H^1_1}+\norm{p_\nu}_{H^1_1}}W_1(\mu,\nu)}.
\end{equation*}
Here $K$ is a universal constant.
\end{proposition}

The $1$-Wasserstein metric, the KS metric, and the TV distance are all special cases of integral probability metrics.
Specifically, we define
\begin{equation*}
    \norm{\mu-\nu}_{\gH}=\sup_{h\in \gH} \abs{\mu h-\nu h},
\end{equation*}
where $\gH$ denotes some function class and $\mu h$ represents $\EB_{X\sim\mu}\brk{h(X)}$.
If we choose 
\begin{itemize}
    \item $\FW:=\brc{f\mid f \text{ is $1$-Lipschitz}}$, then $\norm{\mu-\nu}_{\FW}=W_1(\mu,\nu)$.
    \item $\FKS:=\brc{\ind\brc{\cdot\leq z}\mid z\in\RB}$, then $\norm{\mu-\nu}_{\FKS}=\KS(\mu,\nu)$.
    \item $\FTV:=\brc{f\mid f \text{ is measuable and }\norm{f}_\infty\leq 1}$, then $\norm{\mu-\nu}_{\FTV}=\TV(\mu,\nu)$.
\end{itemize}

\subsection{Distributional Bellman Operator and Distributional Dynamic Programming}

It is well-known that the expected returns (also called the value functions) satisfy the Bellman equation.
In particular, letting $V^\pi$ denote $\prn{V^\pi(s_1), \ldots,V^\pi(s_{\abs{\gS}})}$, we have for any $s\in\gS$,
\begin{equation}\label{Equation_Bellman_equation}
    \begin{aligned}
            V^\pi(s)&=\brk{T^\pi(V^\pi)}(s)\\
    &:=\EB_{A\sim\pi(\cdot\mid s), R\sim\gP(\cdot\mid s,A)}R+\EB_{A\sim\pi(\cdot\mid s),S^\prime\sim P(\cdot\mid s,A)} V^\pi(S^\prime)\\
    &=\sum_{a\in\gA}\pi(a\mid s)\int_0^1 r \gP_R(dr\mid s,a)+\sum_{a\in\gA,s^\prime\in\gS} \pi(a\mid s)P(s^\prime\mid s,a)V^\pi(s^\prime).
    \end{aligned}
\end{equation}
We call the operator $T^\pi\colon \RB^\gS\to \RB^\gS$ the Bellman operator.
And the Bellman equation suggests that the value function $V^\pi$ is a fixed point of $T^\pi$.

The distributional Bellman equation describes a similar relationship to Equation~\eqref{Equation_Bellman_equation} for the distributions of returns.
Letting $\eta^\pi$ denote $\prn{\eta^\pi(s_1),\dots,\eta^\pi(s_{|\gS|})}$, we have for any $s\in\gS$
\begin{equation}\label{Equation_distributional_Bellman_equation}
\begin{aligned}
        \eta^\pi(s)&=\brk{\gT^\pi(\eta^\pi)}(s)\\
    &:=\EB_{A\sim\pi(\cdot\mid s), R\sim\gP_R(\cdot \mid s,A),S^\prime\sim P(\cdot\mid s,A)}\prn{b_{R,\gamma}}_\#\eta^\pi(S^\prime)\\
    &=\sum_{a\in\gA,s^\prime\in\gS}\pi(a\mid s)P(s^\prime\mid s,a)\int_0^1 \prn{b_{r,\gamma}}_\#\eta^\pi(s^\prime)\gP_R(dr\mid s,a).
\end{aligned}
\end{equation}
Here $b_{r,\gamma}\colon \RB\to\RB$ is an affine function defined by $b_{r,\gamma}(x)=r+\gamma x$, and $g_\#\mu$ is the push forward measure of $\mu$ through function $g$ so that $g_\#\mu(A)=\mu(g^{-1}(A))$.
The integral ${\int_0^1 \prn{b_{r,\gamma}}_\#\eta^\pi(s^\prime)\gP_R(dr\mid s,a)}$ is defined by
\begin{equation*}
    \brk{\int_0^1 \prn{b_{r,\gamma}}_\#\eta^\pi(s^\prime)\gP_R(dr\mid s,a)}(B)=\int_0^1 \brk{\prn{b_{r,\gamma}}_\#\eta^\pi(s^\prime)}(B)\gP_R(dr\mid s,a)
\end{equation*}
for any Borel set $B$ in $\brk{0,\frac{1}{1-\gamma}}$.
We call the operator $\gT^\pi\colon \Delta\prn{\brk{0,\frac{1}{1-\gamma}}}^\gS\to \Delta\prn{\brk{0,\frac{1}{1-\gamma}}}^\gS$ the distributional Bellman operator, and the distribution of returns $\eta^\pi$ is a fixed point of $\gT^\pi$.

Suppose the MDP $M$ is already known. 
For a policy $\pi$, we may compute the value function $V^\pi$ by the dynamic programming algorithm.
Specifically, assuming $V_{k+1}=T^\pi(V_k)$, we have $T^\pi$ is a $\gamma$-contraction w.r.t. norm $\norm{\cdot}_\infty$ on $\RB^\gS$, and thus $\lim_{k\to\infty}\norm{V_k-V^\pi}_\infty=0$.
In analogy to dynamic programming, we may also define distributional dynamic programming, \ie,~$\eta^{(k+1)}=\gT^\pi\eta^{(k)}$.
It can be shown that $\gT^\pi$ is a $\gamma$-contraction in the supreme $p$-Wasserstein metrics. 
Thus distributional dynamic programming exhibits geometric convergence in the supreme $p$-Wasserstein metric.
\begin{proposition} \emph{\cite[Propositions 4.15 and 4.16]{bdr2022}} \label{Proposition_value_iteration_Wp}
    The distributional Bellman operator is a $\gamma$-contraction on $\Delta(\RB)^\gS$ in the supreme $p$-Wasserstein metric.
    More precisely, for $\eta,\eta^\prime\in\Delta(\RB)^\gS$, we have
    \begin{equation*}
        {\sup_{s\in\gS}W_p\prn{\brk{\gT^\pi\eta}(s),\brk{\gT^\pi\eta^\prime}(s)}\leq \gamma\sup_{s\in\gS}W_p(\eta(s),\eta^\prime(s))}.
    \end{equation*}
   Furthermore, we have
   \begin{equation*}
       \sup_{s\in\gS}W_p(\eta^{(k)}(s),\eta^\pi(s))\leq \gamma^k \sup_{s\in\gS} W_p(\eta^{(0)}(s),\eta^\pi(s))
   \end{equation*}
   and
    \begin{equation*}
        \lim_{k\to\infty}\sup_{s\in\gS} W_p(\eta^{(k)}(s),\eta^\pi(s))=0.
    \end{equation*}
\end{proposition}

When measured by other commonly used probability metrics like the supreme KS metric or the supreme TV distance, the distributional Bellman operator may not be a (strict) contraction and distributional dynamic programming may not converge at all \citep{bdr2022}.
This is because, unlike the cases of dynamic programming, now we operate in an infinite-dimensional space and the metrics may not be equivalent.
However, we find that under mild conditions distributional dynamic programming does converge in the supreme KS metric and the supreme TV distance, and the convergences are also geometrically fast.
To the best of our knowledge, we are the first to examine the convergence property of distributional dynamic programming w.r.t. the supreme KS metric and the supreme TV distance.

\begin{assumption}\label{Assumption_reward_bounded_density}
Assume that for any $s\in\gS$, $a\in\gA$, $\gP_R(dr\mid s,a)$ has a Lebesgue density $p_{s,a}^R$ upper-bounded  by a constant $C$.
\end{assumption}

\begin{assumption}\label{Assumption_reward_smooth}
Assume that for any $s\in\gS$, $a\in\gA$, $\gP_R(dr\mid s,a)$ has a Lebesgue density $p_{s,a}^R\in H_1^1(\RB)$ and $\norm{p_{s,a}^R}_{H_1^1(\RB)}\leq M$.
\end{assumption}

Assumption~\ref{Assumption_reward_smooth} is strictly stronger than Assumption~\ref{Assumption_reward_bounded_density} as a consequence of Sobolev's inequality.

\begin{proposition}\label{Proposition_value_iteration_KS}
The distributional Bellman operator is non-expansive on $\Delta(\RB)^\gS$ in the supreme $\KS$ metrics.
If Assumption~\ref{Assumption_reward_bounded_density} holds, then we have
\begin{equation*}
    \sup_{s\in\gS} \KS(\eta^{(k)}(s),\eta^\pi(s))\leq (\sqrt{\gamma})^k\sup_{s\in\gS}\sqrt{CW_1(\eta^{(0)}(s),\eta^\pi(s))}.
\end{equation*}
\end{proposition}

\begin{proposition}\label{Proposition_value_iteration_TV}
The distributional Bellman operator is non-expansive on $\Delta(\RB)^\gS$ in the supreme TV metrics.
If Assumption~\ref{Assumption_reward_smooth} holds, then we have
\begin{equation*}
    \sup_{s\in\gS} \TV(\eta^{(k)}(s),\eta^\pi(s))\leq \prn{\sqrt{\gamma}}^k\sup_{s\in\gS}\sqrt{2MKW_1(\eta^{0}(s),\eta^\pi(s))}.
\end{equation*}
\end{proposition}
To prove Proposition~\ref{Proposition_value_iteration_KS}, we first show that when Assumption~\ref{Assumption_reward_bounded_density} is true the distribution of return $\eta^\pi(s)$ must have a bounded density.
Then by Proposition~\ref{Proposition_bound_KS_with_W1} the KS metric can be controlled by the $1$-Wasserstein metric. 
The proof strategy is similar to that of Proposition~\ref{Proposition_value_iteration_KS}.
We first show that when Assumption~\ref{Assumption_reward_smooth} holds, the $\eta^{\pi}(s)$ and $\eta^{(k)}(s)$ both have densities in $H_1^1(\RB)$.
Then we can bound the TV distance with the $1$-Wasserstein metric through Proposition~\ref{Proposition_bound_TV_with_W1}.
The full proof can be found in Appendix F. 

For the certainty-equivalence estimator $\hat\eta_n^\pi$, we also have the following distributional Bellman equation
\begin{equation}\label{Equation_empirical_distributional_Bellman_equation}
\begin{aligned}
        \hat\eta_n^\pi(s)&=\brk{\what\gT_n^\pi(\hat\eta_n^\pi)}(s)\\
    &:=\EB_{A\sim\pi(\cdot\mid s), R\sim\gP(\cdot \mid s,A),S^\prime\sim \what{P}(\cdot\mid s,A)}\prn{b_{R,\gamma}}_\#\hat\eta_n^\pi(S^\prime)\\
    &=\sum_{a\in\gA,s^\prime\in\gS}\pi(a\mid s)\what{P}(s^\prime\mid s,a)\int_0^1 \prn{b_{r,\gamma}}_\#\hat\eta_n^\pi(s^\prime)\gP_R(dr\mid s,a),
\end{aligned}
\end{equation}
where $\what{\gT}_n^\pi$ is called the empirical distributional Bellman operator. 
$\hat\eta_n^\pi$ can be computed via the empirical version of distributional dynamic programming, \ie,~$\hat\eta^{(k+1)}=\what \gT_n^\pi(\hat\eta^{(k)})$.

\section{Statistical Analysis}\label{Section_analysis}
In this section, we analyze distributional reinforcement learning from both the non-asymptotic and asymptotic viewpoints. 
We give the non-asymptotic convergence rates of $\sup_{s\in\gS}W_1\prn{\hat\eta_n^\pi(s),\eta^\pi(s)}$, $\sup_{s\in\gS}\KS\prn{\hat\eta_n^\pi(s),\eta^\pi(s)}$, and $\sup_{s\in\gS}\TV\prn{\hat\eta_n^\pi(s),\eta^\pi(s)}$, which suggest distributional policy evaluation is sample-efficient when a generative model is available.
We also study the asymptotics of $\sqrt n\prn{\hat\eta_n^\pi(s)-\eta^\pi(s)}$ for any $s\in\gS$.
Under mild conditions, we demonstrate that $\sqrt n\prn{\hat\eta_n^\pi(s)-\eta^\pi(s)}$ converges weakly to a Gaussian random element in the spaces $\ell^\infty(\FW)$, $\ell^\infty(\FKS)$ and $\ell^\infty(\FTV)$.
\subsection{Results on Non-asymptotic Analysis}\label{Section_non_asymp}

Our main results of non-asymptotic analysis is given in the following theorems.
\begin{theorem}\label{Theorem_bound_of_Wdist}
For any fixed policy $\pi$, we have  that
\begin{equation*} \EB\sup_{s\in\gS}W_1(\hat\eta_n^\pi(s),\eta^\pi(s))\leq \sqrt{\frac{9\log|\gS|}{n(1-\gamma)^{4}}},
\end{equation*}
and for any $\delta\in(0,1)$, with probability at least $1-\delta$,
\begin{equation*}
\sup_{s\in\gS}W_1(\hat\eta_n^\pi(s),\eta^\pi(s))\leq \frac{\sqrt{9\log|\gS|}+\sqrt{\log(1/\delta)/2}}{\sqrt{n(1-\gamma)^{4}}}.
\end{equation*}

\end{theorem}

To sum up, we show that
$n=\wtilde{O}\prn{\frac{1}{\varepsilon^{2}(1-\gamma)^{4}}}$ suffices to ensure both $\EB\sup_{s\in\gS}W_1(\hat\eta_n^\pi(s),\eta^\pi(s))\leq\varepsilon$ and $\sup_{s\in\gS}W_1(\hat\eta_n^\pi(s),\eta^\pi(s))\leq \varepsilon$ 
with high probability, which implies model-based distributional policy evaluation is sample-efficient.
The key idea of our proof is that we first analyze the concentration behaviors of the infinite-dimensional operator $(\what \gT_n^\pi-\gT^\pi)$.
Then we examine the properties of $\gT^\pi$ in the vector space of signed measures equipped with the $W_1$-metric and give a reasonable definition of $\prn{\gI-\what\gT_n^\pi}^{-1}:=\sum_{i=0}^\infty \prn{\what\gT_n^\pi}^i$ on a product of vector space consisting of signed measures $\mu$ such that $\mu\prn{\brk{0,\frac{1}{1-\gamma}}}=0$.
 This allows us to write $\hat\eta_n-\eta^\pi=\prn{\gI-\what\gT_n^\pi}^{-1}(\what \gT_n^\pi-\gT^\pi)\eta^\pi$. 
We can draw the conclusion noting that the operator norm of $\prn{\gI-\what\gT_n^\pi}^{-1}$ w.r.t. the $W_1$-metric is always bounded by $\frac{1}{1-\gamma}$.
Detailed proof can be found in Appendix B.

Compared with the minimax optimal $\wtilde{O}\prn{\frac{1}{\varepsilon^2(1-\gamma)^3}}$ sample complexity bound for the model-based policy evaluation \citep{li2020breaking}, our sample complexity bound has an additional ${\frac{1}{1-\gamma}}$ factor.
In fact, learning the distribution of returns is harder than the classic policy evaluation problem.
Because we always have $\abs{V(s)-\what V(s)}\leq \varepsilon$ as long as $W_1(\eta^\pi(s),\hat\eta_n^\pi(s))\leq \varepsilon$.
However, we speculate that the additional ${\frac{1}{1-\gamma}}$ factor can be eliminated with more refined analysis techniques specially developed for handling infinite-dimensional cases.

Combining Theorem~\ref{Theorem_bound_of_Wdist} with the elementary inequality $\brk{W_p(\hat\eta_n^\pi(s),\eta^\pi(s))}^p\leq \frac{1}{(1-\gamma)^{p-1}}W_1(\hat\eta_n^\pi(s),\eta^\pi(s))$, we can derive the non-asymptotic results for $W_p$ metric.
\begin{corollary}\label{Corollary_bound_of_Wdist_p}
For any fixed policy $\pi$ and $p>1$, we have that 
\begin{equation*} \EB\sup_{s\in\gS}W_p(\hat\eta_n^\pi(s),\eta^\pi(s))\leq \brk{\frac{9\log|\gS|}{n(1-\gamma)^{2p+2}}}^{\frac{1}{2p}},
\end{equation*}
and for any $\delta\in(0,1)$, with probability at least $1-\delta$,
\begin{equation*}
\sup_{s\in\gS}W_p(\hat\eta_n^\pi(s),\eta^\pi(s))\leq \brk{\frac{\sqrt{9\log|\gS|}+\sqrt{\log(1/\delta)/2}}{\sqrt{n(1-\gamma)^{2p+2}}}}^{\frac{1}{p}}.
\end{equation*}

\end{corollary}

We comment that the slow rate $n^{-\frac{1}{2p}}$ for the $W_p$-metric is inevitable without assuming additional regularity conditions.
Consider an MDP with $\gS=\{s_1,s_2,s_3\}$ and $\gA=\{a_1\}$.
As there is only one single action $a_1$ available, the action variable can be safely omitted.
We start from $s_1$ with $r(s_1)=0$ and $P(s_2\mid s_1)=P(s_3\mid s_1)=\frac{1}{2}$.
And $s_2$ and $s_3$ are absorbing states with $r(s_2)=1$ and $r(s_3)=0$.
Suppose after $n$ calls of the generative model we have gained an estimator of $P(s_2\mid s_1)$ that is denoted as $\hat p$, then $\eta(s_1)=\frac{1}{2}\delta_{0}+\frac{1}{2}\delta_{\frac{1}{1-\gamma}}$, $\hat\eta_n(s_1)=(1-\hat p)\delta_{0}+\hat p\delta_{\frac{1}{1-\gamma}}$.
We have
\begin{equation*}
    W_p(\eta(s_1),\hat\eta_n(s_1))=\frac{1}{1-\gamma}\abs{\hat p - \frac{1}{2}}^{\frac{1}{p}}.
\end{equation*}
Since $\hat p\sim\bindist\prn{n,\frac{1}{2}}$, $\abs{\hat p - \frac{1}{2}}$ is of the order $n^{-\frac{1}{2}}$ by CLT.
Thus $W_p(\eta(s_1),\hat\eta_n(s_1))$ is of the order $n^{-\frac{1}{2p}}$.

Under Assumption~\ref{Assumption_reward_bounded_density}, we also have the following bounds on the KS metric.

\begin{theorem}\label{Theorem_bound_of_KS}
Suppose Assumption~\ref{Assumption_reward_bounded_density} holds true. 
For any fixed policy $\pi$, 
\begin{equation*} \EB\sup_{s\in\gS}\KS(\hat\eta_n^\pi(s),\eta^\pi(s))\leq C^{\prime\prime}\sqrt{\frac{\log|\gS|}{n(1-\gamma)^4}}.
\end{equation*}
And for any $\delta\in(0,1)$, with probability at least $1-\delta$,
\begin{equation*}
\sup_{s\in\gS}\KS(\hat\eta_n^\pi(s),\eta^\pi(s))\leq \frac{C^\prime\prn{\sqrt{\log|\gS|}+\sqrt{\log(1/\delta)}}}{\sqrt{n(1-\gamma)^4}}.
\end{equation*}
Here $C^\prime$ and $C^{\prime\prime}$ are constants only depending on $C$ in Assumption~\ref{Assumption_reward_bounded_density}.
\end{theorem}

The upper bound of the supreme KS metric between $\hat\eta_n$ and $\eta^\pi$ is of the same order as that of the $W_1$-metric, which indicates that under mild conditions learning a near-optimal return distribution in the sense of the KS metric is not more difficult than learning a near-optimal return distribution in the sense of $W_1$-metric.
This is somewhat a surprise since the distributional Bellman operator exhibits benign behaviors only when measured by Wasserstein metrics, and for $\mu$, $\nu$ with bounded support $W_1(\mu,\nu)$ can be always bounded by $\KS(\mu,\nu)$ multiplying a constant factor.

Simply combining the results in Theorem~\ref{Theorem_bound_of_Wdist} and Proposition~\ref{Proposition_bound_KS_with_W1} can only yield a sub-optimal $n^{-\frac{1}{4}}$ convergence rate for $\sup_{s\in\gS}\KS(\hat\eta_{n}(s),\eta^\pi(s))$.
Instead, we obtain a $n^{-\frac{1}{2}}$ rate using a quite different proof strategy with that of Theorem~\ref{Theorem_bound_of_Wdist}.
The first challenge is that the operator $\prn{\gI-\what\gT_n^\pi}^{-1}$ may be unbounded on its domain measured with the norm induced by the KS distance.
Therefore, although we can still write $(\hat\eta_n^\pi-\eta^\pi)=\prn{\gI-\what\gT_n^\pi}^{-1}\prn{\what\gT_{n}^\pi-\gT^\pi}\eta^\pi$, bounds of $\prn{\what\gT_{n}^\pi-\gT^\pi}\eta^\pi$ does not directly translate to bounds of $(\hat\eta_n^\pi-\eta^\pi)$.
We handle this challenge by an ``expansion trick", which raises yet another technical challenge: we need a stronger notion of concentration of $\prn{\what \gT_n^\pi-\gT^\pi}\eta^\pi$.
Specifically, unlike in the proof of Theorem~\ref{Theorem_bound_of_Wdist} where it suffices to bound $W_1\prn{\what\gT_n^\pi\eta^\pi,\gT^\pi\eta^\pi}$, here we further need to bound $\TV\prn{\what\gT_n^\pi\eta^\pi,\gT^\pi\eta^\pi}$.
And we achieve it with an analysis through the lens of density functions.
Detailed proof can be found in Appendix C.

\begin{theorem}\label{Theorem_bound_of_TV}
Suppose Assumption~\ref{Assumption_reward_smooth} holds.
For any fixed policy $\pi$, 
\begin{equation*} \EB\sup_{s\in\gS}\TV(\hat\eta_n^\pi(s),\eta^\pi(s))\leq K^{\prime\prime}\sqrt{\frac{\log|\gS|}{n(1-\gamma)^4}}.
\end{equation*}
And for any $\delta\in(0,1)$, with probability at least $1-\delta$,
\begin{equation*}
\sup_{s\in\gS}\TV(\hat\eta_n^\pi(s),\eta^\pi(s))\leq \frac{K^\prime\prn{\sqrt{\log|\gS|}+\sqrt{\log(1/\delta)}}}{\sqrt{n(1-\gamma)^4}}.
\end{equation*}
$K^\prime,K^{\prime\prime}$ are absolute constants that depend only on $M$ in Assumption~\ref{Assumption_reward_smooth}.
\end{theorem}

Based on the theorem mentioned above, we observe that our upper bounds for the supremum TV distance are of the same order as those for the supremum $W_1$ metric or the KS distance.
Directly applying Theorem~\ref{Theorem_bound_of_Wdist} and Proposition~\ref{Proposition_bound_TV_with_W1} only attain a slow $n^{-\frac{1}{4}}$ rate.
We rather employ a similar analytical approach as in the case of the KS distance to establish a standard convergence rate of $n^{-\frac{1}{2}}$.
Detailed proof is given in Appendix D.

\paragraph{Extension I: Less Exploratory Offline Dataset}
In our analysis, we assume the offline dataset is obtained via a generative model.
We can relax such assumption to that the dataset is generated via some probability measure $\xi$ and the transition dynamic $P$ \cite{chen2019information}.
Specifically, we first sample a batch of state-action pairs $\left\{(s_i,a_i)\right\}_{i=1}^m$ with $\xi$.
For state-action pair $(s_i,a_i)$, we sample the next-state $s_i^\prime$ according to $P(\cdot\mid s_i,a_i)$.
Then we have the dataset $\gD=\left\{(s_i,a_i,s_i^\prime)\right\}_{i=1}^m$.
The empirical estimate of the transition probability is constructed as follows.
\begin{equation*}
    \what P_m(s^\prime\mid s,a)=\frac{\sum_{i=1}^m \ind\{(s_i,a_i,s_i^\prime)=(s,a,s^\prime)\}}{1\vee\sum_{i=1}^m \ind\{(s_i,a_i)=(s,a)\}}.
\end{equation*}
We may define the certainty-equivalence estimator $\hat\eta_m^\pi$ with $\what P_m$ in the same manner as before.
\begin{theorem}\label{Theorem_offline_non_asymp}
    Let $\xi_{\min}=\min_{(s,a)\in\gS\times\gA}\xi(s,a)$.
    For any fixed policy $\pi$ and for any $\delta\in(0,1)$, with probability at least $1-\delta$, as long as $m\geq 8\log(2|\gS||\gA|/\delta)/\xi_{\min}$, we have
\begin{enumerate}
        \item[\emph{(a)}] $\sup_{s\in\gS}W_1(\hat\eta_m^\pi(s),\eta^\pi(s))\leq \frac{\sqrt{18\log|\gS|}+\sqrt{\log(2/\delta)}}{\sqrt{\xi_{\min}m(1-\gamma)^{4}}}$.
        \item[\emph{(b)}] $\sup_{s\in\gS}\KS(\hat\eta_m^\pi(s),\eta^\pi(s))\leq \frac{C^\prime\prn{\sqrt{\log|\gS|}+\sqrt{\log(1/\delta)}}}{\sqrt{\xi_{\min}m(1-\gamma)^4}}$ when Assumption~\ref{Assumption_reward_bounded_density} is true.
        Here $C^\prime$ is a constant only depending on $C$ in Assumption~\ref{Assumption_reward_bounded_density}.
        \item[\emph{(c)}] $\sup_{s\in\gS}\TV(\hat\eta_m^\pi(s),\eta^\pi(s))\leq \frac{K^\prime\prn{\sqrt{\log|\gS|}+\sqrt{\log(1/\delta)}}}{\sqrt{\xi_{\min}m(1-\gamma)^4}}$ when Assumption~\ref{Assumption_reward_smooth} is true.
        Here $K^\prime$ is a constant only depending on $M$ in Assumption~\ref{Assumption_reward_smooth}.
\end{enumerate}
\end{theorem}
One may refer to Appendix G for detailed proof.
The dependence on $\xi_{\min}$ is inevitable since our aim is to bound the $l_\infty$-type estimation error (e.g. $\sup_{s\in\gS}W_1(\hat\eta^\pi_m(s),\eta^\pi(s))$).
See Example G.1 in Appendix G for more concrete discussions.
And we may also observe such phenomenon in the asymptotic results (see Theorem~\ref{Theorem_weak_convergence_empirical_process_offline_data}).
If $\xi$ is the uniform distribution and $\xi_{\min}=1/(|\gS||\gA|)$, then the bounds here is equivalent with bounds in Theorem~\ref{Theorem_bound_of_Wdist},~\ref{Theorem_bound_of_KS},~\ref{Theorem_bound_of_TV} up to constant factors.

\paragraph{Extension II: Unknown Reward Distributions}
In our previous analysis, we assume that the reward distribution $\gP_R$ is known.
Here we remove this assumption and extend our analysis to the scenario where the reward distribution is estimated using a finite number of samples.
Specifically, for each state-action pair $(s,a)\in\gS\times\gA$, in addition to the $n$ next-state samples $X_1^{(s,a)}, \ldots, X_n^{(s,a)}\simiid P(\cdot\mid s,a)$ used to obtain the estimator $\what{P}_n(\cdot\mid s,a)$, we also sample $n$ rewards $R_1^{(s,a)}, \ldots, R_n^{(s,a)}\simiid \gP_R(\cdot\mid s,a)$ to estimate $\gP_R(\cdot\mid s,a)$.
We denote the estimated reward distribution as $\what{\gP}_{R,n}$, and the empirical distributional Bellman operator as $\wtilde{\gT}^\pi_n$.
The explicit forms of $\what{\gP}_{R,n}$ and $\wtilde{\gT}^\pi_n$ will be given later, depending on the metric we choose.
Now, we may define the estimator $\tilde{\eta}_n^\pi$ as the solution to the fixed point equation $\eta=\wtilde{\gT}^\pi_n\eta$.
\begin{theorem}\label{Theorem_unknown_reward_non_asymp}
    For any fixed policy $\pi$ and for any $\delta\in(0,1)$, with probability at least $1-\delta$, we have
\begin{enumerate}
        \item[\emph{(a)}] $\sup_{s\in\gS}W_1(\tilde\eta_n^\pi(s),\eta^\pi(s))\leq \frac{\sqrt{9\log|\gS|}+\sqrt{\log(1/\delta)/2}}{\sqrt{n(1-\gamma)^{4}}}$. 
        In this case, we may directly use the empirical reward distributions as estimator of $\gP_R$, and the empirical distributional Bellman operator is given by
        $$\brk{\wtilde{\gT}^\pi_n(\eta)}(s)=\frac{1}{n}\sum_{i=1}^n\sum_{a\in\gA}\pi(a|s)\prn{b_{R^{(s,a)}_i,\gamma}}_\#\eta(X^{(s,a)}_i).$$
        \item[\emph{(b)}] $\sup_{s\in\gS}\KS(\tilde\eta_n^\pi(s),\eta^\pi(s))\leq \frac{C^\prime\prn{\sqrt{\log|\gS|}+\sqrt{\log(1/\delta)}}}{\sqrt{n(1-\gamma)^4}}+\frac{C^\prime}{1-\gamma}\sup_{s\in\gS,a\in\gA}\norm{\what{\gP}_{R,n}(\cdot\mid s,a)-\gP_R(\cdot\mid s,a)}_{\FTV}$ when Assumption~\ref{Assumption_reward_bounded_density} is true.
        Here $C^\prime$ is a constant only depending on $C$ in Assumption~\ref{Assumption_reward_bounded_density}.
        In this case, $\what{\gP}_{R,n}(\cdot\mid s,a)$ can be any density estimator as long as it has a Lebesgue density upper-bounded by $2C$ for each $(s,a)\in\gS\times\gA$, and $\wtilde{\gT}^\pi_n$ is the distributional Bellman operator of the empirical MDP $\wtilde{M}=\<\gS,\gA,\what{\gP}_{R,n},\what{P}_n,\gamma\>$.
        \item[\emph{(c)}] $\sup_{s\in\gS}\TV(\tilde\eta_n^\pi(s),\eta^\pi(s))\leq \frac{K^\prime\prn{\sqrt{\log|\gS|}+\sqrt{\log(1/\delta)}}}{\sqrt{n(1-\gamma)^4}}+\frac{K^\prime}{1-\gamma}\sup_{s\in\gS,a\in\gA}\norm{\what{\gP}_{R,n}(\cdot\mid s,a)-\gP_R(\cdot\mid s,a)}_{\FTV}$ when Assumption~\ref{Assumption_reward_smooth} is true.
        Here $K^\prime$ is a constant only depending on $M$ in Assumption~\ref{Assumption_reward_smooth}.
        In this case, $\what{\gP}_{R,n}(\cdot\mid s,a)$ can be any density estimator as long as it has a Lebesgue density $\hat p_{s,a}^R\in H_1^1(\RB)$ with $\norm{\hat p_{s,a}^R}_{H_1^1(\RB)}\leq 2M$, and $\wtilde{\gT}^\pi_n$ is the distributional Bellman operator of the empirical MDP $\wtilde{M}=\<\gS,\gA,\what{\gP}_{R,n},\what{P}_n,\gamma\>$.
\end{enumerate}
\end{theorem}
See Appendix G for a detailed proof.
Our non-asymptotic bounds here depend on specific choices of the reward estimator $\what{\gP}_{R,n}$.
One may refer to Remark\ref{Remark_density_estimator} for some choices of density estimators and the corresponding non-asymptotic bounds of total variation $\norm{\what{\gP}_{R,n}(\cdot\mid s,a)-\gP_R(\cdot\mid s,a)}_{\FTV}$.

\paragraph{Implications in Risk-sensitive RL} We conclude this section with a brief discussion that highlights the implications of our results in the field of risk-sensitive RL.
The main goal of risk-sensitive RL is to find a policy $\pi^\star$ minimizing a risk functional of the return distribution.
Concretely, we define $L_\rho(\pi):=\rho(\eta^\pi(s))$ and $\pi^\star:=\arg\min_{\pi}L_\rho(\pi)$.
Here $\rho(\cdot)$ is some risk functional such as value-at-risk or conditional value-at-risk, and $s$ the initial state.
When the underlying MDP is not explicitly known, $\pi^*$ can be estimated by $\pi^\star:=\arg\min_{\pi}\what L_\rho(\pi)$, where $\what L_\rho(\pi):=\rho(\hat\eta_n^\pi(s))$.
If $\rho(\cdot)$ is Lipschitz continuous w.r.t. some probability metric, then we may derive the following sample complexity bounds with our results above.
\begin{corollary}\label{Corollary_risk_sensitive_RL}
    $n=\wtilde{O}\left(\frac{|\gS||\gA|}{\varepsilon^2(1-\gamma)^4
    }\right)$ is sufficient to guarantee $L_\rho(\hat\pi)-L_\rho(\pi^\star)\leq \varepsilon$ with probability at least $1-\delta$ as long as one of the following conditions is true:
\begin{enumerate}
        \item[\emph{(a)}] $\rho(\cdot)$ is Lipschitz continuous w.r.t. the $W_1$ metric.
        \item[\emph{(b)}] $\rho(\cdot)$ is Lipschitz continuous w.r.t. the KS metric, and Assumption~\ref{Assumption_reward_bounded_density} holds.
        \item[\emph{(c)}] $\rho(\cdot)$ is Lipschitz continuous w.r.t. the TV metric and Assumption~\ref{Assumption_reward_smooth} holds.
\end{enumerate}
\end{corollary}
The Lipschitz condition is general since it covers a wide range of risk measures, including the class of distortion risk measures, convex and coherent measures. etc. \cite{liang2023regret}.

The idea behind Corollary~\ref{Corollary_risk_sensitive_RL} is straightforward.
As long as we have non-asymptotic bounds for $d(\eta^\pi(s),\eta_n^\pi(s))$ that hold for arbitrary fixed $\pi$ (Theorem~\ref{Theorem_bound_of_Wdist},~\ref{Theorem_bound_of_KS},~\ref{Theorem_bound_of_TV}), then we may use the covering argument bound $\sup_{\pi}d(\eta^\pi(s),\eta_n^\pi(s))$.
And such uniform convergence results can further lead to bounds on $L_\rho(\hat\pi)-L_\rho(\hat\pi)\leq \varepsilon$.

\subsection{Results on Asymptotic Analysis}\label{Section_asymp}

We first give our main results of the asymptotic analysis in Theorem~\ref{Theorem_weak_convergence_empirical_process}.

\begin{theorem}\label{Theorem_weak_convergence_empirical_process}
    For any fixed policy $\pi$, we have for any $s\in\gS$
    \begin{enumerate}
        \item[\emph{(a)}] $\sqrt{n}\prn{\hat\eta_n^\pi(s)-\eta^\pi(s)}$ converge weakly to the process $f\mapsto \brk{\prn{\gI-\gT^\pi}^{-1}\wtilde\GB^\pi}(s)f$ in $\ell^\infty(\FW)$, where $\FW:=\brc{f\mid f \text{ is supported on $\brk{0,\frac{1}{1-\gamma}}$ and $1$-Lipschitz}}$.
        \item[\emph{(b)}] If Assumption~\ref{Assumption_reward_bounded_density} is true, then $\sqrt{n}\prn{\hat\eta_n^\pi(s)-\eta^\pi(s)}$ converges weakly to the process $f\mapsto \brk{\prn{\gI-\gT^\pi}^{-1}\wtilde\GB^\pi}(s)f$ in $\ell^\infty(\FKS)$, where $\FKS:=\brc{\ind_{(-\infty,z]}\mid z\in\brk{0,\frac{1}{1-\gamma}}}$.

        \item[\emph{(c)}] If Assumption~\ref{Assumption_reward_smooth} is true, then $\sqrt{n}\prn{\hat\eta_n^\pi(s)-\eta^\pi(s)}$ converge weakly to the process $f\mapsto \brk{\prn{\gI-\gT^\pi}^{-1}\wtilde\GB^\pi}(s)f$ in $\ell^\infty(\FTV)$, where $\FTV:=\brc{\ind_{A}\mid  A\subseteq \brk{0,\frac{1}{1-\gamma}}\text{ is Borel}}$.
    \end{enumerate}
    
    Here the random element $\wtilde\GB^\pi$ is defined as
    \begin{equation*}
        \wtilde\GB^\pi(s):=\sum_{a\in\gA}\pi(a\mid s)\sum_{s^\prime\in\gS} Z_{s,a,s^\prime}\int_0^1 \prn{b_{r,\gamma}}_\#\eta^\pi(s^\prime)\gP_R(dr\mid s,a),\ \forall s\in\gS,
    \end{equation*}
    where  $\prn{Z_{s,a,s^\prime}}_{(s,a,s^\prime)\in\gS\times\gA\times\gS}$ are zero-mean Gaussians with 
    \begin{equation*}
        \cov(Z_{s_1,a_1,s_1^\prime},Z_{s_2,a_2,s_2^\prime})=\ind\brc{(s_1,a_1)=(s_2,a_2)}P(s_1^\prime\mid s_1,a_1)\prn{\ind\brc{s_1^\prime=s_2^\prime}-P(s_2^\prime\mid s_1,a_1)}.
    \end{equation*}
    And the operator $\prn{\gI-\gT^\pi}^{-1}$ is defined as $\prn{\gI-\gT^\pi}^{-1}:=\sum_{i=0}^\infty \prn{\gT^\pi}^i$.
\end{theorem}

At a high level, we depict the asymptotic behavior of $\sqrt{n}\prn{\hat\eta^\pi_n-\eta^\pi}$ by showing that the ``empirical processes" induced by $\sqrt{n}\prn{\hat\eta^\pi_n-\eta^\pi}$ converge to a Gaussian random element.
Moreover, the limiting random element has a simple structure in the sense that it is a linear transformation of a finite mixture of probability distributions with Gaussian coefficients.
Our asymptotic results are general in the sense that the conclusions are valid in different spaces under different regularity conditions: $\ell^\infty(\FW)$, $\ell^\infty(\FKS)$, and $\ell^\infty(\FTV)$.
Therefore, our findings have the potential to yield numerous valuable inferential procedures for the field of distributional reinforcement learning.
Our proof of Theorem~\ref{Theorem_weak_convergence_empirical_process} builds on the foundation of our non-asymptotic analysis and can be found in Section~\ref{Section_analysis}.
Detailed proof can be found in Appendix E.

\paragraph{Extension I: Less Exploratory Offline Dataset}
We also present asymptotic results in the setting of less exploratory dataset.

\begin{theorem}\label{Theorem_weak_convergence_empirical_process_offline_data}
    For any fixed policy $\pi$, we have for any $s\in\gS$
    \begin{enumerate}
        \item[\emph{(a)}] $\sqrt{m}\prn{\hat\eta_m^\pi(s)-\eta^\pi(s)}$ converge weakly to the process $f\mapsto \brk{\prn{\gI-\gT^\pi}^{-1}\mathring\GB^\pi}(s)f$ in $\ell^\infty(\FW)$.
        \item[\emph{(b)}] If Assumption~\ref{Assumption_reward_bounded_density} is true, then $\sqrt{m}\prn{\hat\eta_m^\pi(s)-\eta^\pi(s)}$ converges weakly to the process $f\mapsto \brk{\prn{\gI-\gT^\pi}^{-1}\mathring\GB^\pi}(s)f$ in $\ell^\infty(\FKS)$.

        \item[\emph{(c)}] If Assumption~\ref{Assumption_reward_smooth} is true, then $\sqrt{m}\prn{\hat\eta_m^\pi(s)-\eta^\pi(s)}$ converge weakly to the process $f\mapsto \brk{\prn{\gI-\gT^\pi}^{-1}\mathring\GB^\pi}(s)f$ in $\ell^\infty(\FTV)$.
    \end{enumerate}
    
    Here the random element $\mathring\GB^\pi$ is defined as
    \begin{equation*}
        \mathring\GB^\pi(s):=\sum_{a\in\gA}\pi(a\mid s)\sum_{s^\prime\in\gS}\frac{ \mathring Z_{s,a,s^\prime}}{\sqrt{\xi(s,a)}}\int_0^1 \prn{b_{r,\gamma}}_\#\eta^\pi(s^\prime)\gP_R(dr\mid s,a),\ \forall s\in\gS,
    \end{equation*}
    where  $\prn{\mathring Z_{s,a,s^\prime}}_{(s,a,s^\prime)\in\gS\times\gA\times\gS}$ are zero-mean Gaussians with 
    \begin{equation*}
        \cov(\mathring Z_{s_1,a_1,s_1^\prime},\mathring Z_{s_2,a_2,s_2^\prime})=\ind\brc{(s_1,a_1)=(s_2,a_2)}P(s_1^\prime\mid s_1,a_1)\prn{\ind\brc{s_1^\prime=s_2^\prime}-\xi(s_1,a_1)P(s_2^\prime\mid s_1,a_1)}.
    \end{equation*}
\end{theorem}
The main difference between the limiting random elements $\mathring \GB^\pi$ and $\wtilde \GB^\pi$ is that $\mathring \GB^\pi$ has larger ``variance".
This is because we introduce additional randomness in the process of sampling the state-action pairs from distribution $\xi$.
We can also observe the $\frac{1}{\sqrt{\xi(s,a)}}$ factor in $\mathring \gB^\pi(s)$.
This suggests that the $l_\infty$-type error bound would inevitably depend on the factor $\frac{1}{\sqrt{\xi_{\min}}}$ as in Theorem~\ref{Theorem_offline_non_asymp}.

\paragraph{Extension II: Unknown Reward Distributions}
We give the asymptotic results in the setting of unknown reward distributions.
\begin{theorem}\label{Theorem_weak_convergence_empirical_process_unknown_reward}
    For any fixed policy $\pi$, we have for any $s\in\gS$
    \begin{enumerate}
        \item[\emph{(a)}] $\sqrt{n}\prn{\tilde\eta_n^\pi(s)-\eta^\pi(s)}$ converge weakly to the process $f\mapsto \brk{\prn{\gI-\gT^\pi}^{-1}\bar\GB^\pi}(s)f$ in $\ell^\infty(\FW)$.
        Here $\wtilde{\gT}^\pi_n$ is defined as in part (a) of Theorem~\ref{Theorem_unknown_reward_non_asymp}, and the random element $\bar{\GB}^\pi(s)$ is a zero-mean Gaussian process in $\ell^\infty(\FW)$ with covariance function: $\forall f, g\in\FW,$
            \begin{equation*}
            \begin{aligned}
            &\cov\big(\bar\GB^\pi(s)f,\bar\GB^\pi(s)g\big)=\sum_{a\in\gA}\pi(a\mid s)^2\Bigg\{\sum_{s^\prime\in\gS}P(s^\prime\mid s,a)\int_0^1 \brk{\prn{b_{r,\gamma}}_\#\eta^\pi(s^\prime)f}\brk{\prn{b_{r,\gamma}}_\#\eta^\pi(s^\prime)g}\gP_R(dr\mid s,a)\\
            &-\brk{\sum_{s^\prime\in\gS}P(s^\prime\mid s,a)\int_0^1 \prn{b_{r,\gamma}}_\#\eta^\pi(s^\prime)f\gP_R(dr\mid s,a)}\brk{\sum_{s^\prime\in\gS}P(s^\prime\mid s,a)\int_0^1 \prn{b_{r,\gamma}}_\#\eta^\pi(s^\prime)g\gP_R(dr\mid s,a)}\Bigg\}.
            \end{aligned}
    \end{equation*}
        \item[\emph{(b)}] If Assumption~\ref{Assumption_reward_bounded_density} is true and $\sqrt{n}\prn{\what{\gP}_{R,n}(\cdot\mid s,a)-\gP_R(\cdot\mid s,a)}$ converges weakly to a tight random element $\GB^R_{s,a}$ in $\ell^\infty(\FKS)$ for all $(s,a)\in\gS\times\gA$, then $\sqrt{n}\prn{\tilde\eta_n^\pi(s)-\eta^\pi(s)}$ converges weakly to the process $f\mapsto \brk{\prn{\gI-\gT^\pi}^{-1}\prn{\wtilde\GB^\pi+\GB^\pi_R}}(s)f$ in $\ell^\infty(\FKS)$, where
        $\wtilde\GB^\pi$ is defined in Theorem~\ref{Theorem_weak_convergence_empirical_process} and $\GB^\pi_R(s)$ is independent of $\wtilde\GB^\pi$ and given by
        \begin{equation*}
            \begin{aligned}
            \GB^\pi_R(s)=\sum_{a\in\gA,s^\prime\in\gS}\pi(a\mid s)P(s^\prime\mid s,a)\int_0^1 \prn{b_{r,\gamma}}_\#\eta^\pi(s^\prime)\GB^R_{s,a}(dr).
            \end{aligned}
        \end{equation*}

        \item[\emph{(c)}] If Assumption~\ref{Assumption_reward_smooth} is true and $\sqrt{n}\prn{\what{\gP}_{R,n}(\cdot\mid s,a)-\gP_R(\cdot\mid s,a)}$ converges weakly to a tight random element $\GB^R_{s,a}$ in $\ell^\infty(\FTV)$ for all $(s,a)\in\gS\times\gA$, then $\sqrt{n}\prn{\tilde\eta_n^\pi(s)-\eta^\pi(s)}$ converge weakly to the process $f\mapsto \brk{\prn{\gI-\gT^\pi}^{-1}\prn{\wtilde\GB^\pi+\GB^\pi_R}}(s)f$ in $\ell^\infty(\FTV)$, where 
        $\wtilde\GB^\pi$ is given in Theorem~\ref{Theorem_weak_convergence_empirical_process} and $\GB^\pi_R(s)$ is independent of $\wtilde\GB^\pi$ and given by
        \begin{equation*}
            \begin{aligned}
            \GB^\pi_R(s)=\sum_{a\in\gA,s^\prime\in\gS}\pi(a\mid s)P(s^\prime\mid s,a)\int_0^1 \prn{b_{r,\gamma}}_\#\eta^\pi(s^\prime)\GB^R_{s,a}(dr).
            \end{aligned}
        \end{equation*}
    \end{enumerate}
\end{theorem}

For the weak convergence in $\ell^\infty(\FKS)$ (or $\ell^\infty(\FTV)$) we require the reward estimator satisfies that $\sqrt{n}\prn{\what{\gP}_{R,n}(\cdot\mid s,a)-\gP_R(\cdot\mid s,a)}$ converges weakly to a tight random element in $\ell^\infty(\FKS)$ (or $\ell^\infty(\FTV)$).
Such an assertion is invalid for most non-parametric density estimators like the histogram estimators, kernel density estimators, etc.
Since the typical convergence rates of most density estimators are slower than $O(n^{-1/2})$.
However, such an assertion does hold if we confine $\gP_R(\cdot\mid s,a)$ to some parametric families.
For example, set $\gP_R(\cdot\mid s,a)$ to be truncated normal distributions with known mean and variance.
We comment that we may observe the phenomenon of inflated variance in the limiting distribution as before since estimating the reward distribution induces new randomness.

\section{Statistical Inference}\label{Section_inference}
In this section, we consider the statistical inference of distributional reinforcement learning. 
First, we present non-parametric confidence sets for $\eta^\pi(s)$ in the forms of $W_1$, \KS, and \TV\ balls.
Second, we study inference on Hadamard differentiable functionals, with moments, quantiles, and uniform advantage of policy as special examples. 
\subsection{Inferences for \texorpdfstring{$\eta^\pi(s)$}{eta}}

Our theoretical findings in Theorems~\ref{Theorem_weak_convergence_empirical_process} allow us to construct confidence sets in the space $\Delta\prn{\brk{0,\frac{1}{1-\gamma}}}$ for the true return distribution $\eta^\pi(s)$, given any initial state $s\in\gS$.
Specifically, we can construct three types of confidence sets for $\eta^\pi(s)$: $W_1$, $\KS$ and $\TV$ balls.
\begin{theorem}
    \label{Theorem_inference_W1_ball}
    For some fixed policy $\pi$ and initial state $s\in\gS$, define
        $\rho_1(\alpha):=\frac{z_1(1-\alpha)}{\sqrt{n}}$, where $z_1(p)$ is defined as the $p$-quantile of $\sup_{f\in\FW} \brk{(\gI-\gT^\pi)^{-1}\wtilde \GB^\pi} (s)f$.
        Let the confidence set
        \begin{equation*}
        C_1(\alpha):=\brc{\eta\in\Delta\prn{\brk{0,\frac{1}{1-\gamma}}}\Big\vert \ W_1(\eta,\hat \eta_n^{\pi}(s))\leq \rho_1(\alpha)},
    \end{equation*}
    then
    \begin{equation*}
    \lim_{n\to\infty}\PB\prn{\eta^\pi(s)\in C_1(\alpha)}=1-\alpha.
    \end{equation*}
        Furthermore, if Assumption~\ref{Assumption_reward_bounded_density} holds, we have 
        \begin{equation*}
         \sup_{f\in\FW} \brk{(\gI-\gT^\pi)^{-1}\wtilde \GB^\pi} (s)f=\int_0^{\frac{1}{1-\gamma}}\abs{\brk{(\gI-\gT^\pi)^{-1}\wtilde \GB^\pi} (s)\ind\brc{\cdot\leq x}}dx.   
        \end{equation*}
\end{theorem}
The proof is in Appendix H.
Recall that for two probability distributions $\mu_1,\mu_2$ supported on $\brk{0,\frac{1}{1-\gamma}}$ we have 
\begin{equation*}
    W_1(\mu_1,\mu_2)=\sup_{f\in\FW}\abs{\mu_1 f-\mu_2 f}=\int_0^{\frac{1}{1-\gamma}}\abs{F_1(x)-F_2(x)}dx,
\end{equation*}
where $F_1$ and $F_2$ are the cumulative distribution functions of $\mu_1$ and $\mu_2$, respectively.
Hence, the asymptotic distribution of $W_1\prn{\hat\eta_n(s),\eta^\pi(s)}$ can be described in two different ways using asymptotic results in Theorem~\ref{Theorem_weak_convergence_empirical_process} and the continuous mapping theorem.
And $\rho_1(\alpha)$ can be determined accordingly.


The confidence set $C_1(\alpha)$ is asymptotically valid, but it relies on the quantile, \ie, $z_1(1-\alpha)$, of the unknown limiting distributions that depend on $\gT^\pi$ and $\eta^\pi$.
We may get a consistent estimate of $z_1(1-\alpha)$ by the plug-in approach.
\begin{proposition}\label{Proposition_plug_in_W1}
    For any fixed policy $\pi$ and initial state $s\in\gS$, define
    \begin{equation*}
        \what\GB^\pi(s):=\sum_{a\in\gA}\pi(a\mid s)\sum_{s^\prime\in\gS} \what Z_{s,a,s^\prime}\int_0^1 \prn{b_{r,\gamma}}_\#\hat\eta_n^\pi(s^\prime)d\gP_R(dr\mid s,a),\ \forall s\in\gS,
    \end{equation*}
    where  $\prn{\what Z_{s,a,s^\prime}}_{(s,a,s^\prime)\in\gS\times\gA\times\gS}$ are zero-mean Gaussians with 
    \begin{equation*}
        \cov(\what Z_{s_1,a_1,s_1^\prime},\what Z_{s_2,a_2,s_2^\prime})=\ind_{\brc{(s_1,a_1)=(s_2,a_2)}}\what P(s_1^\prime\mid s_1,a_1)\prn{\ind_{\brc{s_1^\prime=s_2^\prime}}-\what P(s_2^\prime\mid s_1,a_1)},
    \end{equation*}
    and
    \begin{equation*}
    \begin{aligned}
    \hat z_1(p)&:=\inf\brc{t\ \Bigg\vert\ \PB\prn{\sup_{f\in\FW} \brk{(\gI-\what\gT^\pi)^{-1}\what \GB^\pi} (s)f\leq t}\geq p}.\\
    \end{aligned}
    \end{equation*}
    Then $\hat z_1(p)\cp z_1(p)$ if $z_1(\cdot)$ is continuous at $p$. 
    
    Furthermore, if Assumption~\ref{Assumption_reward_bounded_density} holds, we have 
        \begin{equation*}
    \begin{aligned}
\sup_{f\in\FW} \brk{(\gI-\what\gT^\pi)^{-1}\what \GB^\pi} (s)f=\int_0^{\frac{1}{1-\gamma}}\abs{\brk{(\gI-\what\gT^\pi)^{-1}\what \GB^\pi} (s)\ind_{(-\infty,x]}}dx,
    \end{aligned}
    \end{equation*}which can be computed efficiently, and $z_1(\cdot)$ is continuous at any $p\in(0,1)$.
\end{proposition}
The proof is in Appendix H.

We can also construct confidence sets in the form of $\KS$ balls and $\TV$ balls for $\eta^\pi(s)$ when Assumption~\ref{Assumption_reward_bounded_density} or Assumption~\ref{Assumption_reward_smooth} holds.
\begin{theorem}
    \label{Theorem_inference_KS_ball}
    For some fixed policy $\pi$ and $s\in\gS$, define
    \begin{enumerate}
        \item[] $\rho_2(\alpha):=\frac{z_2(1-\alpha)}{\sqrt{n}}$, where $z_2(p)$ is defined as the $p$-quantile of $\sup_{f\in\FKS} \brk{(\gI-\gT^\pi)^{-1}\wtilde \GB^\pi} (s)f$,
        \item[] $\rho_3(\alpha):=\frac{z_3(1-\alpha)}{\sqrt{n}}$, where $z_3(p)$ is defined as the $p$-quantile of $\sup_{f\in\FTV} \brk{(\gI-\gT^\pi)^{-1}\wtilde \GB^\pi} (s)f$.
    \end{enumerate}
    Let 
    \begin{enumerate}
        \item[] $C_2(\alpha):=\brc{\eta\in\Delta\prn{\brk{0,\frac{1}{1-\gamma}}}\Big\vert\ \KS(\eta,\hat \eta_n^\pi(s))\leq \rho_2(\alpha)},$
        \item[] $C_3(\alpha):=\brc{\eta\in\Delta\prn{\brk{0,\frac{1}{1-\gamma}}}\Big\vert\ \TV(\eta,\hat \eta_n^\pi(s))\leq \rho_3(\alpha)}.$
    \end{enumerate}
    Then we have 
    \begin{enumerate}
        \item[\emph{(a)}] $\lim_{n\to\infty}\PB\prn{\eta^\pi(s)\in C_2(\alpha)}=1-\alpha$ under Assumption~\ref{Assumption_reward_bounded_density};
        \item[\emph{(b)}]  $\lim_{n\to\infty}\PB\prn{\eta^\pi(s)\in C_3(\alpha)}=1-\alpha$ under Assumption~\ref{Assumption_reward_smooth}.
        \end{enumerate}
\end{theorem}

$\rho_2(\alpha)$ and $\rho_3(\alpha)$ asymptotically discribe the quantile of $\KS(\hat\eta_n^\pi(s),\eta^\pi(s))$, $\TV(\hat\eta_n^\pi(s),\eta^\pi(s))$, respectively. They are determined using results in Theorem~\ref{Theorem_weak_convergence_empirical_process} and the continuous mapping theorem.


Note that $C_2(\alpha)$ and $C_3(\alpha)$ are asymptotically valid confidence sets.
Although they rely on the unknown quantile function $z_2(1-\alpha)$ and $z_3(1-\alpha)$, they can be consistently estimated using a plug-in approach as the case of $z_1(1-\alpha)$.

\begin{proposition}\label{Proposition_plug_in_KS}
    For any fixed $\pi$ and $s\in\gS$, define
    \begin{equation*}
    \begin{aligned}
    \hat z_2(p)&:=\inf\brc{t\ \Bigg\vert\ \PB\prn{\sup_{f\in\FKS} \brk{(\gI-\what\gT^\pi)^{-1}\what \GB^\pi} (s)f\leq t}\geq p}.\\
    \hat z_3(p)&:=\inf\brc{t\ \Bigg\vert\ \PB\prn{\sup_{f\in\FTV} \brk{(\gI-\what\gT^\pi)^{-1}\what \GB^\pi} (s)f\leq t}\geq p}.
    \end{aligned}
    \end{equation*}
    Then $\hat z_2(p)\cp z_2(p)$ if Assumption~\ref{Assumption_reward_bounded_density} holds,
    $\hat z_3(p)\cp z_3(p)$ if Assumption~\ref{Assumption_reward_smooth} holds.
\end{proposition}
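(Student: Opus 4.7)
The plan is to mirror the argument of Proposition~\ref{Proposition_plug_in_W1}, now adapted to the spaces $\ell^\infty(\FKS)$ (for $\hat z_2$) under Assumption~\ref{Assumption_reward_bounded_density} and $\ell^\infty(\FTV)$ (for $\hat z_3$) under Assumption~\ref{Assumption_reward_smooth}. The strategy proceeds in three stages: first, establish that, conditional on the data, the plug-in process $(\gI-\what\gT^\pi)^{-1}\what\GB^\pi(s)$ converges weakly to $(\gI-\gT^\pi)^{-1}\wtilde\GB^\pi(s)$ in the appropriate $\ell^\infty$ space, in probability over the data; second, apply the continuous mapping theorem with the supremum functional to transfer this to a scalar-valued convergence; third, invoke standard quantile-convergence results at continuity points of the limiting CDF.

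For the first stage, two consistency ingredients feed in. By the law of large numbers, $\what P(\cdot\mid s,a) \cas P(\cdot\mid s,a)$ for every $(s,a)$, so the covariance kernel of the Gaussian family $(\what Z_{s,a,s'})$ converges to that of $(Z_{s,a,s'})$ and hence $\what Z \cd Z$ conditionally. Simultaneously, Theorem~\ref{Theorem_bound_of_KS} (under Assumption~\ref{Assumption_reward_bounded_density}) and Theorem~\ref{Theorem_bound_of_TV} (under Assumption~\ref{Assumption_reward_smooth}) yield $\hat\eta_n^\pi(s') \to \eta^\pi(s')$ in the KS and TV metrics respectively, so the pushforward integrals $\int_0^1 (b_{r,\gamma})_\# \hat\eta_n^\pi(s') \gP_R(dr\mid s,a)$ approach their true counterparts uniformly over $\FKS$ and $\FTV$. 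Together these give $\what\GB^\pi \cd \wtilde\GB^\pi$ in the correct space. To handle the inverse operator, I would expand $(\gI-\what\gT^\pi)^{-1}=\sum_{i\geq 0}(\what\gT^\pi)^i$ restricted to the subspace of zero-mass signed measures; the non-expansiveness of $\gT^\pi$ established in Propositions~\ref{Proposition_value_iteration_KS} and~\ref{Proposition_value_iteration_TV}, together with the ``expansion trick'' used in the proofs of Theorems~\ref{Theorem_bound_of_KS} and~\ref{Theorem_bound_of_TV}, lets us truncate the series uniformly in $n$ and then pass to the limit, yielding $(\gI-\what\gT^\pi)^{-1}\what\GB^\pi \cd (\gI-\gT^\pi)^{-1}\wtilde\GB^\pi$ in $\ell^\infty(\FKS)$ (resp.\ $\ell^\infty(\FTV)$).

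The second stage is immediate since $g\mapsto\sup_{f\in\gF}g(f)$ is continuous on $\ell^\infty(\gF)$. For the third stage, the limit $\sup_{f\in\FKS}\brk{(\gI-\gT^\pi)^{-1}\wtilde\GB^\pi}(s)f$ is the supremum of a centered, separable Gaussian process, so its distribution function is continuous on its support (indeed strictly increasing there), and a standard quantile-convergence result (e.g.\ Lemma~21.2 of van der Vaart) then delivers $\hat z_2(p) \cp z_2(p)$ for every $p\in(0,1)$; the same argument yields $\hat z_3(p) \cp z_3(p)$. The main obstacle lies in the first stage: $(\gI-\what\gT^\pi)^{-1}$ does not have a uniformly bounded operator norm in the KS or TV topology on arbitrary probability measures, so the entire weak-convergence analysis must be confined to the zero-mass signed-measure subspace, and uniform control over the (substantially larger than $\FW$) classes $\FKS$ and $\FTV$ has to be recovered from the density smoothness provided by Assumptions~\ref{Assumption_reward_bounded_density} and~\ref{Assumption_reward_smooth}---precisely the mechanism that powers the non-asymptotic bounds of Section~\ref{Section_non_asymp}.
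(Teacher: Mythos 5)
Your proposal is correct in substance but organizes the argument differently from the paper. The paper never works at the level of conditional weak convergence of the plug-in process: it writes both $z_2$ (resp.\ $z_3$) and $\hat z_2$ (resp.\ $\hat z_3$) as quantiles of a single scalar functional $G(p,\eta,Z)=\sup_{f}\brk{(\gI-\gT^\pi(p))^{-1}g_1(\eta,p,Z)}(s)f$ of the parameters $(p,\eta)$ and an exogenous standard Gaussian $Z$, shows $G$ is continuous in $(p,\eta)$ for each fixed $z$ (with $\eta$ measured in the $\FKS$, resp.\ $\FTV$, norm), and then invokes Lemma~\ref{Lemma_continuous_distribution_function_and_quantile} together with $\what P\cp P$ and $\sup_s\norm{\hat\eta^\pi_n(s)-\eta^\pi(s)}_{\FKS}\cp0$ (Theorem~\ref{Theorem_bound_of_KS}), resp.\ the $\FTV$ statement from Theorem~\ref{Theorem_bound_of_TV}, to conclude via the continuous mapping theorem. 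Your route — conditional weak convergence of $(\gI-\what\gT^\pi)^{-1}\what\GB^\pi(s)$ to $(\gI-\gT^\pi)^{-1}\wtilde\GB^\pi(s)$ in $\ell^\infty(\FKS)$/$\ell^\infty(\FTV)$, then the supremum map, then quantile convergence — is the standard bootstrap-validity presentation and reaches the same conclusion, and it has the merit of supplying an explicit reason (absolute continuity of suprema of Gaussian processes) why the limiting quantile is continuous, a point the paper leaves implicit here. The price is that you must justify process-level convergence in the nonseparable spaces (fine here because $\wtilde\GB^\pi$ and $\what\GB^\pi$ live in the finite-dimensional spans of the mixtures $\int_0^1(b_{r,\gamma})_\#\eta(s^\prime)\gP_R(dr\mid s,a)$, as in the paper's proof of Theorem~\ref{Theorem_weak_convergence_empirical_process}), and your "truncate the Neumann series uniformly in $n$" step needs the analogues of Lemma~\ref{Lemma_bound_loc_by_TV} and Lemma~\ref{Lemma_bound_Sobolev_by_TV} applied to $\what\GB^\pi$: after Jordan-type normalization, its components have densities bounded by $C$ (resp.\ $H^1_1$ norm bounded by $M$) because each mixture is smoothed by the reward density, which is exactly what makes the tail $\sum_{i\ge k}\norm{(\what\gT^\pi)^i\cdot}$ geometrically small via Propositions~\ref{Proposition_bound_KS_with_W1_signed} and~\ref{Proposition_bound_TV_with_W1_signed}; you allude to this mechanism but should state it explicitly to close the argument.
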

One may refer to the proof in Appendix H.

\subsection{Inference for Hadamard Differentiable Functionals}

We consider the problem of statistical inference for $\phi(\eta^\pi(s))$, where $\phi(\cdot)\colon \ell^{\infty}(\FW)\to \RB$ represents a statistical functional.
When $\phi(\cdot)$ is Hadamard differentiable, we can determine the limiting distribution of $\sqrt{n}\prn{\phi(\hat\eta_n^\pi(s))-\phi(\eta^\pi(s))}$ using the functional Delta method. 
Subsequently, we can construct asymptotic confidence sets for $\phi(\eta^\pi(s))$ based on this result.

\begin{theorem}\label{Theorem_Hadamard_Inference}
    For fixed policy $\pi$ and $s\in\gS$, let $\phi(\cdot)\colon \ell^{\infty}(\FW)\to \RB$ be Hadamard differentiable at $\eta^\pi(s)$ tangentially to $\DB_0\subset \ell^{\infty}(\FW)$.
    Suppose $\brk{\prn{\gI-\gT^\pi}^{-1}\wtilde \GB^\pi}(s)\in\DB_0$ and define
    \begin{equation*}
        C_\phi(\alpha) := \brk{\phi(\hat\eta_n^\pi(s))+\frac{z_\phi(\alpha/2)}{\sqrt{n}}, \phi(\hat\eta_n^\pi(s))+\frac{z_\phi(1-\alpha/2)}{\sqrt{n}}},
    \end{equation*}
    where $z_\phi$ is the quantile function of $\phi^\prime_{\eta^\pi(s)}\prn{\brk{\prn{\gI-\gT^\pi}^{-1}\wtilde \GB^\pi}(s)}$, which is indeed a one-dimensional gaussian variable with zero means.
    We have
    \begin{equation*}
        \lim_{n\to\infty} \PB\prn{\phi(\eta^\pi(s))\in C_\phi(\alpha)}=1-\alpha.
    \end{equation*}
    Under Assumption~\ref{Assumption_reward_bounded_density} or Assumption~\ref{Assumption_reward_smooth}, we have similar results for $\phi(\cdot)\colon \ell^{\infty}(\FKS)\to \RB$ or $\ell^{\infty}(\FTV)\to \RB$ that is Hadamard differentiable at $\eta^\pi(s)$.
\end{theorem}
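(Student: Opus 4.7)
The plan is to apply the functional Delta method to the weak convergence results of Theorem~\ref{Theorem_weak_convergence_empirical_process} and then invert quantiles of the limiting Gaussian to obtain the stated coverage. The main work is essentially bookkeeping; the asymptotic theory has already been established.

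First, by Theorem~\ref{Theorem_weak_convergence_empirical_process}(a), the sequence $\sqrt{n}(\hat\eta_n^\pi(s) - \eta^\pi(s))$ converges weakly in $\ell^\infty(\FW)$ to the Gaussian element $\brk{(\gI-\gT^\pi)^{-1}\wtilde\GB^\pi}(s)$, and by hypothesis this limit lies almost surely in the tangent set $\DB_0$ on which $\phi$ is Hadamard differentiable at $\eta^\pi(s)$. Invoking the functional Delta method (e.g., van der Vaart--Wellner, Theorem 3.9.4), we conclude
\begin{equation*}
\sqrt{n}\prn{\phi(\hat\eta_n^\pi(s)) - \phi(\eta^\pi(s))} \cweak \phi^\prime_{\eta^\pi(s)}\prn{\brk{(\gI-\gT^\pi)^{-1}\wtilde\GB^\pi}(s)}.
\end{equation*}
Since $\phi^\prime_{\eta^\pi(s)}$ is a continuous linear map into $\RB$ and the argument is a zero-mean Gaussian element of $\ell^\infty(\FW)$, the limit variable is a centered real Gaussian, whose quantile function $z_\phi$ is therefore continuous except possibly at a degenerate point mass at $0$.

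Next, I would translate this weak convergence into the coverage statement. Writing $Z_n := \sqrt{n}(\phi(\hat\eta_n^\pi(s)) - \phi(\eta^\pi(s)))$, the event $\phi(\eta^\pi(s))\in C_\phi(\alpha)$ is exactly $\brc{z_\phi(\alpha/2)\le -Z_n \le z_\phi(1-\alpha/2)}$, which by symmetry/continuity of $z_\phi$ corresponds to a continuity set of the limit distribution. The Portmanteau theorem then gives
\begin{equation*}
\lim_{n\to\infty}\PB\prn{\phi(\eta^\pi(s))\in C_\phi(\alpha)} = \PB\prn{z_\phi(\alpha/2)\le Z \le z_\phi(1-\alpha/2)} = 1-\alpha,
\end{equation*}
where $Z$ denotes the Gaussian limit. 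The only mild subtlety is the degenerate case where $\phi^\prime_{\eta^\pi(s)}$ annihilates the limit, which forces $z_\phi(p)\equiv 0$; then the confidence interval collapses to a point and the statement holds trivially, or one interprets the conclusion as a conservative $1-\alpha$ bound.

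Finally, for the variants involving $\ell^\infty(\FKS)$ and $\ell^\infty(\FTV)$, the argument is identical: parts (b) and (c) of Theorem~\ref{Theorem_weak_convergence_empirical_process} supply the weak convergence in the appropriate function space under Assumptions~\ref{Assumption_reward_bounded_density} and~\ref{Assumption_reward_smooth}, and Hadamard differentiability of $\phi$ on that space at $\eta^\pi(s)$ again yields the same Delta-method conclusion. The main obstacle, if any, is verifying the hypothesis that the limit element lies in the tangent set $\DB_0$; but this is assumed in the statement, so no new technical work is needed. In practice, for concrete examples such as moments, quantiles, or the uniform advantage, one would separately verify Hadamard differentiability and identify $\DB_0$, but that is outside the scope of the proof of Theorem~\ref{Theorem_Hadamard_Inference} itself.
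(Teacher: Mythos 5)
Your proposal is correct and follows essentially the same route as the paper: the paper also obtains the result directly by combining the weak convergence in Theorem~\ref{Theorem_weak_convergence_empirical_process} with the functional delta method (Theorem 20.8 in van der Vaart) and reading off the coverage from the quantiles of the resulting zero-mean Gaussian limit. Your extra remarks on the symmetry of the limit and the degenerate case are fine but not needed beyond what the paper's one-line argument already uses.
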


Theorem~\ref{Theorem_Hadamard_Inference} follows directly from our asymptotic results described in Theorem~\ref{Theorem_weak_convergence_empirical_process} and functional delta method (Theorem 20.8 in \cite{van2000asymptotic}).
Since the derivative $\phi^\prime$ is continuous, the plug-in approach is still valid for estimating $z_\phi$.
\begin{proposition}\label{Proposition_plug_in_Hadamard}
Whenever the Hadamard derivative $\phi^\prime$ is properly defined, let
    \begin{equation*}
    \begin{aligned}
    \hat z_\phi(p)&:=\inf\brc{t\ \Bigg\vert\ \PB\prn{\phi^\prime_{\hat\eta_n^\pi(s)}\prn{\brk{(\gI-\what\gT^\pi)^{-1}\what \GB^\pi} (s)}\leq t}\geq p},
    \end{aligned}
    \end{equation*}
     Then $\hat z_\phi(p)\cp z_\phi(p)$ if $z_\phi(\cdot)$ is continuous at $p$.
\end{proposition}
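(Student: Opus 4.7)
The plan is to mirror the argument used for Propositions~\ref{Proposition_plug_in_W1} and \ref{Proposition_plug_in_KS}, reducing the quantile statement to a weak convergence statement via the standard fact that if a sequence of distribution functions converges weakly to a limit that is continuous at $p$, then the associated $p$-quantiles converge. Concretely, writing $\hat G_n:=[(\gI-\what\gT^\pi)^{-1}\what\GB^\pi](s)$ and $G:=[(\gI-\gT^\pi)^{-1}\wtilde\GB^\pi](s)$, I will aim to show $\phi'_{\hat\eta_n^\pi(s)}(\hat G_n)\cweak \phi'_{\eta^\pi(s)}(G)$ (conditionally on the data, a.s.), and then invoke continuity of $z_\phi$ at $p$ to conclude $\hat z_\phi(p)\cp z_\phi(p)$.

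First I would note that the plug-in transition kernel satisfies $\what P(\cdot\mid s,a)\cp P(\cdot\mid s,a)$ by the law of large numbers applied to each $(s,a)$ pair, and that $\hat\eta_n^\pi(s)\cp \eta^\pi(s)$ in the relevant space ($\ell^\infty(\FW)$, $\ell^\infty(\FKS)$, or $\ell^\infty(\FTV)$) by Theorems~\ref{Theorem_bound_of_Wdist}, \ref{Theorem_bound_of_KS}, and \ref{Theorem_bound_of_TV}. Because the covariance of $\what Z_{s,a,s'}$ is a continuous function of $\what P$, one has $\what Z_{s,a,s'}\cweak Z_{s,a,s'}$ conditionally on the data (a.s.), and therefore $\what\GB^\pi(s)\cweak \wtilde\GB^\pi(s)$ in the appropriate space of probability-metric integrands. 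The key analytical step, reused from the proofs of Propositions~\ref{Proposition_plug_in_W1} and \ref{Proposition_plug_in_KS}, is that the operator $(\gI-\what\gT^\pi)^{-1}=\sum_{i=0}^\infty(\what\gT^\pi)^i$ has operator norm bounded by $1/(1-\gamma)$ on the subspace of signed measures with zero total mass (equipped with the $W_1$, KS, or TV norm), which combined with $\what\gT^\pi\to\gT^\pi$ gives $(\gI-\what\gT^\pi)^{-1}\what\GB^\pi(s)\cweak (\gI-\gT^\pi)^{-1}\wtilde\GB^\pi(s)$ in the corresponding $\ell^\infty$ space.

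Next, I would apply the continuous mapping theorem to transfer the weak convergence through $\phi'$. Since $\phi$ is assumed Hadamard differentiable at $\eta^\pi(s)$, the derivative $\phi'_{\eta^\pi(s)}$ is a continuous linear map, and the standard extension (see e.g.\ \cite{van2000asymptotic}) shows that whenever $x_n\to x$ and $y_n\to y$ with $\phi'_{x_n}$ well-defined and suitably regular, one has $\phi'_{x_n}(y_n)\to \phi'_x(y)$. Combining $\hat\eta_n^\pi(s)\cp\eta^\pi(s)$ with the weak convergence $\hat G_n\cweak G$ via a Slutsky-type argument (or the generalized continuous mapping theorem of \cite{van1996weak}) yields $\phi'_{\hat\eta_n^\pi(s)}(\hat G_n)\cweak \phi'_{\eta^\pi(s)}(G)$ conditionally on the data. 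Since the limit is a zero-mean Gaussian (hence its distribution function is continuous except possibly at a degenerate atom at $0$), the corresponding quantile functions converge at all continuity points of $z_\phi$.

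The main obstacle I anticipate is establishing the joint convergence rigorously enough to apply the continuous mapping theorem, since the derivative map $x\mapsto \phi'_x$ need not be continuous uniformly in the argument under mere Hadamard differentiability at a single point. I would handle this by invoking the Hadamard differentiability hypothesis in its directional form along the (data-dependent) sequence $\hat\eta_n^\pi(s)-\eta^\pi(s)\cp 0$, which is exactly the setting of the functional delta method used to prove Theorem~\ref{Theorem_Hadamard_Inference}; the same machinery transfers without modification to the bootstrap-type plug-in random element $\hat G_n$, provided one verifies that $\hat G_n$ lives in the tangent set $\DB_0$ with probability tending to one, a property inherited from the almost-sure containment of $G$ in $\DB_0$ and the weak convergence just established.
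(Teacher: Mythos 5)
The central gap is your treatment of the base-point dependence of the derivative. The statistic whose quantile you must control is $\phi^\prime_{\hat\eta_n^\pi(s)}\prn{\brk{(\gI-\what\gT^\pi)^{-1}\what\GB^\pi}(s)}$, i.e., the derivative evaluated at the \emph{estimated} distribution and applied to the plug-in Gaussian element. To pass to the limit you need continuity of the map $(x,h)\mapsto\phi^\prime_x(h)$ in the base point $x$ near $\eta^\pi(s)$, and this cannot be extracted from Hadamard differentiability of $\phi$ at the single point $\eta^\pi(s)$: the delta-method machinery you invoke linearizes increments $\phi(\eta^\pi(s)+t_nh_n)-\phi(\eta^\pi(s))$, whereas here no increment of $\phi$ appears at all --- the plug-in statistic \emph{is} the derivative at a perturbed base point, so ``transferring the functional delta method without modification'' does not close the obstacle you yourself flag. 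The paper resolves this by hypothesis, not by argument: just before the proposition it stipulates that $\phi^\prime$ is continuous, and its proof then runs exactly as for Propositions~\ref{Proposition_plug_in_W1} and~\ref{Proposition_plug_in_KS}: both $z_\phi$ and $\hat z_\phi$ are written as quantiles of one deterministic map $G(p,\eta,z)=\phi^\prime_{\eta(s)}\prn{\prn{\gI-\gT^\pi(p)}^{-1}g_1(\eta,p,z)}$ of a standard Gaussian vector $z$, with the data entering only through the parameters $(p,\eta)$; continuity of $G$ in $(p,\eta)$ for fixed $z$, the quantile-continuity Lemma~\ref{Lemma_continuous_distribution_function_and_quantile}, consistency $\what P\cp P$, $\hat\eta_n^\pi\cp\eta^\pi$, and the continuous mapping theorem then give $\hat z_\phi(p)\cp z_\phi(p)$ directly, without any conditional-weak-convergence or bootstrap-consistency apparatus.

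A second concrete error is your claim that $(\gI-\what\gT^\pi)^{-1}$ has operator norm bounded by $\frac{1}{1-\gamma}$ on zero-mass signed measures under the $W_1$, KS and TV norms. This holds only for $W_1$ (Lemma~\ref{Lemma_operator_invertible}); for the KS and TV norms the paper explicitly notes that the inverse may be unbounded, which is precisely why Section~\ref{Subsection_analysis_nonasymp_KS} needs the expansion trick and why the asymptotic arguments in Section~\ref{sec:analysisofthm2} restrict $(\gI-\gT^\pi)^{-1}$ to the finite-dimensional subspaces $C_s$ in which $\wtilde\GB^\pi$ (and likewise $\what\GB^\pi$) live, where linearity alone yields continuity. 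If you want convergence of $(\gI-\what\gT^\pi)^{-1}\what\GB^\pi(s)$ in the $\FKS$ or $\FTV$ settings, you must exploit that finite-dimensional structure rather than a norm bound that fails there; with that repair, and with continuity of $\phi^\prime$ assumed as in the paper, your bootstrap-style route would also reach the conclusion, but as written both steps are gaps.
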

The proof of the proposition above is nearly identical to those of Proposition~\ref{Proposition_plug_in_W1} and Proposition~\ref{Proposition_plug_in_KS}.

We demonstrate the use of Theorem~\ref{Theorem_Hadamard_Inference} with three concrete examples.
\begin{example}[The $r$-th moment of returns]\label{Example_inference_moments}
    We first consider a simple example of statistical inference for $r$th moments of returns.
    Let $\phi_r(\mu):=\EB_{X\sim\mu}\prn{X^r}$, where $\mu$ is a signed measure supported on $\brk{0,\frac{1}{1-\gamma}}$.
    It can be easily verified that $\phi(\cdot)\colon \ell^\infty(\FW)\to\RB$ is Hadamard differentiable with the derivative $\phi^\prime_r(h)=\phi_r(h)=\EB_{X\sim h}\prn{X^r}$ for any signed measure $h$ supported on $\brk{0,\frac{1}{1-\gamma}}$.
    Then by Theorem~\ref{Theorem_Hadamard_Inference} we have
    \begin{equation*}
        \sqrt{n}(\phi_r(\hat\eta_n^\pi(s))-\phi_r(\eta^\pi(s)))\cweak \phi_r\prn{\brk{\prn{\gI-\gT^\pi}^{-1} \wtilde\GB^\pi}(s)},  
    \end{equation*}
    according to which we may perform statistical inference for $\phi(\eta^\pi(s))$.
    When $r=1$, we have
    \begin{equation*}
        \sqrt{n}(\what V^\pi(s)-V^\pi(s))\cweak \brk{\prn{I-\gamma P^\pi}^{-1} \wtilde G^\pi}(s).  
    \end{equation*}
    Here $P^\pi\in\RB^{\gS\times\gS}$ is the transition matrix under policy $\pi$, $\what V^\pi$ and $V^\pi$ are the estimated value function and ground-truth value function.
    $\wtilde G^\pi$ is defined as
    \begin{equation*}
        \wtilde G^\pi(s):=\sum_{a\in\gA}\pi(a\mid s) \sum_{s^\prime\in\gS} Z_{s,a,s^\prime}V^\pi(s^\prime),
    \end{equation*}
    where $Z$ is a gaussian vector as defined in Theorem~\ref{Theorem_weak_convergence_empirical_process}.
    This recovers the results of limiting distributions of the errors of model-based policy evaluations when a generative model is available.
    Another simple corollary is the limiting distribution of the variance of returns.
    Concretely, we have
    \begin{equation*}
        \sqrt{n}(\var_{X\sim \hat\eta^\pi_n(s)}\prn{X}-\var_{X\sim \eta^\pi(s)}\prn{X})\cweak \phi_2\prn{\brk{\prn{\gI-\gT^\pi}^{-1} \wtilde\GB^\pi}(s)}-2\phi_1\prn{\brk{\prn{\gI-\gT^\pi}^{-1} \wtilde\GB^\pi}(s)}\phi_1\prn{\eta^\pi(s)}.  
    \end{equation*}
\end{example}

\begin{example}[Quantiles of returns]
    We next consider statistical inference for quantiles of returns when Assumption~\ref{Assumption_reward_bounded_density} holds,.
    Let $\phi_p(\mu):=\inf\brc{t\mid \mu\ind_{(-\infty,t]}\geq p}$ be the $p$-quantile of probability distribution $\mu$.
    Lemma~\ref{Lemma_bounded_density_of_return} in the proof of~\ref{Proposition_value_iteration_KS} indicates $\eta^\pi(s)$ must have bounded density.
    Hence we have $\phi_p$ is Hadamard differentiable  tangentially to $C\brk{0,\frac{1}{1-\gamma}}$ by Lemma 21.4 in \cite{van2000asymptotic}.
    And the derivative $\phi^\prime_{p}(\eta^\pi(s))$ is the map ${h\mapsto -\frac{h(\phi_p(\eta^\pi(s)))}{g(\phi_p(\eta^\pi(s)))}}$, where $g$ is the density of $\eta^\pi(s)$.
    Therefore, the cumulative distribution function of $\brk{\prn{\gI-\gT^\pi}^{-1}\wtilde \GB^\pi}(s)$ is in $C\brk{0,\frac{1}{1-\gamma}}$ almost surely, we have
    \begin{equation*}
        \sqrt{n}(\phi_p(\hat\eta_n^\pi(s))-\phi_p(\eta^\pi(s)))\cweak -\frac{\brk{\prn{\gI-\gT^\pi}^{-1}\wtilde \GB^\pi}(s)\ind_{(-\infty,\phi_p(\eta^\pi(s))]}}{g(\phi_p(\eta^\pi(s)))},
    \end{equation*}
     which may lead to asymptotically valid inferential procedures for quantiles of returns.
\end{example}

\begin{example}[Uniform Advantage]\label{Example_inference_advantage}
Policy improvement \cite{sutton2004} is a key ingredient of many reinforcement learning algorithms.
The goal is to find a new policy $\pi$ such that the advantage function $V^\pi(s_0)-V^{\pi_0}(s_0)\geq 0$ where $\pi_0$ is a given baseline policy.
Here we propose a new notion of policy improvement called (near)-uniform policy improvement.
Specifically, the aim is to find a new policy $\pi$ such that the uniform advantage $\phi(\eta^\pi(s_0),\eta^{\pi_{0}}(s_0)):=\PB(G^\pi(s_0)\geq G^{\pi_0}(s_0))$ is above some threshold.

A natural estimator of $\phi(\eta^\pi(s_0),\eta^{\pi_{0}}(s_0))$ is $\phi(\hat\eta_n^\pi(s_0),\hat\eta_n^{\pi_{0}}(s_0))$.
For technical convenience, we assume $\hat\eta_n^\pi(s_0)$
and $\hat\eta_n^{\pi_{0}}(s_0)$ are estimated using data splitting technique.
From Lemma 20.10 in \cite{van2000asymptotic}, $\phi$ is Hadamard differentiable tangentially to $C\brk{0,1/(1-\gamma)}$.
For $h_1,h_2\in C\brk{0,1/(1-\gamma)}$, the derivative $\phi^\prime(\eta^\pi(s_0),\eta^{\pi_{0}}(s_0))$ is $(h_1,h_2)\mapsto h_2-\eta^\pi(s_0)h_{2}+\eta^{\pi_0}(s_0)h_1$.
When Assumption~\ref{Assumption_reward_bounded_density} is true, 
\begin{equation*}
    \sqrt{n}\brk{(\hat\eta_n^\pi(s_0),\hat\eta_n^{\pi_{0}}(s_0))-(\eta^\pi(s_0),\eta^{\pi_{0}}(s_0))}\cweak\prn{\brk{\prn{\gI-\gT^\pi}^{-1}\wtilde \GB^\pi}(s_0), \brk{\prn{\gI-\gT^\pi}^{-1}\wtilde \GB^{\pi_0}}(s)}
\end{equation*}
and the cumulative distribution functions of $\brk{\prn{\gI-\gT^\pi}^{-1}\wtilde \GB^\pi}(s_0)$ and $\brk{\prn{\gI-\gT^{\pi_0}}^{-1}\wtilde \GB^{\pi_0}}(s_0)$ (denoted as $F$ and $F_0$) are in $C\brk{0,1/(1-\gamma)}$ almost surely.
Thus, we have
\begin{equation*}
    \sqrt{n}\brk{\phi(\hat\eta_n^\pi(s_0),\hat\eta_n^{\pi_{0}}(s_0))-\phi(\eta^\pi(s_0),\eta^{\pi_{0}}(s_0))}\cweak\eta^\pi(s_0)F_0-\eta^{\pi_0}(s_0)F.
\end{equation*}

\end{example}

\section{Numerical Simulations}\label{Section_numerical}
In this section we conduct numerical simulations to validate our theoretical findings as well as the proposed inferential procedures.
All of the numerical simulations are conducted on a desktop computer with a single TITAN RTX GPU.
The code is available in \url{https://github.com/zhangliangyu32/EstimationAndInferenceDistributionalRL}.

\subsection{Implementations}
To make computations tractable, we confine the return distributions to the class of categorical distributions.
A vector of categorical distributions is defined as ${\eta=\prn{\eta(s_1),\dots,\eta\prn{s_{|\gS|}}}}$, where ${\eta(s):=\sum_{k=0}^K w_k\delta_{x_k}}$, with weights $\sum_{k=0}^K w_k=1$ and particles $x_k:=\frac{k}{(K+1)(1-\gamma)}$.
We set $K=1000$, which is large enough to make the categorical class rich enough and good approximations of continuous return distributions.

The categorical distributions can be updated with a categorical version of distributional dynamical programming \cite{bellemare2017distributional}, which is also a good approximation of the original version of distributional dynamic programming considered in our paper when $K$ is large.
Throughout our simulation studies, the ground-truth return distributions $\eta^\pi$ are obtained via a sufficiently large number of iterations of distributional dynamic programming with the ground-truth distributional Bellman operator $\gT^\pi$.
The estimated return distributions $\hat\eta^\pi_n$ are obtained by the same procedure except for the ground-truth distributional Bellman operator $\gT^\pi$ is replaced by the estimated distributional Bellman operator $\what\gT_n^\pi$.

Another issue that may cause computational intractability is that in our inferential procedures, we must explicitly form the operator $\prn{\gI-\gT^\pi}^{-1}$.
We instead use a truncated Neumann series to approximate $\prn{\gI-\gT^\pi}^{-1}$, that is, $\prn{\gI-\gT^\pi}^{-1}\approx\sum_{j=0}^J \prn{\gT^\pi}^j$ with $J$ sufficiently large.
In summary, by these approximation techniques, our implementations achieve computational tractability while ensuring that the approximation error is negligible compared to the statistical error that is of primary interest.

\subsection{Linear Convergence of Distributional Dynamic Programming}
We first verify the results in Proposition~\ref{Proposition_value_iteration_Wp}, Proposition~\ref{Proposition_value_iteration_KS} and Proposition~\ref{Proposition_value_iteration_TV}.
We perform the simulations in randomly generated tabular MDPs with $|\gS|=5$, $|\gA|=2$ and $\gamma=0.9$.
WLOG, we always use the first state $s_1$ as the initial state.
The reward distribution is chosen to be truncated Gaussians.
Specifically, $\gP_{R}(\cdot\mid s,a)$ is set to be $\normal(l_{s,a},0.1)$ truncated to $[0,1]$, with the location parameter $l_{s,a}$ randomly determined.
The dataset we use to form the estimator $\hat\eta_n^\pi$ is obtained via a generative model with $n=10000$.
We first perform distributional DP for $N$ iterations, with $N$ sufficiently large, and use $\eta^{(N)}$ as a proxy of the estimator $\hat\eta_n^\pi$.
In Figure~\ref{fig:DPConverge} we depict how $d(\eta^{(t)}, \hat\eta_n^\pi)$ would change as $t$ increases from $0$ to $N/2$.
Here the probability metric $d$ can be $W_1$ metric, the KS metric, or the TV metric.
We may find that distributional DP does exhibit linear convergence when measured by the $W_1$ metric, the KS metric, and the TV metric.

\subsection{Finite-sample Convergence Performance}
We investigate the finite-sample convergence performances of empirical distributional dynamic programming and verify our non-asymptotic results.
The simulation environments remain the same as in the last section.
And we try more choices of $\gamma$.
Specifically, we try $\gamma\in\brc{0.7,0.8,0.9,0.97}$.
And we also try more choices of $n$.
Specifically, we try $n\in\brc{10, 100, 1000, 10000}$.
We repeat the estimation process for $100$ times and report the averaged errors.
The numerical results are displayed in figures listed as follows.
\begin{figure}[!htbp]
     \centering
     \begin{minipage}[t]{0.32\textwidth}
         \centering
         \includegraphics[width=\textwidth]{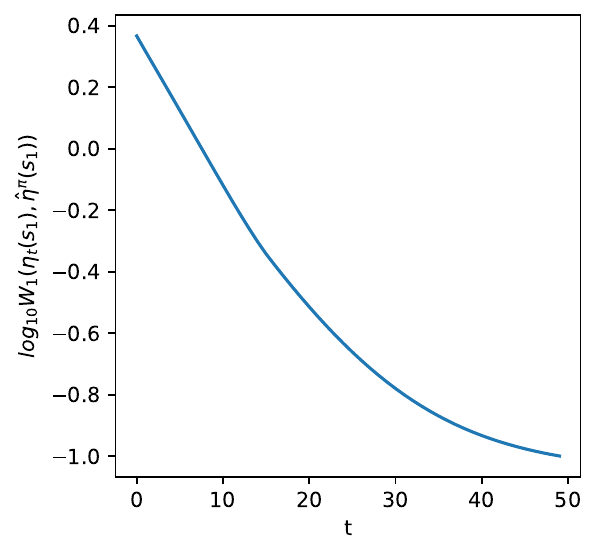}
     \end{minipage}
     \begin{minipage}[t]{0.32\textwidth}
         \centering
         \includegraphics[width=\textwidth]{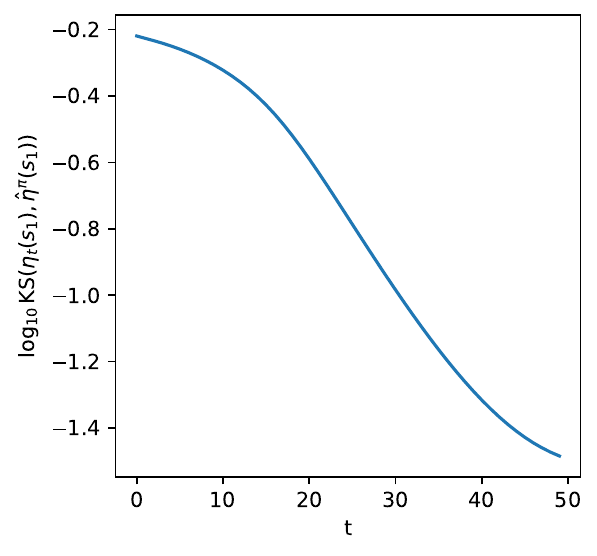}
     \end{minipage}
     \begin{minipage}[t]{0.32\textwidth}
         \centering
         \includegraphics[width=\textwidth]{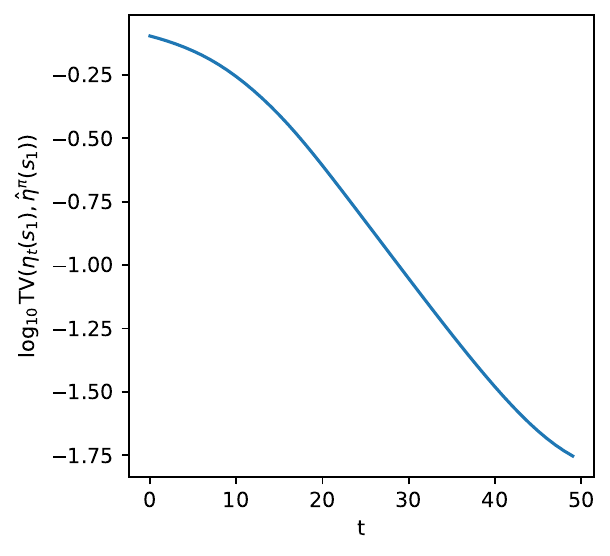}
     \end{minipage}
    \caption{Convergence of $\log W_1(\eta^{(t)}(s_1),\hat\eta^\pi(s_1))$, $\log\KS(\eta^{(t)}(s_1),\hat\eta^\pi(s_1))$, and $\log\TV(\eta^{(t)}(s_1),\hat\eta^\pi(s_1))$ with sample size $n=10000$ and $\gamma=0.9$. $t$ is the iteration number.}
        \label{fig:DPConverge}
\end{figure}

\begin{figure}[!htbp]
     \centering
     \begin{minipage}[t]{0.24\textwidth}
         \centering
         \includegraphics[width=\textwidth]{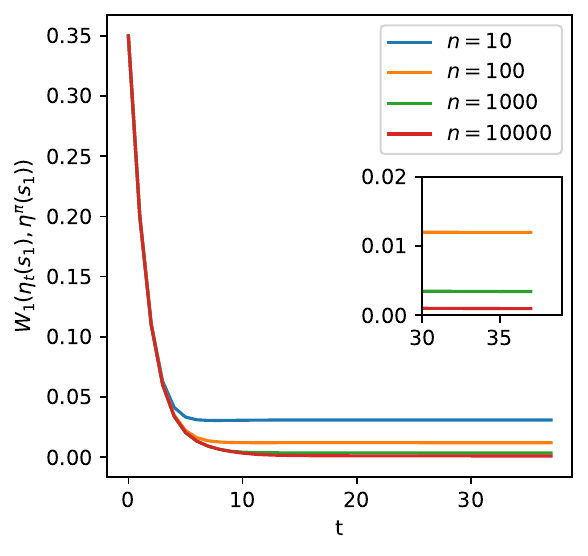}
     \end{minipage}
     \begin{minipage}[t]{0.24\textwidth}
         \centering
         \includegraphics[width=\textwidth]{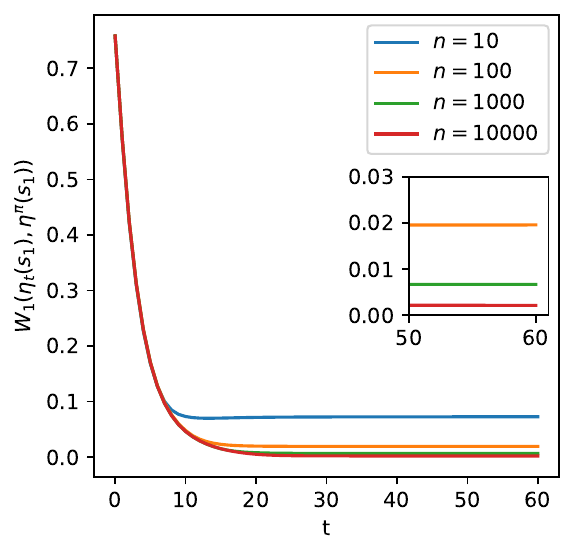}
     \end{minipage}
     \begin{minipage}[t]{0.24\textwidth}
         \centering
         \includegraphics[width=\textwidth]{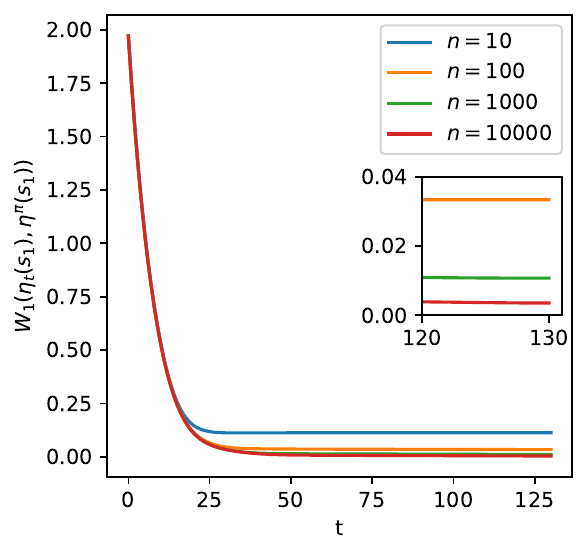}
     \end{minipage}
    \begin{minipage}[t]{0.24\textwidth}
         \centering
         \includegraphics[width=\textwidth]{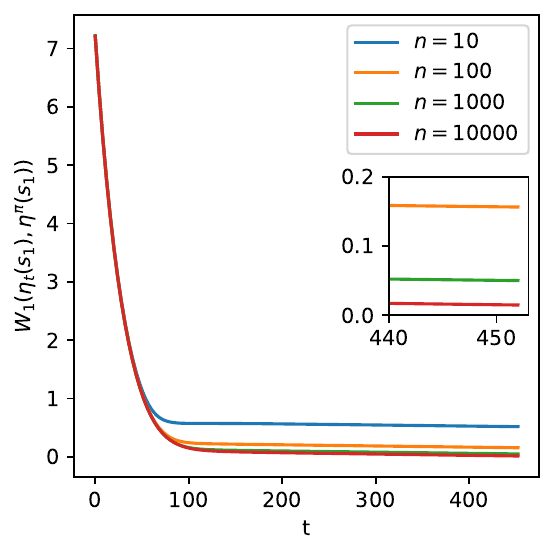}
     \end{minipage}
        \caption{Two-phase convergence of $W_1(\eta^{(t)}(s_1),\eta^\pi(s_1))$ with different sample sizes. $t$ is the iteration number. From left to right: $\gamma=0.7$; $\gamma=0.8$; $\gamma=0.9$; $\gamma=0.97$.}
        \label{fig:DPProcessWDis}
\end{figure}

\begin{figure}[!htbp]
     \centering
     \begin{minipage}[t]{0.24\textwidth}
         \centering
         \includegraphics[width=\textwidth]{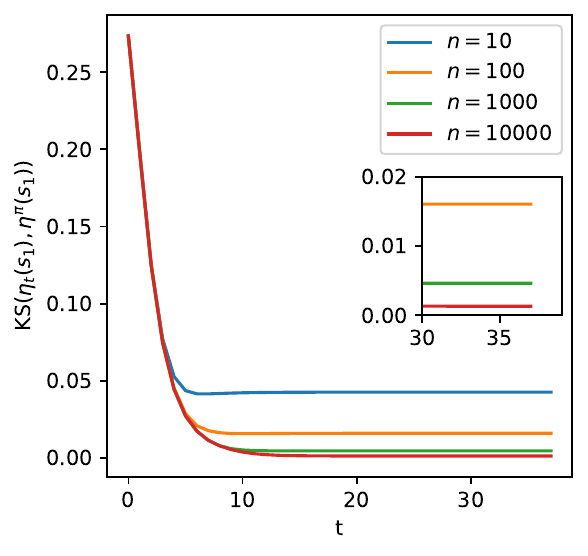}
     \end{minipage}
     \begin{minipage}[t]{0.24\textwidth}
         \centering
         \includegraphics[width=\textwidth]{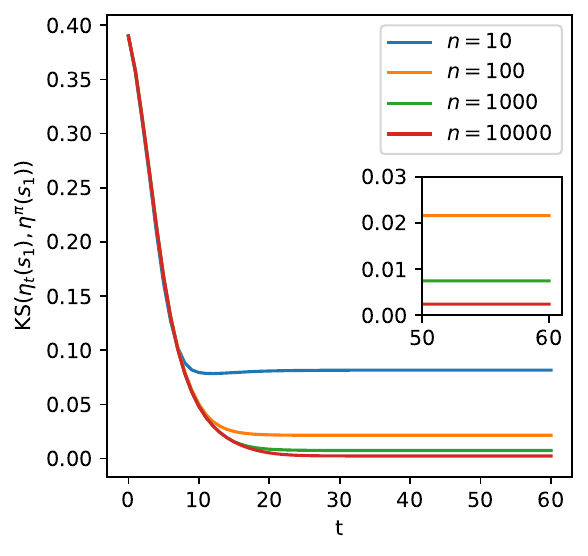}
     \end{minipage}
     \begin{minipage}[t]{0.24\textwidth}
         \centering
         \includegraphics[width=\textwidth]{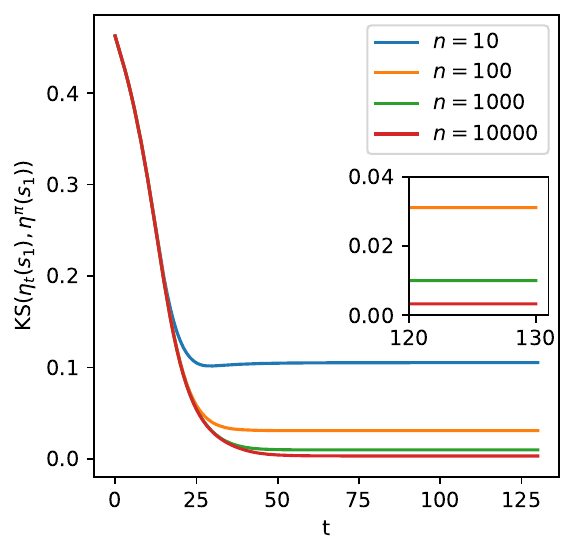}
     \end{minipage}
    \begin{minipage}[t]{0.24\textwidth}
         \centering
         \includegraphics[width=\textwidth]{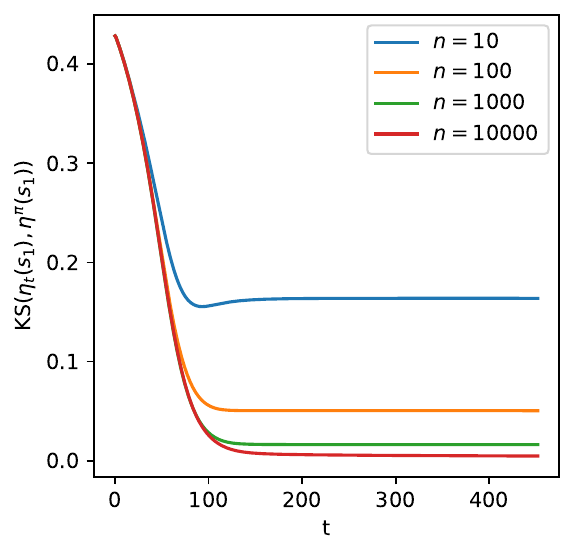}
     \end{minipage}
        \caption{Two-phase convergence of $\KS(\eta^{(t)}(s_1),\eta^\pi(s_1))$ with different sample seize. $t$ is the iteration number. From left to right: $\gamma=0.7$; $\gamma=0.8$; $\gamma=0.9$; $\gamma=0.97$.}
        \label{fig:DPProcessKSDis}
\end{figure}
\begin{figure}[!htbp]
     \centering
     \begin{minipage}[t]{0.24\textwidth}
         \centering
         \includegraphics[width=\textwidth]{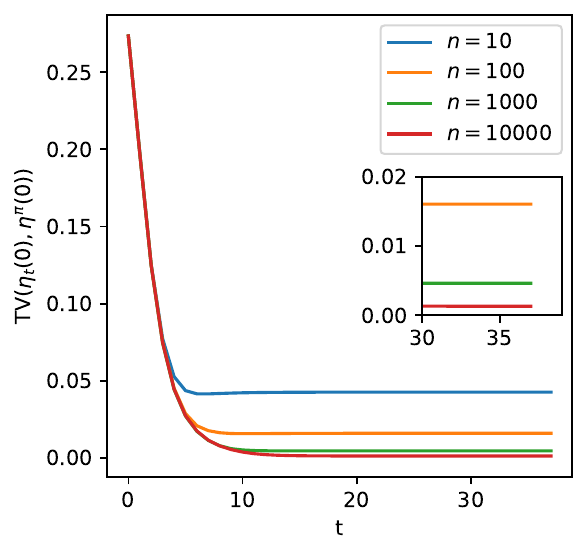}
     \end{minipage}
     \begin{minipage}[t]{0.24\textwidth}
         \centering
         \includegraphics[width=\textwidth]{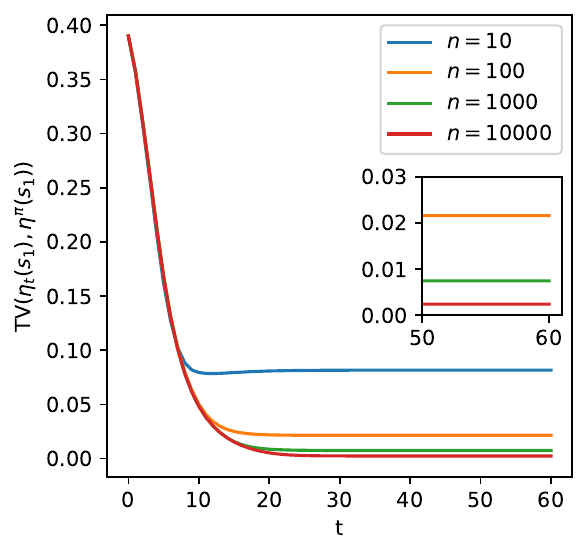}
     \end{minipage}
     \begin{minipage}[t]{0.24\textwidth}
         \centering
         \includegraphics[width=\textwidth]{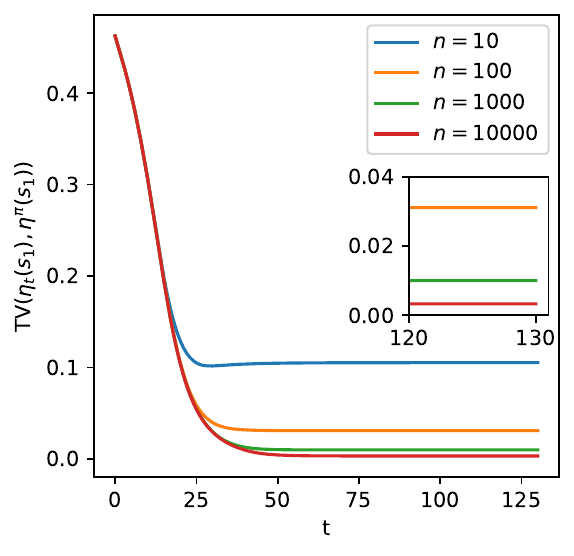}
     \end{minipage}
    \begin{minipage}[t]{0.24\textwidth}
         \centering
         \includegraphics[width=\textwidth]{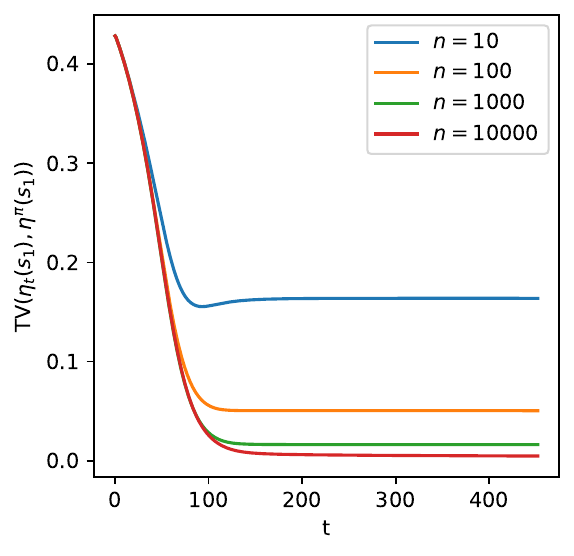}
     \end{minipage}
        \caption{Two-phase convergence of $\TV(\eta^{(t)}(s_1),\eta^\pi(s_1))$ with different sample seize. $t$ is the iteration number. From left to right: $\gamma=0.7$; $\gamma=0.8$; $\gamma=0.9$; $\gamma=0.97$.}
        \label{fig:DPProcessTVDis}
\end{figure}

In Figure~\ref{fig:DPProcessWDis}-\ref{fig:DPProcessTVDis}, we show the convergence performance of empirical distributional dynamic programming measured by $W_1$ metric, \KS\ distance and \TV\ distance, respectively.
Note that in all cases the convergence consists of two phases.
In the first phase, the dynamic programming algorithm does not converge and we may observe a linear convergence rate.
In the second phase, the error terms are dominated by the statistical error, \ie~$W_1(\hat\eta_n^\pi(s_1),\eta^\pi(s_1))$, $\KS(\hat\eta_n^\pi(s_1),\eta^\pi(s_1))$ or $\TV(\hat\eta_n^\pi(s_1),\eta^\pi(s_1))$, which exhibits strong correlations with $n$ and $1/(1-\gamma)$.

\begin{figure}[!htbp]
     \centering
     \begin{minipage}[t]{0.24\textwidth}
         \centering
         \includegraphics[width=\textwidth]{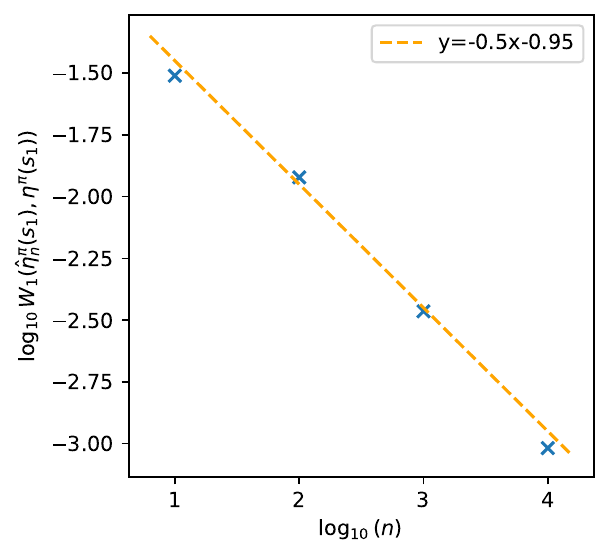}
     \end{minipage}
     \begin{minipage}[t]{0.24\textwidth}
         \centering
         \includegraphics[width=\textwidth]{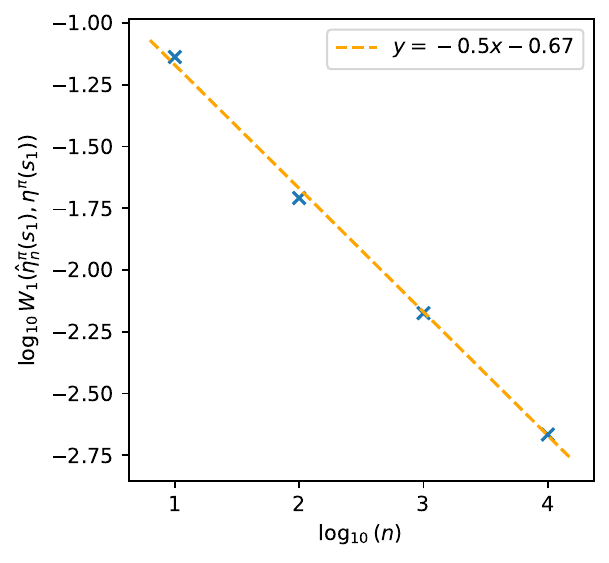}
     \end{minipage}
     \begin{minipage}[t]{0.24\textwidth}
         \centering
         \includegraphics[width=\textwidth]{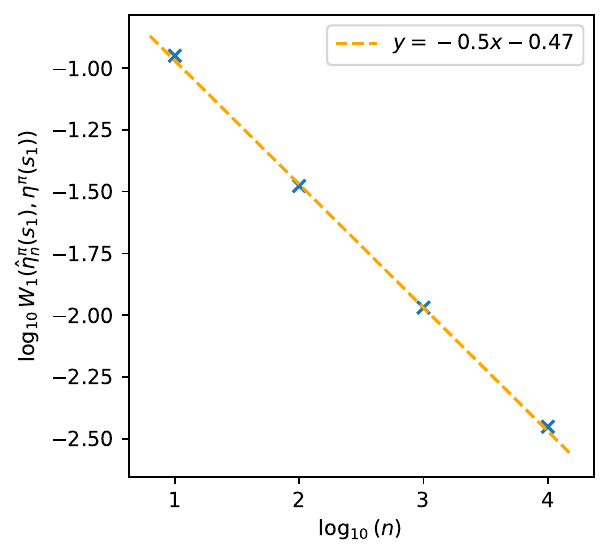}
     \end{minipage}
    \begin{minipage}[t]{0.24\textwidth}
         \centering
         \includegraphics[width=\textwidth]{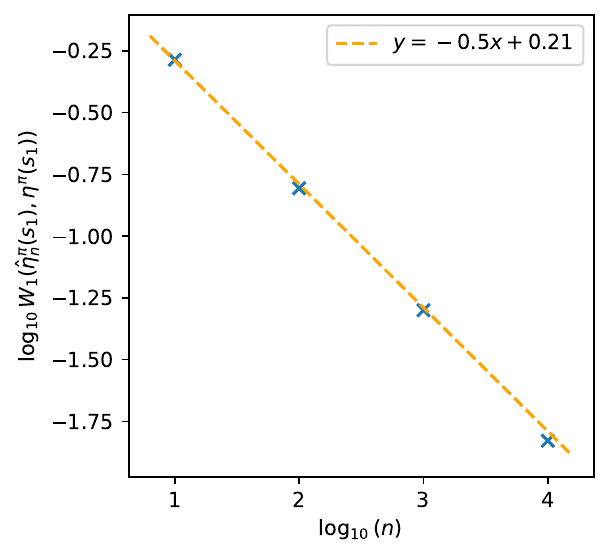}
     \end{minipage}
        \caption{The statistical error $W_1(\hat\eta_n^\pi(s_1),\eta^\pi(s_1))$ with different sample size. From left to right: $\gamma=0.7$; $\gamma=0.8$; $\gamma=0.9$; $\gamma=0.97$.}
        \label{fig:WDisVSn}
\end{figure}

\begin{figure}[!htbp]
     \centering
     \begin{minipage}[t]{0.24\textwidth}
         \centering
         \includegraphics[width=\textwidth]{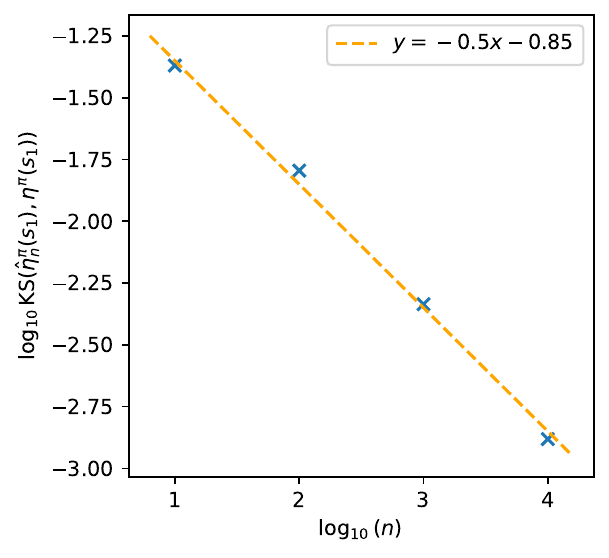}
     \end{minipage}
     \begin{minipage}[t]{0.24\textwidth}
         \centering
         \includegraphics[width=\textwidth]{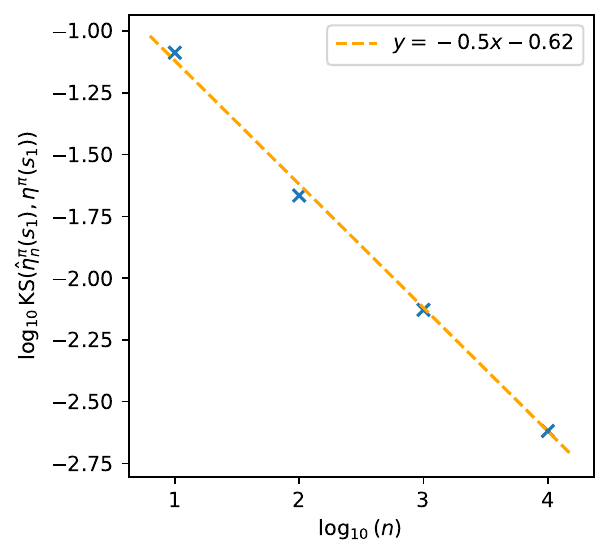}
     \end{minipage}
     \begin{minipage}[t]{0.24\textwidth}
         \centering
         \includegraphics[width=\textwidth]{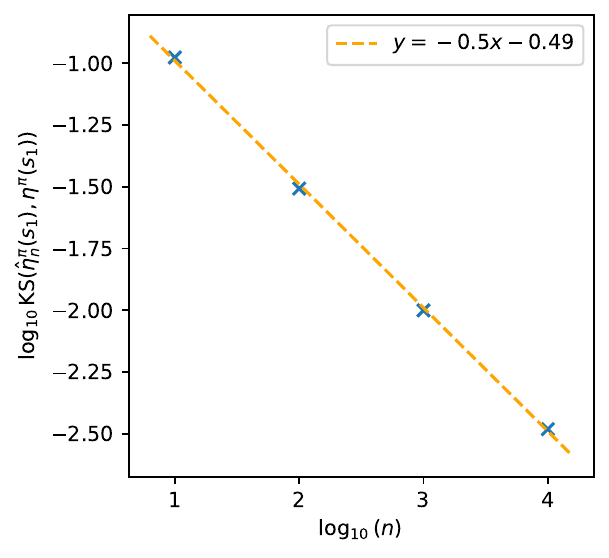}
     \end{minipage}
    \begin{minipage}[t]{0.24\textwidth}
         \centering
         \includegraphics[width=\textwidth]{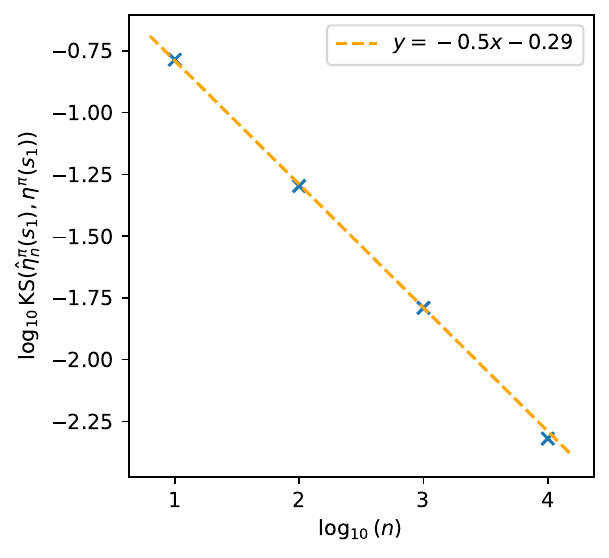}
     \end{minipage}
        \caption{The statistical error $\KS(\hat\eta_n^\pi(s_1),\eta^\pi(s_1))$ with different sample size. From left to right: $\gamma=0.7$; $\gamma=0.8$; $\gamma=0.9$; $\gamma=0.97$.}
        \label{fig:KSDisVSn}
\end{figure}

\begin{figure}[!htbp]
     \centering
     \begin{minipage}[t]{0.24\textwidth}
         \centering
         \includegraphics[width=\textwidth]{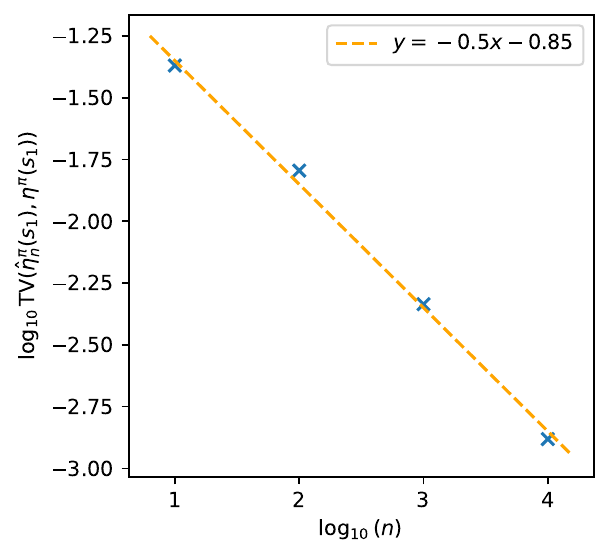}
     \end{minipage}
     \begin{minipage}[t]{0.24\textwidth}
         \centering
         \includegraphics[width=\textwidth]{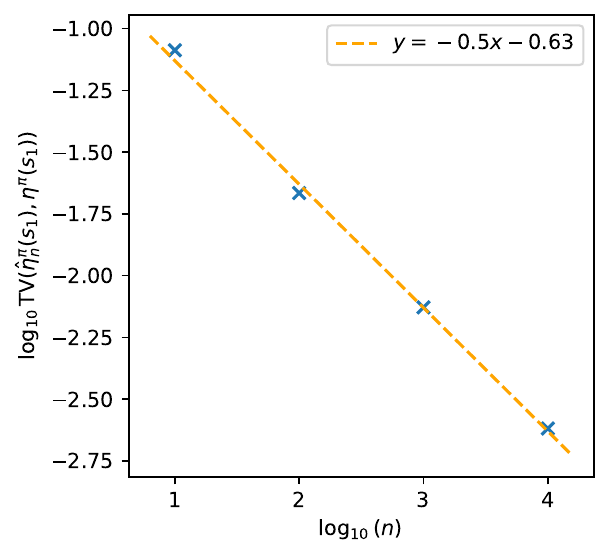}
     \end{minipage}
     \begin{minipage}[t]{0.24\textwidth}
         \centering
         \includegraphics[width=\textwidth]{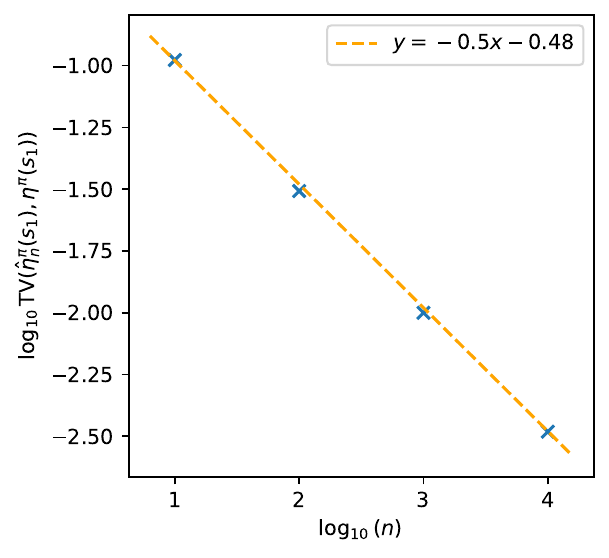}
     \end{minipage}
    \begin{minipage}[t]{0.24\textwidth}
         \centering
         \includegraphics[width=\textwidth]{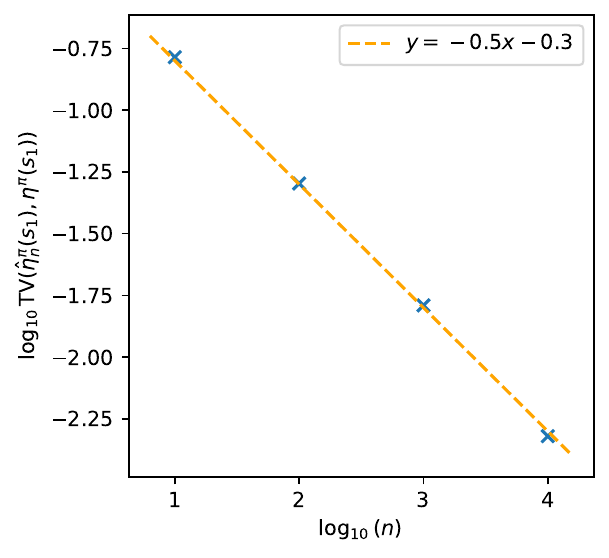}
     \end{minipage}
        \caption{The statistical error $\TV(\hat\eta_n^\pi(s_1),\eta^\pi(s_1))$ with different sample size. From left to right: $\gamma=0.7$; $\gamma=0.8$; $\gamma=0.9$; $\gamma=0.97$.}
        \label{fig:TVDisVSn}
\end{figure}

We also try to examine how the error terms $W_1(\hat\eta_n^\pi(s_1),\eta^\pi(s_1))$, $\KS(\hat\eta_n^\pi(s_1),\eta^\pi(s_1))$ or $\TV(\hat\eta_n^\pi(s_1),\eta^\pi(s_1))$ change as $n$ scales up.
In Figure~\ref{fig:WDisVSn}-\ref{fig:TVDisVSn}, we may verify that for all cases, the convergence rates are indeed of the typical $n^{-\frac{1}{2}}$ order as described in Theorem~\ref{Theorem_bound_of_Wdist}, $\ref{Theorem_bound_of_KS}$ and \ref{Theorem_bound_of_TV}.

\subsection{Validity of Inferential Procedures}
We also perform numerical simulations to validate the inferential procedures proposed in Section~\ref{Section_inference}.
The environment is exactly the same as that of the previous section with $\gamma$ fixed as $0.9$.
The confidence sets are constructed using plug-in approaches.
The nominal coverage probability is set to be $0.95$, the quantiles of the estimated limiting distributions are computed using Monte Carlo methods.
Here our implementations of Monte Carlos are fully vectorized to further improve computational efficiency.
We repeat our inferential procedures for $1000$ times and report the empirical coverage rate and averaged radius of confidence sets.
The results are presented in the following tables.
We observe the empirical coverage rates approach the nominal confidence level and the radius of confidence sets decreases as $n$ increases in all cases.
\begin{table}[t!]
\centering
    \begin{tabular}{|c|c|c|c|c|c|c|}
    \hline
    Type of Confidence Sets & \multicolumn{2}{|c|}{$W_1$ ball} & \multicolumn{2}{|c|}{$\KS$ ball} & \multicolumn{2}{|c|}{$\TV$ ball}\\
    \hline
    $n$ & CR & CSR$\pm$std & CR & CSR$\pm$std & CR & CSR$\pm$std\\
    \hline
    5 & 0.918 & $0.3467\pm 0.0719$ & 0.939 & $0.3302\pm 0.0538$ & 0.934 & $0.3310\pm0.0530$ \\
    \hline
    10 & 0.945 & $0.2528\pm 0.0366$ & 0.934 & $0.2364\pm 0.0250$ & 0.956 & $0.2367\pm0.0246$ \\
    \hline
    100 & 0.941 & $0.0804\pm 0.0041$ & 0.956 & $0.0759\pm 0.0032$ & 0.944 & $0.0761\pm0.0030$ \\
    \hline
    1000 & 0.945 & $0.0255\pm 0.0008$ & 0.951 & $0.0241\pm 0.0007$ & 0.950 & $0.0241\pm0.0007$ \\
    \hline
    \end{tabular}
    \caption{Coverage rate (CR) and confidence set radius (CSR) of our proposed non-parametric confidence sets for $\eta^\pi(s_1)$ under different choices of $n$.}
    \label{tab:nonparamInference}
\end{table}

\begin{table}[t!]
\centering
    \begin{tabular}{|c|c|c|c|c|c|c|}
    \hline
    Functionals of Interest & \multicolumn{2}{|c|}{variance} & \multicolumn{2}{|c|}{0.1 quantile} & \multicolumn{2}{|c|}{0.9 quantile}\\
    \hline
    $n$ & CR & CSR$\pm$std & CR & CSR$\pm$std & CR & CSR$\pm$std\\
    \hline
    5 & 0.928 & $0.0627\pm 0.0202$ & 0.917 & $0.4020\pm 0.0929$ & 0.916 & $0.3210\pm0.0660$ \\
    \hline
    10 & 0.939 & $0.0442\pm 0.0101$ & 0.945 & $0.2949\pm 0.0436$ & 0.930 & $0.2288\pm0.0308$ \\
    \hline
    100 & 0.959 & $0.0137\pm 0.0010$ & 0.949 & $0.0912\pm 0.0050$ & 0.946 & $0.0733\pm0.0036$ \\
    \hline
    1000 & 0.946 & $0.0043\pm 0.0002$ & 0.935 & $0.0289\pm 0.0010$ & 0.952 & $0.0232\pm0.0008$ \\
    \hline
    \end{tabular}
    \caption{Coverage rate (CR) and confidence set radius (CSR) of our proposed confidence intervals for different Hardamard differentiable statistical functionals of $\eta^\pi(s_1)$ under different choices of $n$.}
    \label{tab:functionalsInference}
\end{table}

\section{Discussions}\label{Section_discussion}
In this paper, we have analyzed the statistical performance of distributional reinforcement learning from both non-asymptotic and asymptotic perspectives.
We present non-asymptotical rates for $\sup_{s\in\gS}W_p(\hat\eta_n^\pi(s),\eta^\pi(s))$, $\sup_{s\in\gS}\KS(\hat\eta_n^\pi(s), \eta^\pi(s))$ and $\sup_{s\in\gS}TV(\hat\eta_n^\pi(s), \eta^\pi(s))$.
We also derive that given initial state $s$, the ``empirical process'' $\sqrt{n}(\hat\eta_n^\pi(s)-\eta^\pi(s))$ converge weakly to a Gaussian random element.
Based on our theoretical findings, we have devised inferential procedures for a wide class of statistical functionals of the return distribution. 
We hope our work can spur further research in the uncertainty quantification of reinforcement learning.

One future direction is whether we can close the gap between our sample complexity bound $\wtilde O\prn{\frac{1}{\varepsilon^2(1-\gamma)^4}}$ and the lower bound $\wtilde O\prn{\frac{1}{\varepsilon^2(1-\gamma)^3}}$.
We speculate that the minimax optimal sample complexity is indeed $O\prn{\frac{1}{\varepsilon^2(1-\gamma)^3}}$, and could be attained through more sophisticated analysis techniques.
Another interesting future direction is to develop non-asymptotic bounds as well as asymptotic results that are uniform for $\pi\in\Pi$, where $\Pi$ denotes a policy class of interest.
This might give rise to a wider range of inferential applications in reinforcement learning.

\bibliography{ref}
\bibliographystyle{abbrvnat}
\newpage

\appendix
\section{Notations and Some Basic Facts}\label{Appendix_notations}
Before presenting our proofs, we first define some notations.
For any signed Borel measure $\mu$, we define $\norm{\mu}_\gF:=\sup_{f\in\gF}\abs{\mu f}$, with $\gF$ being some function class $\gF$ supported on $\brk{0,\frac{1}{1-\gamma}}$.
We use $M_\gF$ to denote the vector space of signed Borel measures with finite $\norm{\cdot}_\gF$ norm and zero measure, formally, define $\gB_0$ as the Borel sets in $\brk{0,\frac{1}{1-\gamma}}$,
\begin{equation*}
    M_\gF^0:=\brc{\mu\text{ signed measure on }\prn{\brk{0,\frac{1}{1-\gamma}},\gB_0}\mid \mu\prn{\brk{0,\frac{1}{1-\gamma}}}=0,\norm{\mu}_\gF<\infty}.
\end{equation*}
Let $\ell^\infty(\gF)$ be the space of bounded real-valued functions on $\gF$.
$\ell^\infty(\gF)$ is a Banach space if we equip it with the supreme norm $\norm{\cdot}_\gF$.
Note that $(M_\gF^0,\norm{\cdot}_\gF)$ can be embedded into $(\ell^\infty(\gF),\norm{\cdot}_\gF)$ by mapping $\mu$ to the process $f\mapsto \mu f$.

Recall that the $W_1$ metric, $\KS$ distance, and the $\TV$ are all integral probability metrics.
Therefore, if $\mu\in M_{\gF}^0$ can be written as the difference of two probability distributions $\mu_+,\mu_-\in\Delta\prn{\brk{0,\frac{1}{1-\gamma}}}$, we have
\begin{itemize}
    \item $\norm{\mu}_{\FW}=W_1(\mu_+,\mu_-)$, where $\FW:=\brc{f\mid f \text{ is $1$-Lipschitz}}$.
    \item $\norm{\mu}_{\FKS}=\KS(\mu_+,\mu_-)$, where $\FKS:=\brc{\ind\brc{\cdot\leq z}\mid z\in\RB}$.
    \item $\norm{\mu}_{\FTV}=\TV(\mu_+,\mu_-)$, where $\FTV:=\brc{f\mid f \text{ is measuable and }\norm{f}_\infty\leq 1}$.
\end{itemize}

The distributional Bellman operator $\gT^\pi$ can be naturally extended to $\prn{M_{\gF}^0}^\gS$ without modifying its original definition.
Here, the product $\left(M_{\mathcal{F}}^0\right)^\mathcal{S}$ also constitutes a normed vector space, where the norm is defined as $\sup_{s\in\mathcal{S}}\norm{\cdot}_{\mathcal{F}}$.

\begin{proposition}\label{Proposition_extention_of_Bellman_operator}
    $\gT^\pi$ is a linear operator on $\prn{M_{\gF}^0}^\gS$.
\end{proposition}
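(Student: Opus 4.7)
The plan is to verify the two requirements separately: that $\gT^\pi$ sends $\prn{M_\gF^0}^\gS$ into itself (well-definedness), and that it preserves linear combinations.

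\textbf{Step 1: Well-definedness.} Fix $\mu \in \prn{M_\gF^0}^\gS$ and $s \in \gS$. I would first observe that for each $r \in [0,1]$ and each $s' \in \gS$, the pushforward $(b_{r,\gamma})_\# \mu(s')$ is a signed Borel measure on $\RB$; since $b_{r,\gamma}$ maps $\brk{0,\tfrac{1}{1-\gamma}}$ into itself (because $r \in [0,1]$ and $\gamma \in (0,1)$ give $r + \gamma x \in \brk{0,\tfrac{1}{1-\gamma}}$ whenever $x \in \brk{0,\tfrac{1}{1-\gamma}}$), this pushforward is supported on $\brk{0,\tfrac{1}{1-\gamma}}$. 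The total mass is preserved by pushforward, so $(b_{r,\gamma})_\# \mu(s')\prn{\brk{0,\tfrac{1}{1-\gamma}}} = \mu(s')\prn{\brk{0,\tfrac{1}{1-\gamma}}} = 0$. Taking the $\gP_R$-mixture in $r$ and the finite convex combination in $(a,s')$ preserves the zero-mass property, hence $[\gT^\pi\mu](s)\prn{\brk{0,\tfrac{1}{1-\gamma}}} = 0$.

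\textbf{Step 2: Finiteness of the $\gF$-norm.} For any $f \in \gF$ the change-of-variables formula gives $\prn{(b_{r,\gamma})_\# \mu(s')} f = \mu(s')\prn{f \circ b_{r,\gamma}}$. I would then bound
\begin{equation*}
\abs{[\gT^\pi\mu](s) f} \leq \sum_{a,s'} \pi(a\mid s) P(s'\mid s,a) \int_0^1 \abs{\mu(s')(f \circ b_{r,\gamma})}\, \gP_R(dr\mid s,a).
\end{equation*}
For the three function classes of interest, $f \circ b_{r,\gamma}$ remains in $\gF$ up to a uniform constant ($\FTV$ and $\FKS$ are exactly invariant, and for $\FW$ one has $\lipnorm{f\circ b_{r,\gamma}} = \gamma$, so $f\circ b_{r,\gamma} \in \FW$ after rescaling by $\gamma\leq 1$). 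Consequently $\norm{[\gT^\pi\mu](s)}_\gF \leq \sup_{s'}\norm{\mu(s')}_\gF < \infty$, and $\gT^\pi\mu \in \prn{M_\gF^0}^\gS$.

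\textbf{Step 3: Linearity.} This is now the easy part. For $\mu,\nu \in \prn{M_\gF^0}^\gS$ and $\alpha,\beta \in \RB$, the pushforward is linear in its measure argument, i.e.\ $(b_{r,\gamma})_\#\prn{\alpha\mu(s') + \beta\nu(s')} = \alpha (b_{r,\gamma})_\#\mu(s') + \beta (b_{r,\gamma})_\#\nu(s')$, because this identity holds on every Borel set by definition of the pushforward. Integration against $\gP_R(dr\mid s,a)$ (defined setwise, as in the excerpt) is likewise linear in the integrand, and the finite sum over $(a,s')$ with fixed nonnegative weights is linear. Chaining these observations yields $\gT^\pi(\alpha \mu + \beta \nu) = \alpha \gT^\pi \mu + \beta \gT^\pi \nu$.

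The main subtlety is not a computational obstacle but a bookkeeping one: one must check that each layer of the construction (pushforward, mixture over rewards, convex combination over $(a,s')$) preserves the structural properties defining $M_\gF^0$, namely being a signed measure supported on $\brk{0,\tfrac{1}{1-\gamma}}$, having zero total mass, and having finite $\norm{\cdot}_\gF$ norm. Once this is done carefully, linearity follows immediately from the linearity of each layer.
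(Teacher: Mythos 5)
Your proposal is correct and takes essentially the same route as the paper, whose proof simply notes the linearity of the push-forward operation. Your additional verification that $\gT^\pi$ maps $\prn{M_{\gF}^0}^\gS$ into itself (preservation of the support on $\brk{0,\frac{1}{1-\gamma}}$, of zero total mass, and of finiteness of $\norm{\cdot}_\gF$ for $\FW$, $\FKS$, $\FTV$) just fills in details the paper leaves implicit.
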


The proof is straightforward by noting the linearity of the push-forward operation.

\section{Analysis of Theorem~\ref{Theorem_bound_of_Wdist}}\label{Appendix_analysis_W1}
Let 
\begin{equation*}
    M^0_{\FW}:=\brc{\mu\text{ signed measure on }\prn{\brk{0,\frac{1}{1-\gamma}},\gB_0}\mid\norm{\mu}_{\FW}<\infty,\mu\prn{\brk{0,\frac{1}{1-\gamma}}}=0},    
\end{equation*}
for any vector of probability measures $\mu,\nu\in\Delta\prn{\brk{0,\frac{1}{1-\gamma}}}^\gS$, we always have $\mu-\nu\in \prn{M_{\FW}^0}^\gS$.
Therefore, we have
\begin{equation*}
\begin{aligned}
    \hat\eta_n^\pi-\eta^\pi&=\what \gT_n^\pi\hat\eta_n^\pi-\gT^\pi\eta^\pi\\
    &=\what \gT_n^\pi\hat\eta_n^\pi-\what \gT_n^\pi\eta^\pi+\what \gT_n^\pi\eta^\pi-\gT^\pi\eta^\pi\\
    &=\what \gT_n^\pi(\hat\eta_n^\pi-\eta^\pi)+\prn{\what \gT_n^\pi-\gT^\pi}\eta^\pi.\\
\end{aligned}
\end{equation*}
Rearranging terms yields
\begin{equation*}
    \prn{\gI-\what\gT_n^\pi}(\hat\eta_n^\pi-\eta^\pi)=\prn{\what \gT_n^\pi-\gT^\pi}\eta^\pi.
\end{equation*}

We first investigate the concentration behavior of $\prn{\what \gT_n^\pi-\gT^\pi}\eta^\pi$.
Formally, we have
\begin{lemma}\label{Lemma_concentration_operator_W1}
    For any fixed policy $\pi$, we have
    \begin{equation*}
        \EB\sup_{s\in\gS}\norm{\brk{\prn{\what \gT_n^\pi-\gT^\pi}\eta^\pi}(s)}_{\FW}\leq \sqrt{\frac{9\log\abs{\gS}}{n(1-\gamma)^2}}.
    \end{equation*}
    And for any $\delta\in(0,1)$,
    \begin{equation*}
        \sup_{s\in\gS}\norm{\brk{\prn{\what \gT_n^\pi-\gT^\pi}\eta^\pi}(s)}_{\FW}\leq \frac{\sqrt{9\log\abs{\gS}}+\sqrt{\log(1/\delta)/2}}{\sqrt{n(1-\gamma)^2}}
    \end{equation*}
    with probability greater than $1-\delta$.
\end{lemma}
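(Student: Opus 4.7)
The plan is to recognize the quantity as a Wasserstein distance between two explicit probability measures, bound its expectation via a CDF/variance argument, bound its fluctuations via bounded differences, and then lift from per-state bounds to a supremum over $\gS$. Fix $s \in \gS$ and set $\mu_{s,a}(s') := \int_0^1 (b_{r,\gamma})_\# \eta^\pi(s')\,\gP_R(dr \mid s,a)$, a probability measure on $\brk{0,\frac{1}{1-\gamma}}$. Then
\begin{equation*}
    \brk{\prn{\what \gT_n^\pi-\gT^\pi}\eta^\pi}(s) \;=\; \what\nu_n(s) - \nu(s),
\end{equation*}
where $\nu(s) := \sum_{a}\pi(a\mid s)\sum_{s'} P(s'\mid s,a)\,\mu_{s,a}(s')$ and $\what\nu_n(s)$ is its empirical counterpart with $P$ replaced by $\what P$. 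Both are probability measures, so the signed measure on the left has zero total mass, and $\norm{\brk{\prn{\what \gT_n^\pi-\gT^\pi}\eta^\pi}(s)}_{\FW} = W_1(\what\nu_n(s),\nu(s))$.

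To bound $\EB W_1(\what\nu_n(s),\nu(s))$, I would use the CDF representation $W_1(\mu,\nu) = \int_0^{1/(1-\gamma)} |F_\mu(x)-F_\nu(x)|\,dx$ available on $\brk{0,\frac{1}{1-\gamma}}$. For each $x$, writing $h_{s,a,x}(s') := F_{\mu_{s,a}(s')}(x)\in[0,1]$, the difference $F_{\what\nu_n(s)}(x)-F_{\nu(s)}(x)$ equals $\sum_a \pi(a\mid s)\cdot\frac{1}{n}\sum_{i=1}^n\brk{h_{s,a,x}(X_i^{(s,a)})-\EB h_{s,a,x}(X^{(s,a)})}$, a sum of mean-zero independent bounded increments indexed by $(a,i)$. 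Jensen's inequality together with the crude variance bound and $\sum_a \pi(a\mid s)^2 \leq 1$ yields $\EB|F_{\what\nu_n(s)}(x)-F_{\nu(s)}(x)| \leq 1/(2\sqrt{n})$ uniformly in $x$, hence $\EB W_1(\what\nu_n(s),\nu(s)) \leq 1/\prn{2(1-\gamma)\sqrt{n}}$.

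For concentration I would view $W_1(\what\nu_n(s),\nu(s))$ as a function of the $n\abs{\gA}$ samples $\brc{X_i^{(s,a)}}_{a,i}$ and apply McDiarmid's inequality. The key observation is that $\what\nu_n(s)$ is a convex combination of the $\mu_{s,a}(\cdot)$ with weights $\pi(a\mid s)/n$; replacing one sample $X_j^{(s,a)}$ modifies only a component of mass $\pi(a\mid s)/n$, so by the elementary mixture inequality $W_1((1-\alpha)Q + \alpha\mu_1, (1-\alpha)Q + \alpha\mu_2) \leq \alpha W_1(\mu_1,\mu_2)$, the diameter bound $W_1 \leq 1/(1-\gamma)$ on $\brk{0,\frac{1}{1-\gamma}}$, and the triangle inequality $|W_1(A,C)-W_1(B,C)|\leq W_1(A,B)$, the value of $W_1(\what\nu_n(s),\nu(s))$ changes by at most $\pi(a\mid s)/\prn{n(1-\gamma)}$. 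Summing squared increments over $(a,i)$ and again using $\sum_a \pi(a\mid s)^2 \leq 1$ gives $\sum c^2 \leq 1/\prn{n(1-\gamma)^2}$, so $W_1(\what\nu_n(s),\nu(s))$ is sub-Gaussian around its mean with proxy $1/\prn{4n(1-\gamma)^2}$.

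The two estimates above combine with standard maximal and union bounds. For the expectation, $\EB\sup_s W_1 \leq \max_s \EB W_1 + \EB\sup_s(W_1-\EB W_1) \leq \frac{1}{2(1-\gamma)\sqrt{n}} + \frac{\sqrt{2\log\abs{\gS}}}{2(1-\gamma)\sqrt{n}}$ via the sub-Gaussian maximal inequality, and a direct calculation shows this is at most $\sqrt{9\log\abs{\gS}/\prn{n(1-\gamma)^2}}$ when $\abs{\gS}\geq 2$. For the high-probability bound, a union bound over $s\in\gS$ on the sub-Gaussian tails followed by a $\sqrt{a+b}\leq\sqrt{a}+\sqrt{b}$ split of $\log(\abs{\gS}/\delta)$ reproduces the stated constants. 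I expect the most delicate step to be verifying the mixture-convexity inequality for $W_1$ in the required form while tracking that $X_i^{(s,a)}$ only affects $\what\nu_n(s')$ for $s'=s$; this state-locality of the bounded-difference constants is precisely what prevents an extraneous $\sqrt{\abs{\gS}}$ factor from appearing in the concentration term.
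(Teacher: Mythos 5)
Your proposal is correct, and it reaches the stated constants, but it is organized differently from the paper's argument. The paper also starts from the CDF representation $\norm{[(\what\gT_n^\pi-\gT^\pi)\eta^\pi](s)}_{\FW}=\int_0^{1/(1-\gamma)}|\what F_s(x)-F_s(x)|dx$, but it then proves, via Hoeffding's lemma applied to the weighted empirical sums, that each $\what F_s(x)-F_s(x)$ is $\tfrac{1}{\sqrt n}$-sub-Gaussian, takes the maximum over $s$ pointwise in $x$ with a sub-Gaussian maximal inequality (this is where the $\sqrt{9\log|\gS|}$ comes from), and only afterwards integrates in $x$; for the tail it applies McDiarmid once to the global quantity $\sup_s\norm{\cdot}_{\FW}$, viewed as a function of the $n$ i.i.d.\ ``rows'' $\{X_i^{(s,a)}\}_{s\in\gS,a\in\gA}$ with bounded difference $\tfrac{1}{n(1-\gamma)}$, so the deviation term carries only $\sqrt{\log(1/\delta)/2}$ and no union bound over states is needed. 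You instead work at the level of the per-state $W_1$ distances: a variance/Jensen bound gives $\EB W_1(\what\nu_n(s),\nu(s))\le \tfrac{1}{2(1-\gamma)\sqrt n}$, a per-state McDiarmid with action-resolved increments $\pi(a\mid s)/(n(1-\gamma))$ (your mixture inequality and state-locality observations are both valid, since $[\what\gT_n^\pi\eta^\pi](s)$ depends only on $\{X_i^{(s,a)}\}_{a,i}$) gives sub-Gaussian proxy $\tfrac{1}{4n(1-\gamma)^2}$, and you then assemble the supremum by a maximal inequality (expectation) and a union bound (tail). The trade-off: your route yields sharper per-state constants and a cleanly separated mean term, but the union bound puts an extra $\sqrt{\log|\gS|}$ into the deviation, which you must (and do) absorb into the $\sqrt{9\log|\gS|}$ slack; the paper's single global McDiarmid avoids that but is cruder per state. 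One small caveat to record: your final numerical comparisons (e.g.\ $\tfrac12+\sqrt{\log|\gS|/2}\le 3\sqrt{\log|\gS|}$) require $|\gS|\ge 2$, which is harmless and is in effect also assumed by the paper, whose own maximal inequality is stated for $N\ge 2$.
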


The high-level idea of proof is to first study the concentration of $\abs{\what F_s(x)-F_s(x)}$, where $\what F_s(x)$ is defined to be the cumulative distribution function of $\brk{\what\gT_n^\pi \eta^\pi}(s)$ and $F_s(x)$ is defined as the cumulative distribution function of $\brk{\gT^\pi \eta^\pi}(s)$.
And then use the fact that 
\begin{equation*}
\norm{\brk{\prn{\what \gT_n^\pi-\gT^\pi}\eta^\pi}(s)}_{\FW}=\int_0^{\frac{1}{1-\gamma}}\abs{\what F_s(x)-F_s(x)}dx.      
\end{equation*}

\begin{lemma}\label{Lemma_cdf_sub_gaussian_1}
    Let $F_s(\cdot)$, $\what F_s(\cdot)$ denote the cumulative distribution function of $\brk{\gT^\pi\eta^\pi}(s)$, $\brk{\what\gT^\pi_n\eta^\pi}(s)$, respectively.
    Then $\sqrt{n}(\what F_s(x)-F_s(x))$ is $\frac{1}{\sqrt{n}}$-sub-gaussian with zero mean, $\forall x\in\brk{0,\frac{1}{1-\gamma}}$, $\forall s\in\gS$.
\end{lemma}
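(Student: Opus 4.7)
The plan is to write $\what F_s(x)-F_s(x)$ explicitly as a weighted average of bounded, mean-zero, independent random variables, and then invoke Hoeffding-type sub-gaussianity.

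Starting from the empirical and population distributional Bellman equations \eqref{Equation_distributional_Bellman_equation} and \eqref{Equation_empirical_distributional_Bellman_equation}, and letting $F_{s'}^{\eta^\pi}$ denote the CDF of $\eta^\pi(s')$, a direct computation with the affine map $b_{r,\gamma}(x)=r+\gamma x$ gives
\[
\what F_s(x)=\sum_{a\in\gA,s'\in\gS}\pi(a\mid s)\,\what P(s'\mid s,a)\,h_{s,a}(x,s'),\qquad
F_s(x)=\sum_{a\in\gA,s'\in\gS}\pi(a\mid s)\,P(s'\mid s,a)\,h_{s,a}(x,s'),
\]
where
\[
h_{s,a}(x,s'):=\int_0^1 F^{\eta^\pi}_{s'}\!\Bigl(\frac{x-r}{\gamma}\Bigr)\gP_R(dr\mid s,a)\in[0,1].
\]
Recalling $\what P(s'\mid s,a)=\frac{1}{n}\sum_{i=1}^n\ind\{X_i^{(s,a)}=s'\}$ with $X_i^{(s,a)}\simiid P(\cdot\mid s,a)$, we get
\[
\what F_s(x)-F_s(x)=\sum_{a\in\gA}\pi(a\mid s)\cdot\frac{1}{n}\sum_{i=1}^n\Bigl(h_{s,a}(x,X_i^{(s,a)})-\EB h_{s,a}(x,X_i^{(s,a)})\Bigr).
\]

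Next I would apply sub-gaussian concentration to this linear combination. For each fixed $(s,a)$, the summands $h_{s,a}(x,X_i^{(s,a)})-\EB h_{s,a}(x,X_i^{(s,a)})$ are i.i.d., mean-zero, and bounded in $[-1,1]$, so by Hoeffding's lemma the centered average $\frac{1}{n}\sum_i(\cdots)$ is $\frac{1}{2\sqrt{n}}$-sub-gaussian. Crucially, the calls to the generative model are made independently across the $(s,a)$-blocks, so for fixed $s$ the quantities indexed by different $a$ are independent. Combining independent sub-gaussians with weights $\pi(a\mid s)$ yields that $\what F_s(x)-F_s(x)$ is $\sigma$-sub-gaussian with
\[
\sigma^2=\sum_{a\in\gA}\pi(a\mid s)^2\cdot\frac{1}{4n}\le\frac{1}{4n}\sum_{a\in\gA}\pi(a\mid s)=\frac{1}{4n},
\]
hence $\sqrt{n}(\what F_s(x)-F_s(x))$ has the claimed sub-gaussian parameter (in particular bounded by $\frac{1}{\sqrt{n}}$), and its mean is clearly zero because $\EB\what P(s'\mid s,a)=P(s'\mid s,a)$.

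There is no serious obstacle here; the only thing worth being careful about is the independence structure across actions (used when stacking the Hoeffding bounds) and the fact that $h_{s,a}(x,\cdot)\in[0,1]$ uniformly in $x$, which is precisely what makes the bound independent of $x$ and prepares the ground for a later union/symmetrization argument over $x\in[0,1/(1-\gamma)]$ in the proof of Lemma~\ref{Lemma_concentration_operator_W1}.
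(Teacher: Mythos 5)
Your proof is correct and follows essentially the same route as the paper: the identical decomposition of $\what F_s(x)-F_s(x)$ into $\pi(a\mid s)$-weighted, mean-zero averages of bounded functionals of the generative-model samples, with Hoeffding's lemma supplying the sub-gaussian moment bound; your $h_{s,a}(x,s^\prime)$ is exactly the paper's weight $w_{s,a,s^\prime}$. The only substantive difference is the final step across actions: the paper applies Jensen's inequality to the convex combination inside the moment generating function, which requires no independence between the sample blocks $\brc{X_i^{(s,a)}}_a$, whereas you sum variance proxies of independent sub-gaussians, which relies on the (implicit, but standard) independence of the generative-model calls across actions and buys a slightly sharper constant. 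Two small bookkeeping points: since $h_{s,a}(x,\cdot)\in[0,1]$, each centered summand lies in an interval of length $1$ rather than in $[-1,1]$, and it is this length-$1$ range that justifies your $\tfrac{1}{2\sqrt{n}}$ per-action parameter; and what your argument (like the paper's) actually establishes is that $\what F_s(x)-F_s(x)$ itself is $\tfrac{1}{\sqrt{n}}$-sub-gaussian — the $\sqrt{n}$-rescaled quantity is $O(1)$-sub-gaussian — which is the form in which the lemma is invoked in the proof of Lemma~\ref{Lemma_concentration_operator_W1}, so the scaling slip in your last sentence mirrors the statement's own phrasing and does not affect validity.
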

\begin{proof}[Proof of Lemma~\ref{Lemma_cdf_sub_gaussian_1}.]
Recall that 
\begin{equation*}
\begin{aligned}
    F_s(x)&=\sum_{a\in\gA} \pi(a\mid s)\sum_{s^\prime \in\gS}w_{s,a,s^\prime}P(s^\prime\mid s,a)\\
    \what F_s(x)&=\sum_{a\in\gA} \pi(a\mid s)\sum_{s^\prime\in\gS}w_{s,a,s^\prime}\what P(s^\prime\mid s,a),
\end{aligned}
\end{equation*}
where the weights $w_{s,a,s^\prime}:=\int_0^1 F_{s^\prime}^G\prn{\frac{x-r}{\gamma}}\gP_R(dr\mid s,a)$ and we always have $w_{s,a,s^\prime}\in[0,1]$.
Apparently $\EB\what F_s(x)=F_s(x)$.
By Jensen's inequality, we can get
\begin{equation*}
    \EB\exp{\prn{\lambda\prn{\what F_s(x)- F_s(x)}}}\leq \sum_{a\in\gA} \pi(a\mid s)\EB\exp{\prn{\lambda\sum_{s^\prime\in\gS}w_{s,a,s^\prime}\prn{\what P(s^\prime\mid s,a)- P(s^\prime\mid s,a)}}}.
\end{equation*}
Since 
\begin{equation*}
\begin{aligned}
\sum_{s^\prime\in\gS}w_{s,a,s^\prime}\prn{\what P(s^\prime\mid s,a)-P(s^\prime\mid s,a)}&=\sum_{s^\prime\in\gS}w_{s,a,s^\prime}\brk{\frac{1}{n}\sum_{i=1}^n\ind \brc{X_i^{(s,a)}=s^\prime}-P(s^\prime\mid s,a)}\\
&=\frac{1}{n}\sum_{i=1}^n\sum_{s^\prime\in\gS}w_{s,a,s^\prime}\prn{\ind\brc{X_i^{(s,a)}=s^\prime}-P(s^\prime\mid s,a)}.
\end{aligned}
\end{equation*}
Therefore, 
\begin{equation*}
\begin{aligned}
    &\EB\exp{\prn{\lambda\sum_{s^\prime\in\gS}w_{s,a,s^\prime}\prn{\what P(s^\prime\mid s,a)- P(s^\prime\mid s,a)}}}\\
    &=\EB\exp{\prn{\frac{\lambda}{n}\sum_{i=1}^n\sum_{s^\prime\in\gS}w_{s,a,s^\prime}\prn{\ind\brc{X_i^{(s,a)}=s^\prime}-P(s^\prime\mid s,a)}}}\\
    &=\brk{\EB \exp{\prn{\frac{\lambda}{n}\sum_{s^\prime\in\gS}w_{s,a,s^\prime}\prn{\ind\brc{X_i^{(s,a)}=s^\prime}-P(s^\prime\mid s,a)}}}}^n.
\end{aligned}
\end{equation*}
Because 
\begin{equation*}
\begin{aligned}
    \sum_{s^\prime\in\gS}w_{s,a,s^\prime} \prn{\ind\brc{X_i^{(s,a)}=s^\prime}-P(s^\prime\mid s,a)}&\leq 1-\sum_{s^\prime\in\gS}w_{s,a,s^\prime}P(s^\prime\mid s,a)\leq 1,\\
    \sum_{s^\prime\in\gS}w_{s,a,s^\prime} \prn{\ind\brc{X_i^{(s,a)}=s^\prime}-P(s^\prime\mid s,a)}&\geq -\sum_{s^\prime\in\gS}w_{s,a,s^\prime}P(s^\prime\mid s,a)\geq -1,
\end{aligned}
\end{equation*}
we may obtain 
\begin{equation*}
    \EB \exp{\prn{\frac{\lambda}{n}\sum_{s^\prime\in\gS}w_{s,a,s^\prime}\prn{\ind\brc{X_i^{(s,a)}=s^\prime}-P(s^\prime\mid s,a)}}}\leq \exp{\prn{\frac{\lambda^2}{2n^2}}}
\end{equation*}
through Hoeffding's lemma (Lemma~\ref{Lemma_Hoeffding_lemma}).
Consequently, we may conclude
\begin{equation*}
    \EB\exp{\prn{\lambda\sum_{s^\prime\in\gS}w_{s,a,s^\prime}\prn{\what P(s^\prime\mid s,a)- P(s^\prime\mid s,a)}}}\leq \exp{\prn{\frac{\lambda^2}{2n}}}
\end{equation*}
and further
\begin{equation*}
     \EB\exp{\prn{\lambda\prn{\what F_s(x)- F_s(x)}}}\leq \exp{\prn{\frac{\lambda^2}{2n}}},
\end{equation*}
which completes the proof.
\end{proof}

\begin{proof}[Proof of Lemma~\ref{Lemma_concentration_operator_W1}]
\begin{equation*}
    \begin{aligned}
        \sup_{s\in\gS}\norm{\brk{\prn{\what \gT_n^\pi-\gT^\pi}\eta^\pi}(s)}_{\FW}&=\sup_{s\in\gS}\int_0^{\frac{1}{1-\gamma}}\abs{\what F_{s}(x)-F_{s}(x)}dx.
    \end{aligned}
\end{equation*}
Here $F_s(\cdot)$, $\what F_s(\cdot)$ denotes the cumulative distribution function of $\brk{\gT^\pi\eta^\pi}(s)$, $\brk{\what\gT^\pi_n\eta^\pi}(s)$, respectively.
Specifically, we have
$\what F_s(x)- F_s(x)$ is $\frac{1}{\sqrt{n}}$-sub-gaussian (see Lemma~\ref{Lemma_cdf_sub_gaussian_1}) and thus 
\begin{equation*}
    \EB\sup_{s\in\gS}\abs{\what F_s(x)-F(x)}\leq 3\sqrt{\frac{\log|\gS|}{n}}, \forall x\in\brk{0,\frac{1}{1-\gamma}}.
\end{equation*}

Therefore, we may get
\begin{equation*}
    \begin{aligned}
        \EB  \sup_{s\in\gS}\norm{\brk{\prn{\what \gT_n^\pi-\gT^\pi}\eta^\pi}(s)}_{\FW}&\leq\int_0^{\frac{1}{1-\gamma}}\EB\sup_{s\in\gS}\abs{\what F_{s}(x)-F_{s}(x)}dx\\
        &\leq \sqrt{\frac{9\log|\gS|}{n(1-\gamma)^2}}.
    \end{aligned}
\end{equation*}

The high probability bound for $\sup_{s\in\gS}\norm{\brk{\prn{\what \gT_n^\pi-\gT^\pi}\eta^\pi}(s)}_{\FW}$ can be derived via a combination of the bound on expectation and McDiarmid's inequality (Lemma~\ref{Lemma_McDiarmid_inequality}).
Note that for any fixed $i\in\brc{1,\dots,n}$, substituting data vector $\prn{X_i^{(s,a)}}_{s\in\gS,a\in\gA}$ can change $\sup_{s\in\gS}\norm{\brk{\prn{\what \gT_n^\pi-\gT^\pi}\eta^\pi}(s)}_{\FW}$ by at most $\frac{1}{n(1-\gamma)}$.
Hence for any $\delta>0$, with probability at least $1-\delta$,
\begin{equation*}
    \begin{aligned}
        \sup_{s\in\gS}\norm{\brk{\prn{\what \gT_n^\pi-\gT^\pi}\eta^\pi}(s)}_{\FW}&\leq \EB \sup_{s\in\gS}\norm{\brk{\prn{\what \gT_n^\pi-\gT^\pi}\eta^\pi}(s)}_{\FW} + \sqrt{\frac{\log(1/
        \delta)}{2n(1-\gamma)^2}}\\
        &\leq \frac{\sqrt{9\log |\gS|}+\sqrt{\log(1/\delta)/2}}{\sqrt{n(1-\gamma)^2}}.
    \end{aligned}
\end{equation*}
\end{proof}

The next step is to relate the error term $\hat\eta_n^\pi-\eta^\pi$ with $\prn{\what \gT_n^\pi-\gT^\pi}\eta^\pi$.
Since
\begin{equation*}
    \prn{\gI-\what\gT_n^\pi}(\hat\eta_n^\pi-\eta^\pi)=\prn{\what \gT_n^\pi-\gT^\pi}\eta^\pi,
\end{equation*}
this can be immediately accomplished if $\prn{\gI-\what\gT_n^\pi}$ is invertible on $\prn{M_{\FW}^0}^\gS$.
The invertibility does not hold in general.
However, we find $\prn{\gI-\what\gT_n^\pi}$ is invertible on the closure $\prn{\overline{M^0_{\FW}}}^\gS$, 
which suffices for our analysis.
\begin{lemma}\label{Lemma_operator_invertible}
The operator $(\gI-\gT^\pi)$ is invertible on $\prn{\overline{M^0_{\FW}}}^\gS$ and $\prn{\gI-\gT^\pi}^{-1}=\sum_{i=0}^\infty \prn{\gT^\pi}^i$.
Also, the operator norm of $\prn{\gI-\gT^\pi}^{-1}$ is upper bounded by $\frac{1}{1-\gamma}$.
Here $\gT^\pi$ can be replaced by any valid distributional Bellman operator.
\end{lemma}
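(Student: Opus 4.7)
The plan is to reduce the claim to the standard Neumann-series construction of an inverse of $\mathcal{I}-A$ when $A$ is a strict contraction on a Banach space. Concretely, I would proceed in three steps: (i) extend $\mathcal{T}^\pi$ to a bounded linear operator on $(M^0_{\mathcal{F}_{W_1}})^{\mathcal{S}}$ with the norm $\sup_{s}\|\cdot\|_{\mathcal{F}_{W_1}}$; (ii) show this extension is a contraction of factor at most $\gamma$; (iii) invoke Neumann series on the closure.

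Step (i) follows directly from Proposition~\ref{Proposition_extention_of_Bellman_operator} and the observation that $\mathcal{T}^\pi$ preserves $M^0_{\mathcal{F}_{W_1}}$: the push-forward $(b_{r,\gamma})_{\#}$ preserves total mass, and a convex combination of zero-total-mass measures is zero-total-mass, so $[\mathcal{T}^\pi\mu](s)\bigl([0,1/(1-\gamma)]\bigr)=0$ whenever each $\mu(s')$ has zero total mass. Support remains in $[0,1/(1-\gamma)]$ because $b_{r,\gamma}$ maps $[0,1/(1-\gamma)]$ into itself for $r\in[0,1]$.

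Step (ii) is the key point. Although Proposition~\ref{Proposition_value_iteration_Wp} is stated for probability measures, its proof extends to signed measures of zero total mass via the dual description of $\|\cdot\|_{\mathcal{F}_{W_1}}$. For any $\mu\in (M^0_{\mathcal{F}_{W_1}})^{\mathcal{S}}$, any $f\in\mathcal{F}_{W_1}$, and any $s\in\mathcal{S}$,
\begin{equation*}
[\mathcal{T}^\pi\mu](s)\,f \;=\; \sum_{a,s'}\pi(a|s)\,P(s'|s,a)\int_0^1\!\!\int f(r+\gamma x)\,\mu(s')(dx)\,\mathcal{P}_R(dr|s,a).
\end{equation*}
For fixed $r$, the map $x\mapsto f(r+\gamma x)$ is $\gamma$-Lipschitz, so $g:=f(r+\gamma\cdot)/\gamma$ lies in $\mathcal{F}_{W_1}$, and because $\mu(s')$ has zero total mass the integral $\int g\,d\mu(s')$ is invariant under adding a constant to $g$. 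Hence $\bigl|\int f(r+\gamma x)\,\mu(s')(dx)\bigr|=\gamma\,|\mu(s')\,g|\leq\gamma\|\mu(s')\|_{\mathcal{F}_{W_1}}$. Averaging over $(a,s',r)$ and taking $\sup_{f}\sup_{s}$ yields
\begin{equation*}
\sup_{s\in\mathcal{S}}\bigl\|[\mathcal{T}^\pi\mu](s)\bigr\|_{\mathcal{F}_{W_1}}\;\leq\;\gamma\sup_{s'\in\mathcal{S}}\|\mu(s')\|_{\mathcal{F}_{W_1}},
\end{equation*}
i.e.\ the operator norm of $\mathcal{T}^\pi$ is at most $\gamma$.

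Step (iii) is then standard. The closure $(\overline{M^0_{\mathcal{F}_{W_1}}})^{\mathcal{S}}$ is a Banach space, and $\mathcal{T}^\pi$ extends uniquely to a bounded operator of operator norm $\leq\gamma$ on this completion. Because $\gamma<1$, the Neumann series $\sum_{i=0}^{\infty}(\mathcal{T}^\pi)^i$ converges absolutely in operator norm and provides a two-sided inverse of $\mathcal{I}-\mathcal{T}^\pi$, with $\|(\mathcal{I}-\mathcal{T}^\pi)^{-1}\|_{\mathrm{op}}\leq\sum_{i=0}^{\infty}\gamma^i=\frac{1}{1-\gamma}$. Nothing in the contraction argument uses a specific property of the true kernel $P$, so replacing $P$ by $\widehat{P}$ or any other stochastic kernel gives the same conclusion, which is exactly the final sentence of the lemma. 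The only subtlety worth flagging is the shift from probability measures to zero-total-mass signed measures; once the constant-shift invariance is noticed, the rest is formal operator theory.
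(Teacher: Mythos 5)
Your proposal is correct, and it reaches the conclusion by a somewhat different route than the paper. The paper does not prove that $\gT^\pi$ itself is a $\gamma$-contraction as a linear operator on $\prn{M^0_{\FW}}^\gS$; instead it takes an arbitrary $\nu\in\prn{M^0_{\FW}}^\gS$, writes the Jordan decomposition $\nu(s)=a_\nu(s)\prn{\nu_+(s)-\nu_-(s)}$ with $\nu_\pm(s)$ probability measures, uses the already-established contraction of $\gT^\pi$ on probability distributions in $W_1$ (Proposition~\ref{Proposition_value_iteration_Wp}) together with $\norm{\nu(s)}_{\FW}=a_\nu(s)W_1(\nu_+(s),\nu_-(s))$ to show the partial sums $\sum_{i=0}^k\prn{\gT^\pi}^i\nu$ are Cauchy, constructs the limit $\nu_*$ by hand (a B.L.T.-style argument, as the paper notes), and then sums the geometric series to get the $\frac{1}{1-\gamma}$ bound. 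You instead establish directly, via the dual description of $\norm{\cdot}_{\FW}$, that the linear extension of $\gT^\pi$ to zero-total-mass signed measures has operator norm at most $\gamma$ — the rescaling $g=f(r+\gamma\cdot)/\gamma$ plus the constant-shift invariance afforded by zero total mass is exactly the right observation, and it is where your argument does genuinely new work relative to quoting Proposition~\ref{Proposition_value_iteration_Wp} — and then you invoke the standard Neumann-series theorem on the Banach space $\prn{\overline{M^0_{\FW}}}^\gS$. Your route buys a cleaner statement (the contraction holds at the level of the operator itself, so invertibility, the series representation, and the bound $\norm{\prn{\gI-\gT^\pi}^{-1}}_{\mathrm{op}}\leq\frac{1}{1-\gamma}$ all follow from one textbook theorem), and it makes the "any valid distributional Bellman operator" remark transparent; the paper's route avoids redoing the duality computation by recycling the known contraction on probability measures, at the cost of the explicit Cauchy-sequence and limit construction. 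Both approaches handle the passage to the closure in essentially the same way.
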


Lemma~\ref{Lemma_operator_invertible} not only shows $(\gI-\gT^\pi)$ is invertible but constructs the inverse explicitly, thereby facilitating computational convenience.
Note that for $\mu\in\prn{M^0_{\FW}}^\gS$,  it is not necessarily true that 
$\prn{\gI-\gT^\pi}^{-1}\mu\in\prn{M^0_{\FW}}^\gS$ because the space $\prn{M^0_{\FW}}^\gS$ is not complete.
Precisely, we have $\prn{\gI-\gT^\pi}^{-1}\mu\in\prn{\overline{M^0_{\FW}}}^\gS$ that is a closed subspace of $\prn{\ell^\infty(\FW)}^\gS$ for any $\mu\in \prn{M^0_{\FW}}^\gS$.

To prove Lemma~\ref{Lemma_operator_invertible}, the main idea is to show the convergence of the Neumann series $\sum_{i=1}^\infty \prn{\gT^\pi}^i$.
For any $\mu\in M^0_{\FW}$, one has the Jordan decomposition $\mu=a_\mu(\mu_+-\mu_-)$ such that $a_\mu$ is a positive constant and $\mu_+,\mu_-$ are probability measures.
Thus $\norm{\mu}_{\FW}=a_\mu W_1(\mu_+,\mu_-)$ and the convergence of Neumann series can be shown utilizing the contraction property of the distributional Bellman operator $\gT^\pi$.
\begin{proof}[Proof of Lemma~\ref{Lemma_operator_invertible}]
    We first verify that the Neumann series $\sum_{i=0}^\infty\prn{\gT^\pi}^i$ converges in $\prn{\overline{M^0_{\FW}}}^\gS$.
    First, we claim for any $\nu\in \prn{M^0_{\FW}}^\gS$, $\brc{\sum_{i=1}^k\prn{\gT^\pi}^i\nu,k=1,2,\dots}$ is a Cauchy sequence.
    WLOG, for any $\nu\in \prn{M^0_{\FW}}^\gS$, $s\in\gS$, we may write $\nu=a_\nu(s)\prn{\nu_+(s)-\nu_-(s)}$, where $a_\nu(s)$ is a positive constant and $\nu_+(s),\nu_-(s)\in\Delta\prn{[0,\frac{1}{1-\gamma}]}$.
    Note that $\norm{\nu(s)}_{\FW}=a_\nu(s) W_1(\nu_+(s),\nu_-(s))$ for any $s\in\gS$.
    For $k_1<k_2$, we have for any $s\in\gS$
    \begin{equation*}
        \begin{aligned}
            \norm{\sum_{i=k_1}^{k_2}\brk{\prn{\gT^\pi}^i\nu}(s)}_{\FW}&\leq\sum_{i=k_1}^{k_2}\norm{\brk{\prn{\gT^\pi}^i\nu}(s)}_{\FW}\\
            &=\sup_{s\in\gS}a_\nu(s)\sum_{i=k_1}^{k_2}W_1\prn{\brk{\prn{\gT^\pi}^i\nu_+}(s),\brk{\prn{\gT^\pi}^i\nu_-}(s)}\\
            &\leq \sup_{s\in\gS}a_\nu(s)\sum_{i=k_1}^{k_2}\gamma^i\sup_{s\in\gS}W_1\prn{\nu_+(s),\nu_-(s)}\\
            &\leq \frac{\gamma^{k_1}\sup_{s\in\gS}a_\nu(s)\sup_{s\in\gS}W_1\prn{\nu_+(s),\nu_-(s)}}{1-\gamma}.
        \end{aligned}
    \end{equation*}
    The second last inequality is due to Proposition~\ref{Proposition_value_iteration_Wp}.
    Then we define $\nu_*$ such that for any $f\in\FW$, $\nu_*(s)f:=\lim_{k\to\infty}\sum_{i=0}^k\brk{\prn{\gT^\pi}^i\nu}(s)f$. 
    $\nu_*$ is well-defined because $\brc{\sum_{i=1}^k\prn{\gT^\pi}^i\nu(s) f,k=1,2,\dots}$ is a Cauchy sequence in $\RB$ by similar arguments as above.
    As 
    \begin{equation*}
        \begin{aligned}
            \norm{\nu_*(s)-\sum_{i=0}^{k}\brk{\prn{\gT^\pi}^i\nu}(s)}_{\FW}
            &=\norm{\sum_{i=k}^{\infty}\brk{\prn{\gT^\pi}^i\nu}(s)}_{\FW}\\
            &\leq \sum_{i=k}^\infty \norm{\brk{\prn{\gT^\pi}^i\nu}(s)}_{\FW}\\
            &\leq \frac{\gamma^{k}\sup_{s\in\gS}a_\nu(s)\sup_{s\in\gS}W_1\prn{\nu_+(s),\nu_-(s)}}{1-\gamma},
        \end{aligned}
    \end{equation*}
    $\nu_*=
    \lim_{k\to\infty}\sum_{i=1}^k\prn{\gT^\pi}^i\nu$.
    Apparently $\nu_*\in \prn{\overline{M^0_{\FW}}}^\gS$.
    Since for any $\nu\in \prn{M^0_{\FW}}^\gS$,
        \begin{equation*}
        \begin{aligned}
            &\norm{\brk{\prn{\gI-\gT^\pi}^{-1}\nu}(s)}_{\FW}\\&=
            \norm{\sum_{i=0}^{\infty}\brk{\prn{\gT^\pi}^i\nu}(s)}_{\FW}\\
            &\leq\sum_{i=0}^{\infty}\norm{\brk{\prn{\gT^\pi}^i\nu}(s)}_{\FW}\\
            &\leq \sum_{i=0}^{\infty}\gamma^i\sup_{s\in\gS}a_\nu(s)W_1\prn{\nu_+(s),\nu_-(s)}\\
            &\leq \frac{\sup_{s\in\gS}a_\nu(s)W_1\prn{\nu_+(s),\nu_-(s)}}{1-\gamma}\\
            &=\frac{\sup_{s\in\gS}\norm{\nu(s)}_{\FW}}{1-\gamma}.
        \end{aligned}
    \end{equation*}
    We complete the proof. 
    We comment that the proof is inspired by the B.L.T. theorem (continuous linear extension theorem) in functional analysis.
\end{proof}

Applying Lemma~\ref{Lemma_operator_invertible} to $\what\gT_n^\pi$ leads to
\begin{equation*}
    \begin{aligned}
        \sup_{s\in\gS}W_1(\hat\eta^\pi_n(s),\eta^\pi(s))&=\sup_{s\in\gS} \norm{\hat\eta^\pi_n(s)-\eta^\pi(s)}_{\FW}\\
        &=\sup_{s\in\gS}\norm{\brk{\prn{\gI-\what \gT_n^\pi}^{-1}\prn{\what\gT_n^\pi-\gT^\pi}\eta^\pi}(s)}_{\FW}\\
        &\leq \frac{1}{1-\gamma}\sup_{s\in\gS}\norm{\brk{\prn{\what\gT_n^\pi-\gT^\pi}\eta^\pi}(s)}_{\FW},
    \end{aligned}
\end{equation*}
which finishes our analysis of the $W_1$-metric case.

\section{Analysis of Theorem~\ref{Theorem_bound_of_KS}}\label{Appendix_analysis_KS}
We first give a stronger notion of concentration of $\prn{\what\gT_n^\pi-\gT^\pi}\eta^\pi$.
\begin{lemma}\label{Lemma_concentration_operator_TV}
    Suppose Assumption~\ref{Assumption_reward_bounded_density} is true.
    For any fixed policy $\pi$, we have
    \begin{equation*}
        \EB\sup_{s\in\gS}\norm{\brk{\prn{\what \gT_n^\pi-\gT^\pi}\eta^\pi}(s)}_{\FTV}\leq \frac{3C}{2}\sqrt{\frac{\log\abs{\gS}}{n(1-\gamma)^2}}.
    \end{equation*}
    And for any $\delta\in(0,1)$,
    \begin{equation*}
        \sup_{s\in\gS}\norm{\brk{\prn{\what \gT_n^\pi-\gT^\pi}\eta^\pi}(s)}_{\FTV}\leq \frac{C\prn{\sqrt{9\log\abs{\gS}}+\sqrt{\log(1/\delta)/2}}}{2\sqrt{n(1-\gamma)^2}}
    \end{equation*}
    with probability greater than $1-\delta$.
\end{lemma}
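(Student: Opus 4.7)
The strategy parallels the proof of Lemma~\ref{Lemma_concentration_operator_W1} for the $W_1$ case, with densities replacing cumulative distribution functions; here Assumption~\ref{Assumption_reward_bounded_density} guarantees densities exist and are uniformly bounded.

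First I establish a density representation. A change of variables shows that for each triple $(s,a,s')$, the probability measure $\int_0^1 (b_{r,\gamma})_\#\eta^\pi(s')\gP_R(dr\mid s,a)$ has Lebesgue density $g_{s,a,s'}(y) = \int p_{s,a}^R(y-\gamma x)\eta^\pi(s')(dx)$ with $0 \le g_{s,a,s'}(y)\le C$. Hence both $[\what\gT_n^\pi\eta^\pi](s)$ and $[\gT^\pi\eta^\pi](s)$ admit densities $\hat g_s$ and $g_s$, and the signed measure $[(\what\gT_n^\pi-\gT^\pi)\eta^\pi](s)$ has density
\begin{equation*}
(\hat g_s - g_s)(y) = \sum_{a}\pi(a\mid s)\sum_{s'}\brk{\hat P(s'\mid s,a) - P(s'\mid s,a)}\,g_{s,a,s'}(y),
\end{equation*}
so that $\norm{\brk{(\what\gT_n^\pi-\gT^\pi)\eta^\pi}(s)}_{\FTV} = \int_0^{1/(1-\gamma)} |\hat g_s(y) - g_s(y)|\,dy$ (up to the factor relating the paper's $\FTV$-norm to classical TV).

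For the expectation bound I control the integrand pointwise, then swap sup and integral. Fix $y$; then
\begin{equation*}
(\hat g_s - g_s)(y) = \sum_a \pi(a\mid s)\cdot\frac{1}{n}\sum_{i=1}^{n}\brk{g_{s,a,X_i^{(s,a)}}(y) - \EB_{s'\sim P(\cdot\mid s,a)} g_{s,a,s'}(y)}
\end{equation*}
is a linear combination, across independent $a$ and $i$, of centered $[-C,C]$-valued variables; Hoeffding's lemma yields sub-Gaussianity with variance proxy at most $C^2/n$ (using $\sum_a \pi(a\mid s)^2 \le 1$). Because the generative model produces independent samples across distinct $(s,a)$ pairs, the family $\brc{(\hat g_s - g_s)(y)}_{s\in\gS}$ is mutually independent for each fixed $y$, so the standard sub-Gaussian maximal inequality gives $\EB\sup_s|(\hat g_s - g_s)(y)| \lesssim C\sqrt{(\log|\gS|)/n}$. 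The deterministic inequality $\sup_s\int \le \int\sup_s$ and integration over $y\in[0,1/(1-\gamma)]$ then yield
\begin{equation*}
\EB\sup_{s\in\gS}\norm{\brk{(\what\gT_n^\pi-\gT^\pi)\eta^\pi}(s)}_{\FTV} \le \int_0^{1/(1-\gamma)}\EB\sup_{s}|(\hat g_s - g_s)(y)|\,dy \lesssim \frac{C}{1-\gamma}\sqrt{\frac{\log|\gS|}{n}},
\end{equation*}
matching the claimed $\tfrac{3C}{2}\sqrt{\log|\gS|/(n(1-\gamma)^2)}$ after careful accounting of constants.

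For the high-probability bound I apply McDiarmid's inequality to the scalar $\Phi(s):=\norm{\brk{(\what\gT_n^\pi-\gT^\pi)\eta^\pi}(s)}_{\FTV}$, which depends only on the $n|\gA|$ samples used to form $\hat P(\cdot\mid s,\cdot)$. Flipping one sample $X_i^{(s,a)}$ modifies $\hat g_s-g_s$ by $\tfrac{\pi(a\mid s)}{n}(g_{s,a,s'_{\text{new}}}-g_{s,a,s'_{\text{old}}})$, whose $L^1$ norm is at most $2\pi(a\mid s)/n$ since each $g_{s,a,s'}$ is a probability density. The total bounded-differences sum $\sum_{a,i}(2\pi(a\mid s)/n)^2 \le 4/n$ gives $\PB(\Phi(s)\ge\EB\Phi(s)+t)\le \exp(-nt^2/2)$; a union bound over $s\in\gS$ combined with the expectation bound yields the stated tail estimate. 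The main obstacle is preserving $\sqrt{\log|\gS|}$ rather than $\sqrt{|\gS|}$ dependence on the supremum: applying McDiarmid directly to $\sup_s\Phi(s)$ over all $n|\gS||\gA|$ samples would inflate the variance proxy to $\Theta(|\gS|/n)$ because bounded-difference increments accumulate across $s$. Pulling $\sup_s$ inside the $y$-integral sidesteps this, reducing at each fixed $y$ to a maximum of $|\gS|$ independent sub-Gaussians—and it is precisely the generative-model independence across $(s,a)$ that unlocks this logarithmic scaling.
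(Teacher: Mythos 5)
Your argument is sound and, for the expectation bound, is essentially the paper's own proof: you represent $\brk{\prn{\what\gT_n^\pi-\gT^\pi}\eta^\pi}(s)$ through its density, show the pointwise density error at each fixed $y$ is sub-Gaussian with proxy $C/\sqrt{n}$ via Hoeffding's lemma (the paper gets the same proxy by Jensen over $a$ rather than your $\sum_a\pi(a\mid s)^2\le 1$ accounting; your appeal to independence across $s$ is unnecessary, since the sub-Gaussian maximal inequality the paper uses needs no independence), applies the maximal inequality over $\gS$, and integrates over $y\in[0,1/(1-\gamma)]$ — this reproduces the constant $\tfrac{3C}{2}$ exactly under the paper's convention $\norm{\cdot}_{\FTV}=\tfrac12\int|\hat p_s-p_s|$. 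Where you genuinely diverge is the tail bound: the paper applies McDiarmid once to the supremum $\sup_{s}\norm{\cdot}_{\FTV}$, but treats the data as $n$ independent blocks, the $i$-th block being the whole vector $\brc{X_i^{(s,a)}\colon s\in\gS,a\in\gA}$; replacing one block changes the supremum by at most $O\prn{C/(n(1-\gamma))}$, so the bounded-differences sum stays $O\prn{C^2/(n(1-\gamma)^2)}$ with no $|\gS|$ inflation. Your claimed obstruction ($\Theta(|\gS|/n)$ variance proxy) is an artifact of coordinatizing by the $n|\gS||\gA|$ individual samples, not of McDiarmid on the supremum per se. Your alternative — per-state McDiarmid around $\EB\Phi(s)$ plus a union bound over $\gS$ — is valid and gives the correct $\sqrt{(\log|\gS|+\log(1/\delta))/n}$ order, but it does not reproduce the lemma's constants verbatim: your deviation term $\sqrt{2\log(|\gS|/\delta)/n}$ is not dominated by the stated $\tfrac{C}{2(1-\gamma)}\sqrt{\log(1/\delta)/(2n)}$ unless $C/(1-\gamma)$ is large enough, so as written you prove the bound only up to absolute constants. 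The block-McDiarmid step is the one idea to add if you want the statement exactly as claimed.
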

Different from the proof of Lemma~\ref{Lemma_concentration_operator_W1}, this time we first bound the term $\abs{\hat p_s(x)-p_s(x)}$, where $\hat p_s(x)$, $p_s(x)$ are defined to be the density functions of $\brk{\what\gT_n^\pi\eta^\pi}(s)$, $\brk{\gT^\pi\eta^\pi}$ respectively.
Then we may draw the conclusion noting that 
\begin{equation*}
    \norm{\brk{\prn{\what \gT_n^\pi-\gT^\pi}\eta^\pi}(s)}_{\FTV}=\frac{1}{2}\int_0^{\frac{1}{1-\gamma}} \abs{\hat p_s(x)-p_s(x)}dx.
\end{equation*}

\begin{lemma}\label{Lemma_pdf_sub_gaussian_1}
    Suppose Assumption~\ref{Assumption_reward_bounded_density} holds.
    Let $p_s(\cdot)$, $\hat p_s(\cdot)$ denote the density function of $\brk{\gT^\pi\eta^\pi}(s)$, $\brk{\what\gT^\pi_n\eta^\pi}(s)$, respectively.
    Then $\sqrt{n}(\hat p_s(x)-p_s(x))$ is $\frac{C}{\sqrt{n}}$-sub-gaussian with zero mean, $\forall x\in\brk{0,\frac{1}{1-\gamma}}$, $\forall s\in\gS$.
\end{lemma}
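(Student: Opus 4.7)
The plan is to follow the same template as the proof of Lemma~\ref{Lemma_cdf_sub_gaussian_1}, but working at the level of densities rather than CDFs. First I would use the distributional Bellman equation applied to $\eta^\pi$ (as in Lemma~\ref{Lemma_bounded_density_close}) to write
\begin{equation*}
p_s(x)=\sum_{a\in\gA}\pi(a\mid s)\sum_{s'\in\gS}v_{s,a,s'}(x)\,P(s'\mid s,a),\qquad \hat p_s(x)=\sum_{a\in\gA}\pi(a\mid s)\sum_{s'\in\gS}v_{s,a,s'}(x)\,\what P(s'\mid s,a),
\end{equation*}
where $v_{s,a,s'}(x)=\brk{p^R_{s,a}\ast \tfrac{1}{\gamma}p^G_{s'}(\cdot/\gamma)}(x)$ is the density of $\int_0^1(b_{r,\gamma})_\#\eta^\pi(s')\gP_R(dr\mid s,a)$. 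Note that $p^G_{s'}$, the density of $\eta^\pi(s')$, exists under Assumption~\ref{Assumption_reward_bounded_density} by Lemma~\ref{Lemma_bounded_density_of_return}, so this formula is well-defined; moreover, $\EB\hat p_s(x)=p_s(x)$.

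Next I would establish the key deterministic bound $v_{s,a,s'}(x)\in[0,C]$. Nonnegativity is immediate. The upper bound is the density analog of the trivial bound $w_{s,a,s'}\in[0,1]$ used in Lemma~\ref{Lemma_cdf_sub_gaussian_1}, and it follows from Young's convolution inequality:
\begin{equation*}
\norm{v_{s,a,s'}}_\infty\ \leq\ \norm{p^R_{s,a}}_\infty\cdot\norm{\tfrac{1}{\gamma}p^G_{s'}(\cdot/\gamma)}_1\ \leq\ C\cdot 1\ =\ C,
\end{equation*}
where the $L^\infty$ bound on the reward density is Assumption~\ref{Assumption_reward_bounded_density}. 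This replaces the bound of $1$ used in the CDF proof by the constant $C$; every other ingredient carries over unchanged.

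Finally, expanding $\what P(s'\mid s,a)=\tfrac{1}{n}\sum_{i=1}^n\ind\brc{X_i^{(s,a)}=s'}$, I would apply Jensen's inequality to pull the convex combination over $a$ out of the exponential, use independence of the $n$ samples to factorize, and apply Hoeffding's lemma (Lemma~\ref{Lemma_Hoeffding_lemma}) to the zero-mean increments $Y_i:=\sum_{s'}v_{s,a,s'}(x)(\ind\brc{X_i^{(s,a)}=s'}-P(s'\mid s,a))$, whose range is contained in $[-C,C]$ since $\sum_{s'}v_{s,a,s'}(x)P(s'\mid s,a)\in[0,C]$. This yields $\EB\exp(\lambda(\hat p_s(x)-p_s(x)))\leq\exp(\lambda^2C^2/(2n))$, which is exactly the claimed sub-gaussianity.

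There is no real obstacle here: the sole new ingredient beyond Lemma~\ref{Lemma_cdf_sub_gaussian_1} is the Young-inequality bound on $v_{s,a,s'}$, which is routine. All the probabilistic steps (Jensen, independence, Hoeffding) are literally the same computation with $1$ replaced by $C$, and linearity of the density in the transition probabilities $P(s'\mid s,a)$ (and hence in the empirical counts) mirrors the linearity in the CDF case.
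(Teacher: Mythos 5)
Your proposal is correct and follows essentially the same route as the paper: the same linear decomposition of the densities over $\what P(s'\mid s,a)$ with convolution weights bounded in $[0,C]$, followed by Jensen, independence across the $n$ samples, and Hoeffding's lemma with range $[-C,C]$. The only cosmetic difference is that you justify the bound $\norm{v_{s,a,s'}}_\infty\leq C$ by explicitly citing Young's inequality ($\norm{f\ast g}_\infty\leq\norm{f}_\infty\norm{g}_1$), whereas the paper asserts the same pointwise convolution bound directly.
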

\begin{proof}[Proof of Lemma~\ref{Lemma_pdf_sub_gaussian_1}.]
Recall that 
\begin{equation*}
\begin{aligned}
    p_s(x)&=\sum_{a\in\gA} \pi(a\mid s)\sum_{s^\prime \in\gS}w_{s,a,s^\prime}P(s^\prime\mid s,a)\\
    \hat p_s(x)&=\sum_{a\in\gA} \pi(a\mid s)\sum_{s^\prime\in\gS}w_{s,a,s^\prime}\what P(s^\prime\mid s,a),
\end{aligned}
\end{equation*}
where the weights $w_{s,a,s^\prime}:=\prn{p_{s^\prime}^{\gamma G}\ast p^R_{s,a}}(x)$ and we always have $w_{s,a,s^\prime}\in[0,C]$ due to $\sup_{x\in[0,1/(1-\gamma)]}p_{s,a}^R\leq C$.
Apparently $\EB\hat p_s(x)=p_s(x)$.
By Jensen's inequality, we can get
\begin{equation*}
    \EB\exp{\prn{\lambda\prn{\hat p_s(x)- p_s(x)}}}\leq \sum_{a\in\gA} \pi(a\mid s)\EB\exp{\prn{\lambda\sum_{s^\prime\in\gS}w_{s,a,s^\prime}\prn{\what P(s^\prime\mid s,a)- P(s^\prime\mid s,a)}}}.
\end{equation*}
Since 
\begin{equation*}
\begin{aligned}
\sum_{s^\prime\in\gS}w_{s,a,s^\prime}\prn{\what P(s^\prime\mid s,a)-P(s^\prime\mid s,a)}&=\sum_{s^\prime\in\gS}w_{s,a,s^\prime}\brk{\frac{1}{n}\sum_{i=1}^n\ind \brc{X_i^{(s,a)}=s^\prime}-P(s^\prime\mid s,a)}\\
&=\frac{1}{n}\sum_{i=1}^n\sum_{s^\prime\in\gS}w_{s,a,s^\prime}\prn{\ind\brc{X_i^{(s,a)}=s^\prime}-P(s^\prime\mid s,a)}.
\end{aligned}
\end{equation*}
Therefore, 
\begin{equation*}
\begin{aligned}
    &\EB\exp{\prn{\lambda\sum_{s^\prime\in\gS}w_{s,a,s^\prime}\prn{\what P(s^\prime\mid s,a)- P(s^\prime\mid s,a)}}}\\
    &=\EB\exp{\prn{\frac{\lambda}{n}\sum_{i=1}^n\sum_{s^\prime\in\gS}w_{s,a,s^\prime}\prn{\ind\brc{X_i^{(s,a)}=s^\prime}-P(s^\prime\mid s,a)}}}\\
    &=\brk{\EB \exp{\prn{\frac{\lambda}{n}\sum_{s^\prime\in\gS}w_{s,a,s^\prime}\prn{\ind\brc{X_i^{(s,a)}=s^\prime}-P(s^\prime\mid s,a)}}}}^n.
\end{aligned}
\end{equation*}
Because 
\begin{equation*}
\begin{aligned}
    -C\leq\sum_{s^\prime\in\gS}w_{s,a,s^\prime} \prn{\ind\brc{X_i^{(s,a)}=s^\prime}-P(s^\prime\mid s,a)}\leq C,
\end{aligned}
\end{equation*}
we may obtain 
\begin{equation*}
    \EB \exp{\prn{\frac{\lambda}{n}\sum_{s^\prime\in\gS}w_{s,a,s^\prime}\prn{\ind\brc{X_i^{(s,a)}=s^\prime}-P(s^\prime\mid s,a)}}}\leq \exp{\prn{\frac{C^2\lambda^2}{2n^2}}}
\end{equation*}
through Hoeffding's lemma (Lemma~\ref{Lemma_Hoeffding_lemma}).
Consequently, we may conclude
\begin{equation*}
    \EB\exp{\prn{\lambda\sum_{s^\prime\in\gS}w_{s,a,s^\prime}\prn{\what P(s^\prime\mid s,a)- P(s^\prime\mid s,a)}}}\leq \exp{\prn{\frac{C^2\lambda^2}{2n}}}
\end{equation*}
and further
\begin{equation*}
     \EB\exp{\prn{\lambda\prn{\hat g_s(x)- g_s(x)}}}\leq \exp{\prn{\frac{C^2\lambda^2}{2n}}},
\end{equation*}
which completes the proof.
\end{proof}

\begin{proof}[Proof of Lemma~\ref{Lemma_concentration_operator_TV}]
When Assumption~\ref{Assumption_reward_bounded_density} holds,
\begin{equation*}
    \begin{aligned}
        \sup_{s\in\gS}\norm{\brk{\prn{\what \gT_n^\pi-\gT^\pi}\eta^\pi}(s)}_{\FTV}&=\sup_{s\in\gS}\frac{1}{2}\int_0^{\frac{1}{1-\gamma}}\abs{\hat p_{s}(x)-p_{s}(x)}dx.
    \end{aligned}
\end{equation*}
Here $p_s(\cdot)$, $\hat p_s(\cdot)$ denotes the density function of $\brk{\gT^\pi\eta^\pi}(s)$, $\brk{\what\gT^\pi_n\eta^\pi}(s)$, respectively.
Specifically, we have $\hat p_s(x)- p_s(x)$ is $\frac{C}{\sqrt{n}}$-sub-gaussian (see Lemma~\ref{Lemma_pdf_sub_gaussian_1}) and thus 
\begin{equation*}
    \EB\sup_{s\in\gS}\abs{\hat p_s(x)-p(x)}\leq 3C  \sqrt{\frac{\log|\gS|}{n}}, \forall x\in\brk{0,\frac{1}{1-\gamma}}.
\end{equation*}

Therefore, we may get
\begin{equation*}
    \begin{aligned}
        \EB  \sup_{s\in\gS}\norm{\brk{\prn{\what \gT_n^\pi-\gT^\pi}\eta^\pi}(s)}_{\FTV}&\leq\frac{1}{2}\int_0^{\frac{1}{1-\gamma}}\EB\sup_{s\in\gS}\abs{\hat p_{s}(x)-p_{s}(x)}dx\\
        &\leq \frac{3C}{2}\sqrt{\frac{\log|\gS|}{n(1-\gamma)^2}}.
    \end{aligned}
\end{equation*}

The high probability bound for $\sup_{s\in\gS}\norm{\brk{\prn{\what \gT_n^\pi-\gT^\pi}\eta^\pi}(s)}_{\FTV}$ can be derived via a combination of bound on expectations and McDiarmid's inequality (Lemma~\ref{Lemma_McDiarmid_inequality}).
Note that for any fixed $i\in\brc{1,\dots,n}$, substituting data vector $\prn{X_i^{(s,a)}}_{s\in\gS,a\in\gA}$ can change $\sup_{s\in\gS}\norm{\brk{\prn{\what \gT_n^\pi-\gT^\pi}\eta^\pi}(s)}_{\FTV}$ by at most $\frac{C}{2n(1-\gamma)}$.
Hence for any $\delta>0$, with probability at least $1-\delta$,
\begin{equation*}
    \begin{aligned}
        \sup_{s\in\gS}\norm{\brk{\prn{\what \gT_n^\pi-\gT^\pi}\eta^\pi}(s)}_{\FTV}&\leq \EB \sup_{s\in\gS}\norm{\brk{\prn{\what \gT_n^\pi-\gT^\pi}\eta^\pi}(s)}_{\FTV} + \frac{C}{2}\sqrt{\frac{\log(1/
        \delta)}{2n(1-\gamma)^2}}\\
        &\leq \frac{C\prn{\sqrt{9\log |\gS|}+\sqrt{\log(1/\delta)/2}}}{2\sqrt{n(1-\gamma)^2}}.
    \end{aligned}
\end{equation*}
\end{proof}

As before, the next step is to show $(\gI-\what\gT_n^\pi)$ is invertible on some space containing the signed measures of interest.
Specifically, let
\begin{equation*}
    M_{\FKS}^0:=\brc{\mu\text{ signed measure on }\prn{\brk{0,\frac{1}{1-\gamma}},\gB_0}\mid\norm{\mu}_{\FKS}<\infty, \norm{\mu}_{\loc}<\infty,\mu\prn{\brk{0,\frac{1}{1-\gamma}}}=0.}
\end{equation*}
Here $\norm{\mu}_{\loc}<\infty$ represents that $\mu$ has a density $f$ such that for any $B\in\gB_0$, $\mu(B)=\int_B f(x)dx$ and $\norm{\mu}_{\loc}\colon=\sup_{x\in[0,1/(1-\gamma)]} |f(x)|$.
When $\mu\in M^0_{\FKS}$, we can control $\norm{\mu}_{\FKS}$ with $\norm{\mu}_{\FW}$ and $\norm{\mu}_{\loc}$.
Formally, we have the following proposition as a generalization of Proposition~\ref{Proposition_bound_KS_with_W1}.

\begin{proposition}\label{Proposition_bound_KS_with_W1_signed}
Suppose $\mu\in M_{\FKS}^0$, then $\norm{\mu}_{\FKS}\leq \sqrt{2\norm{\mu}_{\loc}\norm{\mu}_{\FW}}$.
\end{proposition}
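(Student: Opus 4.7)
The plan is to adapt the standard proof of Proposition~\ref{Proposition_bound_KS_with_W1} to the signed-measure setting by working directly with the signed ``CDF'' $F(x)=\mu((-\infty,x])$. Because $\mu\in M_{\FKS}^0$ satisfies $\mu([0,\tfrac1{1-\gamma}])=0$ and is supported on $[0,\tfrac1{1-\gamma}]$, we have $F(x)=0$ for $x<0$ and for $x\geq \tfrac1{1-\gamma}$, so $F$ is a bounded, compactly supported function. The proposition will then follow from a geometric ``triangle under a Lipschitz curve'' estimate.

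First I would establish the two identities that recast the norms in terms of $F$. By definition $\norm{\mu}_{\FKS}=\sup_{x}\abs{F(x)}$. For $\norm{\mu}_{\FW}$, pick any $f\in\FW$; since $\mu$ has zero total mass we may subtract a constant to assume $f$ vanishes at an endpoint, and then integration by parts (using $F(0^-)=F(\tfrac1{1-\gamma})=0$ and the absolute continuity provided by the density) yields $\mu f=-\int_0^{1/(1-\gamma)} f'(x)F(x)\,dx$. Taking the supremum over $f$ with $\abs{f'}\leq 1$ and choosing $f'=\sgn(F)$ gives
\begin{equation*}
\norm{\mu}_{\FW}=\int_0^{1/(1-\gamma)}\abs{F(x)}\,dx.
\end{equation*}

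Next I would exploit the Lipschitz regularity of $F$. Writing $L=\norm{\mu}_{\loc}$ and letting $f$ denote the density, $F'=f$ a.e., so $F$ is $L$-Lipschitz. Put $K=\norm{\mu}_{\FKS}$ and choose $x_0\in[0,\tfrac1{1-\gamma}]$ attaining (or approximating arbitrarily well) $\abs{F(x_0)}=K$; by symmetry assume $F(x_0)=K>0$. Then on the interval $I=[x_0-K/L,x_0+K/L]$ we have the pointwise lower bound $F(x)\geq K-L\abs{x-x_0}\geq 0$. Integrating this tent function over $I\cap[0,\tfrac1{1-\gamma}]$ and noting that even in the worst case (when $x_0$ sits at the boundary $0$ or $\tfrac1{1-\gamma}$) only one half of the tent is cut off, we obtain
\begin{equation*}
\norm{\mu}_{\FW}=\int_0^{1/(1-\gamma)}\abs{F(x)}\,dx\;\geq\;\int_0^{K/L}(K-Lt)\,dt\;=\;\frac{K^{2}}{2L}.
\end{equation*}
Rearranging gives $K\leq\sqrt{2L\,\norm{\mu}_{\FW}}$, which is exactly the claimed inequality.

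The main subtlety (rather than any real obstacle) is the boundary case: if one tries to mimic the probability-measure proof of Ross verbatim one gets a tent sitting symmetrically around $x_0$ and a bound without the factor $2$; here the factor $2$ is genuinely necessary to accommodate an extremal $x_0$ near $\{0,\tfrac1{1-\gamma}\}$, and the argument above handles this uniformly by only integrating a half-tent. A small technical point I would verify at the start is that $F$ is well-defined and continuous (hence attains its sup on the compact support up to an arbitrarily small error) --- this is immediate from the existence of a bounded density built into the definition of $M_{\FKS}^0$.
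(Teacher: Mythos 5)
Your proof is correct, but it takes a genuinely different route from the paper's. The paper argues on the test-function side: it writes the Jordan decomposition $\mu=\mu_+-\mu_-$, majorizes $\ind_{(-\infty,x]}$ by the piecewise-linear, $\frac{1}{\epsilon}$-Lipschitz function $h^x_\epsilon$, obtains $\mu\ind_{(-\infty,x]}\leq \frac{1}{\epsilon}\norm{\mu}_{\FW}+\epsilon\norm{\mu}_{\loc}$ (the second term coming from integrating $\mu_-$ against $h^x_\epsilon-\ind_{(-\infty,x]}$, which lives on an interval of length $\epsilon$), and then optimizes over $\epsilon$. You instead work on the measure side with the signed CDF $F$: the identity $\norm{\mu}_{\FW}=\int_0^{1/(1-\gamma)}\abs{F(x)}\,dx$, the $\norm{\mu}_{\loc}$-Lipschitz property of $F$, and a tent-area estimate. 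Both are legitimate adaptations of the classical KS-versus-$W_1$ comparison; yours avoids the Jordan decomposition and the $\epsilon$-optimization and is more elementary and geometric, while the paper's smoothing argument does not need the representation of $\norm{\cdot}_{\FW}$ as the $L^1$ distance of CDFs and runs parallel to the kernel-smoothing proof it uses for the TV analogue (Proposition~\ref{Proposition_bound_TV_with_W1_signed}). One step you should spell out is the claim that at most half of the tent is cut off: as stated it is not self-evident, but it follows in one line from facts you already established, since $F$ vanishes off $\brk{0,\frac{1}{1-\gamma}}$ and is $L$-Lipschitz with $F(x_0)=K$, so $K\leq L\min\{x_0,\frac{1}{1-\gamma}-x_0\}$ and in fact the entire tent lies inside the interval. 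This observation even yields the sharper constant $\norm{\mu}_{\FKS}\leq\sqrt{\norm{\mu}_{\loc}\norm{\mu}_{\FW}}$, of which the stated inequality is a weakening, so there is no gap, only a missing sentence of justification.
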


\begin{proof}[Proof of Proposition~\ref{Proposition_bound_KS_with_W1_signed}]
 For any $x\in\brk{0,\frac{1}{1-\gamma}}$, we define $h^x_\varepsilon(\cdot)$ to be a smoothed version of $\ind_{(-\infty,x]}$.
 Specifically, 
 \begin{equation*}
     h_\varepsilon^x(t)=\begin{cases}
         1 & t\leq x\\
         1-\frac{t-x}{\varepsilon} & x<t\leq x+\varepsilon\\
         0 & t>x+\varepsilon
     \end{cases}
 \end{equation*}
We have $h_\varepsilon^x(t)$ is Lipschitz with Lipschitz coefficient $\frac{1}{\varepsilon}$ and $h_\varepsilon^x(t)\geq \ind_{(-\infty,x]}$.
 For any $\mu\in M^0_{\FKS}$, we have the Jordan decomposition $\mu=\mu_+-\mu_-$.
 Then we have
 \begin{equation*}
     \begin{aligned}
         \mu\ind_{(-\infty,x]}&=\mu_+\ind_{(-\infty,x]}- \mu_-\ind_{(-\infty,x]}\\
         &=\mu_+\ind_{(-\infty,x]}-\mu_-h_\varepsilon^x(\cdot)+\mu_-h_\varepsilon^x(\cdot)-\mu_-\ind_{(-\infty,x]}\\
         &\leq (\mu_+-\mu_-)h_\varepsilon^x(\cdot)+\mu_-(h_\varepsilon^x(\cdot)-\ind_{(\-\infty,x]})\\
         &\leq \frac{1}{\varepsilon}\mu(\varepsilon h_\varepsilon^x(\cdot))+\mu_-(h_\varepsilon^x(\cdot)-\ind_{(\-\infty,x]})\\
         &\leq \frac{\norm{\mu}_{\FW}}{\varepsilon}+\varepsilon\norm{\mu}_{\loc}.
     \end{aligned}
 \end{equation*}
 Setting $\varepsilon=\frac{\norm{\mu}_{\FW}}{\norm{\mu}_{\loc}}$ yields
 \begin{equation*}
      \mu\ind_{(-\infty,x]}\leq \sqrt{2\norm{\mu}_{\loc}\norm{\mu}_{\FW}}.
 \end{equation*}
 We may also prove $-\mu\ind_{(-\infty,x]}\leq \sqrt{2\norm{\mu}_{\loc}\norm{\mu}_{\FW}}$ by similar arguments.
\end{proof}

When Assumption~\ref{Assumption_reward_bounded_density} is true, we always have $(\eta^\pi-\hat\eta_n^\pi)\in\prn{M_{\FKS}^0}^\gS$ as $\sup_{s\in\gS}\norm{\eta^\pi(s)-\hat\eta_n^\pi(s)}_{\loc}\leq C$.
Also, if $\mu\in\prn{M^0_{\FKS}}^\gS$, then $\gT^\pi\mu\in\prn{M^0_{\FKS}}^\gS$ (Lemma~\ref{Lemma_bounded_density_close}).
Here $\gT^\pi$ can be replaced by any valid distributional Bellman operator, for example, $\what \gT^\pi_n$.

\begin{lemma}\label{Lemma_operator_invertible_KS}
For any valid distributional Bellman operator $\gT^\pi$, the operator $(\gI-\gT^\pi)$ is invertible on $\prn{\overline{M^0_{\FKS}}}^\gS$ and $\prn{\gI-\gT^\pi}^{-1}=\sum_{i=0}^\infty \prn{\gT^\pi}^i$.
\end{lemma}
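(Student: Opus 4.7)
The strategy parallels Lemma~\ref{Lemma_operator_invertible}, but the key difficulty is that $\gT^\pi$ is not a contraction with respect to $\norm{\cdot}_{\FKS}$, so I cannot apply the Neumann series argument directly. Instead, my plan is to route through the $W_1$-contraction combined with a uniform bound on the local density norm along the orbit, and then transfer the decay back to $\norm{\cdot}_{\FKS}$ via Proposition~\ref{Proposition_bound_KS_with_W1_signed}.

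The first step is to show that for any $\mu\in M^0_{\FKS}$, the iterates $(\gT^\pi)^i\mu$ remain in $M^0_{\FKS}$ with a uniformly bounded local norm. Using the Jordan decomposition, I write $\mu=a(\nu_+-\nu_-)$ with $\nu_\pm$ probability measures, so $\gT^\pi\mu=a(\gT^\pi\nu_+-\gT^\pi\nu_-)$ by linearity of $\gT^\pi$ (Proposition~\ref{Proposition_extention_of_Bellman_operator}). After a change of variables in the affine pushforward $b_{r,\gamma}$, the density of $\brk{\gT^\pi\nu_\pm}(s)$ at $x$ equals $\sum_{a,s'}\pi(a\mid s)P(s'\mid s,a)\int f_{s',\pm}(u)\,p_{s,a}^R(x-\gamma u)\,du$, which under Assumption~\ref{Assumption_reward_bounded_density} is bounded by $C\sup_{s'}\norm{\nu_\pm(s')}_{\FTV}$. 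Hence $\sup_s\norm{\brk{\gT^\pi\mu}(s)}_{\loc}\leq C\sup_s\norm{\mu(s)}_{\FTV}$, and since $\gT^\pi$ is non-expansive in supremum TV (extended to signed measures by the same Jordan trick used in Proposition~\ref{Proposition_value_iteration_TV}), iteration yields $\sup_{i\geq 1}\sup_{s\in\gS}\norm{\brk{(\gT^\pi)^i\mu}(s)}_{\loc}\leq L_\mu$ with $L_\mu$ depending only on $C$ and $\sup_s\norm{\mu(s)}_{\FTV}$; the latter is finite because $\mu$ has bounded support $\brk{0,\tfrac{1}{1-\gamma}}$ and bounded density. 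This is precisely the content of Lemma~\ref{Lemma_bounded_density_close}, and the argument applies identically to $\what\gT_n^\pi$ or any other valid distributional Bellman operator.

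Next, I combine this uniform local bound with the $\gamma$-contraction in $W_1$. Jordan decomposition again extends Proposition~\ref{Proposition_value_iteration_Wp} to signed measures, giving $\sup_s\norm{\brk{(\gT^\pi)^i\mu}(s)}_{\FW}\leq \gamma^i\sup_s\norm{\mu(s)}_{\FW}$. Plugging both estimates into Proposition~\ref{Proposition_bound_KS_with_W1_signed} yields
\begin{equation*}
\sup_{s\in\gS}\norm{\brk{(\gT^\pi)^i\mu}(s)}_{\FKS}\leq\sqrt{2L_\mu\gamma^i\sup_s\norm{\mu(s)}_{\FW}}=K_\mu(\sqrt{\gamma})^i.
\end{equation*}
Since $\sqrt{\gamma}<1$, the partial sums $\sum_{i=0}^N(\gT^\pi)^i\mu$ form a Cauchy sequence in the Banach space $(\ell^\infty(\FKS))^\gS$ equipped with the norm $\sup_{s\in\gS}\norm{\cdot}_{\FKS}$, and therefore converge to a limit in $(\overline{M^0_{\FKS}})^\gS$. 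The inversion identity then follows from the telescoping $(\gI-\gT^\pi)\sum_{i=0}^N(\gT^\pi)^i\mu=\mu-(\gT^\pi)^{N+1}\mu$ combined with the geometric decay just established; extension from $M^0_{\FKS}$ to its closure uses boundedness and continuity of $\gT^\pi$ on $\overline{M^0_{\FKS}}$, which themselves follow from the same $W_1$-plus-density argument applied to differences.

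The main obstacle is the uniform local-norm bound in the first step. A naive estimate of the form $\norm{\brk{\gT^\pi\mu}}_{\loc}\lesssim\norm{p_{s,a}^R}_\infty\cdot\norm{\mu}_{\loc}$ would accumulate a multiplicative factor at each iteration and destroy any hope of a convergent Neumann series in $\norm{\cdot}_{\FKS}$. The observation that breaks this potential geometric blow-up is that $(b_{r,\gamma})_\#$ followed by averaging over $r$ realizes a convolution against the reward density, so the output $L^\infty$ norm is controlled by the reward density's $L^\infty$ norm times the input $L^1$ norm, i.e., the total variation of $\mu$. Since total variation is non-expansive under $\gT^\pi$, the local norms of the iterates stay uniformly bounded, which is precisely what makes the rest of the argument go through.
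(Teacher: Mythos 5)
Your proposal is correct and takes essentially the same route as the paper's proof: a Neumann-series argument in which the $W_1$-contraction (Proposition~\ref{Proposition_value_iteration_Wp}) is transferred to $\norm{\cdot}_{\FKS}$ through the bound $\KS\leq\sqrt{2\,(\text{density bound})\cdot W_1}$, giving geometric decay at rate $\sqrt{\gamma}$, a Cauchy sequence, and a limit in $\prn{\overline{M^0_{\FKS}}}^\gS$. The only difference is cosmetic: the paper Jordan-decomposes and applies the probability-measure version (Proposition~\ref{Proposition_bound_KS_with_W1}) with a density bound $C_\nu$ on $\nu_\pm$, whereas you apply the signed-measure version (Proposition~\ref{Proposition_bound_KS_with_W1_signed}) with a uniform $\norm{\cdot}_{\loc}$ bound along the orbit obtained from Assumption~\ref{Assumption_reward_bounded_density} and TV non-expansiveness, which is, if anything, a slightly more careful treatment of the same step.
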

\begin{proof}[Proof of Lemma~\ref{Lemma_operator_invertible_KS}]
    It suffices to verify that the Neumann series $\sum_{i=0}^\infty\prn{\gT^\pi}^i$ converges in $\prn{\overline{M^0_{\FKS}}}^\gS$.
    First, we claim for any $\nu\in \prn{M^0_{\FKS}}^\gS$, $\brc{\sum_{i=1}^k\prn{\gT^\pi}^i\nu,k=1,2,\dots}$ is a Cauchy sequence.
    WLOG, for any $\nu\in \prn{M^0_{\FKS}}^\gS$, $s\in\gS$, we may write $\nu=a_\nu(s)\prn{\nu_+(s)-\nu_-(s)}$, where $a_\nu(s)$ is a positive constant and $\nu_+(s),\nu_-(s)\in\Delta\prn{[0,\frac{1}{1-\gamma}]}$.
    Note that $\norm{\nu(s)}_{\FKS}=a_\nu(s) \KS(\nu_+(s),\nu_-(s))$ for any $s\in\gS$.
    There also exists a positive constant $C_\nu$ such that suppose for any $s\in\gS$, both $\nu_+(s)$ and $\nu_-(s)$ have a density bounded by $C_\nu$.
    For $k_1<k_2$, we have for any $s\in\gS$
    \begin{equation*}
        \begin{aligned}
            \norm{\sum_{i=k_1}^{k_2}\brk{\prn{\gT^\pi}^i\nu}(s)}_{\FKS}&\leq\sum_{i=k_1}^{k_2}\norm{\brk{\prn{\gT^\pi}^i\nu}(s)}_{\FKS}\\
            &=\sup_{s\in\gS}a_\nu(s)\sum_{i=k_1}^{k_2}\KS\prn{\brk{\prn{\gT^\pi}^i\nu_+}(s),\brk{\prn{\gT^\pi}^i\nu_-}(s)}\\
            &\leq \sup_{s\in\gS}a_\nu(s)\sum_{i=k_1}^{k_2}\sqrt{2C_\nu W_1\prn{\prn{\gT^\pi}^i\nu_+(s),\prn{\gT^\pi}^i\nu_-(s)}}\\
            &\leq \sup_{s\in\gS}a_\nu(s)\sum_{i=k_1}^{k_2}\sqrt{2C_\nu\gamma^i\sup_{s\in\gS}W_1\prn{\nu_+(s),\nu_-(s)}}\\
            &\leq \frac{\gamma^{k_1/2}\sup_{s\in\gS}a_\nu(s)\sqrt{2C_\nu\sup_{s\in\gS}W_1\prn{\nu_+(s),\nu_-(s)}}}{1-\sqrt{\gamma}}.
        \end{aligned}
    \end{equation*}
    The second inequality is by Proposition~\ref{Proposition_bound_KS_with_W1} and
    the third inequality is due to Proposition~\ref{Proposition_value_iteration_Wp}.
    Then we define $\nu_*$ such that for any $f\in\FKS$, $\nu_*(s)f:=\lim_{k\to\infty}\sum_{i=0}^k\brk{\prn{\gT^\pi}^i\nu}(s)f$. 
    $\nu_*$ is well-defined because $\brc{\sum_{i=1}^k\prn{\gT^\pi}^i\nu(s) f,k=1,2,\dots}$ is a Cauchy sequence in $\RB$ by similar arguments as above.
    As 
    \begin{equation*}
        \begin{aligned}
            \norm{\nu_*(s)-\sum_{i=0}^{k}\brk{\prn{\gT^\pi}^i\nu}(s)}_{\FKS}
            &=\norm{\sum_{i=k}^{\infty}\brk{\prn{\gT^\pi}^i\nu}(s)}_{\FKS}\\
            &\leq \sum_{i=k}^\infty \norm{\brk{\prn{\gT^\pi}^i\nu}(s)}_{\FKS}\\
            &\leq \frac{\gamma^{k/2}\sup_{s\in\gS}a_\nu(s)\sqrt{2C_\nu\sup_{s\in\gS}W_1\prn{\nu_+(s),\nu_-(s)}}}{1-\sqrt{\gamma}},
        \end{aligned}
    \end{equation*}
    $\nu_*=
    \lim_{k\to\infty}\sum_{i=1}^k\prn{\gT^\pi}^i\nu$.
    Apparently $\nu_*\in \prn{\overline{M^0_{\FKS}}}^\gS$.
\end{proof}

Like the case of Lemma~\ref{Lemma_operator_invertible}, for $\mu\in\prn{M^0_{\FKS}}^\gS$, $\prn{\gI-\gT^\pi}^{-1}\mu$ does not necessarily lie in $\prn{M^0_{\FKS}}^\gS$.
Instead, we have $\prn{\gI-\gT^\pi}^{-1}\mu\in\prn{\overline{M^0_{\FKS}}}^\gS$ that is a closed subspace of $\prn{\ell^\infty(\FKS)}^\gS$ for any $\mu\in \prn{M^0_{\FKS}}^\gS$.

Since the inverse $\prn{\gI-\what\gT_n^\pi}^{-1}$ can be unbounded in $M^0_{\FKS}$, the analysis is more involved.
Here we detour the technical problem with an ``expansion trick".
For any $s\in\gS$, we have
\begin{equation*}
    \begin{aligned}
        &\norm{\hat\eta^\pi_n(s)-\eta^\pi(s)}_{\FKS}\\
        &=\norm{\brk{\prn{\gI-\what\gT_n^\pi}^{-1}\prn{\what\gT_n^\pi-\gT^\pi}\eta^\pi}(s)}_{\FKS}\\
        &=\norm{\brk{\sum_{i=0}^\infty \prn{\what\gT_n^\pi}^i\prn{\what\gT_n^\pi-\gT^\pi}\eta^\pi}(s)}_{\FKS}\\
        &\leq \norm{\brk{\prn{\what\gT_n^\pi-\gT^\pi}\eta^\pi}(s)}_{\FKS} +\sum_{i=1}^\infty\norm{\brk{ \prn{\what\gT_n^\pi}^i\prn{\what\gT_n^\pi-\gT^\pi}\eta^\pi}(s)}_{\FKS}\\
        &\leq \norm{\brk{\prn{\what\gT_n^\pi-\gT^\pi}\eta^\pi}(s)}_{\FKS} +\sum_{i=1}^\infty\sqrt{2\norm{\brk{ \prn{\what\gT_n^\pi}^i\prn{\what\gT_n^\pi-\gT^\pi}\eta^\pi}(s)}_{\FW}\norm{\brk{ \prn{\what\gT_n^\pi}^i\prn{\what\gT_n^\pi-\gT^\pi}\eta^\pi}(s)}_{\loc}}\\
        &\leq\norm{\brk{\prn{\what\gT_n^\pi-\gT^\pi}\eta^\pi}(s)}_{\FKS} +\sum_{i=1}^\infty\sqrt{2\gamma^i\sup_{s^\prime\in\gS}\norm{\brk{ \prn{\what\gT_n^\pi-\gT^\pi}\eta^\pi}(s^\prime)}_{\FW}\norm{\brk{ \prn{\what\gT_n^\pi}^i\prn{\what\gT_n^\pi-\gT^\pi}\eta^\pi}(s)}_{\loc}}\\
        &\leq \norm{\brk{\prn{\what\gT_n^\pi-\gT^\pi}\eta^\pi}(s)}_{\FTV} +\sqrt{2\sup_{s^\prime\in\gS}\norm{\brk{ \prn{\what\gT_n^\pi-\gT^\pi}\eta^\pi}(s^\prime)}_{\FW}}\sum_{i=1}^\infty\sqrt{\gamma^i\norm{\brk{ \prn{\what\gT_n^\pi}^i\prn{\what\gT_n^\pi-\gT^\pi}\eta^\pi}(s)}_{\loc}}.
    \end{aligned}
\end{equation*}
Here the second inequality is due to Proposition~\ref{Proposition_bound_KS_with_W1_signed}, the third inequality is by the contraction property of $\what\gT_n^\pi$, and the last inequality holds from the fact $\norm{\cdot}_{\FKS}\leq\norm{\cdot}_{\FTV}$.

Now we need an upper bound for $\norm{\brk{ \prn{\what\gT_n^\pi}^i\prn{\what\gT_n^\pi-\gT^\pi}\eta^\pi}(s)}_{\loc}$.
Assumption~\ref{Assumption_reward_bounded_density} implies $\norm{\brk{\prn{\what\gT_n^\pi}^i\prn{\what\gT_n^\pi-\gT^\pi}\eta^\pi}(s)}_{\loc}\leq C$, but we can do better here.
\begin{lemma}\label{Lemma_bound_loc_by_TV}
    Suppose Assumption~\ref{Assumption_reward_bounded_density} holds true.
    For any $i\geq1$, $s\in\gS$, 
    \begin{equation*}
        \norm{\brk{ \prn{\what\gT_n^\pi}^i\prn{\what\gT_n^\pi-\gT^\pi}\eta^\pi}(s)}_{\loc}\leq C\sup_{s^\prime\in\gS}\norm{\brk{\prn{\what\gT_n^\pi-\gT^\pi}\eta^\pi}(s^\prime)}_{\FTV}.
    \end{equation*}
\end{lemma}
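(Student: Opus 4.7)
The plan is to leverage two complementary facts: (i) a single application of $\what\gT_n^\pi$ convolves the input signed measure with the affinely transformed reward density $p_{s,a}^R$, which by Assumption~\ref{Assumption_reward_bounded_density} is bounded by $C$, and therefore the one-step output automatically has a pointwise bounded Lebesgue density; and (ii) $\what\gT_n^\pi$ is non-expansive in the supreme $\FTV$-norm on signed measures, so the remaining $i-1$ iterations cannot inflate the $\FTV$-mass feeding into the final one-step density bound.

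\textbf{Step 1 (one-step density formula and bound).} For any signed measure $\nu\in(M_{\FTV}^0)^\gS$, I would unfold the definition of $\what\gT_n^\pi$, write the push-forward $(b_{r,\gamma})_\#\nu(s')(B)=\nu(s')((B-r)/\gamma)$, and apply Fubini together with the substitution $x=r+\gamma y$ to obtain
\begin{equation*}
    [\what\gT_n^\pi\nu](s)(B)=\int_B \sum_{a\in\gA,s'\in\gS}\pi(a\mid s)\what P(s'\mid s,a)\int p_{s,a}^R(x-\gamma y)\,d\nu(s')(y)\,dx.
\end{equation*}
Thus $[\what\gT_n^\pi\nu](s)$ has Lebesgue density $h_s(x)=\sum_{a,s'}\pi(a\mid s)\what P(s'\mid s,a)\int p_{s,a}^R(x-\gamma y)\,d\nu(s')(y)$. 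Combining $p_{s,a}^R\leq C$ with $|\nu(s')|(\RB)=\norm{\nu(s')}_{\FTV}$ and the fact that $\pi(a\mid s)\what P(s'\mid s,a)$ is a probability weight yields
\begin{equation*}
    \norm{[\what\gT_n^\pi\nu](s)}_{\loc}\leq C\sup_{s'\in\gS}\norm{\nu(s')}_{\FTV}.
\end{equation*}

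\textbf{Step 2 (TV non-expansiveness on signed measures).} I would extend Proposition~\ref{Proposition_value_iteration_TV} from $\Delta(\RB)^\gS$ to $(M_{\FTV}^0)^\gS$. Testing against any $f$ with $\norm{f}_\infty\leq1$, using that the bijection $b_{r,\gamma}$ preserves $\FTV$-norm, and pulling the convex weights $\pi(a\mid s)\what P(s'\mid s,a)$ outside the supremum gives
\begin{equation*}
    \sup_{s\in\gS}\norm{[\what\gT_n^\pi\nu](s)}_{\FTV}\leq\sup_{s\in\gS}\norm{\nu(s)}_{\FTV}.
\end{equation*}

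\textbf{Step 3 (combine).} Writing $\prn{\what\gT_n^\pi}^i=\what\gT_n^\pi\circ\prn{\what\gT_n^\pi}^{i-1}$, apply the Step~1 density bound to the outermost $\what\gT_n^\pi$ with input $\nu=\prn{\what\gT_n^\pi}^{i-1}\mu$, then iterate the Step~2 non-expansiveness $i-1$ times:
\begin{equation*}
    \norm{[\prn{\what\gT_n^\pi}^i\mu](s)}_{\loc}\leq C\sup_{s'\in\gS}\norm{\prn{\what\gT_n^\pi}^{i-1}\mu(s')}_{\FTV}\leq C\sup_{s'\in\gS}\norm{\mu(s')}_{\FTV}.
\end{equation*}
Setting $\mu=\prn{\what\gT_n^\pi-\gT^\pi}\eta^\pi$ finishes the proof. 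The main delicate point is justifying Step~1 for signed measures that need not be absolutely continuous: the convolution with the bounded $L^1$ kernel $p_{s,a}^R$ produces an absolutely continuous output regardless of the structure of $\nu$, and Fubini applies because $p_{s,a}^R(x-\gamma y)$ is bounded and $|\nu(s')|$ is a finite measure on $\brk{0,\frac{1}{1-\gamma}}$; everything else is a direct manipulation.
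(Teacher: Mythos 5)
Your proof is correct, and it reaches the paper's bound through the same underlying mechanism — the last application of $\what\gT_n^\pi$ smooths by convolution with the reward density $p^R_{s,a}\leq C$, while the preceding $i-1$ applications cannot increase the relevant mass — but with different bookkeeping. The paper first normalizes $\prn{\what\gT_n^\pi-\gT^\pi}\eta^\pi$ via a Jordan decomposition $a(s)(\mu_+(s)-\mu_-(s))$ into probability measures, then reuses its Lemma~\ref{Lemma_bounded_density_close} (one application of any distributional Bellman operator to probability measures yields densities bounded by $C$) and pulls out $\sup_s a(s)$; the mass control under $\prn{\what\gT_n^\pi}^{i-1}$ is implicit in the fact that probability measures are mapped to probability measures. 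You instead work directly on signed measures: your Step~1 density formula via Fubini is slightly more general (it does not require the input to be absolutely continuous, which the paper's quoted density formula nominally assumes), and your Step~2 extends TV non-expansiveness to signed measures by linearity, which is routine. The only substantive difference is a constant: the paper bounds the difference of two nonnegative densities each in $[0,C]$ by $C$, yielding $C\,a(s)$, whereas your bound $\abs{h_s(x)}\leq C\,\abs{\nu(s')}(\RB)$ gives $C$ times the full variation norm, i.e.\ $2C\,a(s)$ for a balanced signed measure. Whether either matches the stated $C\sup_{s'}\norm{\cdot}_{\FTV}$ exactly depends on whether $\norm{\cdot}_{\FTV}$ is read as the TV distance of the normalized Jordan parts (the paper's convention, making its bound tight and yours off by a factor $2$) or as the IPM over $\norm{f}_\infty\leq1$, i.e.\ the total variation norm (under which your bound is exactly the lemma); the paper itself is not consistent about this factor, so this is a cosmetic rather than a substantive discrepancy.
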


The main idea is we first normalize $\prn{\what\gT_n^\pi-\gT^\pi}\eta^\pi$ to have a proper Jordan decomposition and then use the fact that under Assumption~\ref{Assumption_reward_bounded_density}, for any $\nu\in\Delta\prn{\brk{0,\frac{1}{1-\gamma}}}^\gS$, $\brk{\gT^\pi\nu}(s)$ must have a density function bounded by $C$ as long as $\nu(s)$ has a density, $\forall s\in\gS$.
Hence the condition $i\geq 1$ is necessary and that is the reason why we break the summation into two parts: $i=0$ and $i\geq 1$.

\begin{proof}[Proof of Lemma~\ref{Lemma_bound_loc_by_TV}]
    We write the Jordan decomposition of $\prn{\what\gT_n^\pi-\gT^\pi}\eta^\pi$ as 
    \begin{equation*}
        \brk{\prn{\what\gT_n^\pi-\gT^\pi}\eta^\pi}(s)=a(s)(\mu_+(s)-\mu_-(s))
    \end{equation*}
    Here for any $s\in\gS$, $\mu_+(s),\mu_-(s)\in\Delta\prn{\brk{0,\frac{1}{1-\gamma}}}$ have probability density functions $p_+(s),p_-(s)$. 
    And it is easy to verify $a(s)=\norm{ \brk{\prn{\what\gT_n^\pi-\gT^\pi}\eta^\pi}(s)}_{\FTV}$.
    Recall that for any $\nu\in\Delta\prn{\brk{0,\frac{1}{1-\gamma}}}^\gS$ with $p_s^\nu(\cdot)$ as the probability density function of $\nu(s)$, we have for any valid distributional Bellman operator $\gT^\pi$, $\brk{\gT^\pi\nu}(s)$ also has probability density $\tilde p_s(\cdot)$ and
    \begin{equation*}
        \tilde p_s(x)=\sum_{a\in\gA}\pi(a\mid s)\sum_{s^\prime\in\gS}P(s^\prime\mid s,a) \prn{p_s^\nu(\cdot/\gamma)/\gamma\ast p^R_{s,a}(\cdot)}(x).
    \end{equation*}
    As for any $x\in\brk{0,\frac{1}{1-\gamma}}$, $\abs{\prn{p_s^\nu(\cdot/\gamma)/\gamma\ast p^R_{s,a}(\cdot)}(x)}\leq \sup_{x\in[0,1/(1-\gamma)]}\abs{p_{s,a}^R(x)}$, we can get $\norm{\brk{\gT^\pi\nu}(s)}_{\loc}\leq C$ under Assumption~\ref{Assumption_reward_bounded_density}.
    Therefore, for $i\geq 1$, we have
    \begin{equation*}
        \begin{aligned}
            \norm{\brk{\prn{\what \gT_n^\pi}^i\prn{\what\gT_n^\pi-\gT^\pi}\eta^\pi}(s)}_{\loc}&\leq\sup_{s\in\gS}a(s)\norm{\brk{\prn{\what \gT_n^\pi}^i(\mu_+-\mu_-)}(s)}_{\loc}\\
            &\leq C\sup_{s\in\gS}a(s)
        \end{aligned}
    \end{equation*}
We complete the proof.
    
\end{proof}

To sum up,
\begin{equation*}
\begin{aligned}
     &\norm{\hat\eta^\pi_n(s)-\eta^\pi(s)}_{\FKS}\\
     &\leq\frac{\sqrt{\gamma}}{1-\sqrt{\gamma}}\sqrt{2C\sup_{s^\prime\in\gS}\norm{\brk{\prn{\what\gT_n^\pi-\gT^\pi}\eta^\pi}(s^\prime)}_{\FTV}\sup_{s^\prime\in\gS}\norm{\brk{\prn{\what\gT_n^\pi-\gT^\pi}\eta^\pi}(s^\prime)}_{\FW}}+\norm{\brk{\prn{\what\gT_n^\pi-\gT^\pi}\eta^\pi}(s)}_{\FTV}.
\end{aligned}
\end{equation*}
Combining Lemma~\ref{Lemma_concentration_operator_W1} and Lemma~\ref{Lemma_concentration_operator_TV}, we have for any $\delta\in(0,1)$, with probability at least $1-\delta$,
\begin{equation*}
\begin{aligned}
    \norm{\hat\eta^\pi_n(s)-\eta^\pi(s)}_{\FKS}&\leq \frac{C}{2}\prn{\frac{\sqrt{2\gamma}}{1-\sqrt{\gamma}}+1}\frac{\sqrt{9\log|\gS|}+\sqrt{\log(2/\delta)/2}}{\sqrt{n(1-\gamma)^2}}
\end{aligned}
\end{equation*}
Note that $\frac{1}{1-\sqrt{\gamma}}=\frac{1+\gamma}{1-\gamma}\leq \frac{2}{1-\gamma}$ when $\gamma\in(0,1)$, thus
\begin{equation*}
    \sup_{s\in\gS}\norm{\hat\eta^\pi_n(s)-\eta^\pi(s)}_{\FKS}\leq \frac{C^\prime\prn{\sqrt{\log|\gS|}+\sqrt{\log(1/\delta)}}}{\sqrt{n(1-\gamma)^4}},
\end{equation*}
where $C^\prime$ is some constant depending on $C$ in Assumption~\ref{Assumption_reward_bounded_density}.
We also have
\begin{equation*}
    \begin{aligned}
        \EB\sup_{s\in\gS}\norm{\hat\eta^\pi_n(s)-\eta^\pi(s)}_{\FKS}&\leq C^\prime\sqrt{\frac{\log|\gS|}{n(1-\gamma)^4}}+\int_{0}^\infty \PB\prn{\sup_{s\in\gS}\norm{\hat\eta^\pi_n(s)-\eta^\pi(s)}_{\FKS}>C^\prime\sqrt{\frac{\log|\gS|}{n(1-\gamma)^4}}+t}dt\\
        &\leq C^\prime\sqrt{\frac{\log|\gS|}{n(1-\gamma)^4}}+\int_{0}^\infty \exp{\prn{-\frac{n(1-\gamma)^4t^2}{{C^\prime}^2}}}dt\\
        &\leq C^{\prime\prime}\sqrt{\frac{\log|\gS|}{n(1-\gamma)^4}},
    \end{aligned}
\end{equation*}
where $C^{\prime\prime}$ is some constant depending on $C$ in Assumption~\ref{Assumption_reward_bounded_density}.

\section{Analysis of Theorem~\ref{Theorem_bound_of_TV}}\label{Appendix_analysis_TV}
By Sobolev's inequality, we also have the following concentration results of $\prn{\what\gT_n^\pi-\gT^\pi}\eta^\pi$ as a corollary of Lemma~\ref{Lemma_concentration_operator_TV}.
\begin{corollary}\label{Corollary_concentration_operator_TV_Sobolev}
    Suppose Assumption~\ref{Assumption_reward_smooth} is true.
    For any fixed policy $\pi$, we have
    \begin{equation*}
        \EB\sup_{s\in\gS}\norm{\brk{\prn{\what \gT_n^\pi-\gT^\pi}\eta^\pi}(s)}_{\FTV}\leq 3M\sqrt{\frac{\log\abs{\gS}}{n(1-\gamma)^2}}.
    \end{equation*}
    And for any $\delta\in(0,1)$,
    \begin{equation*}
        \sup_{s\in\gS}\norm{\brk{\prn{\what \gT_n^\pi-\gT^\pi}\eta^\pi}(s)}_{\FTV}\leq \frac{M\prn{\sqrt{9\log\abs{\gS}}+\sqrt{\log(1/\delta)/2}}}{\sqrt{n(1-\gamma)^2}}
    \end{equation*}
    with probability greater than $1-\delta$.
\end{corollary}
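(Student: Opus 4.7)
The plan is to observe that Corollary~\ref{Corollary_concentration_operator_TV_Sobolev} is essentially a direct specialization of Lemma~\ref{Lemma_concentration_operator_TV} via the Sobolev embedding already recorded in Proposition~\ref{Proposition_smooth_imply_bounded}. Since Assumption~\ref{Assumption_reward_smooth} asserts $p^R_{s,a}\in H_1^1(\RB)$ with $\|p^R_{s,a}\|_{H_1^1}\le M$, applying Sobolev's inequality (as done in Proposition~\ref{Proposition_smooth_imply_bounded}) yields $\sup_{x}p^R_{s,a}(x)\le 2M$, so Assumption~\ref{Assumption_reward_bounded_density} is satisfied with the explicit constant $C=2M$.

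With this observation in hand, the proof is a one-line substitution. First I would invoke Proposition~\ref{Proposition_smooth_imply_bounded} to record that Assumption~\ref{Assumption_reward_smooth} implies Assumption~\ref{Assumption_reward_bounded_density} holds with $C=2M$. Then I would apply Lemma~\ref{Lemma_concentration_operator_TV} with this choice of $C$. The expectation bound in Lemma~\ref{Lemma_concentration_operator_TV} scales as $\tfrac{3C}{2}\sqrt{\log|\gS|/(n(1-\gamma)^2)}$, which becomes $3M\sqrt{\log|\gS|/(n(1-\gamma)^2)}$ upon substituting $C=2M$; similarly the high-probability bound scales as $\tfrac{C}{2}\cdot(\sqrt{9\log|\gS|}+\sqrt{\log(1/\delta)/2})/\sqrt{n(1-\gamma)^2}$, which becomes the stated $M\cdot(\sqrt{9\log|\gS|}+\sqrt{\log(1/\delta)/2})/\sqrt{n(1-\gamma)^2}$. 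No additional probabilistic argument is required, since the only role of Assumption~\ref{Assumption_reward_smooth} in the concentration analysis is to supply a uniform upper bound on the reward density, which is exactly what the preceding lemma is built on.

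Because of this reduction, there is effectively no obstacle; the only care needed is to verify that the constants propagate as claimed. I would therefore keep the proof to a few lines: (i) quote Proposition~\ref{Proposition_smooth_imply_bounded} to upgrade Assumption~\ref{Assumption_reward_smooth} to Assumption~\ref{Assumption_reward_bounded_density} with $C=2M$, and (ii) quote Lemma~\ref{Lemma_concentration_operator_TV} with this constant to conclude both the expectation and the high-probability statements. No new concentration inequality, density computation, or contraction argument is needed beyond those already established.
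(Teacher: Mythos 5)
Your proposal is correct and follows exactly the paper's route: Proposition~\ref{Proposition_smooth_imply_bounded} upgrades Assumption~\ref{Assumption_reward_smooth} to Assumption~\ref{Assumption_reward_bounded_density} with $C=2M$, and substituting this constant into Lemma~\ref{Lemma_concentration_operator_TV} yields precisely the stated expectation and high-probability bounds. Nothing further is needed.
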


We consider the following space containing the signed measures of interest.
Specifically, let
\begin{equation*}
    M_{\FTV}^0:=\brc{\mu\text{ signed measure on }\prn{\brk{0,\frac{1}{1-\gamma}},\gB_0}\mid\norm{\mu}_{\FTV}<\infty, \norm{\mu}_{H_1^1}<\infty,\mu\prn{\brk{0,\frac{1}{1-\gamma}}}=0.}
\end{equation*}
Here we abuse the notation to define $\norm{\mu}_{H_1^1}=\norm{p}_{H_1^1}$, where $f$ is the density of $\mu$. 
When $\mu\in M^0_{\FTV}$, we can control $\norm{\mu}_{\FTV}$ with $\norm{\mu}_{\FW}$ and $\norm{\mu}_{H_1^1}$.
Formally, we have the following proposition as a generalization of Proposition~\ref{Proposition_bound_TV_with_W1}.
\begin{proposition}\label{Proposition_bound_TV_with_W1_signed}
    Suppose $\mu\in M_{\FTV}^0$, then $\norm{\mu}_{\FTV}\leq\sqrt{K\norm{\mu}_{H_1^1}\norm{\mu}_{\FW}}$.
\end{proposition}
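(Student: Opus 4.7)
The plan is to reduce the signed-measure statement to the probability-measure statement already established in Proposition~\ref{Proposition_bound_TV_with_W1} via a Jordan decomposition combined with a rescaling. Given $\mu \in M_{\FTV}^0$ with Lebesgue density $p \in H_1^1$, first I would perform the pointwise decomposition $p = p_+ - p_-$ with $p_+ = \max(p,0)$ and $p_- = \max(-p,0)$, which induces a Jordan decomposition $\mu = \mu_+ - \mu_-$ where $\mu_\pm$ have disjoint supports and respective densities $p_\pm$. Since $\mu\prn{\brk{0,1/(1-\gamma)}} = 0$, the two parts have equal total mass $a := \mu_+\prn{\brk{0,1/(1-\gamma)}} = \mu_-\prn{\brk{0,1/(1-\gamma)}}$; the case $a = 0$ is trivial, so assume $a > 0$ and define probability measures $\nu_\pm := \mu_\pm / a$ with densities $p_\pm/a$.

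Next, I would apply Proposition~\ref{Proposition_bound_TV_with_W1} to the pair $(\nu_+, \nu_-)$ to obtain
\begin{equation*}
\TV(\nu_+, \nu_-) \leq \sqrt{K\prn{\norm{p_+/a}_{H_1^1} + \norm{p_-/a}_{H_1^1}} W_1(\nu_+, \nu_-)}.
\end{equation*}
Because $\mu = a(\nu_+ - \nu_-)$ and both $\norm{\cdot}_{\FTV}$ and $\norm{\cdot}_{\FW}$ are homogeneous integral probability metrics, I have $\norm{\mu}_{\FTV} = a\,\TV(\nu_+,\nu_-)$ and $\norm{\mu}_{\FW} = a\,W_1(\nu_+,\nu_-)$. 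Multiplying the displayed inequality by $a$ and pulling the factors of $a$ inside the square root yields
\begin{equation*}
\norm{\mu}_{\FTV} \leq \sqrt{K\prn{\norm{p_+}_{H_1^1} + \norm{p_-}_{H_1^1}} \norm{\mu}_{\FW}}.
\end{equation*}

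The remaining step, which I expect to be the main technical point, is to verify the identity $\norm{p_+}_{H_1^1} + \norm{p_-}_{H_1^1} = \norm{p}_{H_1^1} = \norm{\mu}_{H_1^1}$. The $L^1$ contribution is immediate since $p_+$ and $p_-$ have disjoint supports and $|p| = p_+ + p_-$. For the weak-derivative contribution I would invoke the standard Sobolev chain rule for the positive part: for $p \in W^{1,1}$ one has $D^1 p_+ = \ind\{p > 0\}\, D^1 p$ and $D^1 p_- = -\ind\{p < 0\}\, D^1 p$ almost everywhere, together with the fact that $D^1 p = 0$ almost everywhere on $\{p = 0\}$. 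Summing these $L^1$ norms recovers $\norm{D^1 p}_1$ exactly. Combining both parts gives the desired identity and completes the proof.

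The main obstacle is the Sobolev chain-rule step, which requires a careful appeal to the distributional calculus for $W^{1,1}$ functions; a secondary subtlety is that Proposition~\ref{Proposition_bound_TV_with_W1} is formulated for densities on $\RB$, so before applying it to the densities $p_\pm/a$ one must either extend them by zero to all of $\RB$ (verifying that the extensions still lie in $H_1^1(\RB)$, which holds provided the original $p$ is in $H_1^1(\RB)$ when extended by zero, an assumption implicit in the definition of $M_{\FTV}^0$) or appeal to a version of the cited inequality adapted to the compact interval $\brk{0,1/(1-\gamma)}$.
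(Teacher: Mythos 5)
Your proof is correct and follows essentially the same route as the paper's: write $\mu$ as a normalization constant times the difference of two probability measures with $H_1^1$ densities, invoke the Chae--Walker $\TV$--$W_1$ interpolation, and use homogeneity of $\norm{\cdot}_{\FTV}$ and $\norm{\cdot}_{\FW}$ to rescale. The only differences are that you apply Proposition~\ref{Proposition_bound_TV_with_W1} as a black box and explicitly justify the additivity $\norm{p_+}_{H_1^1}+\norm{p_-}_{H_1^1}=\norm{p}_{H_1^1}$ for the Jordan decomposition via the Sobolev chain rule (a step the paper's final equality leaves implicit), whereas the paper unrolls the underlying kernel-smoothing estimate (Lemma 2.1 of the cited reference) and optimizes the bandwidth $h$ directly.
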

\begin{proof}[Proof of Proposition~\ref{Proposition_bound_TV_with_W1_signed}]
For any $\mu\in \prn{M_{\FTV}^0}^\gS$, we may write $\mu(s)=a_\mu(s)(\mu_+(s)-\mu_-(s))$, where $\mu_+(s),\mu_-(s)\in\Delta\prn{\brk{0,\frac{1}{1-\gamma}}}$ with density $p_s^+(\cdot), p_s^-(\cdot)\in H_1^1(\RB)$, and $a_\mu(s)$ is a normalization factor.
Let $\{\phi_m\}$ be the orthogonal system in $L^2([-1,1])$ of Legendre polynomials, and kernel 
$K_h(x)=h^{-1}\prn{\phi_0(0)\phi_0(x/h)+\phi_1(0)\phi_1(x/h)}$.
From Lemma 2.1 in \cite{chae2020wasserstein},
\begin{equation*}
    \begin{aligned}
        \norm{\mu(s)}_{\FTV}&=\frac{1}{2}a_\mu(s)\norm{p^+_s-p^-_s}_1\\
        &=\frac{1}{2}a_\mu(s)\prn{\norm{p^+_s-K_h\ast p^+_s}_1+\norm{K_h\ast p^-_s-K_h\ast p^+_s}_1+\norm{p^-_s-K_h\ast p^-_s}_1}\\
        &\leq K^\prime a_\mu(s)(h\norm{p_s^+}_{H_1^1}+h\norm{p_s^-}_{H_1^1}+W_1(p_s^+,p^-_s)/h)\\
        &=K^\prime h\brk{a_\mu(s)\prn{\norm{p_s^+}_{H_1^1}+\norm{p_s^-}_{H_1^1}}}+\frac{K^\prime}{h}a_\mu(s)W_1(p_s^+,p^-_s)\\
        &=K^\prime h\norm{\mu(s)}_{H_1^1}+\frac{K^\prime}{h}\norm{\mu(s)}_{\FW}.
    \end{aligned}
\end{equation*}
The conclusion follows by choosing $h=\sqrt{\norm{\mu(s)}_{\FW}/\prn{2K^\prime\norm{\mu(s)}_{H_1^1}}}$.
\end{proof}

When Assumption~\ref{Assumption_reward_smooth} is true, we always have $(\eta^\pi-\hat\eta_n^\pi)\in\prn{M_{\FTV}^0}^\gS$ (Lemma~\ref{Lemma_smooth_density_of_return}).
Also, if $\mu\in\prn{M^0_{\FTV}}^\gS$, then $\gT^\pi\mu\in\prn{M^0_{\FTV}}^\gS$ (\ref{Lemma_smooth_density_close}).
Here $\gT^\pi$ can be replaced by any valid distributional Bellman operator, for example, $\what\gT_n^\pi$.

\begin{lemma}\label{Lemma_operator_invertible_TV}
For any valid distributional Bellman operator $\gT^\pi$, the operator $(\gI-\gT^\pi)$ is invertible on $\prn{\overline{M^0_{\FTV}}}^\gS$ and $\prn{\gI-\gT^\pi}^{-1}=\sum_{i=0}^\infty \prn{\gT^\pi}^i$.
\end{lemma}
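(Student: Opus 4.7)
\textbf{Proof proposal for Lemma~\ref{Lemma_operator_invertible_TV}.}
The plan is to show that the Neumann series $\sum_{i=0}^\infty (\gT^\pi)^i$ converges in the operator norm induced by $\sup_{s\in\gS}\norm{\cdot}_{\FTV}$ on $\prn{M^0_{\FTV}}^\gS$ and then extend by continuity to the closure. This mirrors the proofs of Lemma~\ref{Lemma_operator_invertible} and Lemma~\ref{Lemma_operator_invertible_KS}, but the obstacle is that $\gT^\pi$ is not contractive in the $\FTV$ norm, so we must instead borrow the contraction from $\FW$ and combine it with a uniform $H_1^1$ regularity bound via Proposition~\ref{Proposition_bound_TV_with_W1_signed}.

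First, I would fix $\mu\in\prn{M^0_{\FTV}}^\gS$. By Lemma~\ref{Lemma_smooth_density_close}, each iterate $(\gT^\pi)^i\mu$ lives in $\prn{M^0_{\FTV}}^\gS$ and so has a density. To control the $W_1$ piece, write the Jordan decomposition $\mu(s)=c_s(\mu_s^+-\mu_s^-)$ with $c_s\geq 0$ and $\mu_s^\pm$ probability measures, so that $\norm{\mu(s)}_{\FW}=c_s W_1(\mu_s^+,\mu_s^-)$; by linearity of $\gT^\pi$ and the contraction in supreme $W_1$ on probability vectors (Proposition~\ref{Proposition_value_iteration_Wp}) we obtain $\sup_s\norm{[(\gT^\pi)^i\mu](s)}_{\FW}\leq \gamma^i \sup_s\norm{\mu(s)}_{\FW}$, and the elementary inequality $\norm{\nu}_{\FW}\leq \frac{1}{1-\gamma}\norm{\nu}_{\FTV}$ for $\nu\in M^0_{\FTV}$ (from shifting the 1-Lipschitz test function by a constant using the zero-mass property) gives
\begin{equation*}
\sup_s\norm{[(\gT^\pi)^i\mu](s)}_{\FW}\leq \frac{\gamma^i}{1-\gamma}\sup_s\norm{\mu(s)}_{\FTV}.
\end{equation*}

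The main technical step is to bound $\sup_s\norm{[(\gT^\pi)^i\mu](s)}_{H_1^1}$ uniformly in $i\geq 1$. The density of one application of $\gT^\pi$ at state $s$ is the convex combination over $(a,s')$ of the convolution $p_{s,a}^R\ast q_{s'}^{\gamma}$, where $q_{s'}^{\gamma}(y)=\frac{1}{\gamma}p_{\mu(s')}(y/\gamma)$. By Young's inequality the $L^1$ norm of this convolution is bounded by $\norm{p_{s,a}^R}_1\cdot\norm{p_{\mu(s')}}_1=\norm{\mu(s')}_{\FTV}$, and by an integration-by-parts moving the derivative from the rescaled signed measure to the reward density, $\norm{\partial(p_{s,a}^R\ast q_{s'}^{\gamma})}_1\leq \norm{(p_{s,a}^R)'}_1\cdot \norm{\mu(s')}_{\FTV}\leq M\sup_{s'}\norm{\mu(s')}_{\FTV}$. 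Therefore for every $i\geq 1$,
\begin{equation*}
\sup_s\norm{[(\gT^\pi)^i\mu](s)}_{H_1^1}\leq (1+M)\sup_{s'}\norm{[(\gT^\pi)^{i-1}\mu](s')}_{\FTV}\leq (1+M)\sup_{s'}\norm{\mu(s')}_{\FTV},
\end{equation*}
where the last inequality uses that the total variation (equivalently, the $L^1$ norm of the density) is non-expansive under the convex-combination structure of $\gT^\pi$.

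Combining the two bounds via Proposition~\ref{Proposition_bound_TV_with_W1_signed} yields, for $i\geq 1$,
\begin{equation*}
\sup_s\norm{[(\gT^\pi)^i\mu](s)}_{\FTV}\leq \sqrt{\frac{K(1+M)}{1-\gamma}}\,(\sqrt{\gamma})^{i}\,\sup_s\norm{\mu(s)}_{\FTV}.
\end{equation*}
Since $\sqrt{\gamma}<1$, summing over $i$ shows that $\sum_{i=0}^\infty(\gT^\pi)^i$ converges absolutely in the operator norm induced by $\sup_s\norm{\cdot}_{\FTV}$ on $\prn{M^0_{\FTV}}^\gS$. The limiting operator is therefore bounded and extends uniquely by continuity to the closure $\prn{\overline{M^0_{\FTV}}}^\gS$. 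A standard telescoping argument $(\gI-\gT^\pi)\sum_{i=0}^N(\gT^\pi)^i=\gI-(\gT^\pi)^{N+1}$, combined with the geometric decay above, identifies the sum as the two-sided inverse $(\gI-\gT^\pi)^{-1}$. The main obstacle I foresee is the $H_1^1$ control in the second step: the rescaling $x\mapsto r+\gamma x$ magnifies the derivative of a density by a factor $1/\gamma^2$, and only the integration-by-parts trick against the reward density $p_{s,a}^R\in H_1^1$ saves us; this is exactly where Assumption~\ref{Assumption_reward_smooth} must be used.
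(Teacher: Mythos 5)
Your argument is sound and rests on the same engine as the paper's proof of Lemma~\ref{Lemma_operator_invertible_TV} (Neumann series, $W_1$-contraction of $\gT^\pi$, and the TV--$W_1$ interpolation through an $H_1^1$ bound), but it executes the regularity step differently and ends up proving something quantitatively stronger. The paper fixes $\nu\in\prn{M^0_{\FTV}}^\gS$, Jordan-decomposes it into probability components $\nu_\pm$, and applies the probability-measure bound (Proposition~\ref{Proposition_bound_TV_with_W1}) to the iterates $\prn{\gT^\pi}^i\nu_\pm$ with a $\nu$-dependent smoothness constant $M_\nu$; this yields only per-element (strong) convergence of the series, with constants depending on $\nu$, which is all the lemma needs. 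You instead bound $\norm{\brk{\prn{\gT^\pi}^i\mu}(s)}_{H_1^1}\leq(1+M)\sup_{s'}\norm{\mu(s')}_{\FTV}$ for $i\geq1$ by moving the weak derivative onto the reward density (this is essentially the paper's Lemma~\ref{Lemma_bound_Sobolev_by_TV}), combine it with the $W_1$-contraction and the elementary bound $\norm{\cdot}_{\FW}\leq\frac{1}{1-\gamma}\norm{\cdot}_{\FTV}$ for zero-mass measures via Proposition~\ref{Proposition_bound_TV_with_W1_signed}, and deduce uniform geometric decay of order $\gamma^{i/2}$ in the operator norm on $\prn{M^0_{\FTV}}^\gS$; this gives operator-norm convergence and hence \emph{boundedness} of $\prn{\gI-\gT^\pi}^{-1}$ in the TV norm, which the paper never claims (it deliberately treats this inverse as possibly unbounded and relies on the ``expansion trick'' instead), at the cost of an extra $(1-\gamma)^{-1/2}$ in the constant, which is exactly what the paper's finer expansion avoids paying when deriving the statistical rates in Theorem~\ref{Theorem_bound_of_TV}. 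Two caveats to note: (i) your $H_1^1$ step genuinely requires Assumption~\ref{Assumption_reward_smooth}, whereas the lemma is stated for ``any valid distributional Bellman operator'' and the paper's proof hides the smoothness in $M_\nu$ (in every place the lemma is used the assumption is in force, and the paper itself needs Lemma~\ref{Lemma_smooth_density_close} to keep $M_\nu$ valid along the iterates, so this is consistent, but you should state it); (ii) as in the paper, extending the limit operator to $\prn{\overline{M^0_{\FTV}}}^\gS$ and identifying it as a two-sided inverse there requires the remark that $\gT^\pi$ is TV-nonexpansive and hence extends continuously to the closure so the telescoping identity passes to the limit, and when pulling the $W_1$-contraction through the Jordan decomposition you should use a common normalizing constant across states (e.g.\ $\sup_s c_s$), the same gloss the paper makes.
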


\begin{proof}[Lemma~\ref{Lemma_operator_invertible_TV}]
     It suffices to verify that the Neumann series $\sum_{i=0}^\infty\prn{\gT^\pi}^i$ converges in $\prn{\overline{M^0_{\FTV}}}^\gS$.
    First, we claim for any $\nu\in \prn{M^0_{\FTV}}^\gS$, $\brc{\sum_{i=1}^k\prn{\gT^\pi}^i\nu,k=1,2,\dots}$ is a Cauchy sequence.
    WLOG, for any $\nu\in \prn{M^0_{\FTV}}^\gS$, $s\in\gS$, we may write $\nu=a_\nu(s)\prn{\nu_+(s)-\nu_-(s)}$, where $a_\nu(s)$ is a positive constant and $\nu_+(s),\nu_-(s)\in\Delta\prn{[0,\frac{1}{1-\gamma}]}$.
    Note that $\norm{\nu(s)}_{\FTV}=a_\nu(s) \TV(\nu_+(s),\nu_-(s))$ for any $s\in\gS$.
    There also exists a positive constant $M_\nu$ such that for any $s\in\gS$, both $\nu_+(s)$ and $\nu_-(s)$ have a density in $H_1^1(\RB)$ and $\norm{\nu_+(s)}_{H_1^1}\leq M_\nu$, $\norm{\nu_-(s)}_{H_1^1}\leq M_\nu$.
    For $k_1<k_2$, we have for any $s\in\gS$
    \begin{equation*}
        \begin{aligned}
            \norm{\sum_{i=k_1}^{k_2}\brk{\prn{\gT^\pi}^i\nu}(s)}_{\FTV}&\leq\sum_{i=k_1}^{k_2}\norm{\brk{\prn{\gT^\pi}^i\nu}(s)}_{\FTV}\\
            &=\sup_{s\in\gS}a_\nu(s)\sum_{i=k_1}^{k_2}\TV\prn{\brk{\prn{\gT^\pi}^i\nu_+}(s),\brk{\prn{\gT^\pi}^i\nu_-}(s)}\\
            &\leq \sup_{s\in\gS}a_\nu(s)\sum_{i=k_1}^{k_2}\sqrt{2KM_\nu W_1\prn{\prn{\gT^\pi}^i\nu_+(s),\prn{\gT^\pi}^i\nu_-(s)}}\\
            &\leq \sup_{s\in\gS}a_\nu(s)\sum_{i=k_1}^{k_2}\sqrt{2KM_\nu\gamma^i\sup_{s\in\gS}W_1\prn{\nu_+(s),\nu_-(s)}}\\
            &\leq \frac{\gamma^{k_1/2}\sup_{s\in\gS}a_\nu(s)\sqrt{2KM_\nu\sup_{s\in\gS}W_1\prn{\nu_+(s),\nu_-(s)}}}{1-\sqrt{\gamma}}.
        \end{aligned}
    \end{equation*}
    The second inequality is by Proposition~\ref{Proposition_bound_TV_with_W1} and
    the third inequality is due to Proposition~\ref{Proposition_value_iteration_Wp}.
    Then we define $\nu_*$ such that for any $f\in\FTV$, $\nu_*(s)f:=\lim_{k\to\infty}\sum_{i=0}^k\brk{\prn{\gT^\pi}^i\nu}(s)f$. 
    $\nu_*$ is well-defined because $\brc{\sum_{i=1}^k\prn{\gT^\pi}^i\nu(s) f,k=1,2,\dots}$ is a Cauchy sequence in $\RB$ by similar arguments as above.
    As 
    \begin{equation*}
        \begin{aligned}
            \norm{\nu_*(s)-\sum_{i=0}^{k}\brk{\prn{\gT^\pi}^i\nu}(s)}_{\FTV}
            &=\norm{\sum_{i=k}^{\infty}\brk{\prn{\gT^\pi}^i\nu}(s)}_{\FTV}\\
            &\leq \sum_{i=k}^\infty \norm{\brk{\prn{\gT^\pi}^i\nu}(s)}_{\FTV}\\
            &\leq \frac{\gamma^{k/2}\sup_{s\in\gS}a_\nu(s)\sqrt{2KM_\nu\sup_{s\in\gS}W_1\prn{\nu_+(s),\nu_-(s)}}}{1-\sqrt{\gamma}},
        \end{aligned}
    \end{equation*}
    $\nu_*=
    \lim_{k\to\infty}\sum_{i=1}^k\prn{\gT^\pi}^i\nu$.
    Apparently $\nu_*\in \prn{\overline{M^0_{\FTV}}}^\gS$.
\end{proof}

We have $\prn{\gI-\gT^\pi}^{-1}\mu\in\prn{\overline{M^0_{\FTV}}}^\gS$ that is a closed subspace of $\prn{\ell^\infty(\FTV)}^\gS$ for any $\mu\in \prn{M^0_{\FTV}}^\gS$.

Similar to the analysis of Theorem~\ref{Theorem_bound_of_KS}, we also use an ``expansion trick" to tackle the unboundedness issue of $\prn{\gI-\what\gT_n^\pi}^{-1}$.
For any $s\in\gS$,
\begin{equation*}
    \begin{aligned}
        &\norm{\hat\eta^\pi_n(s)-\eta^\pi(s)}_{\FTV}\\
        &=\norm{\brk{\prn{\gI-\what\gT_n^\pi}^{-1}\prn{\what\gT_n^\pi-\gT^\pi}\eta^\pi}(s)}_{\FTV}\\
        &=\norm{\brk{\sum_{i=0}^\infty \prn{\what\gT_n^\pi}^i\prn{\what\gT_n^\pi-\gT^\pi}\eta^\pi}(s)}_{\FTV}\\
        &\leq \norm{\brk{\prn{\what\gT_n^\pi-\gT^\pi}\eta^\pi}(s)}_{\FTV} +\sum_{i=1}^\infty\norm{\brk{ \prn{\what\gT_n^\pi}^i\prn{\what\gT_n^\pi-\gT^\pi}\eta^\pi}(s)}_{\FTV}\\
        &\leq \norm{\brk{\prn{\what\gT_n^\pi-\gT^\pi}\eta^\pi}(s)}_{\FTV} +\sum_{i=1}^\infty\sqrt{K\norm{\brk{ \prn{\what\gT_n^\pi}^i\prn{\what\gT_n^\pi-\gT^\pi}\eta^\pi}(s)}_{\FW}\norm{\brk{ \prn{\what\gT_n^\pi}^i\prn{\what\gT_n^\pi-\gT^\pi}\eta^\pi}(s)}_{H_1^1}}\\
        &\leq \norm{\brk{\prn{\what\gT_n^\pi-\gT^\pi}\eta^\pi}(s)}_{\FTV} +\sum_{i=1}^\infty\sqrt{K\gamma^i\sup_{s^\prime\in\gS}\norm{\brk{ \prn{\what\gT_n^\pi-\gT^\pi}\eta^\pi}(s^\prime)}_{\FW}\norm{\brk{ \prn{\what\gT_n^\pi}^i\prn{\what\gT_n^\pi-\gT^\pi}\eta^\pi}(s)}_{H_1^1}}\\
        &\leq \norm{\brk{\prn{\what\gT_n^\pi-\gT^\pi}\eta^\pi}(s)}_{\FTV} +\sqrt{K\sup_{s^\prime\in\gS}\norm{\brk{ \prn{\what\gT_n^\pi-\gT^\pi}\eta^\pi}(s^\prime)}_{\FW}}\sum_{i=1}^\infty\sqrt{\gamma^i\norm{\brk{ \prn{\what\gT_n^\pi}^i\prn{\what\gT_n^\pi-\gT^\pi}\eta^\pi}(s)}_{H_1^1}}.
    \end{aligned}
\end{equation*}

The second inequality is due to Proposition~\ref{Proposition_bound_TV_with_W1_signed}, and the third inequality is by the contraction property of $\what\gT_n^\pi$.

Next, we bound the term $\norm{\brk{ \prn{\what\gT_n^\pi}^i\prn{\what\gT_n^\pi-\gT^\pi}\eta^\pi}(s)}_{H_1^1}$.

\begin{lemma}\label{Lemma_bound_Sobolev_by_TV}
    Suppose Assumption~\ref{Assumption_reward_smooth} holds true.
    For any $i\geq1$, $s\in\gS$, 
    \begin{equation*}
        \norm{\brk{ \prn{\what\gT_n^\pi}^i\prn{\what\gT_n^\pi-\gT^\pi}\eta^\pi}(s)}_{H_1^1}\leq 2M\sup_{s\in\gS}\norm{\brk{\prn{\what\gT_n^\pi-\gT^\pi}\eta^\pi}(s)}_{\FTV}.
    \end{equation*}
\end{lemma}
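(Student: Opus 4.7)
The plan is to mirror the strategy of Lemma~\ref{Lemma_bound_loc_by_TV}, exploiting the smoothing effect that a single application of $\what\gT_n^\pi$ has on any signed measure via convolution with the reward density. Set $\mu := \prn{\what\gT_n^\pi-\gT^\pi}\eta^\pi$ and $\nu := \prn{\what\gT_n^\pi}^{i-1}\mu$, so that the target is $\what\gT_n^\pi\nu$. The key point is that applying $\what\gT_n^\pi$ at least once introduces a convolution against the reward density which, by Assumption~\ref{Assumption_reward_smooth}, sits in $H_1^1(\RB)$ with norm at most $M$; this is what lifts the input from the $\FTV$ scale up to the $H_1^1$ scale with only a constant factor penalty, regardless of whether $\nu(s')$ itself has a density.

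First, I would derive an explicit density formula for $(\what\gT_n^\pi\nu)(s)$. A Fubini-type computation for the signed measure $\nu(s')$, together with the change of variables $y = r + \gamma x$, shows that the mixture $\int_0^1 (b_{r,\gamma})_\#\nu(s')\gP_R(dr\mid s,a)$ admits the density $y\mapsto\int p^R_{s,a}(y-\gamma x)\,d\nu(s')(x)$; hence
\begin{equation*}
p_s(y) = \sum_{a\in\gA,s'\in\gS}\pi(a\mid s)\what P(s'\mid s,a)\int p^R_{s,a}(y-\gamma x)\,d\nu(s')(x).
\end{equation*}
Since $Dp^R_{s,a}\in L^1(\RB)$ by Assumption~\ref{Assumption_reward_smooth}, a dominated convergence argument justifies differentiation under the integral, yielding the analogous expression for $Dp_s$ with $p^R_{s,a}$ replaced by $Dp^R_{s,a}$.

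Next, I would invoke the Jordan decomposition $\nu(s') = \nu_+^{s'}-\nu_-^{s'}$, translation invariance of Lebesgue measure, and Fubini to estimate
\begin{equation*}
\int\abs{p_s(y)}\,dy \leq \sum_{a,s'}\pi(a\mid s)\what P(s'\mid s,a)\,\norm{p^R_{s,a}}_1\cdot\abs{\nu(s')}(\RB) \leq \sup_{s'\in\gS}\norm{\nu(s')}_{\FTV},
\end{equation*}
and analogously $\int\abs{Dp_s(y)}\,dy\leq M\sup_{s'\in\gS}\norm{\nu(s')}_{\FTV}$. Summing the two pieces of the $H_1^1$ norm gives $\norm{(\what\gT_n^\pi\nu)(s)}_{H_1^1}\leq 2M\sup_{s'\in\gS}\norm{\nu(s')}_{\FTV}$. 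Finally, since $\what\gT_n^\pi$ is non-expansive in the supreme TV metric (the first assertion of Proposition~\ref{Proposition_value_iteration_TV}, extended to signed measures by linearity of $\what\gT_n^\pi$), iterating $i-1$ times yields $\sup_{s'}\norm{\nu(s')}_{\FTV}\leq\sup_{s'}\norm{\mu(s')}_{\FTV}$, and the desired bound follows.

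The main obstacle I anticipate is rigorously justifying the Fubini and differentiation-under-the-integral steps for a general signed measure $\nu(s')$ that need not possess a pre-existing density: the conclusion of the lemma produces a density, but the input may be arbitrarily singular. These steps hinge on joint integrability of $p^R_{s,a}(y-\gamma x)$ and $Dp^R_{s,a}(y-\gamma x)$ against the product of Lebesgue measure in $y$ and the total variation measure $\abs{\nu(s')}$ in $x$, which is supplied by the $H_1^1$ bound on the reward density together with the finiteness of $\abs{\nu(s')}(\RB)=\norm{\nu(s')}_{\FTV}$.
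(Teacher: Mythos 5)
Your proof is correct, and it reaches the stated $2M$ bound by a route that is organized differently from the paper's. The paper first takes the (state-wise) Jordan decomposition $\brk{\prn{\what\gT_n^\pi-\gT^\pi}\eta^\pi}(s)=a(s)\prn{\mu_+(s)-\mu_-(s)}$ with $\mu_\pm(s)$ probability measures, pulls out $\sup_s a(s)$, applies $\prn{\what\gT_n^\pi}^i$ to the two probability-measure vectors separately, and bounds each image's $H_1^1$ norm by $M$ via Lemma~\ref{Lemma_smooth_density_close}, giving $2M\sup_s a(s)$. You instead peel off only the last application of $\what\gT_n^\pi$: you push the first $i-1$ applications through the supreme-TV non-expansiveness of $\what\gT_n^\pi$ extended to zero-mass signed measures (legitimate, since the operator is a nonnegative mixture over $(a,s^\prime)$ of reward-averaged pushforwards, each TV-non-expansive), and then prove a one-step, signed-measure analogue of Lemma~\ref{Lemma_smooth_density_close} directly, by exhibiting the density $y\mapsto\int p^R_{s,a}(y-\gamma x)\,d\nu(s^\prime)(x)$ and its weak derivative and applying Young-type $L^1$ bounds against $\abs{\nu(s^\prime)}$. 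What each buys: your version sidesteps the state-dependent normalization factor $a(s)$ being mixed across states by $\prn{\what\gT_n^\pi}^i$ (which the paper handles somewhat loosely with a $\sup_s a(s)$ bound) and does not require the Jordan parts to have $H_1^1$ densities, since the one convolution manufactures the density; the paper's version is shorter because it reuses Lemma~\ref{Lemma_smooth_density_close} as a black box. Two small points to make explicit in a polished write-up: your final step adds the bounds $\norm{p_s}_1\leq\sup_{s^\prime}\norm{\nu(s^\prime)}_{\FTV}$ and $\norm{Dp_s}_1\leq M\sup_{s^\prime}\norm{\nu(s^\prime)}_{\FTV}$ and then claims $2M$, which uses $1+M\leq 2M$, i.e.\ $M\geq1$; this is automatic because $\norm{p^R_{s,a}}_{H_1^1}\geq\norm{p^R_{s,a}}_1=1$, but it should be said (in fact $\norm{Dp^R_{s,a}}_1\leq M-1$ gives the constant with room to spare). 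Second, for the weak derivative the cleanest justification is the Fubini/integration-by-parts identity against test functions rather than pointwise differentiation under the integral, and the joint integrability you cite (finite $\abs{\nu(s^\prime)}$ and $p^R_{s,a},Dp^R_{s,a}\in L^1$) indeed suffices for it.
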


Like the proof of Lemma~\ref{Lemma_bound_loc_by_TV}, we also deploy the normalization technique.
And the condition $i\geq 1$ is also necessary.

\begin{proof}[Proof of Lemma~\ref{Lemma_bound_Sobolev_by_TV}]
    We write the Jordan decomposition of $\prn{\what\gT_n^\pi-\gT^\pi}\eta^\pi$ as 
    \begin{equation*}
        \brk{\prn{\what\gT_n^\pi-\gT^\pi}\eta^\pi}(s)=a(s)(\mu_+(s)-\mu_-(s))
    \end{equation*}
    Here for any $s\in\gS$, $\mu_+(s),\mu_-(s)\in\Delta\prn{\brk{0,\frac{1}{1-\gamma}}}$ has probability density function $p_+(s),p_-(s)\in H_1^1(\RB)$. 
    And it is easy to verify $a(s)=\norm{ \brk{\prn{\what\gT_n^\pi-\gT^\pi}\eta^\pi}(s)}_{\FTV}$.
    For $i\geq 1$, we have
    \begin{equation*}
        \begin{aligned}
            \norm{\brk{\prn{\what \gT_n^\pi}^i\prn{\what\gT_n^\pi-\gT^\pi}\eta^\pi}(s)}_{H_1^1}&\leq\sup_{s\in\gS}a(s)\norm{\brk{\prn{\what \gT_n^\pi}^i(\mu_+-\mu_-)}(s)}_{H_1^1}\\
            &=\sup_{s\in\gS}a(s)\prn{\norm{\brk{\prn{\what \gT_n^\pi}^i\mu_+}(s)}_{H_1^1}+\norm{\brk{\prn{\what \gT_n^\pi}^i\mu_-}(s)}_{H_1^1}}\\
            &\leq 2\sup_{s\in\gS}a(s)M.
        \end{aligned}
    \end{equation*}
    The last inequality holds by Lemma~\ref{Lemma_smooth_density_close}.
    
\end{proof}

Putting the pieces together, we have
\begin{equation*}
\begin{aligned}
     &\norm{\hat\eta^\pi_n(s)-\eta^\pi(s)}_{\FTV}\\
     &\leq\frac{\sqrt{2KM\gamma
     }}{1-\sqrt{\gamma}}\sqrt{\sup_{s^\prime\in\gS}\norm{\brk{\prn{\what\gT_n^\pi-\gT^\pi}\eta^\pi}(s^\prime)}_{\FTV}\sup_{s^\prime\in\gS}\norm{\brk{\prn{\what\gT_n^\pi-\gT^\pi}\eta^\pi}(s^\prime)}_{\FW}}+\norm{\brk{\prn{\what\gT_n^\pi-\gT^\pi}\eta^\pi}(s)}_{\FTV}.
\end{aligned}
\end{equation*}
According to Lemma~\ref{Lemma_concentration_operator_W1} and Corollary~\ref{Corollary_concentration_operator_TV_Sobolev},
\begin{equation*}
\begin{aligned}
        \norm{\hat\eta^\pi_n(s)-\eta^\pi(s)}_{\FTV}
        &\leq M\prn{\frac{\sqrt{2K\gamma}}{1-\sqrt{\gamma}}+1}\frac{\sqrt{9\log|\gS|}+\sqrt{\log(2/\delta)/2}}{\sqrt{n(1-\gamma)^2}}\\
    &\leq \frac{K^\prime\prn{\sqrt{\log|\gS|}+\sqrt{\log(1/\delta)}}}{\sqrt{n(1-\gamma)^4}}
\end{aligned}
\end{equation*}
with probability at least $1-\delta$, $\forall \delta\in(0,1)$.
$K^\prime$ is an absolute constant depending only on $M$ in Assumption~\ref{Assumption_reward_smooth}.
Besides, 
\begin{equation*}
    \begin{aligned}
        \EB\sup_{s\in\gS}\norm{\hat\eta^\pi_n(s)-\eta^\pi(s)}_{\FTV}&\leq K^\prime\sqrt{\frac{\log|\gS|}{n(1-\gamma)^4}}+\int_{0}^\infty \PB\prn{\sup_{s\in\gS}\norm{\hat\eta^\pi_n(s)-\eta^\pi(s)}_{\FTV}>K^\prime\sqrt{\frac{\log|\gS|}{n(1-\gamma)^4}}+t}dt\\
        &\leq K^{\prime\prime}\sqrt{\frac{\log|\gS|}{n(1-\gamma)^4}},
    \end{aligned}
\end{equation*}
where $K^{\prime\prime}$ is some constant depending on $M$ in Assumption~\ref{Assumption_reward_smooth}.

\section{Analysis of Theorem~\ref{Theorem_weak_convergence_empirical_process}}
\label{Appendix_analysis_asymptotic}

\paragraph{Weak Convergence in \texorpdfstring{$\ell^\infty(\gF_{W_1})$}{l(FW1)}}
We have
\begin{equation*}
\begin{aligned}
    \sqrt{n}(\hat\eta_n^\pi-\eta^\pi)&=\sqrt{n}\prn{\what \gT_n^\pi\hat\eta_n^\pi-\gT^\pi\eta^\pi}\\
    &=\sqrt{n}\prn{\what \gT_n^\pi\hat\eta_n^\pi-\what \gT_n^\pi\eta^\pi+\what \gT_n^\pi\eta^\pi-\gT^\pi\eta^\pi}\\
    &=\sqrt{n}\what \gT_n^\pi(\hat\eta_n^\pi-\eta^\pi)+\sqrt{n}\prn{\what \gT_n^\pi-\gT^\pi}\eta^\pi\\
    &=\sqrt{n}\gT^\pi(\hat\eta_n^\pi-\eta^\pi)+\sqrt{n}\prn{\what \gT_n^\pi-\gT^\pi}\eta^\pi+\sqrt{n}\prn{\what \gT_n^\pi-\gT^\pi}(\hat\eta_n^\pi-\eta^\pi).\\
\end{aligned}
\end{equation*}
Rearranging terms yields
\begin{equation*}
    \sqrt{n}\prn{\gI-\gT^\pi}(\hat\eta_n^\pi-\eta^\pi)=\underbrace{\sqrt{n}\prn{\what \gT_n^\pi-\gT^\pi}\eta^\pi}_{(1)}+\underbrace{\sqrt{n}\prn{\what \gT_n^\pi-\gT^\pi}(\hat\eta_n^\pi-\eta^\pi)}_{(2)}.
\end{equation*}
Both term $(1)$ and term $(2)$ are in $\prn{M^0_{\FW}}^\gS$.
Next, we can show that term $(1)$ converges weakly to a mixture of probability distributions and term $(2)$ is negligible.
\begin{lemma}\label{Lemma_convergence_difference_operator}
For any $s\in\gS$, $\sqrt{n}\brk{\prn{\what \gT_n^\pi-\gT^\pi}\eta^\pi}(s)$ converge weakly to the process $f\mapsto \wtilde\GB^\pi(s)f$ in $\ell^\infty(\FW)$.
Here the random element $\wtilde\GB^\pi$ is defined as
    \begin{equation*}
        \wtilde\GB^\pi(s):=\sum_{a\in\gA}\pi(a\mid s)\sum_{s^\prime\in\gS} Z_{s,a,s^\prime}\int_0^1 \prn{b_{r,\gamma}}_\#\eta^\pi(s^\prime)\gP_R(dr\mid s,a),\ \forall s\in\gS,
    \end{equation*}
    where  $\prn{Z_{s,a,s^\prime}}_{s\in\gS,a\in\gA,s^\prime\in\gS}$ are zero-mean gaussians with 
    \begin{equation*}
    \cov(Z_{s_1,a_1,s_1^\prime},Z_{s_2,a_2,s_2^\prime})=\ind\brc{(s_1,a_1)=(s_2,a_2)}P(s_1^\prime\mid s_1,a_1)\prn{\ind\brc{s_1^\prime=s_2^\prime}-P(s_2^\prime\mid s_1,a_1)}.
    \end{equation*}
\end{lemma}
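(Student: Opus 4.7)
}
The strategy is to reduce the infinite-dimensional weak convergence statement to a finite-dimensional CLT combined with the continuous mapping theorem. The observation driving everything is that $(\widehat\gT_n^\pi-\gT^\pi)\eta^\pi$ depends on the randomness only through the finitely many transition-count fluctuations $\{\sqrt{n}(\what P(s'\mid s,a)-P(s'\mid s,a))\}$, while the ``base measures'' that these fluctuations multiply are deterministic and fixed.

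\emph{Step 1: Exact finite-rank representation.} Using the definitions of $\gT^\pi$ and $\what\gT_n^\pi$ in \eqref{Equation_distributional_Bellman_equation} and \eqref{Equation_empirical_distributional_Bellman_equation}, I would write, for each $s\in\gS$,
\begin{equation*}
\sqrt{n}\,[(\what\gT_n^\pi-\gT^\pi)\eta^\pi](s)
=\sum_{a\in\gA}\pi(a\mid s)\sum_{s'\in\gS}\sqrt{n}\bigl(\what P(s'\mid s,a)-P(s'\mid s,a)\bigr)\,\nu^\pi_{s,a,s'},
\end{equation*}
where $\nu^\pi_{s,a,s'}:=\int_0^1 (b_{r,\gamma})_\#\eta^\pi(s')\,\gP_R(dr\mid s,a)$ is a \emph{deterministic} probability measure supported on $[0,1/(1-\gamma)]$. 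Viewed through $\FW$, each $\nu^\pi_{s,a,s'}$ defines a bounded element of $\ell^\infty(\FW)$ (for any $f\in\FW$ and reference point $x_0\in[0,1/(1-\gamma)]$, $|\nu^\pi_{s,a,s'}f-f(x_0)|\le 1/(1-\gamma)$).

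\emph{Step 2: Multivariate CLT on the coefficients.} For each fixed pair $(s,a)$ the vector $(\ind\{X_i^{(s,a)}=s'\})_{s'\in\gS}$ has mean $(P(s'\mid s,a))_{s'}$ and covariance $\mathrm{diag}(P(\cdot\mid s,a))-P(\cdot\mid s,a)P(\cdot\mid s,a)\tran$, so the classical multinomial CLT gives
\begin{equation*}
\bigl(\sqrt{n}(\what P(s'\mid s,a)-P(s'\mid s,a))\bigr)_{s'\in\gS}\cweak (Z_{s,a,s'})_{s'\in\gS}
\end{equation*}
with the covariance structure in the statement. Independence of the samples across distinct $(s,a)$ pairs (the generative model calls these separately) upgrades this to the joint weak convergence of the full $|\gS|^2|\gA|$-dimensional vector to $(Z_{s,a,s'})$.

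\emph{Step 3: Continuous mapping into $\ell^\infty(\FW)$.} Consider the linear map
\begin{equation*}
\Phi_s\colon \RB^{\gS\times\gA\times\gS}\to \ell^\infty(\FW),\qquad
\Phi_s(z)(f)=\sum_{a,s'}\pi(a\mid s)\,z_{s,a,s'}\,\nu^\pi_{s,a,s'}f.
\end{equation*}
Since $\|\Phi_s(z)\|_{\FW}\le \tfrac{1}{1-\gamma}\sum_{a,s'}\pi(a\mid s)|z_{s,a,s'}|$, the map $\Phi_s$ is bounded, hence continuous. Its image is a finite-dimensional (hence separable, measurable) subspace of $\ell^\infty(\FW)$, so there are no measurability pathologies when transporting weak convergence. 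Applying the continuous mapping theorem to the convergence from Step~2 yields
\begin{equation*}
\sqrt{n}\,[(\what\gT_n^\pi-\gT^\pi)\eta^\pi](s)=\Phi_s\bigl(\sqrt{n}(\what P-P)\bigr)\;\cweak\;\Phi_s(Z)=\wtilde\GB^\pi(s)
\end{equation*}
in $\ell^\infty(\FW)$, which is the claim.

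\emph{Anticipated difficulty.} The proof itself is short, and I do not expect genuine technical obstacles for this lemma in isolation: every step is either an algebraic rewriting, a textbook CLT, or continuous mapping in a finite-dimensional image. The only subtle point worth double-checking is that $\nu^\pi_{s,a,s'}$, as an element of $\ell^\infty(\FW)$, really has finite $\FW$-norm, which is handled by centering the Lipschitz test function (or by restricting $\FW$ to functions supported on $[0,1/(1-\gamma)]$, as the theorem statement does). Once that is in place, the real work is pushed into the subsequent step of the overall proof of Theorem~\ref{Theorem_weak_convergence_empirical_process}, namely showing that the remainder $(2)=\sqrt{n}(\what\gT_n^\pi-\gT^\pi)(\hat\eta_n^\pi-\eta^\pi)$ is $o_p(1)$ in $\ell^\infty(\FW)$ and that $(\gI-\gT^\pi)^{-1}$ may be applied to the limit in Step~3.
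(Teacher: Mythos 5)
Your proposal is correct and matches the paper's proof: the same finite-rank decomposition of $\sqrt{n}[(\what\gT_n^\pi-\gT^\pi)\eta^\pi](s)$ into deterministic measures $\nu^\pi_{s,a,s'}$ weighted by $\sqrt{n}(\what P-P)$, followed by the multivariate CLT and a transfer to $\ell^\infty(\FW)$ via a continuous linear map (which the paper packages as Lemma~\ref{Lemma_simple_weak_convergence} rather than invoking the continuous mapping theorem directly). Your extra check that each $\nu^\pi_{s,a,s'}$ has finite $\FW$-norm is a harmless refinement of the same argument.
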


\begin{proof}[Proof of Lemma~\ref{Lemma_convergence_difference_operator}]
For any $s\in\gS$, we have
\begin{equation*}
    \begin{aligned}
        \sqrt{n}\brk{\prn{\what \gT_n^\pi-\gT^\pi}\eta^\pi}(s)&=\sum_{a\in\gA}\pi(a\mid s)\sum_{s^\prime\in\gS} \sqrt{n}\prn{\what P(s^\prime\mid s,a)-P(s^\prime\mid s,a)}\int_0^1 \prn{b_{r,\gamma}}_\#\eta^\pi(s^\prime)\gP_R(dr\mid s,a)\\
        &=\sum_{a\in\gA,s^\prime\in\gS}\brk{\sqrt{n}\prn{\what P(s^\prime\mid s,a)-P(s^\prime\mid s,a)}}\brk{\pi(a\mid s)\int_0^1 \prn{b_{r,\gamma}}_\#\eta^\pi(s^\prime)\gP_R(dr\mid s,a)}
    \end{aligned}
\end{equation*}
    Thus, the conclusion follows through the multivariate CLT and Lemma~\ref{Lemma_simple_weak_convergence}.
\end{proof}

\begin{lemma}\label{Lemma_convergence_res_op1}
For any $s\in\gS$, we have $\norm{\sqrt{n}\brk{\prn{\what \gT_n^\pi-\gT^\pi}(\hat\eta_n^\pi-\eta^\pi)}(s)}_{\FW}=o_P\prn{1}$.
\end{lemma}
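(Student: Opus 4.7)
The plan is to show the stronger statement that the supremum over $s$ of the $\FW$-norm of the residual is $o_P(1)$, by exhibiting it as a product of two terms each of size $O_P(n^{-1/2})$, one of which carries an extra $n^{-1/2}$ factor that dominates the external $\sqrt{n}$.

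First, I would expand the operator using its linearity and the definition of $\what\gT_n^\pi$ and $\gT^\pi$: for any signed measure vector $\mu\in\prn{M^0_{\FW}}^\gS$ and any $f\in\FW$,
\begin{equation*}
    \brk{\prn{\what\gT_n^\pi-\gT^\pi}\mu}(s)f
    =\sum_{a\in\gA,s'\in\gS}\pi(a\mid s)\prn{\what P(s'\mid s,a)-P(s'\mid s,a)}\int_0^1 \brk{(b_{r,\gamma})_\#\mu(s')}(f)\,\gP_R(dr\mid s,a).
\end{equation*}
The key geometric fact to exploit is that push-forward by the affine map $b_{r,\gamma}(x)=r+\gamma x$ scales the $W_1$-norm by $\gamma$: for any $g\in\FW$ the composition $g\circ b_{r,\gamma}$ is $\gamma$-Lipschitz on $\brk{0,1/(1-\gamma)}$, hence $\abs{\brk{(b_{r,\gamma})_\#\mu(s')}(g)}=\abs{\mu(s')(g\circ b_{r,\gamma})}\le\gamma\norm{\mu(s')}_{\FW}$.

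Plugging $\mu=\hat\eta_n^\pi-\eta^\pi$ and taking the supremum over $f\in\FW$ then the supremum over $s$ yields
\begin{equation*}
    \sup_{s\in\gS}\norm{\brk{\prn{\what\gT_n^\pi-\gT^\pi}(\hat\eta_n^\pi-\eta^\pi)}(s)}_{\FW}
    \le \gamma\,\prn{\sup_{s,a}\sum_{s'\in\gS}\abs{\what P(s'\mid s,a)-P(s'\mid s,a)}}\cdot\sup_{s'\in\gS}\norm{(\hat\eta_n^\pi-\eta^\pi)(s')}_{\FW}.
\end{equation*}
The second factor is $O_P\prn{n^{-1/2}}$ by Theorem~\ref{Theorem_bound_of_Wdist}. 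For the first factor, each $(s,a)$ contribution is the $\ell^1$ distance between a multinomial empirical distribution over $|\gS|$ atoms and its mean, which is $O_P\prn{\sqrt{|\gS|/n}}$ by a standard DKW/Bernstein argument; taking the max over the finitely many pairs $(s,a)$ preserves this rate. Multiplying the two $O_P(n^{-1/2})$ bounds by $\sqrt{n}$ leaves a residual of order $O_P\prn{n^{-1/2}}=o_P(1)$, which gives the claim.

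The only mild technical point, which I would handle when writing out the bound above, is to verify that the inequality $\abs{\brk{(b_{r,\gamma})_\#\mu(s')}(f)}\le\gamma\norm{\mu(s')}_{\FW}$ is valid for our specific class $\FW$ of $1$-Lipschitz functions on $\brk{0,1/(1-\gamma)}$: since $g\circ b_{r,\gamma}$ is $\gamma$-Lipschitz, the function $\gamma^{-1}(g\circ b_{r,\gamma})$ belongs to $\FW$ up to restriction to the support, and the pairing with $\mu(s')\in M^0_{\FW}$ depends only on the restriction. Apart from this, each step is routine. The delicate piece is not the algebra but rather making sure the order of operations is correct: we invoke the $n^{-1/2}$ rate for $\sup_{s'}\norm{(\hat\eta_n^\pi-\eta^\pi)(s')}_{\FW}$ from Theorem~\ref{Theorem_bound_of_Wdist} before it is used in the asymptotic analysis, which is permissible because Theorem~\ref{Theorem_bound_of_Wdist} was established by entirely non-asymptotic arguments independent of the current lemma.
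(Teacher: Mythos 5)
Your proposal is correct and takes essentially the same route as the paper: both factor the residual into the empirical transition error $\what P-P$ times the $W_1$-discrepancy of the pushforward mixtures, exploit the $\gamma$-contraction of the affine pushforward $b_{r,\gamma}$ (you via the Kantorovich dual with $g\circ b_{r,\gamma}$ being $\gamma$-Lipschitz, the paper via an explicit coupling), and invoke Theorem~\ref{Theorem_bound_of_Wdist} to control $\sup_{s'}W_1(\hat\eta_n^\pi(s'),\eta^\pi(s'))$. The only (harmless) difference is that you track both factors at rate $O_P\prn{n^{-1/2}}$ and conclude the $\sqrt{n}$-scaled residual is $O_P\prn{n^{-1/2}}$, while the paper settles for $\norm{\sqrt{n}(\what P-P)}_1=O_P(1)$ multiplied by an $o_P(1)$ factor.
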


\begin{proof}[Proof of Lemma~\ref{Lemma_convergence_res_op1}]
    For any $s\in\gS$, we have
    \begin{equation*}
        \begin{aligned}
            &\norm{\sqrt{n}\brk{\prn{\what \gT_n^\pi-\gT^\pi}(\hat\eta_n^\pi-\eta^\pi)}(s)}_{\FW}\\
            &=\norm{\sum_{a\in\gA,s^\prime\in\gS}\brk{\sqrt{n}\prn{\what P(s^\prime\mid s,a)-P(s^\prime\mid s,a)}}\brk{\pi(a\mid s)\int_0^1 \brk{\prn{b_{r,\gamma}}_\#\eta^\pi(s^\prime)-\prn{b_{r,\gamma}}_\#\hat\eta_n^\pi(s^\prime)}\gP_R(dr\mid s,a)}}_{\FW}\\
            &\leq \norm{\sqrt{n}\prn{\what P(s^\prime\mid s,a)-P(s^\prime\mid s,a)}}_1\sup_{a\in\gA,s^\prime\in\gS}\norm{\int_0^1 \brk{\prn{b_{r,\gamma}}_\#\eta^\pi(s^\prime)-\prn{b_{r,\gamma}}_\#\hat\eta_n^\pi(s^\prime)}\gP_R(dr\mid s,a)}_{\FW}.
        \end{aligned}
    \end{equation*}
    Since $\norm{\sqrt{n}\prn{\what P(s^\prime\mid s,a)-P(s^\prime\mid s,a)}}_1$ is of the order $O_P(1)$, it suffices to show for any $s^\prime\in\gS, a\in\gA$, $\norm{\int_0^1 \brk{\prn{b_{r,\gamma}}_\#\eta^\pi(s^\prime)-\prn{b_{r,\gamma}}_\#\hat\eta_n^\pi(s^\prime)}\gP_R(dr\mid s,a)}_{\FW}=o_P(1)$.
    Noting that 
    \begin{equation*}
    \begin{aligned}
        &\norm{\int_0^1 \brk{\prn{b_{r,\gamma}}_\#\eta^\pi(s^\prime)-\prn{b_{r,\gamma}}_\#\hat\eta_n^\pi(s^\prime)}\gP_R(dr\mid s,a)}_{\FW}\\
        &=W_1\prn{\int_0^1 \prn{b_{r,\gamma}}_\#\eta^\pi(s^\prime)\gP_R(dr\mid s,a),\int_0^1 \prn{b_{r,\gamma}}_\#\hat\eta_n^\pi(s^\prime)\gP_R(dr\mid s,a)}.
    \end{aligned}
    \end{equation*}
    We claim that
    \begin{equation*}
    \begin{aligned}
        W_1\prn{\int_0^1 \prn{b_{r,\gamma}}_\#\eta^\pi(s^\prime)\gP_R(dr\mid s,a),\int_0^1 \prn{b_{r,\gamma}}_\#\hat\eta_n^\pi(s^\prime)\gP_R(dr\mid s,a)}\leq W_1\prn{\hat\eta_n^\pi(s^\prime),\eta^\pi(s^\prime)}.
    \end{aligned}
    \end{equation*}
    For simplicity of notations, we use $\hat\nu$ in short of $\int_0^1 \prn{b_{r,\gamma}}_\#\hat\eta_n^\pi(s^\prime)\gP_R(dr\mid s,a)$ and $\nu$ in short of $\int_0^1 \prn{b_{r,\gamma}}_\#\eta^\pi(s^\prime)\gP_R(dr\mid s,a)$.
    In fact, suppose two random variables $X\sim \eta^\pi(s^\prime)$, $Y\sim\hat\eta_n^\pi(s^\prime)$, and an independent random variable $R\sim \gP_R(dr\mid s,a)$, then $R+\gamma Y\sim \hat\nu$ and $R+\gamma X\sim \nu$.
    Then we have
    \begin{equation*}
    \begin{aligned}
        &W_1\prn{\int_0^1 \prn{b_{r,\gamma}}_\#\eta^\pi(s^\prime)\gP_R(dr\mid s,a),\int_0^1 \prn{b_{r,\gamma}}_\#\hat\eta_n^\pi(s^\prime)\gP_R(dr\mid s,a)}\\
        &=\inf_{W\sim \nu, Z\sim \hat \nu}\EB\abs{W-Z}\\
        &\leq \EB\abs{(R+\gamma X)-(R+\gamma Y)}\\
        &=\gamma\EB\abs{X-Y}.
    \end{aligned}
    \end{equation*}
    Our claim is true since $X$ and $Y$ are chosen arbitrarily.
    Our conclusion follows since by Theorem~\ref{Theorem_bound_of_Wdist} $W_1(\eta^\pi(s^\prime),\hat\eta^\pi_n(s^\prime))=o_P(1)$.
\end{proof}

Recall that we have previously demonstrated $\sqrt{n}\prn{\gI-\gT^\pi}(\hat\eta_n^\pi-\eta^\pi)\cweak \wtilde \GB^\pi$.
Thus our final step is to establish $\sqrt{n}(\hat\eta_n^\pi-\eta^\pi)\cweak \prn{\gI-\gT^\pi}^{-1}\wtilde \GB^\pi$.
This step can be accomplished by continuous mapping theorem as $\prn{\gI-\gT^\pi}^{-1}$ is a bounded operator.
Note that we always have $\sum_{s^\prime} Z_{s,a.s^\prime}=0$ for any state-action pair $(s,a)$, which implies $\wtilde\GB^\pi$ is also in $\prn{M^0_{\FW}}^\gS$.

\paragraph{Weak Convergence in \texorpdfstring{$\ell^\infty(\gF_{\mathsf{KS}})$}{l(FKS)}}

In the following decomposition
\begin{equation*}
    \sqrt{n}\prn{\gI-\gT^\pi}(\hat\eta_n^\pi-\eta^\pi)=\underbrace{\sqrt{n}\prn{\what \gT_n^\pi-\gT^\pi}\eta^\pi}_{(1)}+\underbrace{\sqrt{n}\prn{\what \gT_n^\pi-\gT^\pi}(\hat\eta_n^\pi-\eta^\pi)}_{(2)},
\end{equation*}
we also have the two relevant terms in $\prn{ M_{\FKS}^0}^\gS$ under Assumption~\ref{Assumption_reward_bounded_density}.
\begin{lemma}\label{Lemma_convergence_difference_operator_Ind}
Let Assumption~\ref{Assumption_reward_bounded_density} hold.
For any $s\in\gS$, $\sqrt{n}\brk{\prn{\what \gT_n^\pi-\gT^\pi}\eta^\pi}(s)$ converge weakly to the process $f\mapsto \wtilde\GB^\pi(s)f$ in $\ell^\infty(\FKS)$.
\end{lemma}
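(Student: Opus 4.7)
}
My plan is to imitate the continuous-mapping argument used for Lemma~\ref{Lemma_convergence_difference_operator}, replacing the Lipschitz test functions by the indicator class $\FKS$, and then invoke Assumption~\ref{Assumption_reward_bounded_density} only at the end to place the limit in a separable subspace of $\ell^\infty(\FKS)$.  Concretely, for each state-action pair write $\mu_{s,a,s'} := \int_0^1 (b_{r,\gamma})_\# \eta^\pi(s')\, \gP_R(dr\mid s,a)$ and $\hat Z_n(s,a,s') := \sqrt{n}\bigl(\what P(s'\mid s,a)-P(s'\mid s,a)\bigr)$.  Then
\begin{equation*}
\sqrt n\,\brk{\prn{\what\gT_n^\pi-\gT^\pi}\eta^\pi}(s) \;=\; \sum_{a\in\gA,\,s'\in\gS}\pi(a\mid s)\,\hat Z_n(s,a,s')\,\mu_{s,a,s'},
\end{equation*}
and the multivariate CLT applied to the i.i.d.\ multinomial counts $\brc{X_i^{(s,a)}}_{i=1}^n$ yields $(\hat Z_n(s,a,s'))_{a,s'}\cd (Z_{s,a,s'})_{a,s'}$ in $\RB^{|\gA||\gS|}$ with exactly the covariance stated in Theorem~\ref{Theorem_weak_convergence_empirical_process}.

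Next I define the deterministic map $\Phi\colon\RB^{|\gA||\gS|}\to \ell^\infty(\FKS)$ by
\begin{equation*}
\Phi(x)\,f \;:=\; \sum_{a,s'}\pi(a\mid s)\,x_{s,a,s'}\,\mu_{s,a,s'}(f),\qquad f\in\FKS,
\end{equation*}
so that $\sqrt n\,\brk{\prn{\what\gT_n^\pi-\gT^\pi}\eta^\pi}(s) = \Phi(\hat Z_n(s,\cdot,\cdot))$ pointwise on $\FKS$, and similarly the prospective limit equals $\Phi(Z_{s,\cdot,\cdot}) = \wtilde\GB^\pi(s)$.  Since every $f\in\FKS$ satisfies $|\mu_{s,a,s'}(f)|\le 1$, the map $\Phi$ is Lipschitz with constant at most $|\gA||\gS|$ in the supremum norm on $\ell^\infty(\FKS)$.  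Hence $\Phi$ is (uniformly) continuous and the continuous mapping theorem immediately transfers the finite-dimensional weak convergence of $\hat Z_n$ to weak convergence of $\Phi(\hat Z_n)$ in $\ell^\infty(\FKS)$.

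The only point where Assumption~\ref{Assumption_reward_bounded_density} is used is to guarantee that the limit process has well-behaved sample paths, so that the convergence in $\ell^\infty(\FKS)$ is not vacuous.  Writing $F_{s,a,s'}(z) = \mu_{s,a,s'}((-\infty,z])$, the convolution argument used in the proof of Proposition~\ref{Proposition_value_iteration_KS} (via Lemma~\ref{Lemma_bounded_density_of_return}) shows that $\mu_{s,a,s'}$ inherits a density bounded by $C$ from $\gP_R$, so $F_{s,a,s'}$ is $C$-Lipschitz in $z$.  Consequently the realization $z\mapsto \Phi(Z)(\ind_{(-\infty,z]}) = \sum_{a,s'}\pi(a\mid s)Z_{s,a,s'}F_{s,a,s'}(z)$ is almost surely Lipschitz in $z$, so $\wtilde\GB^\pi(s)$ takes values in the separable closed subspace of $\ell^\infty(\FKS)$ identified with $C[0,1/(1-\gamma)]$ under $z\leftrightarrow \ind_{(-\infty,z]}$.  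This ensures the usual Hoffmann--J\o rgensen weak-convergence framework applies and the convergence is genuine.

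I expect the main (minor) obstacle to be bookkeeping around the non-separability of $\ell^\infty(\FKS)$: strictly speaking, one must check that $\Phi(\hat Z_n)$ is Borel-measurable as a map into $\ell^\infty(\FKS)$ and that the limit is tight.  Both follow from the fact that $\Phi$ factors through the finite-dimensional space $\RB^{|\gA||\gS|}$ and its image lies in the separable subspace of continuous (indeed Lipschitz) paths; so the CMT applies in the standard outer-probability sense of \cite{van2000asymptotic}.  Everything else is simple algebra and does not require any new estimate beyond those already used for the $\FW$ case.
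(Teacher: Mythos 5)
Your proposal is correct and follows essentially the same route as the paper: decompose $\sqrt{n}\brk{\prn{\what\gT_n^\pi-\gT^\pi}\eta^\pi}(s)$ as a fixed linear combination of the measures $\int_0^1 \prn{b_{r,\gamma}}_\#\eta^\pi(s^\prime)\gP_R(dr\mid s,a)$ with coefficients $\sqrt{n}\prn{\what P-P}$, apply the multivariate CLT, and transfer the convergence to $\ell^\infty(\FKS)$ via continuity of the finite-dimensional linear map (the paper packages your $\Phi$/CMT step as Lemma~\ref{Lemma_simple_weak_convergence}). Your extra bookkeeping on measurability and your use of Assumption~\ref{Assumption_reward_bounded_density} for Lipschitz sample paths are harmless but not needed, since the limit already lives in the finite-dimensional (hence separable) image of $\Phi$; the assumption is really used in the surrounding argument (membership in $M^0_{\FKS}$ and the subsequent application of $\prn{\gI-\gT^\pi}^{-1}$), not in this lemma itself.
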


\begin{proof}[Proof of Lemma~\ref{Lemma_convergence_difference_operator_Ind}]
The proof is identical to that of Lemma~\ref{Lemma_convergence_difference_operator}.
For any $s\in\gS$, we have
\begin{equation*}
    \begin{aligned}
        \sqrt{n}\brk{\prn{\what \gT_n^\pi-\gT^\pi}\eta^\pi}(s)=\sum_{a\in\gA,s^\prime\in\gS}\brk{\sqrt{n}\prn{\what P(s^\prime\mid s,a)-P(s^\prime\mid s,a)}}\brk{\pi(a\mid s)\int_0^1 \prn{b_{r,\gamma}}_\#\eta^\pi(s^\prime)\gP_R(dr\mid s,a)}
    \end{aligned}
\end{equation*}
    Thus the conclusion follows via the multivariate CLT and Lemma~\ref{Lemma_simple_weak_convergence}.
\end{proof}

\begin{lemma}\label{Lemma_convergence_res_op1_Ind}
Let Assumption~\ref{Assumption_reward_bounded_density} hold.
For any $s\in\gS$, we have ${\norm{\sqrt{n}\brk{\prn{\what \gT_n^\pi-\gT^\pi}(\hat\eta_n^\pi-\eta^\pi)}(s)}_{\FKS}=o_P\prn{1}}$.
\end{lemma}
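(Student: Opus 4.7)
The plan is to show this lemma via essentially the same two-factor decomposition as used for its $W_1$ counterpart Lemma~\ref{Lemma_convergence_res_op1}: one factor of size $O_P(1)$ coming from the Gaussian fluctuations of $\sqrt n (\what P - P)$, and one factor of size $o_P(1)$ coming from the consistency of $\hat\eta_n^\pi$ in KS metric, which is available under Assumption~\ref{Assumption_reward_bounded_density} by Theorem~\ref{Theorem_bound_of_KS}.

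First, exploiting the linearity of $\gT^\pi$ and $\what\gT_n^\pi$ on the space $(M_{\FKS}^0)^{\gS}$ (Proposition~\ref{Proposition_extention_of_Bellman_operator}), I would expand
\[
\brk{\prn{\what\gT_n^\pi-\gT^\pi}(\hat\eta_n^\pi-\eta^\pi)}(s)=\sum_{a\in\gA,s'\in\gS}\pi(a\mid s)\prn{\what P(s'\mid s,a)-P(s'\mid s,a)}\int_0^1 \prn{b_{r,\gamma}}_\#\prn{\hat\eta_n^\pi(s')-\eta^\pi(s')}\gP_R(dr\mid s,a).
\]
Next I would observe the two key invariance properties of the KS semi-norm on signed measures: for any signed Borel measure $\mu$ on $[0,1/(1-\gamma)]$, the affine push-forward $(b_{r,\gamma})_\#\mu$ has CDF $F_\mu((z-r)/\gamma)$, so $\norm{(b_{r,\gamma})_\#\mu}_{\FKS}=\norm{\mu}_{\FKS}$, and mixing over $r$ against $\gP_R(dr\mid s,a)$ is sub-additive. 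Combining these with the triangle inequality over $(a,s')$ yields
\[
\norm{\brk{\prn{\what\gT_n^\pi-\gT^\pi}(\hat\eta_n^\pi-\eta^\pi)}(s)}_{\FKS}\le\sum_{a,s'}\pi(a\mid s)\abs{\what P(s'\mid s,a)-P(s'\mid s,a)}\cdot\norm{\hat\eta_n^\pi(s')-\eta^\pi(s')}_{\FKS}.
\]

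Finally, since $n\what P(s'\mid s,a)\sim\bindist(n,P(s'\mid s,a))$, the CLT gives $\sqrt n\prn{\what P(s'\mid s,a)-P(s'\mid s,a)}=O_P(1)$ for every $(s,a,s')$. Under Assumption~\ref{Assumption_reward_bounded_density}, Theorem~\ref{Theorem_bound_of_KS} provides $\sup_{s'\in\gS}\norm{\hat\eta_n^\pi(s')-\eta^\pi(s')}_{\FKS}=\wtilde O_P(n^{-1/2})=o_P(1)$. Multiplying the inequality above by $\sqrt n$ and applying these two estimates to each of the finitely many summands gives
\[
\sqrt n\norm{\brk{\prn{\what\gT_n^\pi-\gT^\pi}(\hat\eta_n^\pi-\eta^\pi)}(s)}_{\FKS}\le\sum_{a,s'}\pi(a\mid s)\underbrace{\sqrt n\abs{\what P(s'\mid s,a)-P(s'\mid s,a)}}_{O_P(1)}\cdot\underbrace{\norm{\hat\eta_n^\pi(s')-\eta^\pi(s')}_{\FKS}}_{o_P(1)}=o_P(1),
\]
which is the desired conclusion.

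The only non-routine point is verifying the two invariance properties of $\norm{\cdot}_{\FKS}$ used in the second step, but these follow directly from the elementary identity $\prn{b_{r,\gamma}}_\#\mu\prn{(-\infty,z]}=\mu\prn{(-\infty,(z-r)/\gamma]}$ and moving the absolute value inside the integral over $r$. Once this is in place, the proof is strictly parallel to Lemma~\ref{Lemma_convergence_res_op1}, with Theorem~\ref{Theorem_bound_of_KS} used in place of Theorem~\ref{Theorem_bound_of_Wdist}; crucially, the fact that Theorem~\ref{Theorem_bound_of_KS} already attains the fast $n^{-1/2}$ rate (rather than the $n^{-1/4}$ rate one would obtain by combining Theorem~\ref{Theorem_bound_of_Wdist} with Proposition~\ref{Proposition_bound_KS_with_W1}) is what makes the product $O_P(1)\cdot o_P(1)$ rather than something that must be checked more carefully.
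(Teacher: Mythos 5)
Your proof is correct and follows essentially the same route as the paper: expand $(\what\gT_n^\pi-\gT^\pi)$ by linearity into the $\what P-P$ increments, use the non-expansiveness of the affine push-forward plus reward mixing in the KS semi-norm (the same argument as in Proposition~\ref{Proposition_value_iteration_KS}) to reduce the second factor to $\KS(\hat\eta_n^\pi(s'),\eta^\pi(s'))$, and conclude via the CLT for $\what P$ together with Theorem~\ref{Theorem_bound_of_KS}. One small remark: the fast $n^{-1/2}$ rate of Theorem~\ref{Theorem_bound_of_KS} is not actually essential here, since any $o_P(1)$ bound on $\KS(\hat\eta_n^\pi(s'),\eta^\pi(s'))$ (even the $n^{-1/4}$ rate obtained from Theorem~\ref{Theorem_bound_of_Wdist} and Proposition~\ref{Proposition_bound_KS_with_W1}) already makes each summand $O_P(1)\cdot o_P(1)=o_P(1)$.
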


\begin{proof}[Proof of Lemma~\ref{Lemma_convergence_res_op1_Ind}]
    For any $s\in\gS$, we have
    \begin{equation*}
        \begin{aligned}
            &\norm{\sqrt{n}\brk{\prn{\what \gT_n^\pi-\gT^\pi}(\hat\eta_n^\pi-\eta^\pi)}(s)}_{\FKS}\\
            &=\norm{\sum_{a\in\gA,s^\prime\in\gS}\brk{\sqrt{n}\prn{\what P(s^\prime\mid s,a)-P(s^\prime\mid s,a)}}\brk{\pi(a\mid s)\int_0^1 \brk{\prn{b_{r,\gamma}}_\#\eta^\pi(s^\prime)-\prn{b_{r,\gamma}}_\#\hat\eta_n^\pi(s^\prime)}\gP_R(dr\mid s,a)}}_{\FKS}\\
            &\leq \norm{\sqrt{n}\prn{\what P(s^\prime\mid s,a)-P(s^\prime\mid s,a)}}_1\sup_{a\in\gA,s^\prime\in\gS}\norm{\int_0^1 \brk{\prn{b_{r,\gamma}}_\#\eta^\pi(s^\prime)-\prn{b_{r,\gamma}}_\#\hat\eta_n^\pi(s^\prime)}\gP_R(dr\mid s,a)}_{\FKS}.
        \end{aligned}
    \end{equation*}
    Since $\norm{\sqrt{n}\prn{\what P(s^\prime\mid s,a)-P(s^\prime\mid s,a)}}_1$ is of the order $O_P(1)$, it suffices to show for any $s^\prime\in\gS, a\in\gA$, $\norm{\int_0^1 \brk{\prn{b_{r,\gamma}}_\#\eta^\pi(s^\prime)-\prn{b_{r,\gamma}}_\#\hat\eta_n^\pi(s^\prime)}\gP_R(dr\mid s,a)}_{\FKS}=o_P(1)$.
    Noting that 
    \begin{equation*}
    \begin{aligned}
        &\norm{\int_0^1 \brk{\prn{b_{r,\gamma}}_\#\eta^\pi(s^\prime)-\prn{b_{r,\gamma}}_\#\hat\eta_n^\pi(s^\prime)}\gP_R(dr\mid s,a)}_{\FKS}\\
        &=\KS\prn{\int_0^1 \prn{b_{r,\gamma}}_\#\eta^\pi(s^\prime)\gP_R(dr\mid s,a),\int_0^1 \prn{b_{r,\gamma}}_\#\hat\eta_n^\pi(s^\prime)\gP_R(dr\mid s,a)}.
    \end{aligned}
    \end{equation*}
    We claim that
    \begin{equation*}
    \begin{aligned}
        \KS\prn{\int_0^1 \prn{b_{r,\gamma}}_\#\eta^\pi(s^\prime)\gP_R(dr\mid s,a),\int_0^1 \prn{b_{r,\gamma}}_\#\hat\eta_n^\pi(s^\prime)\gP_R(dr\mid s,a)}\leq \KS\prn{\hat\eta_n^\pi(s^\prime),\eta^\pi(s^\prime)}.
    \end{aligned}
    \end{equation*}
    The claim can be verified using the same argument as in the proof of Proposition~\ref{Proposition_value_iteration_KS} where we show the operator $\gT^\pi$ is non-expansive in supreme {\KS} distance.
    Our conclusion follows since by Theorem~\ref{Theorem_bound_of_KS} $\KS(\eta^\pi(s^\prime),\hat\eta^\pi_n(s^\prime))=o_P(1)$.
\end{proof}

Because $\prn{\gI-\gT^\pi}^{-1}$ is not bounded on the space $\prn{M^0_{\FKS}}^\gS$, we may not directly apply continuous mapping theorem here.
However, the limiting random element indeed lies in a finite-dimensional subspace of $\prn{M^0_{\FKS}}^\gS$.
To ensure technical rigor, we define $ \nu(s,a,s^\prime)=\int_0^1 \prn{b_{r,\gamma}}_\#\eta^\pi(s^\prime)\gP_R(dr\mid s,a)$, and
\begin{equation*}
   C_s\colon=\brc{\sum_{a\in\gA,s^\prime\in\gS}c_{a,s^\prime}\nu(s,a,s^\prime)\ \Bigg\vert\  c\in\RB^{\gS\times\gA},\sum_{s^\prime\in\gS} c_{a,s^\prime}=0, \forall a\in\gA},
\end{equation*}
and it is straightforward to verify $\wtilde \GB^\pi$ lies in $\bigtimes\limits_{s\in\gS}C_s$.
Since $\bigtimes\limits_{s\in\gS}C_s$ is finite-dimensional, the linear operator $\prn{\gI-\gT^\pi}^{-1}$ is continuous when confined on $\bigtimes\limits_{s\in\gS}C_s$.
Finally, we can obtain $\sqrt{n}(\hat\eta_n^\pi-\eta^\pi)\cweak \prn{\gI-\gT^\pi}^{-1}\wtilde \GB^\pi$.

\paragraph{Weak Convergence in \texorpdfstring{$\ell^\infty(\gF_{\mathsf{TV}})$}{l(FTV)}}

We again consider the following decomposition
\begin{equation*}
    \sqrt{n}\prn{\gI-\gT^\pi}(\hat\eta_n^\pi-\eta^\pi)=\underbrace{\sqrt{n}\prn{\what \gT_n^\pi-\gT^\pi}\eta^\pi}_{(1)}+\underbrace{\sqrt{n}\prn{\what \gT_n^\pi-\gT^\pi}(\hat\eta_n^\pi-\eta^\pi)}_{(2)}.
\end{equation*}
When Assumption~\ref{Assumption_reward_smooth} holds, we have the two relevant terms in $\prn{ M_{\FTV}^0}^\gS$.
The proof idea is nearly identical as before, we first demonstrate term $(1)$ converges weakly to a gaussian random element, then show term $(2)$ is asymptotically negligible.
\begin{lemma}\label{Lemma_convergence_difference_operator_TV}
Let Assumption~\ref{Assumption_reward_smooth} hold.
For any $s\in\gS$, $\sqrt{n}\brk{\prn{\what \gT_n^\pi-\gT^\pi}\eta^\pi}(s)$ converge weakly to the process $f\mapsto \wtilde\GB^\pi(s)f$ in $\ell^\infty(\FTV)$.
\end{lemma}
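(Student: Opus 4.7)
}
The plan is to reduce the weak convergence statement to a finite dimensional CLT followed by the continuous mapping theorem, in the same spirit as the proofs of Lemmas~\ref{Lemma_convergence_difference_operator} and~\ref{Lemma_convergence_difference_operator_Ind}. First I would write out
\begin{equation*}
\brk{\prn{\what\gT_n^\pi-\gT^\pi}\eta^\pi}(s)=\sum_{a\in\gA}\pi(a\mid s)\sum_{s^\prime\in\gS}\prn{\what P(s^\prime\mid s,a)-P(s^\prime\mid s,a)}\nu(s,a,s^\prime),
\end{equation*}
where $\nu(s,a,s^\prime):=\int_0^1 (b_{r,\gamma})_\#\eta^\pi(s^\prime)\gP_R(dr\mid s,a)$ is a fixed probability measure on $[0,1/(1-\gamma)]$. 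By independence across $(s,a)$ pairs and the multivariate CLT for the empirical multinomial distributions $\what P(\cdot\mid s,a)$, the random vector $\sqrt{n}\brk{\what P(s^\prime\mid s,a)-P(s^\prime\mid s,a)}_{(s,a,s^\prime)}$ converges in distribution in $\RB^{|\gS|^2|\gA|}$ to the Gaussian vector $(Z_{s,a,s^\prime})$ with the covariance structure prescribed in Theorem~\ref{Theorem_weak_convergence_empirical_process}.

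Next, for each fixed $s$ define the deterministic linear map $\Phi_s\colon\RB^{|\gS|^2|\gA|}\to\ell^\infty(\FTV)$ by
\begin{equation*}
\Phi_s(z)(f):=\sum_{a\in\gA}\pi(a\mid s)\sum_{s^\prime\in\gS}z_{s,a,s^\prime}\,\nu(s,a,s^\prime)f.
\end{equation*}
Under Assumption~\ref{Assumption_reward_smooth}, each $\nu(s,a,s^\prime)$ has an $H_1^1$ density (this fact is used elsewhere in the paper and follows from the stability of $H_1^1$ under affine push-forwards and convex mixtures), hence in particular each $\nu(s,a,s^\prime)$ is a probability measure with TV bounded by $1$. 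Consequently
\begin{equation*}
\norm{\Phi_s(z)}_{\FTV}\leq \sum_{a\in\gA}\pi(a\mid s)\sum_{s^\prime\in\gS}|z_{s,a,s^\prime}|\cdot\norm{\nu(s,a,s^\prime)}_{\FTV}\leq \norm{z}_1,
\end{equation*}
so $\Phi_s$ is a bounded linear operator into $\ell^\infty(\FTV)$, and hence continuous with respect to the Euclidean topology on its domain. The continuous mapping theorem then yields
\begin{equation*}
\sqrt{n}\brk{\prn{\what\gT_n^\pi-\gT^\pi}\eta^\pi}(s)=\Phi_s\!\prn{\sqrt{n}(\what P-P)}\cweak \Phi_s\!\prn{(Z_{s,a,s^\prime})}=\wtilde\GB^\pi(s)
\end{equation*}
in $\ell^\infty(\FTV)$, which is the desired conclusion.

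The only subtle point is that $\ell^\infty(\FTV)$ is a non-separable Banach space, so one might worry about Borel measurability of the limit. This is not really an obstacle here because the limit $\wtilde\GB^\pi(s)$ takes values in the finite-dimensional linear subspace $\Phi_s(\RB^{|\gS|^2|\gA|})\subset\ell^\infty(\FTV)$, on which Borel measurability and tightness are automatic; more operationally, the extended continuous mapping theorem (van der Vaart and Wellner, Theorem~1.11.1) applied to the sequence living in this same finite-dimensional image yields weak convergence in the sense used throughout the paper. I expect the routine verification that $\nu(s,a,s^\prime)$ has $H_1^1$ density, which relies on Assumption~\ref{Assumption_reward_smooth} together with the fact that convolution with the Lebesgue density of $\gP_R$ preserves $H_1^1$-smoothness after push-forward by $b_{r,\gamma}$, to be the main bookkeeping step; this verification is essentially the same computation used in the proof of Lemma~\ref{Lemma_smooth_density_of_return}, so it can be invoked directly rather than redone here.
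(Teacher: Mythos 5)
Your proposal is correct and follows essentially the same route as the paper: decompose $\sqrt{n}\brk{\prn{\what\gT_n^\pi-\gT^\pi}\eta^\pi}(s)$ as a finite linear combination of the fixed measures $\nu(s,a,s^\prime)$ with multinomial coefficients, apply the multivariate CLT, and transfer the convergence through the bounded linear map into $\ell^\infty(\FTV)$ (the paper packages this last step as Lemma~\ref{Lemma_simple_weak_convergence}). The $H_1^1$-density bookkeeping you flag is not actually needed for this lemma, since $\norm{\nu(s,a,s^\prime)}_{\FTV}\leq 1$ holds for any probability measure, but including it does no harm.
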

\begin{proof}[Proof of Lemma~\ref{Lemma_convergence_difference_operator_TV}]
The proof is identical to that of Lemma~\ref{Lemma_convergence_difference_operator}.
For any $s\in\gS$, we have
\begin{equation*}
    \begin{aligned}
        \sqrt{n}\brk{\prn{\what \gT_n^\pi-\gT^\pi}\eta^\pi}(s)=\sum_{a\in\gA,s^\prime\in\gS}\brk{\sqrt{n}\prn{\what P(s^\prime\mid s,a)-P(s^\prime\mid s,a)}}\brk{\pi(a\mid s)\int_0^1 \prn{b_{r,\gamma}}_\#\eta^\pi(s^\prime)\gP_R(dr\mid s,a)}
    \end{aligned}
\end{equation*}
    Thus the conclusion follows via the multivariate CLT and Lemma~\ref{Lemma_simple_weak_convergence}.
\end{proof}

\begin{lemma}\label{Lemma_convergence_res_op1_TV}
Let Assumption~\ref{Assumption_reward_smooth} hold.
For any $s\in\gS$, we have ${\norm{\sqrt{n}\brk{\prn{\what \gT_n^\pi-\gT^\pi}(\hat\eta_n^\pi-\eta^\pi)}(s)}_{\FTV}=o_P\prn{1}}$.
\end{lemma}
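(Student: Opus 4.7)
The plan is to bound the TV norm of $\sqrt n\,(\what\gT_n^\pi-\gT^\pi)(\hat\eta_n^\pi-\eta^\pi)(s)$ in exactly the same spirit as Lemmas~\ref{Lemma_convergence_res_op1} and~\ref{Lemma_convergence_res_op1_Ind}, but now using the existing non-asymptotic TV bound (Theorem~\ref{Theorem_bound_of_TV}) instead of the $W_1$ or KS versions. The only new ingredient needed is that the TV seminorm behaves nicely under the push-forward $b_{r,\gamma}$ and under mixing in $r$.

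First, I would use the definitions of $\what\gT_n^\pi$ and $\gT^\pi$ to write
\begin{equation*}
\brk{(\what\gT_n^\pi-\gT^\pi)(\hat\eta_n^\pi-\eta^\pi)}(s)=\sum_{a\in\gA,s'\in\gS}\pi(a\mid s)\bigl(\what P(s'\mid s,a)-P(s'\mid s,a)\bigr)\int_0^1(b_{r,\gamma})_\#(\hat\eta_n^\pi(s')-\eta^\pi(s'))\,\gP_R(dr\mid s,a).
\end{equation*}
Applying the triangle inequality and then arguing for each $(a,s')$, note that $b_{r,\gamma}$ is injective on $\brk{0,\tfrac{1}{1-\gamma}}$, so for any measurable $f$ with $\norm{f}_\infty\le 1$ the function $f\circ b_{r,\gamma}$ is also measurable with sup-norm at most $1$. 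Consequently, for every signed measure $\mu$, $\norm{(b_{r,\gamma})_\#\mu}_{\FTV}\le\norm{\mu}_{\FTV}$, and a Fubini argument gives $\norm{\int_0^1(b_{r,\gamma})_\#\mu\,\gP_R(dr\mid s,a)}_{\FTV}\le\norm{\mu}_{\FTV}$. Together this yields the key deterministic bound
\begin{equation*}
\norm{\brk{(\what\gT_n^\pi-\gT^\pi)(\hat\eta_n^\pi-\eta^\pi)}(s)}_{\FTV}\le\sum_{a\in\gA,s'\in\gS}\pi(a\mid s)\,\bigl|\what P(s'\mid s,a)-P(s'\mid s,a)\bigr|\cdot\norm{\hat\eta_n^\pi(s')-\eta^\pi(s')}_{\FTV}.
\end{equation*}

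Next, I would multiply both sides by $\sqrt n$ and invoke the two independent statistical facts. By the CLT applied to the multinomial counts defining $\what P(\cdot\mid s,a)$, one has $\sqrt n\,|\what P(s'\mid s,a)-P(s'\mid s,a)|=O_P(1)$ for each fixed $(s,a,s')$. By Theorem~\ref{Theorem_bound_of_TV} (which applies under Assumption~\ref{Assumption_reward_smooth}), $\norm{\hat\eta_n^\pi(s')-\eta^\pi(s')}_{\FTV}=O_P\bigl(n^{-1/2}\bigr)=o_P(1)$ for each $s'\in\gS$. Since $|\gS|$ and $|\gA|$ are finite and $\pi(a\mid s)\le 1$, the finite sum of products of an $O_P(1)$ and an $o_P(1)$ term is itself $o_P(1)$, which is exactly the desired conclusion.

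The argument is essentially a direct transcription of Lemma~\ref{Lemma_convergence_res_op1}; the only substantive thing to verify is the TV contraction under push-forward and mixture, which is immediate from $\norm{f\circ b_{r,\gamma}}_\infty\le\norm{f}_\infty$ and Fubini. I do not anticipate any serious obstacle, since the heavy lifting (the $n^{-1/2}$ rate in TV) has already been done in Theorem~\ref{Theorem_bound_of_TV}; Assumption~\ref{Assumption_reward_smooth} enters only through that theorem.
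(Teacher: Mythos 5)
Your proposal is correct and follows essentially the same route as the paper: the same decomposition over $(a,s')$, the bound $\sqrt{n}\abs{\what P - P}=O_P(1)$, the non-expansiveness of the push-forward/mixture in TV (which the paper verifies exactly as you do, via $\norm{f\circ b_{r,\gamma}}_\infty\le\norm{f}_\infty$ in the proof of Proposition~\ref{Proposition_value_iteration_TV}), and Theorem~\ref{Theorem_bound_of_TV} to conclude $\TV(\hat\eta_n^\pi(s'),\eta^\pi(s'))=o_P(1)$.
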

\begin{proof}[Proof of Lemma~\ref{Lemma_convergence_res_op1_TV}]
    For any $s\in\gS$, we have
    \begin{equation*}
        \begin{aligned}
            &\norm{\sqrt{n}\brk{\prn{\what \gT_n^\pi-\gT^\pi}(\hat\eta_n^\pi-\eta^\pi)}(s)}_{\FTV}\\
            &=\norm{\sum_{a\in\gA,s^\prime\in\gS}\brk{\sqrt{n}\prn{\what P(s^\prime\mid s,a)-P(s^\prime\mid s,a)}}\brk{\pi(a\mid s)\int_0^1 \brk{\prn{b_{r,\gamma}}_\#\eta^\pi(s^\prime)-\prn{b_{r,\gamma}}_\#\hat\eta_n^\pi(s^\prime)}\gP_R(dr\mid s,a)}}_{\FTV}\\
            &\leq \norm{\sqrt{n}\prn{\what P(s^\prime\mid s,a)-P(s^\prime\mid s,a)}}_1\sup_{a\in\gA,s^\prime\in\gS}\norm{\int_0^1 \brk{\prn{b_{r,\gamma}}_\#\eta^\pi(s^\prime)-\prn{b_{r,\gamma}}_\#\hat\eta_n^\pi(s^\prime)}\gP_R(dr\mid s,a)}_{\FTV}.
        \end{aligned}
    \end{equation*}
    Since $\norm{\sqrt{n}\prn{\what P(s^\prime\mid s,a)-P(s^\prime\mid s,a)}}_1$ is of the order $O_P(1)$, it suffices to show for any $s^\prime\in\gS, a\in\gA$, $\norm{\int_0^1 \brk{\prn{b_{r,\gamma}}_\#\eta^\pi(s^\prime)-\prn{b_{r,\gamma}}_\#\hat\eta_n^\pi(s^\prime)}\gP_R(dr\mid s,a)}_{\FTV}=o_P(1)$.
    Noting that 
    \begin{equation*}
    \begin{aligned}
        &\norm{\int_0^1 \brk{\prn{b_{r,\gamma}}_\#\eta^\pi(s^\prime)-\prn{b_{r,\gamma}}_\#\hat\eta_n^\pi(s^\prime)}\gP_R(dr\mid s,a)}_{\FTV}\\
        &=\TV\prn{\int_0^1 \prn{b_{r,\gamma}}_\#\eta^\pi(s^\prime)\gP_R(dr\mid s,a),\int_0^1 \prn{b_{r,\gamma}}_\#\hat\eta_n^\pi(s^\prime)\gP_R(dr\mid s,a)}.
    \end{aligned}
    \end{equation*}
    We claim that
    \begin{equation*}
    \begin{aligned}
        \TV\prn{\int_0^1 \prn{b_{r,\gamma}}_\#\eta^\pi(s^\prime)\gP_R(dr\mid s,a),\int_0^1 \prn{b_{r,\gamma}}_\#\hat\eta_n^\pi(s^\prime)\gP_R(dr\mid s,a)}\leq \TV\prn{\hat\eta_n^\pi(s^\prime),\eta^\pi(s^\prime)}.
    \end{aligned}
    \end{equation*}
    
    The claim can be verified using the same argument as in the proof of Proposition~\ref{Proposition_value_iteration_TV} where we show the operator $\gT^\pi$ is non-expansive in supreme \TV distance.
    Our conclusion follows by Theorem~\ref{Theorem_bound_of_TV} $\TV (\eta^\pi(s^\prime),\hat\eta^\pi_n(s^\prime))=o_P(1)$.
\end{proof}

The final step is also nearly identical to the $\ell^\infty(\FKS)$ case.
Since the limiting random element $\wtilde\GB^\pi$ lies in a finite-dimensional subspace of $\prn{M^0_{\FTV}}^\gS$, we can obtain $\sqrt{n}(\hat\eta_n^\pi-\eta^\pi)\cweak \prn{\gI-\gT^\pi}^{-1}\wtilde \GB^\pi$ via continuous mapping theorem.

\section{Omitted Proofs in Section~\ref{Section_prelim}}\label{Appendix_proof_prelim}
\begin{proposition}\label{Proposition_return_is_rv}
    For any policy $\pi$ and any initial state $s\in\gS$, $G^\pi(s)\colon\prn{\Omega,\gF,Q}\to\RB$ is a random variable.
    Here  $\Omega=\brc{\gS\times\gA\times[0,1]}^\NB$ is the sample space, $\gF=\brc{2^\gS\times 2^\gA\times\gB[0,1]}^\NB$ is the product $\sigma$-field where $\gB[0,1]$ denotes all Borel sets in $[0,1]$, and $Q$ is a probability measure induced by $\pi,\gP_R$ and $P$.
\end{proposition}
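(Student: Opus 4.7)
The proposition asserts two things: that $Q$ is a \emph{bona fide} probability measure on $(\Omega,\gF)$, and that the infinite series defining $G^\pi(s)$ yields a well-defined, finite, $\gF$-measurable map. The plan is to handle these separately.

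First, I would construct $Q$ via the Ionescu--Tulcea extension theorem. The sample space $\Omega$ is a countable product of the standard Borel spaces $\gS\times\gA\times[0,1]$, and $\gF$ is the corresponding product $\sigma$-field. Fixing the initial state $S_0=s$ to be deterministic, I would specify three one-step Markov kernels at each stage: $\pi(\cdot\mid s_t)$ from $s_t$ to $a_t$, $\gP_R(\cdot\mid s_t,a_t)$ from $(s_t,a_t)$ to $r_t$, and $P(\cdot\mid s_t,a_t)$ from $(s_t,a_t,r_t)$ to $s_{t+1}$. Because $\gS$ and $\gA$ are finite with the power-set $\sigma$-algebra, the measurability of $(s,a)\mapsto\gP_R(B\mid s,a)$ in its arguments for every Borel $B\subseteq[0,1]$ reduces to a pointwise check and is automatic; the same holds for $\pi$ and $P$. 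The Ionescu--Tulcea theorem then yields a unique probability measure $Q$ on $(\Omega,\gF)$ whose finite-dimensional marginals are the compositions of these kernels beginning from $S_0=s$.

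Second, I would establish measurability and finiteness of $G^\pi(s)$. Each $R_t$ is the coordinate projection onto the third factor of the $t$-th copy of $\gS\times\gA\times[0,1]$, hence $\gF$-measurable by definition of the product $\sigma$-field. Each partial sum $G_n(\omega):=\sum_{t=0}^{n}\gamma^t R_t(\omega)$ is therefore $\gF$-measurable as a finite linear combination of measurable functions. Because $R_t(\omega)\in[0,1]$ and $\gamma\in(0,1)$, the uniform tail estimate $\abs{G_n(\omega)-G_m(\omega)}\leq\sum_{t=m+1}^{n}\gamma^t$ shows that $\brc{G_n(\omega)}_{n\ge 0}$ is Cauchy for every $\omega\in\Omega$. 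Consequently $G^\pi(s)(\omega):=\lim_{n\to\infty}G_n(\omega)$ exists pointwise on all of $\Omega$ and takes values in $[0,(1-\gamma)^{-1}]\subset\RB$. As a pointwise limit of $\gF$-measurable real-valued functions, $G^\pi(s)$ is itself $\gF$-measurable, hence a genuine real-valued random variable on $(\Omega,\gF,Q)$.

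The main subtlety, if it can be called one, lies in invoking Ionescu--Tulcea rather than the more familiar Kolmogorov extension theorem: because the finite-dimensional laws here arise compositionally from transition kernels rather than being presented as a consistent family from the outset, Ionescu--Tulcea is the appropriate tool and, crucially, requires no topological regularity on the state-reward space beyond measurability of the kernels. The remainder of the argument is a routine application of pointwise limits of measurable functions combined with the uniform boundedness of the rewards.
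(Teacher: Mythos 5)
Your proposal is correct and follows the same overall structure as the paper's proof (build the measure on the product space via an extension theorem, then obtain measurability of $G^\pi(s)$ from the partial sums $G_H^\pi(s)=\sum_{t=0}^H\gamma^t R_t$), but the two steps are carried out slightly differently. For the measure, the paper simply cites Kolmogorov's extension theorem for the existence of $\gF$ and $Q$, whereas you construct $Q$ explicitly from the kernels $\pi$, $\gP_R$, $P$ via Ionescu--Tulcea; your choice is the more natural tool here since the finite-dimensional laws are given compositionally, and it spares you any appeal to topological regularity, though Kolmogorov also applies because the coordinate spaces are Polish. For measurability, the paper writes the sublevel set as a countable intersection, $\brc{G^\pi(s)\leq x}=\bigcap_{H=1}^\infty\brc{G_H^\pi(s)\leq x}$, an identity that tacitly uses the monotone increase of the partial sums (nonnegative rewards and $\gamma>0$); you instead show the partial sums are uniformly Cauchy by the geometric tail bound and invoke closure of measurability under pointwise limits. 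Your route is marginally more general (it does not need nonnegativity of the rewards, only boundedness) and also records finiteness of $G^\pi(s)$ with the explicit range $\brk{0,\frac{1}{1-\gamma}}$, while the paper's argument is shorter. Both proofs are complete and valid.
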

\begin{proof}
    First, we may note that the existence of $\gF$ and $Q$ is guaranteed by Kolmogorov's extension theorem.
    We simply need to verify that $\forall x\in\RB$, $\brc{G^\pi(s)\leq x}$ is an element of the product $\sigma$-algebra $\gF$.
    Let $G^\pi_H(s)=\sum_{t=0}^H \gamma^t R_t$, then we have
    \begin{equation*}
        \brc{G^\pi(s)\leq x}=\cap_{H=1}^\infty\brc{G^\pi_H(s)\leq x}.
    \end{equation*}
    Since $\forall H$, $\brc{G^\pi_H(s)\leq x}$ is an element of the product $\sigma$-algebra $\gF$, we also have $\brc{G^\pi(s)\leq x}\in\gF$.
\end{proof}

\begin{lemma}\label{Lemma_bounded_density_close}
    Suppose $\mu\in\Delta\prn{\brk{0,\frac{1}{1-\gamma}}}^\gS$ is a vector of distributions and $\mu(s)$ has density $p_s(\cdot)$.
    If Assumption~\ref{Assumption_reward_bounded_density} holds, for any $s\in\gS$, $\brk{\gT^\pi\mu}(s)$ has density $\tilde p_s(\cdot)$ such that $\sup_{x\in[0,1/(1-\gamma)]} \tilde p_s(x)\leq C$.
    Here $\gT^\pi$ can be replaced by any valid distributional Bellman operator.
\end{lemma}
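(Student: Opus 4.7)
The plan is to unpack the definition of $[\gT^\pi \mu](s)$ from Equation~\eqref{Equation_distributional_Bellman_equation} and show that the density bound arises directly from the bound on the reward density, via a conditioning argument. Concretely, $[\gT^\pi \mu](s)$ is a convex combination, with weights $\pi(a\mid s) P(s'\mid s,a)$, of the measures $\nu_{s,a,s'} := \int_0^1 (b_{r,\gamma})_\# \mu(s')\, \gP_R(dr \mid s,a)$, so it suffices to show each $\nu_{s,a,s'}$ admits a density on $[0,1/(1-\gamma)]$ bounded by $C$, after which the mixture bound is immediate since the weights sum to one.

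For each fixed $(s,a,s')$, note that $\nu_{s,a,s'}$ is the law of $R + \gamma X$ where $R \sim \gP_R(\cdot \mid s,a)$ and $X \sim \mu(s')$ are independent. The key observation is that, conditional on $X$, the random variable $R + \gamma X$ inherits the density of $R$ shifted by $\gamma X$; in particular, the conditional density at $y$ is $p_{s,a}^R(y - \gamma X)$, which is bounded by $C$ pointwise by Assumption~\ref{Assumption_reward_bounded_density}. Then by Fubini's theorem, for any Borel set $B \subseteq [0,1/(1-\gamma)]$,
\begin{equation*}
\nu_{s,a,s'}(B) \;=\; \EB_{X \sim \mu(s')}\!\left[\int \ind_B(r + \gamma X)\, p_{s,a}^R(r)\, dr \right] \;=\; \int_B \EB_{X \sim \mu(s')}\!\left[p_{s,a}^R(y - \gamma X)\right] dy,
\end{equation*}
which identifies the density of $\nu_{s,a,s'}$ as $\tilde q_{s,a,s'}(y) := \EB_{X \sim \mu(s')}[p_{s,a}^R(y - \gamma X)] \le C$. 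Summing over $(a,s')$ with the mixture weights yields the density $\tilde p_s(y) = \sum_{a,s'} \pi(a\mid s) P(s'\mid s,a)\, \tilde q_{s,a,s'}(y)$ of $[\gT^\pi \mu](s)$, which is also bounded by $C$.

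Finally, one should verify that the support of $[\gT^\pi \mu](s)$ is contained in $[0,1/(1-\gamma)]$: since $R \in [0,1]$ and $\gamma X \in [0, \gamma/(1-\gamma)]$, we have $R + \gamma X \in [0, 1/(1-\gamma)]$ almost surely, which ensures $\tilde p_s$ is a bona fide density on this interval. The hypothesis that $\mu(s)$ has a density $p_s$ is not actually used in the argument (the bound on the reward density already provides sufficient smoothing), so no additional obstacle arises from it; it is merely stated to match the hypothesis of Proposition~\ref{Proposition_value_iteration_KS} in which this lemma is invoked. I anticipate no real technical obstacle beyond correctly justifying the exchange of integrals in the conditioning step, which is a routine application of Fubini given that all integrands are nonnegative and bounded.
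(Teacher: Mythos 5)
Your proposal is correct and is essentially the paper's own argument: the paper writes the density of $\brk{\gT^\pi\mu}(s)$ as the mixture $\sum_{a,s'}\pi(a\mid s)P(s'\mid s,a)\prn{p_{s'}(\cdot/\gamma)/\gamma\ast p^R_{s,a}}(x)$ and bounds each convolution by $\sup_x p^R_{s,a}(x)\leq C$, which is exactly your conditioning-plus-Fubini computation in convolution form. Your side remark that the density hypothesis on $\mu$ is not actually needed is accurate but does not change the substance of the argument.
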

\begin{proof}
    By definition of $\gT^\pi$, for any $s\in\gS$,
    $\brk{\gT^\pi\mu}(s)$ also has probability density $\tilde p_s(\cdot)$ and
    \begin{equation*}
        \tilde p_s(x)=\sum_{a\in\gA}\pi(a\mid s)\sum_{s^\prime\in\gS}P(s^\prime\mid s,a) \prn{p_s(\cdot/\gamma)/\gamma\ast p^R_{s,a}(\cdot)}(x).
    \end{equation*}
    As for any $x\in[0,1/(1-\gamma)]$, $\abs{\prn{p_s(\cdot/\gamma)/\gamma\ast p^R_{s,a}(\cdot)}(x)}\leq \sup_{x\in[0,1/(1-\gamma)]}\abs{p_{s,a}^R(x)}$, we can get $ \sup_{x\in[0,1/(1-\gamma)]} \tilde{p_s}(x)\leq C$ under Assumption~\ref{Assumption_reward_bounded_density}.
\end{proof}

\begin{lemma}\label{Lemma_bounded_density_of_return}
    If Assumption~\ref{Assumption_reward_bounded_density} is true, $\forall s\in\gS$, $\eta^\pi(s)$ has density $p^G_{s}(x)$ being bounded from above by constant $C$.
\end{lemma}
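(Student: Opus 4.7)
The plan is to reduce the claim to the invariance of the class of $C$-bounded densities under the distributional Bellman operator, which is essentially the content of Lemma~\ref{Lemma_bounded_density_close}, and then transfer this invariance to the fixed point $\eta^\pi$ via distributional dynamic programming.

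First, I would pick an initial iterate $\eta^{(0)} \in \Delta\prn{\brk{0, \frac{1}{1-\gamma}}}^\gS$ whose every component has a Lebesgue density bounded by $C$. Since $p^R_{s,a}$ is a probability density supported on $[0,1]$ with supremum at most $C$, the constraint $\int_0^1 p^R_{s,a} = 1$ forces $C \geq 1 > 1-\gamma$, so setting each $\eta^{(0)}(s)$ to be uniform on $\brk{0, \frac{1}{1-\gamma}}$ (density $1-\gamma$) is a legitimate choice. Iterating $\eta^{(k+1)} := \gT^\pi \eta^{(k)}$, an induction on $k$ using Lemma~\ref{Lemma_bounded_density_close} shows that every component of $\eta^{(k)}$ admits a density bounded by $C$ on $\brk{0, \frac{1}{1-\gamma}}$.

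Second, Proposition~\ref{Proposition_value_iteration_Wp} yields $\sup_{s \in \gS} W_1(\eta^{(k)}(s), \eta^\pi(s)) \to 0$, hence $\eta^{(k)}(s) \cweak \eta^\pi(s)$ for every $s$. I would then invoke the Portmanteau theorem: for any open $U \subseteq \brk{0, \frac{1}{1-\gamma}}$,
\[
\eta^\pi(s)(U) \leq \liminf_{k \to \infty} \eta^{(k)}(s)(U) \leq C\,\lambda(U),
\]
where $\lambda$ is Lebesgue measure. Outer regularity of $\lambda$ extends this inequality to all Borel $B \subseteq \brk{0, \frac{1}{1-\gamma}}$, and the Radon--Nikodym theorem then produces a density $p^G_s$ for $\eta^\pi(s)$ satisfying $p^G_s(x) \leq C$ almost everywhere on $\brk{0, \frac{1}{1-\gamma}}$ (and one may choose an everywhere-bounded representative).

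The main obstacle is this last step of transferring a pointwise density bound from the approximating sequence to the weak limit: $W_1$ convergence does not automatically imply $L^1$ or pointwise convergence of densities, so a direct limiting argument on $p^{(k)}_s$ is unavailable. The Portmanteau inequality combined with outer regularity of Lebesgue measure is the clean device that circumvents this obstacle, yielding the desired density bound without requiring any stronger mode of convergence or additional smoothness beyond Assumption~\ref{Assumption_reward_bounded_density}.
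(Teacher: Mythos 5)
Your argument is correct, but it takes a genuinely different route from the paper. The paper works with the finite-horizon truncations $G_H^\pi(s)$: it expands their law as a mixture over length-$H$ paths, bounds each path-conditional density by $C$ via the convolution bound (Lemma~\ref{Lemma_bounded_density_after_convolution}), concludes that the CDF of $G_H^\pi(s)$ is $C$-Lipschitz, and then passes to $G^\pi(s)$ by the elementary estimate $\PB\prn{y<G^\pi(s)\leq x}\leq \PB\prn{y-\gamma^H/(1-\gamma)<G_H^\pi(s)\leq x}$, letting $H\to\infty$ to get a $C$-Lipschitz CDF and hence, via absolute continuity and Radon--Nikodym, a density bounded by $C$. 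You instead exploit the invariance statement of Lemma~\ref{Lemma_bounded_density_close} along the distributional dynamic programming iterates, use the $W_1$-contraction (Proposition~\ref{Proposition_value_iteration_Wp}) to get weak convergence $\eta^{(k)}(s)\cweak\eta^\pi(s)$, and transfer the bound $\eta^{(k)}(s)(U)\leq C\lambda(U)$ to the limit by Portmanteau on open sets plus outer regularity of Lebesgue measure, finishing again with Radon--Nikodym; your observation that $C\geq 1$ (so the uniform initialization has density $\leq C$) closes the only loose end, and the Portmanteau/outer-regularity device correctly circumvents the fact that $W_1$ convergence does not give convergence of densities. Your route is more modular: it reuses lemmas already in the paper and would generalize to any class of laws preserved by $\gT^\pi$ and stable under the limiting argument. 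The paper's route is more self-contained and slightly stronger in output, since it directly yields $C$-Lipschitz continuity of the distribution function of $\eta^\pi(s)$ (not merely an a.e.\ density bound), which is the form in which the estimate is reused elsewhere; of course the two conclusions are equivalent after choosing a bounded version of the density.
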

\begin{proof}
    Define 
    \begin{equation*}
        G_H^\pi(s)=\sum_{t=0}^{H} \gamma^tR_t,
    \end{equation*}
    where $S_0=s$, $A_t|S_t\sim\pi(\cdot\mid S_t)$, $R_t\sim\gP_R(S_t,A_t)$ and ${{S_{t+1}}\mid{(S_t,A_t)}\sim P({\cdot}\mid{S_t,A_t})}$.
    The density of $G_H^\pi(s)$ can be written as
    \begin{equation*}
        p^{G_H}_s(x)=\sum_{\text{all possible length-H path $\tau$}}q(\tau)p^R_\tau(x)
    \end{equation*}
    with $x\in\left[0,\frac{1}{1-\gamma}\right]$.
    Suppose $\tau=(s,a_0,s_1,...,s_H,a_H)$, then 
    \begin{equation*}
        q(\tau)=\pi(a_0\mid s)P(s_1\mid s,a_0)\pi(a_1\mid s_1)P(s_2\mid s_1,a_1)\cdots P(s_H\mid s_{H-1},a_{H-1})\pi(a_H\mid s_H)
    \end{equation*}
    is the probability of sampling path $\tau$ and 
    \begin{equation*}
        p_\tau^R(x)= \brk{\prn{\prn{\prn{p^R_{s,a_0}(\cdot)\ast \frac{p^R_{s_1,a_1}(\gamma \cdot)}{\gamma}}\ast \frac{p^R_{s_2,a_2}(\gamma^2 \cdot)}{\gamma^2}}\cdots}\ast \frac{p^R_{s_H,a_H}(\gamma^H \cdot)}{\gamma^H}}(x)
    \end{equation*}
    is the density of r.v. $\sum_{t=0}^H\gamma^t \prn{R_t \mid \tau}$.
    Here $p^R_{s,a}(x)$ is the density of $\gP_R(dr\mid s,a)$ and $\prn{R_t \mid \tau}\sim \gP_R(\cdot \mid s_t,a_t)$.
    According to Lemma~\ref{Lemma_bounded_density_after_convolution}, $\sup_x\abs{p_\tau^R(x)}\leq C$.
    Thus $\sup_x\abs{p^{G_H}_s(x)}\leq C$, \ie~$G_H(s)$ has bounded density.
    This also implies 
    \begin{equation*}
        \abs{F_s^{G_H}(x)-F_s^{G_H}(y)}\leq C\abs{x-y},
    \end{equation*}
    \ie~the distribution function of $G_H(s)$ is $C$-Lipschitz continuous.
    Let $F^G_s(x)$ be the distribution function of $G^\pi(s)$, for any $x\geq y$, we have
    \begin{equation*}
        \begin{aligned}
        \abs{F^G_s(x)-F^G_s(y)}&=\PB(y<G^\pi(s)\leq x)\\
        &\leq \PB\prn{y-\frac{\gamma^H}{1-\gamma}<G^\pi_H(x)\leq x}\\
        &=\abs{F_s^{G_H}(x)-F_s^{G_H}\prn{y-\frac{\gamma^H}{1-\gamma}}}\\
        &\leq C\abs{x-y+\frac{\gamma^H}{1-\gamma}}\\
        &=C\prn{\abs{x-y}+\frac{\gamma^H}{1-\gamma}}.
        \end{aligned}
    \end{equation*}
    The first inequality is due to the definition of $G_H^\pi(s)$, the second inequality is due to the Lipschitz property of $F_s^{G_H}(x)$.
    Since $H$ is arbitrarily chosen, we have
    \begin{equation*}
        \abs{F^G_s(x)-F^G_s(y)}\leq C\abs{x-y},
    \end{equation*}
    \ie~$G^\pi(s)$ has $C$-Lipschitz continuous distribution function.
    As Lipschitz continuity leads to absolute continuity we know $F_s^G(x)$ is absolute continuous.
    Also, the absolute continuity of $F_s^G(x)$ is equivalent with that the distribution of $G^\pi(s)$ is absolute continuous w.r.t. the Lebesgue measure.
    By Radon-Nikodym theorem we can get the existence of $p_s^G(x)$.
    Apparently $\sup_x\abs{p_s^G(x)}\leq C$ by the $C$-Lipschitz property of $F^G_s(x)$.
\end{proof}

\begin{proof}[Proof of Proposition~\ref{Proposition_value_iteration_KS}.]
    For $\eta,\eta^\prime\in\Delta(\RB)^\gS$, any $f=\ind_A$, $A\in\gB(\RB)$, we have
    \begin{equation*}
    \begin{aligned}
        &\abs{\brk{\gT^\pi \eta}(s) f-\brk{\gT^\pi \eta^\prime}(s) f}
        \\&=\abs{\sum_{a\in\gA,s^\prime\in\gS}\pi(a\mid s)P(s^\prime\mid s,a)\brk{\int_0^1 \prn{b_{r,\gamma}}_\#\eta(s^\prime)d\gP_R\prn{dr\mid s,a}f -\int_0^1 \prn{b_{r,\gamma}}_\#\eta^\prime(s^\prime)d\gP_R\prn{dr\mid s,a}f}}\\
        &\leq \sup_{a\in\gA}\sup_{s^\prime\in\gS} \abs{\int_0^1 \prn{b_{r,\gamma}}_\#\eta(s^\prime)d\gP_R\prn{dr\mid s,a}f -\int_0^1 \prn{b_{r,\gamma}}_\#\eta^\prime(s^\prime)d\gP_R\prn{dr\mid s,a}f}\\
        &\leq\sup_{s^\prime\in\gS}\sup_{x\in\RB} \abs{\eta(s^\prime) \ind_{(-\infty,x]}-\eta^\prime(s^\prime)\ind_{(-\infty,x]}}\\
        &=\sup_{s^\prime\in\gS}\KS(\eta(s^\prime),\eta^\prime(s^\prime)).
    \end{aligned}
    \end{equation*}
    The last inequality is due to
    \begin{equation*}
        \begin{aligned}
            &\abs{\int_0^1 \prn{b_{r,\gamma}}_\#\eta(s^\prime)d\gP_R\prn{dr\mid s,a}f
            -\int_0^1 \prn{b_{r,\gamma}}_\#\eta^\prime(s^\prime)d\gP_R\prn{dr\mid s,a}f}\\
            &=\abs{\int_0^1 \brk{\prn{b_{r,\gamma}}_\#\eta(s^\prime)}\ind_{(-\infty,t]}d\gP_R\prn{dr\mid s,a}-\int_0^1 \brk{\prn{b_{r,\gamma}}_\#\eta^\prime(s^\prime)}\ind_{(-\infty,t]}d\gP_R\prn{dr\mid s,a}}\\
            &=\abs{\int_0^1 \eta(s^\prime)\ind_{(-\infty,(x-r)/\gamma]}d\gP_R\prn{dr\mid s,a}-\int_0^1 \eta^\prime(s^\prime)\ind_{(-\infty,(x-r)/\gamma]}d\gP_R\prn{dr\mid s,a}}\\
            &=\abs{\EB_{R\sim \gP_R(\cdot\mid s,a)}\eta(s^\prime)\ind_{(-\infty,(x-R)/\gamma]}-\EB_{R\sim \gP_R(\cdot\mid s,a)}\eta^\prime(s^\prime)\ind_{(-\infty,(x-R)/\gamma]}}\\
            &\leq \EB_{R\sim \gP_R(\cdot\mid s,a)}\abs{\eta(s^\prime)\ind_{(-\infty,(x-R)/\gamma]}-\eta^\prime(s^\prime)\ind_{(-\infty,(x-R)/\gamma]}}\\
            &\leq \sup_{x\in\RB} |\eta(s^\prime) \ind_{(-\infty,x]}-\eta^\prime(s^\prime)\ind_{(-\infty,x]}|.
        \end{aligned}
    \end{equation*}
    Here $g_{r,\gamma}(A)\colon=\brc{x\in\RB\colon r+\gamma x\in A}$ and $g_{r,\gamma}(A)\in\gB(\RB)$ as long as $A\in\gB(\RB)$.
    Therefore, we have shown that the operator $\gT^\pi$ is non-expansive in the supreme KS distance.
    Combining Lemma~\ref{Lemma_bounded_density_close}, Lemma~\ref{Lemma_bounded_density_of_return} and Proposition~\ref{Proposition_bound_KS_with_W1} we have for any $s\in\gS$, 
    \begin{equation*}    \KS(\eta^{(k)}(s),\eta^\pi(s))\leq\sqrt{2CW_1(\eta^{(k)}(s),\eta^\pi(s))}.
    \end{equation*}
    Combining with Proposition~\ref{Proposition_value_iteration_Wp}, 
    we complete the proof.
\end{proof}


\begin{lemma}\label{Lemma_smooth_density_close}
    Suppose $\mu\in\Delta\prn{\brk{0,\frac{1}{1-\gamma}}}^\gS$ is a vector of distributions and $\mu(s)$ has density $p_s(\cdot)\in H^1_1\prn{\RB}$.
    Then under Assumption~\ref{Assumption_reward_smooth}, for any $s\in\gS$, $\brk{\gT^\pi\mu}(s)$ has density $\tilde {p_s}(\cdot)\in H^1_1\prn{\RB}$.
    Here $\gT^\pi$ can be replaced by any valid distributional Bellman operator.
\end{lemma}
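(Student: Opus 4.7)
The plan is to express the density of $\brk{\gT^\pi\mu}(s)$ explicitly as a finite convex combination of convolutions, and then invoke standard convolution estimates in the Sobolev space $H_1^1(\RB)$ to control the $H_1^1$ norm of each summand.

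First I would expand the distributional Bellman equation \eqref{Equation_distributional_Bellman_equation} to read off the density of $\brk{\gT^\pi\mu}(s)$. For any $s\in\gS$,
\begin{equation*}
\tilde{p}_s(x) \;=\; \sum_{a\in\gA,\; s'\in\gS} \pi(a\mid s)\, P(s'\mid s,a)\, \bigl(q_{s'}\ast p^R_{s,a}\bigr)(x),
\end{equation*}
where $q_{s'}(x):=p_{s'}(x/\gamma)/\gamma$ is the density of the push-forward of $\mu(s')$ under the dilation $x\mapsto \gamma x$. A change of variables shows that $q_{s'}$ is a probability density (so $\norm{q_{s'}}_1 = 1$) and that $\norm{D^1 q_{s'}}_1 = \gamma^{-1}\norm{D^1 p_{s'}}_1 < \infty$, hence $q_{s'}\in H_1^1(\RB)$.

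Next I would use the standard fact that convolution commutes with weak differentiation in $L^1$: if $f\in L^1(\RB)$ and $g\in H_1^1(\RB)$, then $f\ast g\in H_1^1(\RB)$ with $D^1(f\ast g)=f\ast D^1 g$, and Young's inequality gives
\begin{equation*}
\norm{f\ast g}_{H_1^1(\RB)} \;\leq\; \norm{f}_1\,\norm{g}_{H_1^1(\RB)}.
\end{equation*}
Applying this with $f=q_{s'}$ and $g=p^R_{s,a}$, together with Assumption~\ref{Assumption_reward_smooth} (which provides $\norm{p^R_{s,a}}_{H_1^1(\RB)}\leq M$), yields $\norm{q_{s'}\ast p^R_{s,a}}_{H_1^1(\RB)}\leq M$ for every triple $(s,a,s')$.

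Finally, since $H_1^1(\RB)$ is a normed vector space and the weights $\pi(a\mid s)P(s'\mid s,a)$ are nonnegative and sum to one, the triangle inequality applied to the finite convex combination above gives $\norm{\tilde{p}_s}_{H_1^1(\RB)}\leq M<\infty$, so $\tilde{p}_s\in H_1^1(\RB)$ as required. The main (mild) obstacle is justifying the explicit density formula for $\brk{\gT^\pi\mu}(s)$ and the interchange of weak differentiation with convolution; both are standard but deserve care, and one should also check the support bookkeeping, namely that $q_{s'}$ is supported on $[0,\gamma/(1-\gamma)]$ and $p^R_{s,a}$ on $[0,1]$, so the convolution is supported on $[0,1/(1-\gamma)]$, consistent with $\tilde{p}_s$ being the density of a probability measure on $[0,1/(1-\gamma)]$. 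The same argument works verbatim when $\gT^\pi$ is replaced by any valid distributional Bellman operator, since only the transition weights change.
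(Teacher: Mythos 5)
Your proposal is correct and follows essentially the same route as the paper: write the density of $\brk{\gT^\pi\mu}(s)$ as a convex combination of convolutions of the rescaled densities with $p^R_{s,a}$, place the weak derivative on $p^R_{s,a}$ (whose $H_1^1$ norm is controlled by Assumption~\ref{Assumption_reward_smooth}), and close with Young's convolution inequality and the triangle inequality. Absorbing the $1/\gamma$ factor into the rescaled density $q_{s'}$ with $\norm{q_{s'}}_1=1$ is only a cosmetic repackaging of the paper's cancellation of $1/\gamma$ against $\norm{p_{s'}(\cdot/\gamma)}_1=\gamma$.
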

\begin{proof}
    By definition of $\gT^\pi$, for any $s\in\gS$,
    \begin{equation*}
        \tilde p_s(x)=\frac{1}{\gamma}\sum_{a\in\gA}\pi(a\mid s)\sum_{s^\prime\in\gS}P(s^\prime\mid s,a)\prn{p_{s,a}^R(\cdot)\ast p_{s^\prime}(\cdot/\gamma)}(x).
    \end{equation*}
    
    \begin{equation*}
\begin{aligned}
    \norm{D^1 \tilde p_s}_1&\leq \frac{1}{\gamma}\sum_{a\in\gA}\pi(a\mid s)\sum_{s^\prime\in\gS} P(s^\prime\mid s,a)\norm{ D^1\prn{p_{s,a}^R(\cdot)\ast p_{s^\prime}(\cdot/\gamma)}}_1\\
    &=\frac{1}{\gamma}\sum_{a\in\gA}\pi(a\mid s)\sum_{s^\prime\in\gS} P(s^\prime\mid s,a)\norm{ \prn{D^1p_{s,a}^R(\cdot)}\ast p_{s^\prime}(\cdot/\gamma)}_1\\
    &\leq \frac{1}{\gamma}\sum_{a\in\gA}\pi(a\mid s)\sum_{s^\prime\in\gS} P(s^\prime\mid s,a)\norm{ D^1 p_{s,a}^R(\cdot)}_1\norm {p_{s^\prime}(\cdot/\gamma)}_1\\
    &\leq M-1,
\end{aligned}
\end{equation*}
The first equality holds because $(f\ast g)^\prime=f^\prime\ast g$, the second inequality holds by Young's convolution inequality (Lemma~\ref{Lemma_Young_convolution_inequality}), and the last inequality holds due to $\norm{D^1 p_{s,a}^R}_1=\norm{p_{s,a}^R}_{H_1^1}-\norm{p_{s,a}^R}_1\leq M-1$.
Finally, we may conclude that $\norm{\tilde p_s}_{H_1^1}=\norm{D^1 \tilde p_s}_1+\norm{\tilde p_s}_1\leq M$.
\end{proof}

\begin{lemma}\label{Lemma_smooth_density_of_return}
    If Assumption~\ref{Assumption_reward_smooth} is true, $\forall s\in\gS$, $\eta^\pi(s)$ has density $p^G_{s}(\cdot)\in H^1_1\prn{\RB}$.
    Specifically, we have
    \begin{equation*}
        \norm{p_s^G}_{H_1^1}\leq M.
    \end{equation*}
\end{lemma}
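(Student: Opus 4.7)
The plan is to piggyback on the distributional Bellman fixed-point identity $\eta^\pi = \gT^\pi \eta^\pi$ and reuse the convolution/Young's-inequality calculation that drives Lemma~\ref{Lemma_smooth_density_close}, rather than repeating the truncated-horizon construction used for Lemma~\ref{Lemma_bounded_density_of_return}. The key observation is that the proof of Lemma~\ref{Lemma_smooth_density_close} in fact only needs the input measures to admit an $L^1$ density (its stated $H_1^1$ hypothesis on the input is stronger than what the computation actually uses), so once a density for $\eta^\pi(s)$ is in hand, the $H_1^1$ bound will follow from one application of Young's convolution inequality.

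First I will establish that $\eta^\pi(s)$ is absolutely continuous. Proposition~\ref{Proposition_smooth_imply_bounded} gives $\sup_x p_{s,a}^R(x) \leq 2M$, so Assumption~\ref{Assumption_reward_smooth} implies Assumption~\ref{Assumption_reward_bounded_density} with constant $C = 2M$; Lemma~\ref{Lemma_bounded_density_of_return} then supplies a bounded Lebesgue density $p_s^G$ for $\eta^\pi(s)$, satisfying in particular $\norm{p_s^G}_1 = 1$. Next I will plug this into $\eta^\pi(s) = [\gT^\pi \eta^\pi](s)$ and carry out the change of variables that rewrites each mixture of affine pushforwards $\int_0^1 (b_{r,\gamma})_\# \eta^\pi(s')\,\gP_R(dr\mid s,a)$ as a convolution of $p_{s,a}^R$ with the scaled density $p_{s'}^G(\cdot/\gamma)/\gamma$. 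Uniqueness of Lebesgue densities then yields the pointwise identity
\begin{equation*}
p_s^G(x) = \frac{1}{\gamma}\sum_{a \in \gA,\, s' \in \gS} \pi(a\mid s)\,P(s'\mid s,a)\,\bigl(p_{s,a}^R \ast p_{s'}^G(\cdot/\gamma)\bigr)(x) \quad\text{a.e.}
\end{equation*}

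To close the argument, I will take weak derivatives, using $(f \ast g)' = f' \ast g$ (which holds as soon as $f \in H_1^1$ and $g \in L^1$), and apply Young's convolution inequality exactly as in the proof of Lemma~\ref{Lemma_smooth_density_close}:
\begin{equation*}
\norm{D^1 p_s^G}_1 \leq \frac{1}{\gamma}\sum_{a, s'} \pi(a\mid s)\,P(s'\mid s,a)\,\norm{D^1 p_{s,a}^R}_1\,\norm{p_{s'}^G(\cdot/\gamma)}_1 \leq \frac{1}{\gamma}(M-1)\gamma = M-1.
\end{equation*}
Combining with $\norm{p_s^G}_1 = 1$ gives $\norm{p_s^G}_{H_1^1} \leq M$, which is the claim.

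The only substantive bookkeeping lies in the middle step: tracking the Jacobian factor $1/\gamma$ from the affine pushforward and justifying the Fubini interchange that turns the $\gP_R$-mixture of pushforwards into a genuine convolution. I do not expect a real obstruction here; the would-be hard step — showing that $\eta^\pi(s)$ has any density at all, which for a direct truncated-return approach would entangle us in $W^{1,1}$-versus-BV compactness issues — has already been settled cleanly by Lemma~\ref{Lemma_bounded_density_of_return}, and that is precisely why delegating existence to it is the right move.
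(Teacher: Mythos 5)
Your proposal is correct and takes essentially the same route as the paper's own proof: both rest on the density-level Bellman fixed-point identity, the rule $(f\ast g)'=f'\ast g$, Young's convolution inequality, and the bound $\norm{D^1 p_{s,a}^R}_1=\norm{p_{s,a}^R}_{H_1^1}-\norm{p_{s,a}^R}_1\le M-1$, yielding $\norm{p_s^G}_{H_1^1}\le M$. The only difference is that you explicitly secure the existence of the density $p_s^G$ via Proposition~\ref{Proposition_smooth_imply_bounded} together with Lemma~\ref{Lemma_bounded_density_of_return}, a step the paper's proof leaves implicit, so your write-up is if anything slightly more complete.
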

\begin{proof}
For any $s\in\gS$, $p_s^G$ satisfies the following version of distributional Bellman equation:
\begin{equation*}
    p_s^G(x)=\frac{1}{\gamma}\sum_{a\in\gA}\pi(a\mid s)\sum_{s^\prime\in\gS} P(s^\prime\mid s,a)\prn{p_{s,a}^R(\cdot)\ast p_{s^\prime}^{G}(\cdot/\gamma)}(x).
\end{equation*}
Then we have 
\begin{equation*}
\begin{aligned}
    \norm{D^1 p_s^G}_1&\leq \frac{1}{\gamma}\sum_{a\in\gA}\pi(a\mid s)\sum_{s^\prime\in\gS} P(s^\prime\mid s,a)\norm{ D^1\prn{p_{s,a}^R(\cdot)\ast p_{s^\prime}^{G}(\cdot/\gamma)}}_1\\
    &=\frac{1}{\gamma}\sum_{a\in\gA}\pi(a\mid s)\sum_{s^\prime\in\gS} P(s^\prime\mid s,a)\norm{ \prn{D^1 p_{s,a}^R(\cdot)}\ast p_{s^\prime}^{G}(\cdot/\gamma)}_1\\
    &\leq \frac{1}{\gamma}\sum_{a\in\gA}\pi(a\mid s)\sum_{s^\prime\in\gS} P(s^\prime\mid s,a)\norm{ D^1 p_{s,a}^R(\cdot)}_1\norm {p_{s^\prime}^{G}(\cdot/\gamma)}_1\\
    &\leq M-1.
\end{aligned}
\end{equation*}
The first equality holds because $(f\ast g)^\prime=f^\prime\ast g$, the second inequality holds by Young's convolution inequality (Lemma~\ref{Lemma_Young_convolution_inequality}), and the last inequality holds due to $\norm{D^1 p_{s,a}^R}_1=\norm{p_{s,a}^R}_{H_1^1}-\norm{p_{s,a}^R}_1\leq M-1$.
Finally, we may conclude that $\norm{p_s^G}_{H_1^1}=\norm{D^1 p_s^G}_1+\norm{p_s^G}_1\leq M$.
\end{proof}

\begin{proof}[Proof of Proposition~\ref{Proposition_value_iteration_TV}.]
    For $\eta,\eta^\prime\in\Delta(\RB)^\gS$, any $f=\ind_A$, $A\in\gB(\RB)$, we have
    \begin{equation*}
    \begin{aligned}
        &\abs{\brk{\gT^\pi \eta}(s) f-\brk{\gT^\pi \eta^\prime}(s) f}
        \\&=\abs{\sum_{a\in\gA,s^\prime\in\gS}\pi(a\mid s)P(s^\prime\mid s,a)\brk{\int_0^1 \prn{b_{r,\gamma}}_\#\eta(s^\prime)d\gP_R\prn{dr\mid s,a}f -\int_0^1 \prn{b_{r,\gamma}}_\#\eta^\prime(s^\prime)d\gP_R\prn{dr\mid s,a}f}}\\
        &\leq \sup_{a\in\gA}\sup_{s^\prime\in\gS} \abs{\int_0^1 \prn{b_{r,\gamma}}_\#\eta(s^\prime)d\gP_R\prn{dr\mid s,a}f -\int_0^1 \prn{b_{r,\gamma}}_\#\eta^\prime(s^\prime)d\gP_R\prn{dr\mid s,a}f}\\
        &\leq\sup_{s^\prime\in\gS}\sup_{A\in\gB(\RB)} |\eta(s^\prime) \ind_A-\eta^\prime(s^\prime)\ind_A|\\
        &=\sup_{s^\prime\in\gS}\TV(\eta(s^\prime),\eta^\prime(s^\prime)).
    \end{aligned}
    \end{equation*}
    The last inequality is due to
    \begin{equation*}
        \begin{aligned}
            &\abs{\int_0^1 \prn{b_{r,\gamma}}_\#\eta(s^\prime)d\gP_R\prn{dr\mid s,a}f
            -\int_0^1 \prn{b_{r,\gamma}}_\#\eta^\prime(s^\prime)d\gP_R\prn{dr\mid s,a}f}\\
            &=\abs{\int_0^1 \brk{\prn{b_{r,\gamma}}_\#\eta(s^\prime)}\ind_A d\gP_R\prn{dr\mid s,a}-\int_0^1 \brk{\prn{b_{r,\gamma}}_\#\eta^\prime(s^\prime)}\ind_A d\gP_R\prn{dr\mid s,a}}\\
            &=\abs{\int_0^1 \eta(s^\prime)\ind_{g_{r,\gamma}(A)}d\gP_R\prn{dr\mid s,a}-\int_0^1 \eta^\prime(s^\prime)\ind_{g_{r,\gamma}(A)}d\gP_R\prn{dr\mid s,a}}\\
            &=\abs{\EB_{R\sim \gP_R(\cdot\mid s,a)}\eta(s^\prime)\ind_{g_{R,\gamma}(A)}-\EB_{R\sim \gP_R(\cdot\mid s,a)}\eta^\prime(s^\prime)\ind_{g_{R,\gamma}(A)}}\\
            &\leq \EB_{R\sim \gP_R(\cdot\mid s,a)}\abs{\eta(s^\prime)\ind_{g_{r,\gamma}(A)}-\eta^\prime(s^\prime)\ind_{g_{r,\gamma}(A)}}\\
            &\leq \sup_{A\in\gB(\RB)} |\eta(s^\prime) \ind_{A}-\eta^\prime(s^\prime)\ind_{A}|.
        \end{aligned}
    \end{equation*}
    Therefore, we have shown that the operator $\gT^\pi$ is non-expansive in the supreme TV distance.
    For some $\eta\in\Delta\prn{\brk{0,\frac{1}{1-\gamma}}}$ with Lebesgue density, let $p_s^{\eta}$ denote its density function.
    When Assumption~\ref{Assumption_reward_smooth} holds true, we have 
    $\norm{p^{\eta^\pi}_s}_{H_1^1}\leq M$, 
    $\forall s\in\gS$ by Lemma~\ref{Lemma_smooth_density_of_return} and 
$\norm{p^{\eta^{(k)}}_s}_{H_1^1}\leq M$ 
by Lemma~\ref{Lemma_smooth_density_close}.
    Therefore, for any $s\in\gS$, 
        \begin{equation*}    \TV(\eta^{(k)}(s),\eta^\pi(s))\leq \sqrt{2MKW_1(\eta^{(k)}(s),\eta^\pi(s))}.
    \end{equation*}
    Combining with Proposition~\ref{Proposition_value_iteration_Wp}, 
    we complete the proof.
\end{proof}

\section{Omitted Proofs and Examples in Section~\ref{Section_analysis}}\label{Appendix_proof_analysis}
\begin{proof}[Proof of Corollary~\ref{Corollary_bound_of_Wdist_p}]
\begin{equation*}
    \begin{aligned}
\max_{s\in\gS} W_p(\hat\eta_n^\pi(s),\eta^\pi(s))\leq \brk{\frac{1}{(1-\gamma)^{p-1}}\max_{s\in\gS}W_1(\hat\eta_n^\pi(s),\eta^\pi(s))}^{\frac{1}{p}}.
\end{aligned}
\end{equation*}
Therefore we have
\begin{equation*}
    \begin{aligned}
\EB\max_{s\in\gS} W_p(\hat\eta_n^\pi(s),\eta^\pi(s))&\leq \frac{1}{(1-\gamma)^{1-\frac{1}{p}}}\EB\brk{\max_{s\in\gS}W_1(\hat\eta_n^\pi(s),\eta^\pi(s))}^{\frac{1}{p}}\\
&\leq \frac{1}{(1-\gamma)^{1-\frac{1}{p}}} \brk{\EB \max_{s\in\gS}W_1(\hat\eta_n^\pi(s),\eta^\pi(s))}^{\frac{1}{p}}\\
&\leq \brk{\frac{9\log |\gS|}{n(1-\gamma)^{2p+2}}}^{\frac{1}{2p}}
\end{aligned}
\end{equation*}
and for any $\delta>0$, with probability at least $1-\delta$,
\begin{equation*}
    \begin{aligned}
\max_{s\in\gS} W_p(\hat\eta_n^\pi(s),\eta^\pi(s))&\leq \frac{1}{(1-\gamma)^{1-\frac{1}{p}}}\brk{\max_{s\in\gS}W_1(\hat\eta_n^\pi(s),\eta^\pi(s))}^{\frac{1}{p}}\\
&\leq \brk{\frac{\sqrt{9\log|\gS|}+\sqrt{\log(1/\delta)/2}}{\sqrt{n(1-\gamma)^{p+1}}}}^{\frac{1}{p}}.
\end{aligned}
\end{equation*}
\end{proof}

\begin{proof}[Proof of Theorem~\ref{Theorem_offline_non_asymp}.]
Let $n(s,a)=\sum_{i=1}^m\ind\{(s_i,a_i)=(s,a)\}$.
By Theorem \ref{Lemma_Chernoff_Inequality}, we have for any fixed $(s,a)\in \gS\times \gA$
$$
\begin{aligned}
\PB(n(s,a)<m\xi_{\min}/2)&\leq \PB(n(s,a)<m\xi(s,a)/2)\\
&\leq e^{-m\xi(s,a)}\left(\frac{em\xi(s,a)}{em\xi(s,a)/2}\right)^{\frac{m\xi(s,a)}{2}}\\
&=\left(\sqrt{\frac{e}{2}}\right)^{-\xi(s,a)m}\\
&\leq \left(\sqrt{\frac{e}{2}}\right)^{-\xi_{\min}m}
\end{aligned}
$$
Let $m\geq 8\log(2|\gS||\gA|/\delta)/\xi_{\min}$, we have $\PB(n(s,a)\geq m\xi_{\min})\geq 1-\delta/2|\gS||\gA|$.
Therefore, with probability at least $1-\delta/2$, we can get
$$
n(s,a)>m\xi_{\min}, \forall (s,a)\in \gS\times \gA.
$$
Then our problem is reduced to the case where the offline dataset is generated by a generative model.
The proof is completed by using results in Theorem~\ref{Theorem_bound_of_Wdist}, Theorem~\ref{Theorem_bound_of_KS} and Theorem~\ref{Theorem_bound_of_TV}.
\end{proof}

\begin{example}\label{Example_binary_tree}
    We consider a complete binary tree with $2^H$ nodes.
    Then we may define an MRP with $\gS=\{\text{nodes of the binary tree}\}$.
    At each node $s$ we let $\gP_R(s)=\delta_1$ if it is the right child of its parent, and $\gP_R(s)=\delta_0$ otherwise.
    Suppose $s$ is a leaf node, we set $P(\text{terminal}\mid s)=1$.
    Otherwise, we set $P(\text{left child of $s$}\mid s)=P(\text{right child of $s$}\mid s)=0.5$.
    $\gamma$ is set to be $0.5$.
    Suppose we want to perform policy evaluation in this MRP with dataset $\{(s_i,s_i^\prime)\}_{i=1}^m$, where $s_i\sim \xi$ and $s_i^\prime\sim P(\cdot\mid s_i)$.
    Define $n(s)=\sum_{i=1}^m\ind\{s_i=s\}$, it is straightforward to see for some small constant $\varepsilon$, with constant probability $W_1(\eta(s),\hat\eta(s))\geq \varepsilon/\sqrt{n(s)}$.
    Therefore, there also exist some small constant $\varepsilon^\prime$ such that with constant probability we have $\sup_{s\in\gS}W_1(\eta(s),\hat\eta(s))\geq \varepsilon^\prime/\sqrt{\xi_{\min}m}$.
\end{example}

\begin{proof}[Proof of Theorem~\ref{Theorem_unknown_reward_non_asymp}.]
\textbf{Proof of part (a).} We begin by proving (a), mirroring the methodology applied in the proof of Theorem~\ref{Theorem_bound_of_Wdist} in Appendix~\ref{Appendix_analysis_W1}.
\begin{equation*}
    \begin{aligned}
        \sup_{s\in\gS}W_1(\tilde\eta^\pi_n(s),\eta^\pi(s))&=\sup_{s\in\gS} \norm{\tilde\eta^\pi_n(s)-\eta^\pi(s)}_{\FW}\\
        &=\sup_{s\in\gS}\norm{\brk{\prn{\gI-\wtilde \gT_n^\pi}^{-1}\prn{\wtilde\gT_n^\pi-\gT^\pi}\eta^\pi}(s)}_{\FW}\\
        &\leq \frac{1}{1-\gamma}\sup_{s\in\gS}\norm{\brk{\prn{\wtilde\gT_n^\pi-\gT^\pi}\eta^\pi}(s)}_{\FW}.
    \end{aligned}
\end{equation*}
We define $\wtilde{F}_s$ as the cumulative distribution function of $\brk{\wtilde\gT^\pi \eta^\pi}(s)$.
Apparently $\EB\wtilde F_s(x)=F_s(x)$ for each $x\in\brk{0,\frac{1}{1-\gamma}}$.
Note that
\begin{equation*}
    \begin{aligned}
    \wtilde F_s(x)- F_s(x)=\sum_{a\in\gA} \pi(a\mid s)\frac{1}{n}\sum_{i=1}^{n}\brk{F_{X_{i}^{(s,a)}}\prn{\frac{x-R_{i}^{(s,a)}}{\gamma}}-\EB F_{X_{i}^{(s,a)}}\prn{\frac{x-R_{i}^{(s,a)}}{\gamma}}},
    \end{aligned}
\end{equation*}
and 
\begin{equation*}
    \begin{aligned}
    \abs{F_{X_{i}^{(s,a)}}\prn{\frac{x-R_{i}^{(s,a)}}{\gamma}}-\EB F_{X_{i}^{(s,a)}}\prn{\frac{x-R_{i}^{(s,a)}}{\gamma}}}\leq 1,
    \end{aligned}
\end{equation*}
we can follow the proof of Lemma~\ref{Lemma_cdf_sub_gaussian_1} and easily show that $\sqrt{n}(\wtilde F_s(x)-F_s(x))$ is $\frac{1}{\sqrt{n}}$-sub-gaussian with zero mean, therefore,
\begin{equation*}
    \begin{aligned}
        \EB  \sup_{s\in\gS}\norm{\brk{\prn{\wtilde \gT_n^\pi-\gT^\pi}\eta^\pi}(s)}_{\FW}\leq \sqrt{\frac{9\log|\gS|}{n(1-\gamma)^2}}.
    \end{aligned}
\end{equation*}
Following the proof of Lemma B.1, to arrive at the desired conclusion, we need to employ McDiarmid's inequality (Lemma~\ref{Lemma_McDiarmid_inequality}).
Note that for any fixed $i\in\brc{1,\dots,n}$, substituting data vector $\prn{X_i^{(s,a)},R_i^{(s,a)}}_{s\in\gS,a\in\gA}$ can change $\sup_{s\in\gS}\norm{\brk{\prn{\wtilde \gT_n^\pi-\gT^\pi}\eta^\pi}(s)}_{\FW}$ by at most $\frac{1}{n(1-\gamma)}$.
We can obtain the desired conclusion.

\textbf{Proof of (b) and (c).}
Now, we proceed to prove (b) and (c).
Recall the certainty-equivalence estimator $\hat{\eta}_n^\pi$ with $\what{P}_n$ and ground-truth $\gP_{R}$.
According to the results in Theorem~\ref{Theorem_bound_of_KS} and Theorem~\ref{Theorem_bound_of_TV}, we have the non-asymptotic bounds for  $\sup_{s\in\gS}\KS(\hat\eta_n^\pi(s),\eta^\pi(s))$ and $\sup_{s\in\gS}\TV(\hat\eta_n^\pi(s),\eta^\pi(s))$.
Hence, we only need to deal with the distance between $\tilde{\eta}_n^\pi$ and $\hat{\eta}_n^\pi$.

Note that $\tilde{\eta}_n^\pi$ and $\hat{\eta}_n^\pi$ use the same estimated transition dynamics $\what{P}_n$, which is independent of the estimated reward distribution $\what{\gP}_{R,n}$.
It is easy to verify that, if we replace the conditional expectation w.r.t. $\what{P}_n$ with the ordinary expectation, we can replace $\what{P}_n$ with the ground-truth $P$ in the proof which will not affect the theoretical analysis. 

For the sake of brevity, we redefine $\tilde{\eta}^\pi_n$ as the certainty-equivalence estimator with ground-truth $P$ and estimated $\what\gP_{R,n}$, and we define the corresponding distributional Bellman operator $\widetilde{\gT}^\pi$.
At this time, we can turn to analyze the distance between $\tilde{\eta}_n^\pi$ and $\eta^\pi$.

We deal with (b) first.
\begin{equation*}
    \begin{aligned}
        &\norm{\tilde\eta^\pi_n(s)-\eta^\pi(s)}_{\FKS}\\
        &=\norm{\brk{\prn{\gI-\wtilde\gT_n^\pi}^{-1}\prn{\what\gT_n^\pi-\gT^\pi}\eta^\pi}(s)}_{\FKS}\\
        &=\norm{\brk{\sum_{i=0}^\infty \prn{\wtilde\gT_n^\pi}^i\prn{\wtilde\gT_n^\pi-\gT^\pi}\eta^\pi}(s)}_{\FKS}\\
        &\leq \norm{\brk{\prn{\wtilde\gT_n^\pi-\gT^\pi}\eta^\pi}(s)}_{\FKS} +\sum_{i=1}^\infty\norm{\brk{ \prn{\wtilde\gT_n^\pi}^i\prn{\wtilde\gT_n^\pi-\gT^\pi}\eta^\pi}(s)}_{\FKS}\\
        &\leq \norm{\brk{\prn{\wtilde\gT_n^\pi-\gT^\pi}\eta^\pi}(s)}_{\FKS} +\sum_{i=1}^\infty\sqrt{2\norm{\brk{ \prn{\wtilde\gT_n^\pi}^i\prn{\wtilde\gT_n^\pi-\gT^\pi}\eta^\pi}(s)}_{\FW}\norm{\brk{ \prn{\wtilde\gT_n^\pi}^i\prn{\wtilde\gT_n^\pi-\gT^\pi}\eta^\pi}(s)}_{\loc}}\\
        &\leq\norm{\brk{\prn{\wtilde\gT_n^\pi-\gT^\pi}\eta^\pi}(s)}_{\FKS} +\sum_{i=1}^\infty\sqrt{2\gamma^i\sup_{s^\prime\in\gS}\norm{\brk{ \prn{\wtilde\gT_n^\pi-\gT^\pi}\eta^\pi}(s^\prime)}_{\FW}\norm{\brk{ \prn{\wtilde\gT_n^\pi}^i\prn{\wtilde\gT_n^\pi-\gT^\pi}\eta^\pi}(s)}_{\loc}}\\
        &\leq \norm{\brk{\prn{\wtilde\gT_n^\pi-\gT^\pi}\eta^\pi}(s)}_{\FTV} +\\
        &\quad\sqrt{2\sup_{s^\prime\in\gS,a\in\gA}\norm{\what{\gP}_{R,n}(\cdot\mid s^\prime,a)-\gP_R(\cdot\mid s^\prime,a)}_{\FTV}}\sum_{i=1}^\infty\sqrt{\gamma^i\norm{\brk{ \prn{\wtilde\gT_n^\pi}^i\prn{\wtilde\gT_n^\pi-\gT^\pi}\eta^\pi}(s)}_{\loc}}\\
        &\leq\prn{\frac{4\sqrt{C}}{1-\gamma}+1}\sup_{s^\prime\in\gS,a\in\gA}\norm{\what{\gP}_{R,n}(\cdot\mid s^\prime,a)-\gP_R(\cdot\mid s^\prime,a)}_{\FTV}.
    \end{aligned}
\end{equation*}
Here the second inequality is due to Proposition~\ref{Proposition_bound_KS_with_W1_signed}; the third inequality is by the contraction property of $\what\gT_n^\pi$; the fourth inequality holds from the fact $\norm{\cdot}_{\FKS}\leq\norm{\cdot}_{\FTV}$, and
\begin{equation*}
    \begin{aligned}
        \brk{ \prn{\wtilde\gT_n^\pi-\gT^\pi}\eta^\pi}(s)=\sum_{a\in\gA,s^\prime\in\gS}\pi(a\mid s)P(s^\prime\mid s,a)\int_0^1 \prn{b_{r,\gamma}}_\#\eta^\pi(s^\prime)\prn{\what{\gP}_{R,n}(dr\mid s,a)-\gP_R(dr\mid s,a)},
    \end{aligned}
\end{equation*}
\begin{equation*}
    \begin{aligned}
        \sup_{s\in\gS}\norm{\brk{ \prn{\wtilde\gT_n^\pi-\gT^\pi}\eta^\pi}(s)}_{\FW}&\leq\sup_{s\in\gS}\sum_{a\in\gA,s^\prime\in\gS}\pi(a\mid s)P(s^\prime\mid s,a)\norm{\what{\gP}_{R,n}(\cdot\mid s,a)-\gP_R(\cdot\mid s,a)}_{\FW}\\
        &\leq \sup_{s\in\gS,a\in\gA}\norm{\what{\gP}_{R,n}(\cdot\mid s,a)-\gP_R(\cdot\mid s,a)}_{\FTV},
    \end{aligned}
\end{equation*}
here we use the fact that $W_1(\mu_1*\nu_1,\mu_2*\nu_2)\leq W_1(\mu_1,\mu_2)+W_1(\nu_1,\nu_2)$ and $\norm{\mu}_{\FW}\leq\norm{\mu}_{\FTV}$ for $\mu$ supported on $[0, 1]$;
and the last inequality is due to Lemma~\ref{Lemma_bound_loc_by_TV}, our assumption that $\what{\gP}_{R,n}(\cdot\mid s,a)$ is endowed with a Lebesgue density upper-bounded by $2C$, and 
\begin{equation*}
    \begin{aligned}
        \sup_{s\in\gS}\norm{\brk{ \prn{\wtilde\gT_n^\pi-\gT^\pi}\eta^\pi}(s)}_{\FTV}&\leq\sup_{s\in\gS}\sum_{a\in\gA,s^\prime\in\gS}\pi(a\mid s)P(s^\prime\mid s,a)\norm{\what{\gP}_{R,n}(\cdot\mid s,a)-\gP_R(\cdot\mid s,a)}_{\FTV}\\
        &\leq \sup_{s\in\gS,a\in\gA}\norm{\what{\gP}_{R,n}(\cdot\mid s,a)-\gP_R(\cdot\mid s,a)}_{\FTV},
    \end{aligned}
\end{equation*}
here we use the fact that $\TV(\mu_1*\nu_1,\mu_2*\nu_2)\leq\TV(\mu_1,\mu_2)+\TV(\nu_1,\nu_2)$.

In summary, we arrive at the desired conclusion: there exists a constant $C^\prime$ which only depends on $C$ in Assumption~\ref{Assumption_reward_bounded_density}, such that with probability at least $1-\delta$
\begin{equation*}
    \begin{aligned}
        \sup_{s\in\gS}\KS(\tilde\eta_n^\pi(s),\eta^\pi(s))\leq \frac{C^\prime\prn{\sqrt{\log|\gS|}+\sqrt{\log(1/\delta)}}}{\sqrt{n(1-\gamma)^4}}+\frac{C^\prime}{1-\gamma}\sup_{s\in\gS,a\in\gA}\norm{\what{\gP}_{R,n}(\cdot\mid s,a)-\gP_R(\cdot\mid s,a)}_{\FTV}.
    \end{aligned}
\end{equation*}

Finally, let us proceed to prove (c).
\begin{equation*}
    \begin{aligned}
        &\norm{\tilde\eta^\pi_n(s)-\eta^\pi(s)}_{\FTV}\\
        &=\norm{\brk{\prn{\gI-\wtilde\gT_n^\pi}^{-1}\prn{\what\gT_n^\pi-\gT^\pi}\eta^\pi}(s)}_{\FTV}\\
        &=\norm{\brk{\sum_{i=0}^\infty \prn{\wtilde\gT_n^\pi}^i\prn{\wtilde\gT_n^\pi-\gT^\pi}\eta^\pi}(s)}_{\FTV}\\
        &\leq \norm{\brk{\prn{\wtilde\gT_n^\pi-\gT^\pi}\eta^\pi}(s)}_{\FTV} +\sum_{i=1}^\infty\norm{\brk{ \prn{\wtilde\gT_n^\pi}^i\prn{\wtilde\gT_n^\pi-\gT^\pi}\eta^\pi}(s)}_{\FTV}\\
        &\leq \norm{\brk{\prn{\wtilde\gT_n^\pi-\gT^\pi}\eta^\pi}(s)}_{\FTV} +\sum_{i=1}^\infty\sqrt{K\norm{\brk{ \prn{\wtilde\gT_n^\pi}^i\prn{\wtilde\gT_n^\pi-\gT^\pi}\eta^\pi}(s)}_{\FW}\norm{\brk{ \prn{\wtilde\gT_n^\pi}^i\prn{\wtilde\gT_n^\pi-\gT^\pi}\eta^\pi}(s)}_{H_1^1}}\\
        &\leq\norm{\brk{\prn{\wtilde\gT_n^\pi-\gT^\pi}\eta^\pi}(s)}_{\FTV} +\sum_{i=1}^\infty\sqrt{K\gamma^i\sup_{s^\prime\in\gS}\norm{\brk{ \prn{\wtilde\gT_n^\pi-\gT^\pi}\eta^\pi}(s^\prime)}_{\FW}\norm{\brk{ \prn{\wtilde\gT_n^\pi}^i\prn{\wtilde\gT_n^\pi-\gT^\pi}\eta^\pi}(s)}_{H_1^1}}\\
        &\leq \norm{\brk{\prn{\wtilde\gT_n^\pi-\gT^\pi}\eta^\pi}(s)}_{\FTV} +\sqrt{K\sup_{s^\prime\in\gS}\norm{\brk{ \prn{\wtilde\gT_n^\pi-\gT^\pi}\eta^\pi}(s^\prime)}_{\FTV}}\sum_{i=1}^\infty\sqrt{\gamma^i\norm{\brk{ \prn{\wtilde\gT_n^\pi}^i\prn{\wtilde\gT_n^\pi-\gT^\pi}\eta^\pi}(s)}_{H_1^1}}\\
        &\leq\prn{\frac{4\sqrt{MK}}{1-\gamma}+1}\sup_{s^\prime\in\gS}\norm{\brk{ \prn{\wtilde\gT_n^\pi-\gT^\pi}\eta^\pi}(s^\prime)}_{\FTV}\\
        &\leq \prn{\frac{4\sqrt{MK}}{1-\gamma}+1}\sup_{s^\prime\in\gS,a\in\gA}\norm{\what{\gP}_{R,n}(\cdot\mid s^\prime,a)-\gP_R(\cdot\mid s^\prime,a)}_{\FTV}.
    \end{aligned}
\end{equation*}
Here the second inequality is due to Proposition~\ref{Proposition_bound_TV_with_W1_signed}, the third inequality is by the contraction property of $\what\gT_n^\pi$, the fourth inequality holds from the fact $\norm{\cdot}_{\FW}\leq\norm{\cdot}_{\FTV}$, the fifth inequality is due to Lemma~\ref{Lemma_bound_Sobolev_by_TV} and our assumption that $\what{\gP}_{R,n}(\cdot\mid s,a)$ is endowed with a Lebesgue density $\hat p_{s,a}^R\in H_1^1(\RB)$ with $\norm{\hat p_{s,a}^R}_{H_1^1(\RB)}\leq 2M$.

Now we arrive at the desired conclusion: there exists a constant $K^\prime$ which only depends on $M$ in Assumption~\ref{Assumption_reward_smooth}, such that with probability at least $1-\delta$
\begin{equation*}
    \begin{aligned}
\sup_{s\in\gS}\TV(\tilde\eta_n^\pi(s),\eta^\pi(s))\leq \frac{K^\prime\prn{\sqrt{\log|\gS|}+\sqrt{\log(1/\delta)}}}{\sqrt{n(1-\gamma)^4}}+\frac{K^\prime}{1-\gamma}\sup_{s\in\gS,a\in\gA}\norm{\what{\gP}_{R,n}(\cdot\mid s,a)-\gP_R(\cdot\mid s,a)}_{\FTV}.
    \end{aligned}
\end{equation*}

\begin{remark}\label{Remark_density_estimator}
\textit{(a) Non-parametric density estimation.}

The problem of non-parametric density estimation can be formalized as follows: Given i.i.d. random variables $\brc{X_i}_{i=1}^n$ with law of Lebesgue density $f$, we need an estimator $f_n$ for $f$.
Kernel density estimation and wavelet density estimation are common non-parametric estimation methods. 
Due to the extensive notations required to introduce these methods and their theoretical results, we omit the introduction here.
For an concise introduction to these methods, please refer to Section 5.1 in \cite{gine2016mathematical}.

In Theorem~\ref{Theorem_unknown_reward_non_asymp} (b) and (c), we need an upper-bound for $\norm{\what{\gP}_{R,n}(\cdot\mid s,a)-\gP_R(\cdot\mid s,a)}_{\FTV}$.
This is in fact the $L_1$ error $\norm{f_n-f}_{L_1}$ in the density estimation literature.
Concentration bounds for $\norm{f_n-f}_{L_1}$ under various assumptions have been well-studied, for example, see Theorem 5.1.13 and Equation 5.39 in \cite{gine2016mathematical}.
It is noteworthy that the typical convergence rate of $\norm{f_n-f}_{L_1}$ is slower than the standard rate $\frac{1}{\sqrt{n}}$.

In Theorem~\ref{Theorem_unknown_reward_non_asymp} (b), we also require the Lebesgue density of $\what{\gP}_{R,n}(\cdot\mid s,a)$ to be upper-bounded by $2C$.
The condition holds if $\norm{\what{\gP}_{R,n}(\cdot\mid s,a)-\gP_R(\cdot\mid s,a)}_{\loc}\leq C$.
The error term is in fact the $L_\infty$ error $\norm{f_n-f}_{L_\infty}$ in the density estimation literature.
See also Theorem 5.1.13 in \cite{gine2016mathematical} for the concentration inequality for $\norm{f_n-f}_{L_\infty}$.
Hence, we can find a sufficiently large $N$ that depends only on $C$, such that the error term does not exceed $C$ when $n\geq N$.

In Theorem~\ref{Theorem_unknown_reward_non_asymp} (c), we also require that $\what{\gP}_{R,n}(\cdot\mid s,a)$ has a Lebesgue density $\hat p_{s,a}^R\in H_1^1(\RB)$ with $\norm{\hat p_{s,a}^R}_{H_1^1(\RB)}\leq 2M$.
The condition holds if $\norm{\hat p_{s,a}^R-p_{s,a}^R}_{H_1^1(\RB)}\leq M$.
See Proposition 5.1.10 in \cite{gine2016mathematical} for the concentration inequality for $\norm{\hat p_{s,a}^R-p_{s,a}^R}_{H_1^1(\RB)}$.
Hence, we can find a sufficiently large $N$ that depends only on $M$, such that the error term does not exceed $M$ when $n\geq N$.

\textit{(b) Parametric estimation.}

Here, we consider the case where $\gP_R(\cdot \mid s, a)$ belongs to a parametric family, in which case the standard convergence rate of $\frac{1}{\sqrt{n}}$ applies. 
For convenience, we consider the family of normal distributions with known variance $\sigma^2>0$ and unknown mean $\mu\in\RB$.
Although the normal distribution is unbounded, we can extend the following analysis to bounded distributions, such as beta distribution or truncated normal distribution.

Without loss of generality, we assume $\mu=0$ and $\sigma^2=1$.
Let $f_\mu$ be the density function of $N(\mu,\sigma^2)$ for any $\mu\in\RB$, $\brc{X_i}_{i=1}^n$ be i.i.d. random variables with law $N(0, 1)$, and $\hat{\mu}_n=\frac{1}{n}\sum_{i=1}^n X_i\sim N\prn{0, \frac{1}{n}}$.

We need to derive a concentration inequality for $\norm{f_0-f_{\hat{\mu}_n}}_{L_1}$.
\begin{equation*}
\begin{aligned}
        \norm{f_0-f_{\mu}}_{L_1}&=\frac{1}{\sqrt{2\pi}}\int_{-\infty}^{+\infty}\abs{e^{-\frac{1}{2}x^2}-e^{-\frac{1}{2}(x-\mu)^2}}dx\\
        &=\sqrt{\frac{2}{\pi}}\abs{\int_{\frac{\mu}{2}}^{+\infty}\prn{e^{-\frac{1}{2}(x-\mu)^2}-e^{-\frac{1}{2}x^2}}dx}\\
        &=\sqrt{\frac{2}{\pi}}\abs{\Phi\prn{\frac{\mu}{2}}-\Phi\prn{-\frac{\mu}{2}}}\\
        &=\sqrt{\frac{2}{\pi}}\prn{2\Phi\prn{\frac{\abs{\mu}}{2}}-1}\\
        &\leq \frac{\abs{\mu}}{\pi}.
\end{aligned}
\end{equation*}
Hence, the standard concentration inequality for $\abs{\hat{\mu}_n}$ can be translated to a concentration inequality for the $L_1$ error $\norm{f_0-f_{\hat{\mu}_n}}_{L_1}$.
Specifically, with probability at least $1-\delta$,
\begin{equation*}
    \norm{f_0-f_{\hat{\mu}_n}}_{L_1}\leq \frac{1}{\pi}\sqrt{\frac{2\log\prn{\frac{\delta}{2}}}{n}}.
\end{equation*}
In addition, it is easy to check that $\norm{f_{\hat{\mu}_n}}_{L_\infty}=\norm{f_0}_{L_\infty}$, and $\norm{f_{\hat{\mu}_n}}_{H_1^1(\RB)}=\norm{f_0}_{H_1^1(\RB)}$.
\end{remark}

\end{proof}
\begin{lemma}\label{Lemma_normalized_model_asymptotic_gaussian}
    Given dataset $\gD=\{(s_i,a_i,s_i^\prime)^m_{i=1}\}$, where $(s_i,a_i)\simiid\xi$ and $s_i^\prime\sim P(\cdot\mid s_i,a_i)$. 
    Define 
    \begin{equation*}
        \wtilde P(s^\prime\mid s,a)=\frac{\sum_{i=1}^m\ind\{(s_i,a_i,s_i^\prime)=(s,a,s^\prime)\}}{\xi(s,a)m}.
    \end{equation*}
    Then we have 
    \begin{equation*}
        \prn{\sqrt{\xi(s,a)m}\prn{\wtilde P(s^\prime\mid s,a)-P(s^\prime\mid s,a)}}_{(s,a,s^\prime)\in \gS\times\gA\times\gS}\cweak \prn{\mathring Z_{s,a,s^\prime}}_{(s,a,s^\prime)\in \gS\times\gA\times\gS},
    \end{equation*}
    where $\prn{\mathring Z_{s,a,s^\prime}}_{(s,a,s^\prime)\in\gS\times\gA\times\gS}$ is zero mean Gaussian random vector with
    \begin{equation*}
    \cov(\mathring Z_{s_1,a_1,s_1^\prime},\mathring Z_{s_2,a_2,s_2^\prime})=\ind\brc{(s_1,a_1)=(s_2,a_2)}P(s_1^\prime\mid s_1,a_1)\prn{\ind\brc{s_1^\prime=s_2^\prime}-\xi(s_1,a_1)P(s_2^\prime\mid s_1,a_1)}.
    \end{equation*}
\end{lemma}
\begin{proof}
We have
\begin{equation*}
\begin{aligned}
    \sqrt{\xi(s,a)m}\prn{\wtilde P(s^\prime\mid s,a)-P(s^\prime\mid s,a)}&=\sqrt{\xi(s,a)m}\frac{\sum_{i=1}^m\ind\{(s_i,a_i,s^\prime_i)=(s,a,s^\prime)\}-\xi(s,a)P(s^\prime\mid s,a)\}}{\xi(s,a)m}\\
    &=\frac{1}{\sqrt{\xi(s,a)}}\underbrace{\frac{\sum_{i=1}^m\ind\{(s_i,a_i,s^\prime_i)=(s,a,s^\prime)\}-\xi(s,a)P(s^\prime\mid s,a)\}}{\sqrt{m}}}_{X_{s,a,s^\prime}}.
\end{aligned}
\end{equation*}
The result follows since $\prn{X_{s,a,s^\prime}}_{(s,a,s^\prime)\in\gS\times\gA\times\gS}\cweak\prn{\wtilde{Z}_{s,a,s^\prime}}_{(s,a,s^\prime)\in\gS\times\gA\times\gS}$ by multivariate CLT, where $\prn{\wtilde{Z}_{s,a,s^\prime}}_{(s,a,s^\prime)\in\gS\times\gA\times\gS}$ is zero mean Gaussian random vector with
    \begin{equation*}
    \cov(\wtilde{Z}_{s_1,a_1,s_1^\prime},\wtilde{Z}_{s_2,a_2,s_2^\prime})=\ind\brc{(s_1,a_1)=(s_2,a_2)}\xi(s_1,a_1)P(s_1^\prime\mid s_1,a_1)\prn{\ind\brc{s_1^\prime=s_2^\prime}-\xi(s_1,a_1)P(s_2^\prime\mid s_1,a_1)}.
    \end{equation*}
\end{proof}

\begin{proof}[Proof of Theorem~\ref{Theorem_weak_convergence_empirical_process_offline_data}]
We have for any $(s,a,s^\prime)\in\gS\times\gA\times\gS$,
\begin{equation*}
      \sqrt{\xi(s,a)m}\prn{\what P_m(s^\prime\mid s,a)-P(s^\prime\mid s,a)}=\sqrt{\xi(s,a)m}\prn{\wtilde P(s^\prime\mid s,a)-P(s^\prime\mid s,a)}\cdot \frac{\xi(s,a)m}{\sum_{i=1}^m\ind\{(s_i,a_i)=(s,a)\}}.
\end{equation*}
Since by Lemma~\ref{Lemma_normalized_model_asymptotic_gaussian},
\begin{equation*}
    \prn{\sqrt{\xi(s,a)m}\prn{\wtilde P(s^\prime\mid s,a)-P(s^\prime\mid s,a)}}_{(s,a,s^\prime)\in \gS\times\gA\times\gS}\cweak \prn{\mathring Z_{s,a,s^\prime}}_{(s,a,s^\prime)\in \gS\times\gA\times\gS}
\end{equation*} 
and $\frac{\xi(s,a)m}{\sum_{i=1}^m\ind\{(s_i,a_i)=(s,a)\}}\cas 1$, $\forall (s,a)\in\gS\times\gA$.
We can get 
\begin{equation*}
    \prn{\sqrt{\xi(s,a)m}\prn{\what P_m(s^\prime\mid s,a)-P(s^\prime\mid s,a)}}_{(s,a,s^\prime)\in \gS\times\gA\times\gS}\cweak \prn{\mathring Z_{s,a,s^\prime}}_{(s,a,s^\prime)\in \gS\times\gA\times\gS}
\end{equation*} 
via the Slutsky's lemma.
Then we may finish the proof using exactly the same strategy as in the proof of Theorem~\ref{Theorem_weak_convergence_empirical_process}.
\end{proof}

\begin{proof}[Proof of Theorem~\ref{Theorem_weak_convergence_empirical_process_unknown_reward}]
\textbf{Proof of part (a).} We begin by proving (a).
Mirroring the methodology applied in the proof of Theorem~\ref{Theorem_weak_convergence_empirical_process} in Appendix~\ref{Appendix_analysis_asymptotic}, we have
\begin{equation*}
    \sqrt{n}\prn{\gI-\gT^\pi}(\tilde\eta_n^\pi-\eta^\pi)=\underbrace{\sqrt{n}\prn{\wtilde \gT_n^\pi-\gT^\pi}\eta^\pi}_{(1)}+\underbrace{\sqrt{n}\prn{\wtilde \gT_n^\pi-\gT^\pi}(\tilde\eta_n^\pi-\eta^\pi)}_{(2)}.
\end{equation*}
It can be easily verified that term $(2)$ is $o_P(1)$ in $\brk{\ell^\infty(\FW)}^\gS$.
We only need to show that term $(1)$ converges weakly to $\bar{\GB}^\pi$ in $\brk{\ell^\infty(\FW)}^\gS$.

To this end, we first introduce the Cram\'er norm $\norm{\cdot}_{\ell_2}$.
For any zero-mass sign measure $\mu$, $\norm{\mu}_{\ell_2}:=\ell_2\prn{\mu_+,\mu_-}=\sqrt{\int_0^{\frac{1}{1-\gamma}}\prn{F_{\mu_+}(x)-F_{\mu_-}(x)}^2 dx}$.
Next, we define the Cram\'er space $\gM^0_{\ell_2}$ equipped with the Cram\'er norm:
\begin{equation*}
    M^0_{\ell_2}:=\brc{\mu\text{ signed measure on }\prn{\brk{0,\frac{1}{1-\gamma}},\gB_0}\mid\norm{\mu}_{\ell_2}<\infty,\mu\prn{\brk{0,\frac{1}{1-\gamma}}}=0}.
\end{equation*}
We can verify that $\norm{\cdot}_{\FW}\leq\frac{1}{\sqrt{1-\gamma}}\norm{\cdot}_{\ell_2}$ by Cauchy-Schwarz inequality, and $\overline{M^0_{\ell_2}}$ is a Hilbert space.
By CLT in Hilbert spaces (See Theorem 10.5 in \cite{ledoux2013probability}), we have term $(1)$ converges weakly to $\bar{\GB}^\pi$ in $\prn{M^0_{\ell_2}}^\gS$. 
Since $\norm{\cdot}_{\FW}\leq\frac{1}{\sqrt{1-\gamma}}\norm{\cdot}_{\ell_2}$, we have term $(1)$ also converges weakly to $\bar{\GB}^\pi$ in $\brk{\ell^\infty(\FW)}^\gS$. 

\textbf{Proof of part (b) and (c).}
We will only prove (b), as (c) can be shown in the same way.
According to the analysis in the proof of Theorem~\ref{Theorem_unknown_reward_non_asymp}, we have
\begin{equation*}
\begin{aligned}
    \sqrt{n}\prn{\tilde\eta_n^\pi-\eta^\pi}&=\sqrt{n}\prn{\tilde\eta_n^\pi-\hat\eta^\pi_n}+\sqrt{n}\prn{\hat\eta^\pi_n-\eta^\pi}.
\end{aligned}
\end{equation*}
It is known in Theorem~\ref{Theorem_unknown_reward_non_asymp} that $\sqrt{n}\prn{\hat\eta^\pi_n-\eta^\pi}$  converges weakly to $\prn{\gI-\gT^\pi}^{-1}\wtilde\GB^\pi$ in $\brk{\ell^\infty(\FKS)}^\gS$.
We only need to analyze $\sqrt{n}\prn{\tilde\eta_n^\pi-\hat\eta^\pi_n}$.
\begin{equation*}
    \sqrt{n}\prn{\gI-\what\gT^\pi_n}(\tilde\eta_n^\pi-\hat\eta^\pi_n)=\underbrace{\sqrt{n}\prn{\wtilde \gT_n^\pi-\what\gT^\pi_n}\hat\eta^\pi_n}_{(1)}+\underbrace{\sqrt{n}\prn{\wtilde \gT_n^\pi-\what\gT^\pi_n}(\tilde\eta_n^\pi-\what\eta^\pi_n)}_{(2)}.
\end{equation*}
It can be easily verified that term $(2)$ is $o_P(1)$ in $\brk{\ell^\infty(\FKS)}^\gS$, $\prn{\gI-\what\gT^\pi}\eta$ converges in probability to $\prn{\gI-\gT^\pi}\eta$ for any $\eta$ in $\brk{\gM^0_{\FKS}}^\gS$, and $\hat{\eta}^\pi_n$ converges in probability to $\eta^\pi$ in $\brk{\ell^\infty(\FKS)}^\gS$.
By Slutsky's theorem, we only need to show that $\sqrt{n}\prn{\wtilde \gT_n^\pi-\what\gT^\pi_n}\eta^\pi$ converges weakly to $\GB^\pi_R$ in $\brk{\ell^\infty(\FKS)}^\gS$.
Note that $\what{P}_n$ converges in probability to $P$, hence for any $s\in\gS$,
\begin{equation*}
    \begin{aligned}
        \brk{ \sqrt{n}\prn{\wtilde\gT_n^\pi-\what\gT^\pi_n}\eta^\pi}(s)=\sum_{a\in\gA,s^\prime\in\gS}\pi(a\mid s)\what{P}_n(s^\prime\mid s,a)\int_0^1 \prn{b_{r,\gamma}}_\#\eta^\pi(s^\prime)\sqrt{n}\prn{\what{\gP}_{R,n}(dr\mid s,a)-\gP_R(dr\mid s,a)}
    \end{aligned}
\end{equation*}
converges weakly to $\GB^\pi_R(s)$ in $\ell^\infty(\FKS)$ by Slutsky's theorem.
\end{proof}
\section{Omitted Proofs in Section~\ref{Section_inference}}\label{Appendix_proof_app}
\begin{proof}[Proof of Theorem~\ref{Theorem_inference_W1_ball}]
    Note that $\sqrt{n}W_1(\hat\eta_n^\pi(s),\eta^\pi(s))=\sup_{f\in\FW}\sqrt{n}\prn{\hat\eta_n^\pi(s)-\eta^\pi(s)}f $ and also $\sqrt{n}\prn{\hat\eta_n^\pi(s)-\eta^\pi(s)}\cweak\brk{\prn{\gI-\gT^\pi}^{-1}\wtilde{\GB}^\pi}(s)$ in $\ell^\infty(\FW)$, thus 
    \begin{equation*}
        \sqrt{n}W_1(\hat\eta_n^\pi,\eta^\pi)\cweak \sup_{f\in\FW}\brk{\prn{\gI-\gT^\pi}^{-1}\wtilde{\GB}^\pi}(s) f
    \end{equation*} 
    by continuous mapping theorem. It follows that $\lim_{n\to\infty}\PB\prn{W_1(\hat\eta_n^\pi(s),\eta^\pi(s))\in C_1(\alpha)}=1-\alpha$.
    
    Also, $\sqrt{n}W_1(\hat\eta_n^\pi(s),\eta^\pi(s))=\int_0^{\frac{1}{1-\gamma}}\abs{\sqrt{n}\prn{\hat\eta_n^\pi(s)-\eta^\pi(s)}\ind{(-\infty,x]}}dx $.
    and $\sqrt{n}\prn{\hat\eta_n^\pi(s)-\eta^\pi(s)}\cweak\brk{\prn{\gI-\gT^\pi}^{-1}\wtilde{\GB}^\pi}(s)$ in $\ell^\infty(\FKS)$ when Assumption~\ref{Assumption_reward_bounded_density} holds.
    Therefore, 
    \begin{equation*}
        \sqrt{n}W_1(\hat\eta_n^\pi(s),\eta^\pi(s))\cweak \int_0^{\frac{1}{1-\gamma}}\abs{\brk{\prn{\gI-\gT^\pi}^{-1}\wtilde\GB^\pi}(s)\ind_{(-\infty,x]}}dx
    \end{equation*} 
    by continuous mapping theorem. 
\end{proof}

\begin{proof}[Proof of Proposition~\ref{Proposition_plug_in_W1}]
Suppose $p\in\prn{\Delta(\gS)}^{\gS\times\gA}$ is a transition dynamic, $\eta\in\prn{\Delta\prn{\brk{0,\frac{1}{1-\gamma}}}}^\gS$, and $z\in\RB^{\gS\times\gA\times\gS}$.
We define $\gT^\pi(p)$ as the distributional Bellman operator associated with $p$, and $\Sigma(p)\in\RB^{\prn{\gS\times\gA\times\gS}\times \prn{\gS\times\gA\times\gS}}$ as the covariance matrix associated with $p$ defined in Theorem~\ref{Theorem_weak_convergence_empirical_process}.
Note that the covariance matrix is degenerate such that $\sum_{s,s^\prime\in\gS,a\in\gA}\brk{\Sigma(p)^{\frac{1}{2}} z}_{s,a,s^\prime}=0$, $\forall z\in\RB^{\gS\times\gA\times\gS}$.
\begin{equation*}
    \begin{aligned}
        \brk{g_1(\eta,p,z)}(s)&:=\sum_{a\in\gA}\pi(a\mid s)\sum_{s^\prime\in\gS} \brk{\Sigma(p)^{\frac{1}{2}} z}_{s,a,s^\prime}\int_0^1 \prn{b_{r,\gamma}}_\#\eta(s^\prime)d\gP_R(dr\mid s,a)\in M^0_{\FW},\\
        g_2(p,\nu)&:=\prn{\gI-\gT^\pi(p)}^{-1}\nu\in \ell^\infty(\FW),\\
        G(p,\eta,z)&:=\sup_{f\in\FW}g_2(p,g_1(\eta,p,z))f\in\RB.
    \end{aligned}
\end{equation*}
Define the metric $d(\nu_1,\nu_2):=\sup_{s\in\gS}\norm{\nu_1(s)-\nu_2(s)}_{\FW}$ for $\nu_1,\nu_2\in \prn{\ell^{\infty}(\FW)}^\gS$.
Thus given any $z$, $G$ is continuous in $(\eta,p)$ when we consider the total variation distance for $p$ and the metric $d$ for $\nu$.
According to Lemma~\ref{Lemma_continuous_distribution_function_and_quantile}, the $\alpha$-quantile of $G(p,\eta,Z)$ where $Z$ is standard gaussian is continuous in $p$ and $\eta$ as long as the quantile function of $G(p,\eta,Z)$ is continuous at $\alpha$.
We have 1) $z_1$ is the quantile function of $G\prn{ P,\eta^\pi,Z}$ and $\hat z_1$ is the quantile function of $G\prn{ \what P,\hat\eta_n^\pi,Z}$; 2) $\what P\cp P$ and $d(\hat\eta_n^\pi,\eta^\pi)\cp 0$ by Theorem~\ref{Theorem_bound_of_Wdist}, the conclusion is true by the continuous mapping theorem.

    
\end{proof}

\begin{proof}[Proof of Proposition~\ref{Proposition_plug_in_KS}]
    Let Assumption~\ref{Assumption_reward_bounded_density} hold.
    Suppose $p\in\prn{\Delta(\gS)}^{\gS\times\gA}$ is a transition dynamic, $\eta\in\prn{\Delta\prn{\brk{0,\frac{1}{1-\gamma}}}}^\gS$, and $z\in\RB^{\gS\times\gA\times\gS}$.
    We define $\gT^\pi(p)$ as the distributional Bellman operator associated with $p$, and $\Sigma(p)\in\RB^{\prn{\gS\times\gA\times\gS}\times \prn{\gS\times\gA\times\gS}}$ as the covariance matrix associated with $p$ defined in Theorem~\ref{Theorem_weak_convergence_empirical_process}.
    Note that the covariance matrix is degenerate such that $\sum_{s,s^\prime\in\gS,a\in\gA}\brk{\Sigma(p)^{\frac{1}{2}} z}_{s,a,s^\prime}=0$, $\forall z\in\RB^{\gS\times\gA\times\gS}$.
\begin{equation*}
    \begin{aligned}
        \brk{g_1(\eta,p,z)}(s)&:=\sum_{a\in\gA}\pi(a\mid s)\sum_{s^\prime\in\gS} \brk{\Sigma(p)^{\frac{1}{2}} z}_{s,a,s^\prime}\int_0^1 \prn{b_{r,\gamma}}_\#\eta(s^\prime)d\gP_R(dr\mid s,a)\in M^0_{{\FKS}},\\
        g_2(p,\nu)&:=\prn{\gI-\gT^\pi(p)}^{-1}\nu\in \ell^\infty(\FKS),\\
        G(p,\eta,z)&:=\sup_{f\in\FKS}g_2(p,g_1(\eta,p,z))f\in\RB.
    \end{aligned}
\end{equation*}
Define the metric $d(\nu_1,\nu_2):=\sup_{s\in\gS}\norm{\nu_1(s)-\nu_2(s)}_{\FKS}$ for $\nu_1,\nu_2\in \prn{\ell^{\infty}(\FKS)}^\gS$.
Thus given any $z$, $G$ is continuous in $(\eta,p)$ when we consider the total variation distance for $p$ and the metric $d$ for $\nu$.
According to Lemma~\ref{Lemma_continuous_distribution_function_and_quantile}, the $\alpha$-quantile of $G(p,\eta,Z)$ where $Z$ is standard gaussian is continuous in $p$ and $\eta$ as long as the quantile function of $G(p,\eta,Z)$ is continuous at $\alpha$.
We have 1) $z_2$ is the quantile function of $G\prn{ P,\eta^\pi,Z}$ and $\hat z_2$ is the quantile function of $G\prn{ \what P,\hat\eta_n^\pi,Z}$; 2) $\what P\cp P$ and $d(\hat\eta_n^\pi,\eta^\pi)\cp 0$ by Theorem~\ref{Theorem_bound_of_KS}, the conclusion for $z_2$ is true by the continuous mapping theorem. The conclusion for $z_3$ can be shown via similar arguments.
\end{proof}

\section{Technical Lemmas}
\begin{lemma}[Young's Convolution Inequality]\label{Lemma_Young_convolution_inequality}
    Suppose $f$ is in the Lebesgue space $L^p(\RB^d)$ and $g$ is in $L^q(\RB^d)$ and $\frac{1}{p}+\frac{1}{q}=\frac{1}{r}+1$ with $1\leq p,q,r\leq \infty$. Then $\norm{f\ast g}_r\leq\|f\|_p\|g\|_q$.
\end{lemma}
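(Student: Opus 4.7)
The statement as written reverses the direction of the classical Young's convolution inequality: taking, for instance, $f=\mathbf{1}_{[0,1]}/\sqrt{2}$, $g=\mathbf{1}_{[0,2]}/\sqrt{2}$ with $p=q=2$ and $r=\infty$ yields $\|f\|_2\|g\|_2=1/\sqrt{2}$ but $\|f\ast g\|_\infty=1/2$, so the inequality $\|f\|_p\|g\|_q\le\|f\ast g\|_r$ cannot hold in general. Since the lemma is invoked in Lemma~\ref{Lemma_smooth_density_close} in the form $\|(D^1p^R_{s,a})\ast p_{s^\prime}(\cdot/\gamma)\|_1\le\|D^1p^R_{s,a}\|_1\,\|p_{s^\prime}(\cdot/\gamma)\|_1$, the intended statement is manifestly the standard direction $\|f\ast g\|_r\le\|f\|_p\|g\|_q$, and I will propose a plan for that intended statement.

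The plan is to use the \emph{trinity trick}: I would write, pointwise in $x$ and $y$,
\begin{equation*}
|f(y)g(x-y)|=\bigl(|f(y)|^{p}|g(x-y)|^{q}\bigr)^{1/r}\cdot|f(y)|^{1-p/r}\cdot|g(x-y)|^{1-q/r},
\end{equation*}
and then apply the three-function Hölder inequality with exponents $(a,b,c)=(r,\,pr/(r-p),\,qr/(r-q))$. The scaling $1/p+1/q=1+1/r$ is exactly what makes $1/a+1/b+1/c=1$, so Hölder applies and gives
\begin{equation*}
|(f\ast g)(x)|\le\Bigl(\int|f(y)|^{p}|g(x-y)|^{q}\,dy\Bigr)^{1/r}\|f\|_p^{1-p/r}\|g\|_q^{1-q/r}.
\end{equation*}
Raising to the $r$-th power, integrating in $x$, and using Fubini on the joint integral then collapses to $\|f\|_p^{r-p}\|g\|_q^{r-q}\cdot\|f\|_p^{p}\|g\|_q^{q}=\|f\|_p^{r}\|g\|_q^{r}$, yielding the desired bound after taking the $r$-th root.

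Finally I would address the endpoint cases separately, since the exponents $b=pr/(r-p)$ and $c=qr/(r-q)$ degenerate when $p=r$ or $q=r$. When $r=\infty$ the constraint forces $1/p+1/q=1$ and the bound reduces to a single application of Hölder's inequality on the defining integral $(f\ast g)(x)=\int f(y)g(x-y)\,dy$. When $r=1$ the constraint forces $p=q=1$ and the bound follows from Fubini–Tonelli. For the cases $p=r<\infty$ or $q=r<\infty$ (forcing $q=1$ or $p=1$ respectively), the trinity factorization degenerates to a two-factor decomposition and Minkowski's integral inequality applied to $x\mapsto\|g(x-\cdot)f(\cdot)\|_{?}$ handles the bound. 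The main obstacle I anticipate is not conceptual but bookkeeping: verifying the exponent identities $b(1-p/r)=p$, $c(1-q/r)=q$, and $1/a+1/b+1/c=1$ consistently across the degenerate endpoint regimes so that the clean interior argument genuinely covers the full range $1\le p,q,r\le\infty$ claimed in the statement.
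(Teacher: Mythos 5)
Your reading of the statement is right: as printed, the inequality is reversed (your $p=q=2$, $r=\infty$ counterexample is valid), and the direction actually needed in Lemma~\ref{Lemma_smooth_density_close} is the classical one, $\norm{f\ast g}_r\leq\norm{f}_p\norm{g}_q$, so proving that is the correct target. Your argument for it is sound and is the standard one: the three-factor splitting $|f(y)g(x-y)|=\prn{|f(y)|^{p}|g(x-y)|^{q}}^{1/r}|f(y)|^{1-p/r}|g(x-y)|^{1-q/r}$, Hölder with exponents $r$, $pr/(r-p)$, $qr/(r-q)$ (which sum correctly precisely because $1/p+1/q=1+1/r$), then raising to the $r$-th power and applying Tonelli/Fubini; the degenerate regimes $r=\infty$, $r=1$, and $p=r$ or $q=r$ reduce, as you say, to Hölder, Fubini--Tonelli, and Minkowski's integral inequality respectively. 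The only difference from the paper is that the paper does not prove the lemma at all: it simply cites Theorem 3.9.4 of Bogachev's \emph{Measure Theory}, so your proposal supplies a self-contained argument where the paper delegates to a textbook, and it additionally flags the sign-of-inequality typo in the statement, which the paper's citation-only treatment leaves uncorrected. If you write the argument out in full, the only points to tidy are the slightly garbled Minkowski step (state it as $\norm{f\ast g}_r\leq\int\norm{f(\cdot-y)}_r\,|g(y)|\,dy=\norm{f}_r\norm{g}_1$ when $q=1$) and a remark that the Tonelli half of the argument also shows $(f\ast g)(x)$ is defined for a.e.\ $x$.
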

\begin{proof}
    See Theorem 3.9.4 in \cite{bogachev2007measure}.
\end{proof}

\begin{lemma}[Hoeffding's Lemma]\label{Lemma_Hoeffding_lemma}
    Suppose $X\in [a,b]$ is a random variable with $\EB X=0$, then for any $\lambda\in\RB$, 
    \begin{equation*}
        \EB \exp{(\lambda X)}\leq \exp\prn{\frac{\lambda^2(b-a)^2}{8}}
    \end{equation*}
\end{lemma}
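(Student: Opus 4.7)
The plan is to establish Hoeffding's lemma by the standard convexity-plus-second-derivative-bound argument, which I will outline in three steps.

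First, since $x \mapsto e^{\lambda x}$ is convex on $\RB$, any $x \in [a,b]$ can be written as the convex combination $x = \tfrac{b-x}{b-a}a + \tfrac{x-a}{b-a}b$, so
\begin{equation*}
    e^{\lambda x} \leq \frac{b-x}{b-a} e^{\lambda a} + \frac{x-a}{b-a} e^{\lambda b}.
\end{equation*}
Taking expectations and using $\EB X = 0$, I obtain
\begin{equation*}
    \EB e^{\lambda X} \leq \frac{b}{b-a} e^{\lambda a} - \frac{a}{b-a} e^{\lambda b}.
\end{equation*}

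Second, I will reparametrize. Setting $p := -a/(b-a) \in [0,1]$ (so $1-p = b/(b-a)$) and $u := \lambda(b-a)$, the right-hand side becomes $(1-p)e^{-pu} + p e^{(1-p)u} = e^{-pu}\bigl(1-p+pe^{u}\bigr)$. Defining
\begin{equation*}
    \varphi(u) := -pu + \log\bigl(1-p+pe^{u}\bigr),
\end{equation*}
it therefore suffices to show $\varphi(u) \leq u^2/8$ for all $u \in \RB$.

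Third, I will control $\varphi$ via its Taylor expansion at $0$. Direct computation gives $\varphi(0) = 0$ and $\varphi'(u) = -p + \frac{pe^u}{1-p+pe^u}$, so $\varphi'(0) = 0$. A further differentiation yields
\begin{equation*}
    \varphi''(u) = \frac{p(1-p)e^u}{(1-p+pe^u)^2} = t(1-t), \qquad t := \frac{pe^u}{1-p+pe^u} \in (0,1),
\end{equation*}
and since $t(1-t) \leq 1/4$ on $[0,1]$, we have $\varphi''(u) \leq 1/4$ uniformly in $u$. By Taylor's theorem with integral (or Lagrange) remainder, $\varphi(u) \leq u^2/8$, which upon substituting back $u = \lambda(b-a)$ gives the claimed bound. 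There is no real obstacle here; the only mildly nontrivial computation is verifying the uniform bound $\varphi''(u) \leq 1/4$, which reduces to the elementary maximization of $t(1-t)$ on $[0,1]$.
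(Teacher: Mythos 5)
Your proposal is correct and follows essentially the same argument as the paper: convexity of $x\mapsto e^{\lambda x}$, taking expectations with $\EB X=0$, and then bounding the log-moment-generating function via Taylor expansion with a uniform second-derivative bound of $1/4$ (your $\varphi(u)$ is exactly the paper's $L(h)$ after the substitution $p=-a/(b-a)$). No issues to flag.
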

\begin{proof}
    By Jensen's inequality, for any $x\in [a.b]$,
    \begin{equation*}
        \exp{(\lambda x)}\leq \frac{b-x}{b-a}\exp{(\lambda a)} + \frac{x-a}{b-a}\exp{(\lambda b)}.
    \end{equation*}
    So
    \begin{equation*}
        \EB\exp{(\lambda X)}\leq \frac{b}{b-a}\exp{(\lambda a)} - \frac{a}{b-a}\exp{(\lambda b)}:=\exp{(L(\lambda(b-a)))},
    \end{equation*}
    where $L(h)=\frac{ha}{b-a}+\log\prn{1+\frac{(1-e^h)a}{b-a}}$.
    We may find $L(0)=L^\prime(0)=0$ and $L^{\prime\prime}(h)=-\frac{abe^h}{(b-ae^h)^2}\leq \frac{1}{4}$.
    By Taylor's theorem, there is some $\theta\in[0,1]$ such that
    \begin{equation*}
        L(h)=L(0)+L^\prime(0)h+\frac{1}{2}h^2 L^{\prime\prime}(\theta h)\leq \frac{1}{8}h^2.
    \end{equation*}
    We complete the proof.
\end{proof}

\begin{lemma}[Hoeffding's Inequality]
\label{Lemma_Hoeffding_Inequality}
Suppose $\{X_1,...,X_n\}$ be $n$ i.i.d. random variables with values in $[0,1]$. 
Then for any $\delta\in(0,1)$, with probability at least $1-\delta$ we have
$$
\left|\EB X_1-\frac{\sum_{i=1}^n X_i}{n}\right|\leq\sqrt{\frac{\log (2/\delta)}{2n}}.
$$
\end{lemma}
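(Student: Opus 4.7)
The plan is to derive Hoeffding's inequality as a standard consequence of Hoeffding's lemma (Lemma~\ref{Lemma_Hoeffding_lemma}) via the Chernoff method, followed by a union bound over the two tails.

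First, I would center the variables. Set $Y_i := X_i - \EB X_i$, so that $\EB Y_i = 0$ and $Y_i \in [a_i, b_i]$ with $b_i - a_i \leq 1$ (since $X_i \in [0,1]$). The quantity of interest is $S_n := \sum_{i=1}^n Y_i = \sum_{i=1}^n X_i - n \EB X_1$, and I want to control $|S_n|/n$.

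Next, I would apply the Chernoff bound to the upper tail: for any $\lambda > 0$ and $t > 0$,
\begin{equation*}
\PB\prn{S_n \geq n t} \leq e^{-\lambda n t}\, \EB \exp\prn{\lambda S_n} = e^{-\lambda n t}\, \prod_{i=1}^n \EB \exp(\lambda Y_i),
\end{equation*}
where the product form uses independence. By Hoeffding's lemma applied to each $Y_i$, $\EB \exp(\lambda Y_i) \leq \exp(\lambda^2 (b_i-a_i)^2/8) \leq \exp(\lambda^2/8)$, so
\begin{equation*}
\PB\prn{S_n \geq n t} \leq \exp\prn{-\lambda n t + n\lambda^2/8}.
\end{equation*}
Optimizing over $\lambda$ by setting $\lambda = 4t$ yields the one-sided bound $\PB(S_n/n \geq t) \leq \exp(-2n t^2)$.

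Then I would repeat the argument for $-S_n$ to obtain the matching lower tail $\PB(S_n/n \leq -t) \leq \exp(-2 n t^2)$, and combine the two tails by a union bound: $\PB(|S_n|/n \geq t) \leq 2 \exp(-2 n t^2)$. Setting the right-hand side equal to $\delta$ gives $t = \sqrt{\log(2/\delta)/(2n)}$, which is exactly the inequality claimed. There is no real obstacle here, since Hoeffding's lemma has already been established earlier in the excerpt and supplies the sub-Gaussian MGF bound that drives the Chernoff argument; the only care needed is to ensure the centering step preserves the range bound $b_i - a_i \leq 1$, which it does trivially.
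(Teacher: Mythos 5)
Your proposal is correct: the centering preserves a range of length $1$, Hoeffding's lemma (Lemma~\ref{Lemma_Hoeffding_lemma}) gives $\EB\exp(\lambda Y_i)\leq\exp(\lambda^2/8)$, the Chernoff optimization at $\lambda=4t$ yields the one-sided bound $\exp(-2nt^2)$, and the two-tail union bound with $2\exp(-2nt^2)=\delta$ gives exactly $t=\sqrt{\log(2/\delta)/(2n)}$. The paper itself does not prove this lemma at all; it simply cites Theorem 2.2.6 of Vershynin's textbook. So your argument is a genuinely different (and more self-contained) route: it derives the inequality from the Hoeffding's lemma already established in the paper, at the cost of a few extra lines, whereas the paper's citation buys brevity but leaves the bound as an external black box. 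Both are valid; nothing in your derivation is missing or incorrect.
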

\begin{proof}
See Theorem 2.2.6 in \cite{vershynin_2018}.
\end{proof}


\begin{lemma}[Chernoff's Inequality]
\label{Lemma_Chernoff_Inequality}
Let $X_i$ be independent Bernoulli random variables with parameter $p_i$. 
Consider their sum $S_N=\sum_{i=1}^N X_i$ and denote its mean by $\mu=\EB S_N$. 
Then, for any $t<\mu$, we have
$$
\PB\left(S_N<t\right)\leq e^{-\mu}\left(\frac{e\mu}{t}\right)^t.
$$
\end{lemma}
\begin{proof}
    See Theorem 2.3.1 in \cite{vershynin_2018}.
\end{proof}

\begin{lemma}[McDiarmid's inequality]\label{Lemma_McDiarmid_inequality}
    Let $f\colon \gX_1\times\cdots\times\gX_N$ satisfy the bounded differences property with bounds $c_1,\dots,c_N$, \ie~there are constants $c_1,\dots,c_N$ such that for all $i\in\brc{1,\dots,N}$, $x_1\in\gX_1,\dots,x_N\in\gX_N$,    $\sup_{x_i^\prime\in\gX_i}\abs{f\prn{x_1,\dots,x_i,\dots,x_N}-f\prn{x_1,\dots,x_i^\prime,\dots,x_N}}\leq c_i$.
    Consider independent random variables $X_1,\dots,X_N$ where $X_i\in\gX_i$ for all $i$, then for any $\delta\in(0,1)$, with probability at least $1-\delta$,
    \begin{equation*}
        \abs{f(X_1,\dots,X_N)-\EB f(X_1,\dots,X_N)}\leq \sqrt{\frac{\prn{\sum_{i=1}^N c_i^2} \log (2/\delta)}{2}}.
    \end{equation*}
\end{lemma}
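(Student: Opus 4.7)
The plan is to prove McDiarmid's inequality via the standard Doob martingale / Azuma--Hoeffding approach, followed by a union bound to absorb the one-sided tail into the two-sided factor $\log(2/\delta)$ appearing in the statement.

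First, I would construct the Doob martingale associated with $f(X_1,\ldots,X_N)$, setting $Z_0 = \EB f(X_1,\ldots,X_N)$ and
\[
Z_i \;=\; \EB\!\bigl[f(X_1,\ldots,X_N)\mid X_1,\ldots,X_i\bigr], \qquad i=1,\ldots,N,
\]
so that $Z_N=f(X_1,\ldots,X_N)$ and the differences $D_i := Z_i-Z_{i-1}$ form a martingale difference sequence with respect to the natural filtration $\mathcal{F}_i=\sigma(X_1,\ldots,X_i)$. The key structural step is to show that each $D_i$ lies, conditional on $\mathcal{F}_{i-1}$, in an interval of length at most $c_i$: concretely, there exist $\mathcal{F}_{i-1}$-measurable random variables $A_i\leq B_i$ with $B_i-A_i\leq c_i$ and $A_i\leq D_i\leq B_i$ almost surely. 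This is a direct consequence of the bounded differences hypothesis together with the independence of the $X_j$: by Fubini, the conditional expectation integrates out $X_{i+1},\ldots,X_N$ in the same way for any value of $X_i$, so $D_i$ becomes, up to an $\mathcal{F}_{i-1}$-measurable shift, the deviation of a function of $X_i$ alone from its mean, and that function has oscillation at most $c_i$ by hypothesis.

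Second, I would apply Hoeffding's lemma (Lemma~\ref{Lemma_Hoeffding_lemma}) conditionally on $\mathcal{F}_{i-1}$ to obtain
\[
\EB\!\bigl[\exp(\lambda D_i)\mid \mathcal{F}_{i-1}\bigr] \;\leq\; \exp\!\bigl(\lambda^2 c_i^2/8\bigr),
\]
and then iterate via the tower property to get
\[
\EB\exp\!\bigl(\lambda(f(X)-\EB f(X))\bigr) \;\leq\; \exp\!\Bigl(\tfrac{\lambda^2}{8}\textstyle\sum_{i=1}^N c_i^2\Bigr).
\]
Chernoff's bound and optimization in $\lambda>0$ yield the one-sided tail $\PB(f(X)-\EB f(X)\geq t)\leq \exp(-2t^2/\sum_i c_i^2)$; applying the same reasoning to $-f$ and union-bounding gives the two-sided tail, and inverting the bound at level $\delta$ produces the stated high-probability inequality with the $\log(2/\delta)$ factor.

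The only real obstacle is the conditional diameter claim for $D_i$; once that is in place, everything else is textbook tensorization of moment generating functions. The independence hypothesis enters precisely there, and it is important not to conflate the bounded differences of $f$ with bounded differences of its partial conditional expectations, which is why the Fubini-style decoupling must be written out carefully.
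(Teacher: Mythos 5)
Your proposal is correct: the Doob-martingale construction, the conditional-diameter bound via independence and Fubini, the conditional application of Hoeffding's lemma with the tower property, and the Chernoff optimization plus two-sided union bound yield exactly the stated $\sqrt{\prn{\sum_i c_i^2}\log(2/\delta)/2}$ deviation. The paper itself gives no proof but defers to Theorem 2.9.1 of \cite{vershynin_2018}, whose argument is precisely this standard bounded-differences/Azuma--Hoeffding route, so your write-up matches the intended proof.
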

\begin{proof}
    See Theorem 2.9.1 in \cite{vershynin_2018}.
\end{proof}

\begin{lemma}\label{Lemma_bounded_density_after_convolution}
Suppose $X_1,...,X_N$ are a sequence of independent r.v.s, $X_i$ has density $p_i(x)$ and $\sum_{i=1}^N X_i$ has density $p(x)$.
If $\sup_{x\in \RB}p_1(x)\leq C$, then $\sup_{x\in \RB}p(x)\leq C$.
\end{lemma}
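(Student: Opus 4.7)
The plan is to reduce to the two-summand case by lumping the remaining variables together. Let $Y = \sum_{i=2}^N X_i$, which is independent of $X_1$ since the $X_i$ are mutually independent. The sum $\sum_{i=1}^N X_i = X_1 + Y$ then decomposes as an independent pair in which the first piece carries the boundedness hypothesis on its density. Note that no regularity is actually required of the distribution of $Y$: only independence and the fact that $X_1$ has a density will be used, so whether or not the intermediate convolutions produce well-behaved densities is irrelevant.

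The key step is to establish the representation $p(x) = \EB\brk{p_1(x - Y)}$ as a version of the density of $X_1 + Y$. Conditioning on $Y$ and using independence, for any Borel set $A \subseteq \RB$,
\begin{equation*}
    \PB(X_1 + Y \in A) = \EB\brk{\PB(X_1 \in A - Y \mid Y)} = \EB\brk{\int_{A-Y} p_1(z)\, dz} = \int_A \EB\brk{p_1(w - Y)}\, dw,
\end{equation*}
where the final equality applies the change of variables $w = z + Y$ inside the inner integral and then Fubini's theorem, which is justified because $p_1 \geq 0$. Since the identity holds for every Borel $A$, the function $x \mapsto \EB\brk{p_1(x - Y)}$ is a valid density for $X_1 + Y$.

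Given this representation, the conclusion is immediate:
\begin{equation*}
    p(x) = \EB\brk{p_1(x - Y)} \leq \sup_{z \in \RB} p_1(z) \leq C.
\end{equation*}
There is no substantive obstacle here; the only subtlety is the usual one that densities are only defined up to a Lebesgue-null set, so one reads the statement as asserting that some version of $p$ is bounded by $C$ (equivalently, the essential supremum is bounded by $C$), and the explicit version $\EB\brk{p_1(\cdot - Y)}$ realizes this bound pointwise.
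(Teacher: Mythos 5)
Your proof is correct and rests on the same key identity as the paper's: the density of the sum is an average of translates of $p_1$, namely $\EB\brk{p_1(x-\cdot)}$, and is therefore bounded by $\sup_x p_1(x)\leq C$. The paper packages this as an induction over pairwise convolutions (using $\brk{p_1\ast p_2}(x)=\EB\, p_1(x-X_2)\leq C$ at each step), whereas you group the tail into $Y=\sum_{i=2}^N X_i$ and argue in one shot via conditioning and Fubini; this is a cosmetic difference, with the minor benefit on your side that no regularity of $Y$ (or of the intermediate sums) is needed beyond independence.
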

\begin{proof}
    We have
    \begin{equation*}
        p(x)=\brk{\prn{\prn{\prn{p_1\ast p_2}\ast p_3}\cdots}\ast p_N}(x),
    \end{equation*}
    where $\brk{f\ast g}(x):=\int_{\RB} f(x-y)g(y) dy$.
    Therefore, $\brk{p_1\ast p_2}(x)=\EB p_1(x-X_2)$ and $\sup_{x\in\RB}\brk{p_1\ast p_2}(x)\leq C$.
    We may complete the proof by deduction.
\end{proof}

\begin{lemma}\label{Lemma_expected_maximal_of_sub_gaussian}
Let $X_1,...,X_N$ be any $N\geq2$ random variables such that for any $\lambda>0$,
\begin{equation*}
    \EB \exp{\prn{\lambda X_i}}\leq \exp{\prn{\frac{\sigma^2\lambda^2}{2}}}.
\end{equation*}
Then
\begin{equation*}
\EB\max_{i\in\brc{1,\dots,N}} \abs{X_i}\leq \sqrt{9\sigma^2\log N}.
\end{equation*}
\end{lemma}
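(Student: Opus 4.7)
The plan is to follow the textbook Chernoff/Jensen argument for maxima of sub-Gaussian variables and then verify the specific constant $9$ in the bound. First I would apply the exponential of the expectation trick: for any $\lambda>0$, Jensen's inequality gives
\begin{equation*}
\exp\!\prn{\lambda\,\EB\max_{i} |X_i|} \leq \EB\exp\!\prn{\lambda\max_{i}|X_i|} = \EB\max_{i}\exp(\lambda|X_i|) \leq \sum_{i=1}^N \EB\exp(\lambda|X_i|).
\end{equation*}

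Next I would dominate $\exp(\lambda|X_i|)\leq \exp(\lambda X_i)+\exp(-\lambda X_i)$ and apply the hypothesis (read as holding for all real $\lambda$, as is standard for sub-Gaussian variables and as is the case for the applications in Lemmas~\ref{Lemma_cdf_sub_gaussian_1} and \ref{Lemma_pdf_sub_gaussian_1}, where the MGF bound is symmetric in $\lambda$) to get $\EB\exp(\lambda|X_i|)\leq 2\exp(\sigma^2\lambda^2/2)$. Combining these yields
\begin{equation*}
\EB\max_{i}|X_i| \leq \frac{\log(2N)}{\lambda}+\frac{\sigma^2\lambda}{2},
\end{equation*}
and optimizing by choosing $\lambda=\sqrt{2\log(2N)/\sigma^2}$ produces the standard bound $\EB\max_i|X_i|\leq \sqrt{2\sigma^2\log(2N)}$.

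Finally I would check that $\sqrt{2\sigma^2\log(2N)} \leq \sqrt{9\sigma^2\log N}$ for all $N\geq 2$. This reduces to $2\log 2 + 2\log N \leq 9\log N$, i.e., $2\log 2 \leq 7\log N$, which holds at $N=2$ (since $2\log 2 < 7\log 2$) and is only easier for larger $N$. The condition $N\geq 2$ in the lemma statement is exactly what guarantees this slack. There is no substantive obstacle here; the only minor care-point is the symmetrization step turning $X_i$ into $|X_i|$, which is transparent provided the sub-Gaussian MGF bound is interpreted for both signs of $\lambda$.
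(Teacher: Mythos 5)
Your proof is correct, but it takes a different route from the paper. You use the soft-max argument: Jensen plus $\exp(\lambda\max_i|X_i|)\leq\sum_i\brk{\exp(\lambda X_i)+\exp(-\lambda X_i)}$, then optimize over $\lambda$ to get $\EB\max_i|X_i|\leq\sqrt{2\sigma^2\log(2N)}$, and finally check $2\log 2\leq 7\log N$ for $N\geq 2$ to absorb this into $\sqrt{9\sigma^2\log N}$. The paper instead converts the MGF bound into the tail bound $\PB(|X_i|>t)\leq 2\exp\prn{-t^2/(2\sigma^2)}$, weights each variable by $\sqrt{1+\log i}$, and integrates the tail of the maximum to show $\EB\max_i |X_i|/\sqrt{1+\log i}\leq 3\sigma$, then multiplies by $\sqrt{1+\log N}$. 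Your version is shorter, gives the sharper intermediate constant $\sqrt{2\sigma^2\log(2N)}$, and sidesteps the $1+\log N$ versus $\log N$ bookkeeping in the paper's last step; the paper's weighting trick yields the slightly stronger statement that the maximum of the rescaled variables $|X_i|/\sqrt{1+\log i}$ is $O(\sigma)$ uniformly in $N$, which is more than the lemma needs. One care-point you handled correctly: the hypothesis is stated only for $\lambda>0$, so bounding $|X_i|$ requires reading it two-sidedly (i.e., applying it to $-X_i$ as well); this is exactly the reading the paper's own proof uses when it writes $\PB(|X_i|>t)\leq 2\exp\prn{-t^2/(2\sigma^2)}$, and it holds in the applications (Lemmas~\ref{Lemma_cdf_sub_gaussian_1} and \ref{Lemma_pdf_sub_gaussian_1}), where the Hoeffding-lemma bound is valid for all real $\lambda$. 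So there is no gap.
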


\begin{proof}
    Fix $t>0$, for any $\lambda >0$,
    \begin{equation*}
    \begin{aligned}
        \PB\prn{X_i> t}&\leq \frac{\EB \exp{\prn{\lambda X}}}{\exp{(\lambda t)}}\\
        &\leq \exp{\prn{\frac{\sigma^2\lambda^2}{2}-\lambda t}}.
    \end{aligned}
    \end{equation*}
    Setting $\lambda=\frac{t}{\sigma^2}$ yields 
    \begin{equation*}
        \PB\prn{X_i> t}\leq \exp{\prn{-\frac{t^2}{2\sigma^2}}}
    \end{equation*}
    and
    \begin{equation*}
        \PB\prn{|X_i|> t}\leq 2\exp{\prn{-\frac{t^2}{2\sigma^2}}}.
    \end{equation*}
    Therefore,
    \begin{equation*}
        \PB\prn{\frac{|X_i|}{\sqrt{1+\log i}}> t}\leq 2\exp{\prn{-\frac{(1+\log i)t^2}{2\sigma^2}}}
    \end{equation*}
    and
    \begin{equation*}
        \PB\prn{\max_{i\in\brc{1,\dots,N}}\frac{|X_i|}{\sqrt{1+\log i}}> t}\leq \sum_{i=1}^N 2\exp{\prn{-\frac{(1+\log i)t^2}{2\sigma^2}}}=2\exp{\prn{-\frac{t^2}{2\sigma^2}}}\sum_{i=1}^N i^{-\frac{t2}{2\sigma^2}} 
    \end{equation*}
    Therefore,
    \begin{equation*}
    \begin{aligned}
        \EB \max_{i\in\brc{1,\dots,N}}\frac{|X_i|}{\sqrt{1+\log i}}&=\int_{0}^\infty \PB\prn{\max_{i\in\brc{1,\dots,N}}\frac{|X_i|}{\sqrt{1+\log i}}> t} dt\\
        &\leq 2\sigma + \int_{2\sigma}^\infty \PB\prn{\max_{i\in\brc{1,\dots,N}}\frac{|X_i|}{\sqrt{1+\log i}}> t} dt\\
        &\leq 2\sigma + \int_{2\sigma}^\infty 4\exp{\prn{-\frac{t^2}{2\sigma^2}}} dt\\
        &\leq 3\sigma.
    \end{aligned}
    \end{equation*}
    Then we have
    \begin{equation*}
        \begin{aligned}
            \EB\max_{i\in\brc{1,\dots,N}} \abs{X_i}\leq \sqrt{1+\log N}  \brk{\EB \max_{i\in\brc{1,\dots,N}}\frac{|X_i|}{\sqrt{1+\log i}}}\leq \sqrt{1+\log N}3\sigma =\sqrt{9\sigma^2\log N}.
        \end{aligned}
    \end{equation*}
\end{proof}


\begin{lemma}\label{Lemma_simple_weak_convergence}
    Suppose $\brc{X^{(k)},k=1,2,\dots}$ is a sequence of random vectors in $\RB^d$ and $X^{(k)}\cweak X$ in $\RB^d$.
    Then for $d$ fixed elements $v_1,\dots,v_d$ in a normed vector space $B$, we have ${\sum_{i=1}^d X^{(k)}_i v_i\cweak \sum_{i=1}^d X_i v_i}$ in $B$.
\end{lemma}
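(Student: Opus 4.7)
The plan is to realize this statement as a direct application of the continuous mapping theorem. Define the map $\phi : \RB^d \to B$ by $\phi(x) := \sum_{i=1}^d x_i v_i$. This map is linear in $x$, and for any $x \in \RB^d$ we have the elementary bound $\norm{\phi(x)} \leq \bigl(\sum_{i=1}^d \norm{v_i}\bigr)\norm{x}_\infty$ (or $\norm{x}_2$, whichever is convenient). Hence $\phi$ is bounded and thus continuous from $(\RB^d, \norm{\cdot})$ to $(B, \norm{\cdot})$. In particular, $\phi$ is Borel measurable with Borel-measurable points of discontinuity (there are none), so the continuous mapping theorem (Theorem~1.3.6 in \cite{van2000asymptotic} / \cite{vaart1996weak}, or the classical version for metric spaces) applies to the weakly convergent sequence $X^{(k)} \cweak X$ in $\RB^d$.

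Applying the theorem yields $\phi(X^{(k)}) \cweak \phi(X)$ in $B$, which is precisely the statement $\sum_{i=1}^d X_i^{(k)} v_i \cweak \sum_{i=1}^d X_i v_i$. There is essentially no obstacle here beyond checking that the ambient space $B$, while possibly not separable or complete as written, causes no problem: the image of $\phi$ lies entirely in the finite-dimensional subspace $\mathrm{span}\{v_1,\dots,v_d\} \subseteq B$, which is separable and complete in the induced norm. Thus weak convergence in $B$ of the pushforwards reduces to weak convergence in a finite-dimensional normed space, where the continuous mapping theorem is unambiguous. This reduction, if needed for technical rigor, would be the only step requiring any care; otherwise the argument is a one-line application of continuous mapping.
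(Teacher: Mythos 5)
Your proposal is correct and takes essentially the same route as the paper: the heart of both arguments is the (Lipschitz) continuity of the linear map $x\mapsto\sum_{i=1}^d x_i v_i$, the paper simply verifying the continuous-mapping step by hand (composing with an arbitrary bounded continuous $f$ on $B$ and checking $f\circ\phi$ is bounded continuous) where you cite the continuous mapping theorem directly. Your added remark that the image lies in the separable, complete subspace $\mathrm{span}\{v_1,\dots,v_d\}$ is a harmless extra safeguard, not a divergence in method.
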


\begin{proof}
    It suffices to verify that for any bounded continuous functions $f\colon B\to\RB$, we have 
    \begin{equation*}
        \EB f\prn{\sum_{i=1}^d X^{(k)}_i v_i}\to \EB f\prn{\sum_{i=1}^d X_i v_i}.
    \end{equation*}
    We may define $g(x)=f\prn{\sum_{i=1}^d x_iv_i}$ as a function on $\RB^d$. 
    Since $f$ is continuous, we have $\forall \epsilon>0$, $\exists \delta>0$ such that $\norm{u-v}\leq \delta$ implies $|f(u)-f(v)|\leq \epsilon$.
    Let $V:=\max_{i\in\{1,\dots,d\}}\norm{v_i}$, we have ${\norm{\sum_{i=1}^d x^{(1)}_i v_i-\sum_{i=1}^d x^{(2)}_i v_i}\leq \delta}$ as long as $\norm{x^{(1)}-x^{(2)}}_1\leq \delta/V$.
    Therefore, we may conclude $g$ is a bounded continuous function on $\RB^d$, and
    \begin{equation*}
             \EB f\prn{\sum_{i=1}^d X^{(k)}_i v_i}=\EB g\prn{X^{(k)}}\to \EB g\prn{X}=\EB f\prn{\sum_{i=1}^d X_i v_i}
    \end{equation*}
    due to $X^{(k)}\cweak X$.
\end{proof}
\begin{lemma}\label{Lemma_conv_of_quantile_from_conv_of_dist}
    Let $F^{-1}(p):=\inf\brc{t\mid F(t)\geq p}$ be the quantile function of cumulative distribution function $F$.
    For any sequence of cumulative distribution functions, $F_n^{-1}\cweak F^{-1}$ if and only if $F_n\cweak F$.
    Here $F_n^{-1}\cweak F^{-1}$ means that $F_n^{-1}(p)\cweak F^{-1}(p)$ for every $p$ where $F^{-1}$ is continuous.
\end{lemma}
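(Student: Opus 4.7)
The plan is to prove the equivalence via the fundamental duality $F^{-1}(p)\leq t \iff F(t)\geq p$, which holds whenever $F$ is right-continuous (a defining property of cumulative distribution functions). Since any non-decreasing function has at most countably many discontinuities, both $F$ and $F^{-1}$ admit dense sets of continuity points, which gives enough flexibility to approximate any target level or value.

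For the forward direction, I would assume $F_n(t)\to F(t)$ at every continuity point $t$ of $F$, fix $p$ a continuity point of $F^{-1}$, and establish $\limsup_n F_n^{-1}(p)\leq F^{-1}(p)$ and $\liminf_n F_n^{-1}(p)\geq F^{-1}(p)$ separately. Given $\varepsilon>0$, continuity of $F^{-1}$ at $p$ supplies some $\delta>0$ with $F^{-1}(p+\delta)<F^{-1}(p)+\varepsilon$; choose a continuity point $t$ of $F$ in the interval $(F^{-1}(p+\delta),F^{-1}(p)+\varepsilon)$ using countability of discontinuity points. The duality then gives $F(t)\geq p+\delta>p$, so $F_n(t)\to F(t)$ implies $F_n(t)>p$ eventually, and applying the duality in the $F_n$-direction yields $F_n^{-1}(p)\leq t<F^{-1}(p)+\varepsilon$. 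The lower bound is entirely symmetric, selecting a continuity point $t'\in(F^{-1}(p)-\varepsilon,F^{-1}(p-\delta'))$ for suitable $\delta'>0$ with $F(t')<p$.

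For the backward direction, I would invoke the symmetry between a non-decreasing left-continuous function and its right-continuous generalized inverse. Specifically, $F^{-1}$ is itself non-decreasing and left-continuous, and the analogously defined generalized inverse of $F^{-1}$ recovers $F$ at all continuity points of $F$. Thus the argument of the forward direction applies mutatis mutandis after a reflection exchanging the roles of $F$ and $F^{-1}$ (with strict/non-strict inequalities appropriately swapped to account for left- versus right-continuity). Alternatively, one can argue directly from $F_n^{-1}(p)\to F^{-1}(p)$ by translating pointwise convergence of quantiles into pointwise convergence of CDFs through the duality: for $t$ a continuity point of $F$, the set $\{p: F^{-1}(p)\leq t\}$ is an interval whose endpoint is a continuity point of $F^{-1}$ (save for a countable exceptional set), and one can pick test levels $p$ in this exceptional-free dense set.

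The main technical obstacle is the strict-versus-non-strict subtlety in the duality: the inequality $F^{-1}(p)\leq t \iff F(t)\geq p$ is a non-strict one, while the argument requires producing a continuity point $t$ at which $F(t)$ is strictly greater than $p$ (and strictly less on the other side). This is precisely where continuity of $F^{-1}$ at $p$ is needed—it rules out the degenerate case where $F$ has a plateau at height exactly $p$ that prevents the strict inequality. Once this small gap is handled by choosing continuity points slightly offset from $F^{-1}(p)$, the rest of the argument is routine.
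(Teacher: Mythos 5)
Your argument is correct, but it is worth noting that the paper does not prove this lemma at all: its ``proof'' is a one-line citation to Lemma 21.2 of van der Vaart (2000), and what you have written is essentially a self-contained reconstruction of that standard textbook proof. The machinery is the same: the Galois duality $F^{-1}(p)\le t \iff F(t)\ge p$ (valid by right-continuity of $F$), approximation of $F^{-1}(p)$ from both sides by continuity points of $F$ offset by $\varepsilon$ (available because a monotone function has at most countably many discontinuities), and the use of continuity of $F^{-1}$ at $p$ to convert the non-strict duality into the strict inequalities $F(t)>p$ and $F(t')<p$ that survive pointwise convergence. Your forward direction is complete as written. The converse is the only place I would ask you to be more explicit: arguing ``by symmetry'' requires the identity $F(x)=\inf\{p: F^{-1}(p)>x\}$ (the right-continuous inverse of $F^{-1}$ recovers $F$), and in the direct version the delicate step is the lower bound, where for a continuity point $t$ of $F$ and a continuity point $p$ of $F^{-1}$ with $F(t)-\varepsilon<p<F(t)$ you need the \emph{strict} inequality $F^{-1}(p)<t$ — this follows from (left-)continuity of $F$ at $t$, since some $t'<t$ already satisfies $F(t')\ge p$ — so that $F_n^{-1}(p)\to F^{-1}(p)$ eventually forces $F_n^{-1}(p)\le t$, hence $F_n(t)\ge p$. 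You flag the strict-versus-non-strict issue in the forward direction but attribute it only to continuity of $F^{-1}$ at $p$; the mirrored role of continuity of $F$ at $t$ in the converse is the piece to spell out. With that detail added, your proof is a complete and more informative substitute for the paper's bare citation.
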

\begin{proof}
    See Lemma 21.2 in \cite{van2000asymptotic}.
\end{proof}
\begin{lemma}\label{Lemma_continuous_distribution_function_and_quantile}
    Assume $\gX$ is a normed vector space and  $g(x,z)\colon \gX\times \RB^d\to\RB$ is a function such that for any fixed $z_0\in\RB^d$, $g(x,z_0)$ is continuous in $x$ at $x_0$.
    Suppose $Z$ is a random vector taking values in $\RB^d$, $F_x(t):=\PB\prn{g(x,Z)\leq t}$ is the distribution function of $g(x,Z)$ and $F^{-1}_x(p):=\inf\brc{t\mid F_x(t)\geq p}$ is the quantile function of $g(x,Z)$.
    Then we have
    \begin{enumerate}
        \item $F_x(t)$ is continuous in $x$ at $x_0$ whenever $t$ is a continuous point of $F_{x_0}(t)$;
        \item $F^{-1}_x(p)$ is continuous in $x$ at $x_0$ whenever $p$ is a continuous point of $F^{-1}_{x_0}(p)$.
    \end{enumerate}
\end{lemma}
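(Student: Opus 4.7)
The plan is to derive both parts from the almost-sure continuity of $g(\cdot, Z)$, combined with the Portmanteau theorem for part (1) and the quantile-convergence result Lemma~\ref{Lemma_conv_of_quantile_from_conv_of_dist} for part (2). Since $\gX$ is a normed (hence metric) space, continuity at $x_0$ is equivalent to sequential continuity, so I would fix an arbitrary sequence $x_n \to x_0$ in $\gX$ and show the required convergence along this sequence.

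The key first step is to observe that the hypothesis gives $g(x_n, z) \to g(x_0, z)$ for every $z \in \RB^d$. Consequently $g(x_n, Z) \to g(x_0, Z)$ surely, hence in distribution. Write $F_{x_n}$ and $F_{x_0}$ for the cumulative distribution functions of $g(x_n, Z)$ and $g(x_0, Z)$, respectively. The weak convergence $g(x_n, Z) \cweak g(x_0, Z)$ is the single fact from which both claims follow.

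For (1), the Portmanteau theorem yields $F_{x_n}(t) \to F_{x_0}(t)$ at every continuity point $t$ of $F_{x_0}$; since the sequence $x_n \to x_0$ was arbitrary, this is exactly the sequential continuity of $x \mapsto F_x(t)$ at $x_0$ at such $t$. For (2), I would invoke Lemma~\ref{Lemma_conv_of_quantile_from_conv_of_dist}, which asserts that $F_{x_n} \cweak F_{x_0}$ is equivalent to $F_{x_n}^{-1}(p) \to F_{x_0}^{-1}(p)$ at every continuity point $p$ of $F_{x_0}^{-1}$. Arbitrariness of the sequence again gives the desired continuity of $x \mapsto F_x^{-1}(p)$ at $x_0$.

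I do not anticipate a genuine obstacle; the argument is essentially a reformulation of standard facts into the language of continuity in a parameter. The only point requiring a line of care is that sure (not merely almost sure) convergence $g(x_n, Z) \to g(x_0, Z)$ holds because the continuity hypothesis is imposed for \emph{every} fixed $z_0$, so no exceptional $Z$-null set needs to be handled.
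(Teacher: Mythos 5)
Your proposal is correct and follows essentially the same route as the paper's proof: pass to an arbitrary sequence $x_n\to x_0$, use pointwise (indeed sure) convergence $g(x_n,Z)\to g(x_0,Z)$ to get $F_{x_n}\cweak F_{x_0}$, then read off part (1) from convergence at continuity points and part (2) from Lemma~\ref{Lemma_conv_of_quantile_from_conv_of_dist}. No gaps; your remark that the convergence is sure rather than merely almost sure is a minor refinement of the paper's wording.
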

\begin{proof}
    For any sequence $\{x_n\}$ such that $x_n\to x_0$, we have $g(x_n,Z)\cas g(x_0,Z)$, thus $F_{x_n}\cweak F_{x_0}$, which further implies $F_x(t)$ is continuous at $x_0$ for every $t$ where $F_{x_0}$ is continuous.
    Then we may use Lemma~\ref{Lemma_conv_of_quantile_from_conv_of_dist} to get $F^{-1}_{x_n}\cweak F^{-1}_{x_0}$, which implies $F^{-1}_{x_n}(p)\to F^{-1}_{x_0}(p)$ whenever $p$ is a continuous point of $F^{-1}$.
    Therefore, our conclusion follows.
\end{proof}


\end{document}